\documentclass[twoside,11pt]{article}

\usepackage{blindtext}

\usepackage[preprint]{jmlr2e}

\usepackage{amsmath,amssymb}
\usepackage{mathtools}
\usepackage{bm,bbm}
\usepackage{physics}
\usepackage{xcolor}
\usepackage[ruled,vlined,linesnumbered]{algorithm2e}
\usepackage{booktabs}
\usepackage{enumitem}
\usepackage{graphicx}
\usepackage{subcaption}

\newcommand\numberthis{\addtocounter{equation}{1}\tag{\theequation}}
\newcommand{\argmin}{\mathop{\arg\min}}
\newcommand{\argmax}{\mathop{\arg\max}}
\newcommand{\m}{\middle|}
\newcommand{\R}{\mathcal{R}}
\newcommand{\nec}{\mathcal{A}}
\newcommand{\B}{\mathcal{B}}

\newcommand{\sP}{\mathbb{P}}
\newcommand{\E}{\mathbb{E}}
\newcommand{\dis}{\mathcal{D}_\alpha}
\newcommand{\Fdis}{\mathcal{F}_\alpha}
\newcommand{\Pdis}{\mathcal{P}_\alpha}

\newcommand{\ind}{\mathbbm{1}}
\newcommand{\pn}[1]{^{(#1)}}
\newcommand{\ul}{\underline{\lambda}}
\newcommand{\wm}{\widetilde{m}}

\newcommand{\ma}{m_\alpha}
\newcommand{\da}{d_\alpha}

\newcommand{\nn}{\nonumber\\}
\newcommand{\n}{\nonumber}

\newcommand{\lams}{\lambda^*}
\newcommand{\acon}{C_1(\dis)}
\newcommand{\wlow}{w^*(\dis)}

\newcommand{\GR}{{\sf{FTPL~GR}}}
\newcommand{\CGR}{{\sf{FTPL~CGR}}}
\newcommand{\HYBRID}{{\sf{HYBRID}}}
\newcommand{\LBINFV}{{\sf{LBINFV-LS}}}

\newcommand{\qed}{\hfill\BlackBox\\[2mm]}
\def\dqed{\relax\tag*{\BlackBox}} 
\newenvironment{proof2}{\par\noindent{\bf Proof\ }}{\vspace{0mm}} 

\usepackage{lastpage}
\jmlrheading{23}{2026}{1-\pageref{LastPage}}{1/21; Revised 5/22}{9/22}{21-0000}{Botao Chen, Jongyeong Lee, Chansoo Kim, and Junya Honda}

\ShortHeadings{A Further Efficient Algorithm with BOBW for \texorpdfstring{$m$}{m}-Set Semi-Bandits}{Chen, Lee, Kim, and Honda}
\firstpageno{1}

\begin{document}

\title{A Further Efficient Algorithm with Best-of-Both-Worlds Guarantees for \texorpdfstring{$m$}{m}-Set Semi-Bandit Problem}

\author{\name Botao Chen \email chen.botao.63r@st.kyoto-u.ac.jp \\
       \addr Department of Systems Science\\
       Graduate School of Informatics, Kyoto University\\
       Kyoto, Japan
       \AND
       \name Jongyeong Lee \email jongyeong@kist.re.kr \\
       \addr Computational Science Research Center\\
       Korea Institute of Science and Technology\\
       Seoul, Korea
       \AND
       \name Chansoo Kim \email eau@ust.ac.kr \\
       \addr Computational Science Research Center\\
       Korea Institute of Science and Technology \& University of Science and Technology\\
       Seoul, Korea
       \AND
       \name Junya Honda \email honda@i.kyoto-u.ac.jp \\
       \addr Department of Systems Science\\
       Graduate School of Informatics, Kyoto University\\
       Kyoto, Japan\\
       and Center for Advanced Intelligence Project, RIKEN\\
       Tokyo, Japan}

\editor{My editor}

\maketitle

\begin{abstract}
This paper studies the optimality and complexity of Follow-the-Perturbed-Leader (FTPL) policy in $m$-set semi-bandit problems. 
FTPL has been studied extensively as a promising candidate of an efficient algorithm with favorable regret for adversarial combinatorial semi-bandits. Nevertheless, the optimality of FTPL has still been unknown unlike Follow-the-Regularized-Leader (FTRL) whose optimality has been proved for various tasks of online learning. 
In this paper, we extend the analysis of FTPL with geometric resampling (GR) to $m$-set semi-bandits, which is a special case of combinatorial semi-bandits, showing that FTPL with Fr\'{e}chet and Pareto distributions with certain parameters achieves the best possible regret of $O(\sqrt{mdT})$ in adversarial setting. 
We also show that FTPL with Fr\'{e}chet and Pareto distributions with a certain parameter achieves a logarithmic regret for stochastic setting, meaning the Best-of-Both-Worlds optimality of FTPL for $m$-set semi-bandit problems. 
Furthermore, we extend the conditional geometric resampling to $m$-set semi-bandits for efficient loss estimation in FTPL, reducing the computational complexity from $O(d^2)$ of the original geometric resampling to $O\qty(md(\log(d/m)+1))$ without sacrificing the regret performance.
\end{abstract}

\begin{keywords}
  combinatorial semi-bandits, $m$-set semi-bandits, follow-the-perturbed-leader, best-of-both-worlds, Fr\'{e}chet-type distributions
\end{keywords}

\section{Introduction}
The combinatorial semi-bandit problem is a sequential decision-making problem under uncertainty, which is a generalization of the classical multi-armed bandit (MAB) problem \citep{cesa2012combinatorial}, and is instrumental in many practical applications, such as recommender systems \citep{wang2017efficient}, online advertising \citep{nuara2022online}, crowdsourcing \citep{ul2016efficient}, adaptive routing \citep{gai2012combinatorial} and network optimization \citep{kveton2014matroid}. 
In this problem, the learner selects an action $a_t$ from an action set $\nec\subset\{0,1\}^d$, where $d\in\mathbb{N}$ is the dimension of the action set. 
At round $t\in[T]=\{1,2,\dots,T\}$, the loss vector $\ell_t=(\ell_{t,1},\ell_{t,2},\dots,\ell_{t,d})$ is determined by the environment, and the learner selects an action $a_t$ from $\nec$.
The learner then incurs a loss $\left\langle \ell_t,a_t\right\rangle$ and can only observe the loss $\ell_{t,i}$ for which $a_{t,i}=1$.
The learner's goal is to minimize the cumulative loss over all rounds.
The performance of the learner is often measured by the \textit{pseudo-regret} defined as 
$\mathcal{R}(T)=\E[\sum_{t=1}^{T}\left\langle \ell_t,a_t-a^*\right\rangle]$ for $a^*=\argmin_{a\in\nec}\E[\sum_{t=1}^{T}\left\langle \ell_t,a\right\rangle]$,
which describes the gap between the expected cumulative loss of the learner and of the single optimal action $a^*$
fixed in hindsight.
In this paper, we focus on one of the most fundamental classes of combinatorial semi-bandit problems, referred to as $m$-set semi-bandit problem, where the action set is defined as $\nec=\{a\in\{0,1\}^d:\left\lVert a\right\rVert_1=m\}$ with $m\in[d]$ representing the size of each action in $\nec$.

Since the introduction by \citet{chen2013combinatorial}, combinatorial semi-bandit problems have been widely studied, primarily focusing on two settings on the formulation of the environment for generating the loss vectors, namely the stochastic setting and the adversarial setting.
In the stochastic setting, the sequence of loss vectors $\{\ell_t\}_{t=1}^T$ is assumed to be independent and identically distributed (i.i.d.) from an unknown but fixed distribution $\mathcal{D}$ over $[0,1]^d$ with mean 
$\mu=\E_{\ell\sim\mathcal{D}}[\ell]$.
CombUCB \citep{kveton2015tight} and Combinatorial Thompson Sampling \citep{wang2018thompson} can achieve a gap-dependent regret bounds of $O\qty(dm\log T/\Delta)$ for general action sets and $O\qty((d-m)\log T/\Delta)$ for matroid semi-bandits, where $\Delta=\min_{a\in\nec\setminus\{a^*\}}\{\mu^{\top}(a-a^*)\}$ represents the minimum suboptimality gap.

In the adversarial setting, the loss vector $\ell_t\in[0,1]^d$ is determined by an adversary in an arbitrary manner, which is not assumed to follow any specific distribution \citep{kveton2015tight,neu2015first,wang2018thompson}. 
For this setting, the regret bound of $O(\sqrt{mdT})$ can be achieved by some policies, such as OSMD \citep{audibert2014regret} and FTRL with hybrid-regularizer \citep{zimmert2019beating}, which matches the lower bound of $\Omega(\sqrt{mdT})$ \citep{audibert2014regret}.

In practical scenarios, the environment to determine the loss vectors is often unknown. Therefore, policies that can adaptively address both stochastic and adversarial settings have been widely studied, particularly in the context of MAB problems. The Tsallis-INF policy \citep{zimmert2021tsallis}, which is based on Follow-the-Regularized-Leader (FTRL), has demonstrated the ability to achieve the optimality in both settings, a status reffered to as Best-of-Both-Worlds (BOBW, \citealp{bubeck2012best}). For combinatorial semi-bandit problems, there also exists some work on this topic \citep{wei2018more,zimmert2019beating,ito2021hybrid,tsuchiya2023further}.

Howerver, some BOBW policies, such as FTRL, require explicit computation of the arm-selection probability by solving an optimization problem. This leads to computational inefficiencies, especially in combinatorial semi-bandits, where the complexity increases substantially. 
For $m$-set semi-bandits, an FTRL-based policy can efficiently compute the base-arm selection probabilities by Newton's method with per-iteration cost of $O(d)$, and the action sampling at each round requires $O(d\log d)$ time \citep{zimmert2019beating}. 
Though it is empirically observed that $O(1)$ iterations are sufficient, the number of iterations provably required to maintain the theoretical guarantee remains unknown. 
Nevertheless, despite the efficiency of Newton's method, 
solving an optimization problem is still time-consuming and may suffer from numerical instability in practice (see also Section~\ref{sec:experiments}).
In light of this limitation, the Follow-the-Perturbed-Leader (FTPL) policy, which greedily selects an action with the minimum cumulative estimated loss with some random perturbation, has gained wide attention for its optimization-free nature in 
adversarial MAB \citep{poland2005fpl,abernethy2015fighting,kim2019optimality}, linear bandits \citep{mcmahan2004online} and MDP bandits \citep{dai2022follow}. 
In the context of combinatorial semi-bandits, \citet{JMLR:v17:15-091} showed that FTPL can achieve an adversarial regret of $O(m\sqrt{dT\log(d/m)})$. 
In addition, they proposed a technique called Geometric Resampling (GR) to address the high computational cost of loss estimates, with which FTPL exhibits the initial practical applicability in both combinatorial semi-bandits and MAB.

Recently, there has been significant progress in establishing the BOBW guarantee for FTPL policy.
\citet{kim2019optimality} conjectured that if FTPL achieves minimax opitimality for MAB, then the perturbation should be Fr\'{e}chet-type distributions.
Subsequently, this open problem has been rosolved, as it was shown that FTPL with Fr\'{e}chet-type tail perturbations under certain conditions achieves BOBW guarantee for MAB problems \citep{pmlr-v201-honda23a,pmlr-v247-lee24a,lee2025revisiting}.
Besides, there is also some work on BOBW in decoupled bandits \citep{kim2025follow}.
These results naturally motivate the study on the optimality of FTPL in combinatorial semi-bandits.

\paragraph{Contribution of This Paper and Closely Related Work}
This paper investigates the optimality and complexity of FTPL in $m$-set semi-bandit problems. 
In the preliminary version of this paper,\kern0.03em\footnote{Available on arXiv \citep{chen2025note}.} we showed that for adversarial setting, FTPL respectively achieves $O(\sqrt{m^2 d^{1/\alpha}T}+\sqrt{mdT})$ regret with Fr\'{e}chet distribution, and optimal $O(\sqrt{mdT})$ regret with Pareto distribution with shape $\alpha>1$, revealing that Pareto perturbation has specially desirable properties even though the optimal regret bound for the classic MAB has been primarily established under Fr\'{e}chet perturbation \citep{pmlr-v201-honda23a}. 
After the preliminary version of this paper, a closely related work by \citet{zhan2025follow}, which focuses on FTPL with shape-$2$ Fr\'{e}chet distribution in $m$-set semi-bandits, showing the adversarial regret of $O(\sqrt{mdT\log d}+\sqrt{m^{8/3}T})$ and the stochastic regret of $O(\sum_{i:a^*=0}\frac{\log T}{\Delta_i})+O((m^2d\log d+m^{11/3}+md^2)/\Delta)$.

In this version of the paper, we show that FTPL achieves $O(\sqrt{mdT})$ regret in adversarial bandits not only with Pareto distribution but also with Fr\'{e}chet distribution with shape $\alpha>1$. 
We further provide a problem-dependent regret bound for Fr\'{e}chet and Pareto distributions in stochastic bandits, demonstrating that FTPL with these distributions with shape parameter $\alpha=2$ achieves stochastic regret of $O(\sum_{i:a^*_i=0}\frac{\log T}{\Delta_i})+O(\frac{m^3 d}{\Delta})$, thereby establishing the
BOBW guarantees in $m$-set semi-bandits. 
To the best of our knowledge, this is the first work that establishes the adversarial optimality and BOBW guarantee for FTPL in $m$-set semi-bandits. 
Furthermore, we extend the technique for loss estimation called Conditional Geometric Resampling (CGR) \citep{chen2025cgr} to $m$-set semi-bandits, which reduces the computational complexity from $O(d^2)$ of the original GR to $O\qty(md\qty(\log(d/m)+1))$ without sacrificing the regret guarantee of the one with the original GR. 
As a consequence, FTPL with CGR becomes the first policy for $m$-set semi-bandits that simultaneously achieves the BOBW optimality and provably nearly linear dependence on $d$ in computational complexity.

Although \citet{zhan2025follow} has proved that FTPL with shape-$2$ Fr\'{e}chet distribution achieves a logarithmic regret in stochastic $m$-set semi-bandits, their analysis appears to be tailored to the specific form of Fr\'{e}chet distribution, yet derives a relatively loose second-order regret bound (see also Section~\ref{subsec:regret_optimal_action}).
In this paper, we develop a novel analysis technique built upon the common structure of general Fr\'{e}chet-type distributions, which also substantially improves the second-order regret bound compared to the existing result.

\section{Problem Setup}

In this section, we formulate the problem and introduce the framework of FTPL with geometric resampling. 
We consider a $d$-dimensional action set $\nec=\{a\in\{0,1\}^d:\left\lVert a\right\rVert_1=m\}$ with each element $a\in\nec$ called an action (also called a super-arm).
Here, $m$ denotes the size of each action, i.e., the fixed number of base-arms $i\in[d]$ selected at each round.
At each round $t\in[T]=\qty{1,2,\dots,T}$, the environment determines a loss vector $\ell_t=(\ell_{t,1},\ell_{t,2},\dots,\ell_{t,d})\in[0,1]^d$, and the learner selects an action $a_t\in\nec$, with $a_{t,i}=1$ indicating that base-arm $i$ is selected. 
The learner then incurs a loss $\left\langle \ell_t,a_t\right\rangle$ and only observes the losses $\ell_{t,i}$ for which $a_{t,i}=1$.

The loss vector is determined in either a stochastic or
adversarial way.
In stochastic setting, the loss vectors $\{\ell_t\}_{t=1}^T$ are assumed to be i.i.d. from an unknown but fixed distribution $\mathcal{D}$ over $[0,1]^d$. 
We define the expectation of the losses as $\mu=\E_{\ell\sim\mathcal{D}}\qty[\ell]$. 
The suboptimality gap of base-arm $i$ is defined as $\Delta_i=\mu_i-\max_{j:a^*_j=1}\mu_j$, where $a^*=\argmin_{a\in\nec}a^\top \mu$ represents the fixed single optimal action. 
In adversarial setting, the loss vectors $\{\ell_t\}_{t=1}^T$ are not assumed to follow any specific distribution, which are determined in an arbitrary manner and may depend on the past history of the actions and losses $\{(\ell_s,a_s)\}_{s=1}^{t-1}$. 

The performance of the learner is evaluated in terms of the pseudo-regret, which is defined as
\begin{equation*}
    \mathcal{R}(T)=\E\qty[\sum_{t=1}^{T}\left\langle \ell_t,a_t-a^*\right\rangle],\quad a^*\in\min_{a\in\nec}\E\qty[\sum_{t=1}^{T}\left\langle \ell_t,a\right\rangle].
\end{equation*}

\begin{table}[t]
    \centering
    \caption{Notation.}
    \vspace{6pt}
    \label{tab:notation}
    \begin{tabular}{ll}
    \toprule
    \textbf{Symbol} & \textbf{Meaning} \\
    \midrule
    $\nec \subset \{0,1 \}^d$ & Action set \\
    $d \in \mathbb{N}$ & Dimensionality of action set \\
    $m \in [d]$ & $m = \|a\|_1$ for any $a\in\nec$, i.e., the size of each action \\
    $\eta_t \in\mathbb{R}^+$ & Learning rate\\
    $\ell_t\in[0,1]^d$ & Loss vector\\
    $\hat{\ell}_t\in[0,\infty)^d$ & Estimated loss vector\\
    $\hat{L}_t=\sum_{s=1}^{t-1}\hat{\ell}_s\in[0,\infty)^d$ & Cumulative estimated loss vector\\
    \midrule
    $\sigma_i(\bm{u},\B)$ & \parbox[t]{9cm}{Rank of $i$-th element of $\bm{u}$ in $\qty{u_j:j\in\B}$ (descending), 
    $\bm{u}$ (resp. $\B$) omitted when $\bm{u}=-\hat{L}_t$ (resp. $\B=[d]$)}\\
    \midrule
    $\nu\in\mathbb{R}$ & Left-end point of perturbation\\
    $r_t\in[\nu,\infty)^d$ & $d$-dimensional perturbation\\
    \midrule
    $f(x)$ & Probability density function of perturbation \\
    $F(x)$ & Cumulative distribution function of perturbation \\
    \midrule
    $\mathcal{F}_\alpha$ & Fr\'{e}chet distribution with shape $\alpha$ \\
    $\mathcal{P}_\alpha$ & Pareto distribution with shape $\alpha$ \\
    \bottomrule
    \end{tabular}
\end{table}

\subsection{Follow-the-Perturbed-Leader}

\begin{algorithm}[t]
    \caption{Follow-the-Perturbed-Leader}
    \label{alg:FTPL}
    \KwIn{Action set $\nec \subseteq \{0,1\}^d$, learning rate $\eta_t \in \mathbb{R}^+$\;}
    \textbf{Initialization: }$\hat{L}_1 \coloneqq\mathbf{0} \in \mathbb{R}^d$\;
    
    \For{$t=1, \dots, T$}{
        Sample $r_t=\qty(r_{t,1},r_{t,2},\dots,r_{t,d})$ i.i.d.~from $\mathcal{D}$\;
        Choose action $a_t = \argmin_{a \in \nec} \left\{a^\top (\eta_t \hat{L}_t - r_t)\right\}$ and observe $\qty{\ell_{t,i}:a_{t,i}=1}$\;
        Compute an estimator $\widehat{w_{t,i}^{-1}}$ for $w_{t,i}^{-1}$ by geometric resampling for all $i$ such that $a_{t,i}=1$\;
        Set $\hat{\ell}_t\coloneqq\sum_{i:a_{t,i}=1}\ell_{t,i}\widehat{w_{t,i}^{-1}}e_i$ and $\hat{L}_{t+1}\coloneqq\hat{L}_t+\hat{\ell}_t$\;
    }
\end{algorithm}

We consider the Follow-the-Perturbed-Leader (FTPL) policy given in Algorithm~\ref{alg:FTPL}, where $e_i$ denotes $d$-dimensional the unit vector whose $i$-th element is $1$ and the others are $0$. 
Since only partial feedback is available in the semi-bandit setting, FTRL and FTPL policies employ an estimator $\hat{\ell}_t$ for the loss vector $\ell_t$ at each round $t$, which is specified in Sections~\ref{subsec:GR} and \ref{subsec:CGR}.
In combinatorial semi-bandit problems, FTPL policy maintains a cumulative estimated loss $\hat{L}_t=\sum_{s=1}^{t-1}\hat{\ell}_s$ and selects an action
\begin{equation*}
    a_t = \argmin_{a \in \nec} \left\{a^\top (\eta_t \hat{L}_t - r_t)\right\},
\end{equation*}
where $\eta_t\in\mathbb{R}^+$ is the learning rate, and $r_t=\qty(r_{t,1},r_{t,2},\dots,r_{t,d})$ denotes the random perturbation i.i.d. from a common distribution $\mathcal{D}$ with cumulative distribution function $F$.
In this paper, we consider two perturbation distributions, both of which belong to a class of heavy-tailed distributions called Fr\'{e}chet-type distributions.
The first one is Fr\'{e}chet distribution $\Fdis$, with the probability density function $f(x)$ and the cumulative distribution function $F(x)$ given by
\begin{equation}\label{eq:frechet_def}
    f(x)=\alpha x^{-(\alpha+1)}e^{-1/x^{\alpha}},\quad
    F(x)=e^{-1/x^{\alpha}},\quad x\geq 0,\,\alpha>1.
\end{equation}
The second one is Pareto distribution $\Pdis$, whose density and cumulative distribution functions are given by
\begin{equation}\label{eq:pareto_def}
    f(x)=\alpha x^{-(\alpha+1)},\quad
    F(x)=1-x^{-\alpha},\quad x\geq 1,\,\alpha>1.
\end{equation}
Fr\'{e}chet distribution is standard for BOBW literature while Pareto distribution has more desirable properties and the analysis becomes simpler. 

In the following, we denote the rank of $i$-th element of $\bm{u}$ among index set $\B$ in descending order as $\sigma_i(\bm{u},\B)$, which has an important role in the analysis of FTPL. 
We omit $\bm{u}$ (resp.~$\B$) when $\bm{u}=-\hat{L}_t$ (resp.~$\B=[d]$). 
For example, one of the current best base-arms $\hat{i}_t^*\in\argmin_{j}\hat{L}_{t,j}$ satisfies $\sigma_i(-\hat{L}_t,[d])=\sigma_i=1$. 

Table~\ref{tab:notation} summarizes the notation used in this paper.

\subsection{Geometric Resampling}\label{subsec:GR}

\begin{algorithm}[t]
    \caption{Geometric Resampling}
    \label{alg:GR}
    \KwIn{Chosen action $a_t$, action set $\nec$, cumulative loss $\hat{L}_t$, learning rate $\eta_t$\;}
    Set $M_t \coloneqq \bm{0}\in\mathbb{R}^d$; $s\coloneqq a_t$\;
    \Repeat{$s=\bm{0}$}{
        $M_t \coloneqq M_t + s$\;
        Sample $r_t^{\prime} = (r_{t,1}^{\prime}, r_{t,2}^{\prime}, \dots, r_{t,d}^{\prime})$ i.i.d. from $\mathcal{D}$\;
        $a'_t = \argmin_{a \in \nec} \left\{a^\top (\eta_t \hat{L}_t - r'_t)\right\}$\;
        $s\coloneqq s\circ\qty(\bm{1}_d-a'_t)$\tcp*{$\bm{1}_d$ denotes the $d$-dimensional all-ones vector}
    }
    Set $\widehat{w_{t,i}^{-1}} \coloneqq M_{t,i}$ for all $i$ such that $a_{t,i}=1$\;
\end{algorithm}

Since the loss at each round is partially observable in the setting of semi-bandit feedback, many policies use an unbiased (or low-bias) estimator $\hat{\ell}_t$ for the loss vector $\ell_t$. 
In the combinatorial semi-bandit problem, many policies like FTRL \citep{zimmert2019beating} often employ an importance-weighted (IW) estimator 
$\hat{\ell}_t=\sum_{i:a_i=1}\qty(\ell_{t,i}/w_{t,i})e_{t,i}$, when the base-arm selection probability $w_{t,i}$ is explicitly computed. 
For FTPL policy where $w_{t,i}$ is not explicitly available, \citet{JMLR:v17:15-091} proposed a technique called Geometric Resampling (GR). 
In GR, another perturbation $r_t'$ is repeatedly sampled from $\mathcal{D}$ until $i$ becomes a base-arm of $\argmin_{a \in \nec} \{a^\top (\eta_t \hat{L}_t - r_t')\}$. 
Then, by setting $\widehat{w_{t,i}^{-1}}$ as the number of resampling, it serves as an unbiased estimator for $w_{t,i}^{-1}$. 
The procedure of GR is shown in Algorithm~\ref{alg:GR}, where the notation $a\circ b$ denotes the element-wise product of two vectors $a$ and $b$, i.e., $\qty(a\circ b)_i = a_i b_i$ for all $i$. 

For combinatorial semi-bandits with an arbitrary action set, the number of resampling $S_t=\max_{i:a_{t,i}=1}M_{t,i}$ under GR satisfies $\mathbb{E}[S_t|\hat{L}_t]\leq d$ \citep[Proposition~3]{JMLR:v17:15-091}. 
Moreover, each resampling step consists of generating a $d$-dimensional perturbation vector $r_t'$ and finding $\argmin_{a \in \nec} \{a^\top (\eta_t \hat{L}_t - r'_t)\}$, where the former operation requires $O(d)$ time. 
The computational cost of the latter operation depends on the structure of the action set $\nec$. 
For $m$-set semi-bandits, this operation amounts to selecting the top-$m$ smallest components of a $d$-dimensional vector and does not require sorting. 
This requires $O(d\log d)$ time by a naive sorting algorithm, and it can be further improved to $O(d)$ time by using linear-time selection \citep{blum1973time}.
Consequently, the average complexity of GR at each round can be bounded as
\begin{equation*}
    C_{\text{GR}}=\mathbb{E}[S_t|\hat{L}_t]\cdot O(d)\leq O(d^2),
\end{equation*}
which is independent of $w_t$. 

Compared with many other policies, in $m$-set semi-bandit problems, FTPL with GR is computationally more efficient, thanks to its optimization-free nature. 
While this complexity upper bound $O(d^2)$ of GR does not improve in the case of MAB (case of $m=1$), \citet{chen2025cgr} proposed the technique called Conditional Geometric Resampling (CGR), which reduces the complexity to $O(d \log d)$ for MAB. We extend this technique to $m$-set semi-bandits in the next section.

\subsection{Conditional Geometric Resampling for \texorpdfstring{$m$}{m}-Set Semi-Bandits}\label{subsec:CGR}

Building on the idea proposed by \citet{chen2025cgr}, this section introduces an extension of CGR to the $m$-set semi-bandits. 
This algorithm is designed to provide an unbiased estimator $\{w_{t,i}^{-1}:a_{t,i}=1\}$ in a more efficient way based on the following elementary lemma.
\begin{lemma}\label{lem:cgr_general_idea}
    For any base-arm $i\in[d]$ let $\mathcal{E}_{t,i}$ be an arbitrary necessary condition on perturbation vector $r''_t$ for
    \begin{equation}\label{eq:termination}
        \qty[\argmin_{a \in \nec} \left\{a^\top (\eta_t \hat{L}_t - r''_t)\right\}]_i=1.
    \end{equation}
    Consider resampling of $r_t''$ from $\mathcal{D}$ conditioned on
    $\mathcal{E}_{t,i}$ until \eqref{eq:termination} is satisfied and
let $M_{t,i}$ be the number of resampling.
    Then, $\frac{M_{t,i}}{\sP[\mathcal{E}_{t,i}|\hat{L}_t]}$ is an unbiased estimator for $w_{t,i}^{-1}$, that is,
    \begin{equation*}
        \E[M_{t,i}|\hat{L}_t]=\frac{\sP[\mathcal{E}_{t,i}|\hat{L}_t]}{w_{t,i}}.
    \end{equation*}
Furthermore, its variance satisfies
\begin{align}
        \mathrm{Var}\qty[\frac{M_{t,i}}{\sP[\mathcal{E}_{t,i}|\hat{L}_t]}\m\hat{L}_t]
        &=
            \frac{1}{w_{t,i}^2}-\frac{1}{\sP(\mathcal{E}_{t,i})w_{t,i}}
\le
            \frac{1}{w_{t,i}^2}-\frac{1}{w_{t,i}}.\label{variance_cgr}
\end{align}
\end{lemma}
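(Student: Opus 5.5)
Conditioned on $\hat{L}_t$, I will treat the resampling as an i.i.d. sequence of Bernoulli trials, so that $M_{t,i}$ is geometric. Write $A_{t,i}$ for the event \eqref{eq:termination} on a fresh perturbation $r''_t\sim\mathcal{D}$. Then $w_{t,i}=\sP[A_{t,i}\mid\hat{L}_t]$, because the original perturbation $r_t$ and $r''_t$ have the same law and $\hat{L}_t,\eta_t$ are fixed given the history. Since $\mathcal{E}_{t,i}$ is a necessary condition for $A_{t,i}$, we have $A_{t,i}\subseteq\mathcal{E}_{t,i}$. The first step is therefore to compute the per-trial success probability under the conditional law:
\begin{equation*}
p\coloneqq\sP[A_{t,i}\mid\mathcal{E}_{t,i},\hat{L}_t]=\frac{\sP[A_{t,i}\cap\mathcal{E}_{t,i}\mid\hat{L}_t]}{\sP[\mathcal{E}_{t,i}\mid\hat{L}_t]}=\frac{w_{t,i}}{\sP[\mathcal{E}_{t,i}\mid\hat{L}_t]}.
\end{equation*}

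Next I will argue that $M_{t,i}$ is geometric on $\{1,2,\dots\}$ with parameter $p$. The successive conditional draws are independent and identically distributed given $\hat{L}_t$, each drawn from $\mathcal{D}^{\otimes d}$ restricted to $\mathcal{E}_{t,i}$, and the procedure stops at the first success. Hence $\E[M_{t,i}\mid\hat{L}_t]=1/p=\sP[\mathcal{E}_{t,i}\mid\hat{L}_t]/w_{t,i}$, which gives the unbiasedness claim after dividing by $\sP[\mathcal{E}_{t,i}\mid\hat{L}_t]$. This needs $w_{t,i}>0$, which holds for the full-support perturbations considered here, and $\sP[\mathcal{E}_{t,i}\mid\hat{L}_t]\ge w_{t,i}>0$, so that the conditioning is well defined.

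For the variance I will use $\mathrm{Var}[M]=(1-p)/p^2$ for a geometric variable, and then scale by $1/\sP[\mathcal{E}_{t,i}]^2$. Writing $P_{\mathcal{E}}=\sP[\mathcal{E}_{t,i}\mid\hat{L}_t]$, this gives
\begin{equation*}
\frac{1}{P_{\mathcal{E}}^2}\cdot\frac{1-p}{p^2}=\frac{1}{P_{\mathcal{E}}^2}\qty(\frac{P_{\mathcal{E}}^2}{w_{t,i}^2}-\frac{P_{\mathcal{E}}}{w_{t,i}})=\frac{1}{w_{t,i}^2}-\frac{1}{P_{\mathcal{E}}w_{t,i}}.
\end{equation*}
The inequality in \eqref{variance_cgr} then follows from $P_{\mathcal{E}}\le1$.

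The only delicate step is justifying the geometric law. I need the independence of the conditional resamples and the identification of $w_{t,i}$ with the probability of $A_{t,i}$ under a fresh draw. Both follow from the fact that, given $\hat{L}_t$, the resampling uses randomness independent of $r_t$ and of the past. Ties in the $\argmin$ occur with probability zero because $\mathcal{D}$ has a density. The remaining computations are routine.
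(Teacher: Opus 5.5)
Your proposal is correct and follows essentially the same route as the paper's proof. Both compute the per-trial success probability as $w_{t,i}/\sP[\mathcal{E}_{t,i}\mid\hat{L}_t]$ from the fact that $\mathcal{E}_{t,i}$ is a necessary condition (the paper phrases this via the law of total probability with $\sP[\chi_{t,i}=1\mid\mathcal{E}_{t,i}^c,\hat{L}_t]=0$), then read off the geometric mean $1/p$ and variance $1/p^2-1/p$ and rescale by $\sP[\mathcal{E}_{t,i}\mid\hat{L}_t]$; your remarks on $w_{t,i}>0$ and the independence of the conditional draws just make explicit what the paper leaves implicit.
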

The proof of this lemma is almost the same as that for MAB in \citet{chen2025cgr}
but given in Appendix~\ref{append_proof_cgr} for self-containedness.
From this lemma we can use $M_{t,i}/\sP[\mathcal{E}_{t,i}|\hat{L}_t]$ as
an unbiased estimator for $w_{t,i}$ and the expected number of resampling becomes
small if $\sP[\mathcal{E}_{t,i}|\hat{L}_t]$ is small.
The regret bounds in the subsequent sections hold
regardless of the use of GR (corresponding to $\mathcal{E}_{t,i}=\mathsf{True}$) or CGR
since we only use the last inequality of \eqref{variance_cgr} in the regret analysis.

\begin{algorithm}[t]
    \caption{Conditional Geometric Resampling}
    \label{alg:CGR}
    \KwIn{Chosen action $a_t$, action set $\nec$, cumulative loss $\hat{L}_t$, learning rate $\eta_t$\;}
    Set $M_t \coloneqq \bm{0}\in\mathbb{R}^d$; $s\coloneqq a_t$; $U\coloneqq\varnothing $; $C\coloneqq\mathbf{1}_d\in\mathbb{R}^d$\;
    Set $U\coloneqq\{i\in[d]: a_{t,i}=1, \sigma_i>m\}$ with $C_i\coloneqq \sigma_i/m$ for $i\in U$\;
\label{line:scan_end}

    \Repeat{$s = \bm{0}$\label{line:perturbation_end}}{\label{line:perturbation_begin}
        $M_t \coloneqq M_t + s$\label{line:global_begin}\;

        Sample $r_t^{\prime} = (r_{t,1}^{\prime}, r_{t,2}^{\prime}, \dots, r_{t,d}^{\prime})$ i.i.d. from $\mathcal{D}$\label{line:sample_d}\;

        $a'_t \coloneqq \argmin_{a \in \nec} \left\{a^\top (\eta_t \hat{L}_t - r'_t)\right\}$\;\label{line:global_end}

        Sample $\theta$ from $[m]$ uniformly at random\label{line:theta}\;

        \For{$i\in U$}{
            Find $i'$ such that $r'_{t,i'}$ is the $\theta\text{-th}$ largest in $\qty{r'_{t,j}:\sigma_j\leq\sigma_i}$\label{line:swap_begin}\;
            Set $r_t''\coloneqq r_t'$\;
            Swap $r''_{t,i'}$ and $r''_{t,i}$\label{swap_end}\label{line:swap_end}\;
            $a'_{t,i}\coloneqq\qty[\argmin_{a \in \nec} \left\{a^\top (\eta_t \hat{L}_t - r''_t)\right\}]_i$\label{line:find_a}\;
            \lIf{$a'_{t,i}=1$}{$U\coloneqq U\setminus\{i\}$}\label{line:inner_end}
        }
        $s \coloneqq s \circ \qty(\mathbf{1}_d-a'_t)$\label{line:global_2}\;
        }

    Set $\widehat{w_{t,i}^{-1}} \coloneqq C_i M_{t,i}$ for all $i$ such that $a_{t,i}=1$\;
\end{algorithm}%
The next lemma gives a construction of $\mathcal{E}_{t,i}$
under which $\sP[\mathcal{E}_{t,i}|\hat{L}_t, a_{t,i}]$ is small and sampling from
the conditional distribution can be realized efficiently by Algorithm~\ref{alg:CGR}.
\begin{lemma}\label{lem:CGR}
For any base arm $i\in[d]$ such that $\sigma_i>m$, let
    $$\mathcal{E}_{t,i}=\qty{\left|\qty{ j:r''_{t,j}\leq r''_{t,i},\sigma_j\leq\sigma_i} \right|\leq m}.$$
Then,
the sample $r''_t$ obtained in Line~\ref{line:swap_end} of Algorithm~\ref{alg:CGR} for $i$ such that $\sigma_i>m$ follows the conditional distribution of $\mathcal{D}$ given $\mathcal{E}_{t,i}$.
In addition, for all $i\in [d]$ such that $a_{t,i}=1$,
the expected number of resampling $M_{t,i}$ satisfies
    \begin{equation}\label{eq:resampling_bound}
        \E_{r''_t\sim\mathcal{D}|\mathcal{E}_{t,i}}
\qty[M_{t,i}\m \hat{L}_{t}]=
\left(\frac{\sigma_i}{m}\lor 1\right)\frac{1}{w_{t,i}}.
    \end{equation}
\end{lemma}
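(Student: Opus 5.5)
The plan is to treat the two claims separately. First I identify the law of $r''_t$ produced in Lines~\ref{line:swap_begin}--\ref{line:swap_end} using exchangeability. Then I combine this with Lemma~\ref{lem:cgr_general_idea} to get the resampling count. Throughout, fix $\hat{L}_t$ and write $S_i=\{j:\sigma_j\le\sigma_i\}$, so $|S_i|=\sigma_i$ with ties in $\hat{L}_t$ broken consistently by $\sigma$. Since $\mathcal{D}$ is continuous, ties among the $r'_{t,j}$ have probability zero.

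I read $\mathcal{E}_{t,i}$ as ``$r''_{t,i}$ is among the $m$ largest of $\{r''_{t,j}:j\in S_i\}$'', which is what the swap with the $\theta$-th largest element realizes. \emph{Necessity.} Suppose $i$ is selected by $\argmin_{a}\{a^\top(\eta_t\hat{L}_t-r''_t)\}$, i.e.\ $r''_{t,i}-\eta_t\hat{L}_{t,i}$ is among the $m$ largest values of $r''_t-\eta_t\hat L_t$. Every $j\in S_i$ has $\hat{L}_{t,j}\le\hat{L}_{t,i}$, so any such $j$ with $r''_{t,j}>r''_{t,i}$ also beats $i$. Hence at most $m-1$ indices of $S_i\setminus\{i\}$ have a larger perturbation, which is exactly $\mathcal{E}_{t,i}$.

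\emph{Conditional law.} Decompose $r'_t$ restricted to $S_i$ into its order statistics and the random bijection $\pi$ assigning ranks to indices. By the i.i.d.\ assumption, $\pi$ is uniform on bijections $S_i\to[\sigma_i]$ and independent of the order statistics and of $(r'_{t,j})_{j\notin S_i}$. The swap in Line~\ref{line:swap_end} replaces $\pi$ by $\tau\circ\pi$, where $\tau$ is the transposition of $i$ with $\pi^{-1}(\theta)$, and leaves everything else unchanged. Two facts follow. The new rank of $i$ is $\theta$, which is uniform on $[m]$ and independent of the rest. Conditional on $\theta$, the ranks of $S_i\setminus\{i\}$ under $\tau\circ\pi$ form a uniform bijection onto $[\sigma_i]\setminus\{\theta\}$, because composing with a transposition that depends only on $\pi(i)$ preserves uniformity on the fiber. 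This is the law of $r'_t$ conditioned on ``rank of $i$ in $S_i$ equals $\theta$''. Since each rank has probability $1/\sigma_i$ unconditionally, mixing uniformly over $\theta\in[m]$ gives exactly the law of $\mathcal{D}$ conditioned on $\mathcal{E}_{t,i}$. I expect this bookkeeping to be the main obstacle: showing the swap does not bias the relative order of the other coordinates. The explicit rank--permutation decomposition above is how I would handle it. Fresh samples are drawn in every outer iteration (Line~\ref{line:sample_d}), so the successive conditional draws for a fixed $i$ are i.i.d.

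\emph{Resampling count.} From the rank description, $\sP[\mathcal{E}_{t,i}\mid\hat{L}_t]=m/\sigma_i$ when $\sigma_i>m$. Each $i\in U$ is retried with an independent conditional sample until Line~\ref{line:find_a} returns $1$. Lemma~\ref{lem:cgr_general_idea} then gives $\E[M_{t,i}\mid\hat{L}_t]=\sP[\mathcal{E}_{t,i}\mid \hat L_t]/w_{t,i}=(m/\sigma_i)/w_{t,i}$, so that the output $C_iM_{t,i}=(\sigma_i/m)M_{t,i}$ has mean $1/w_{t,i}$. This is the normalization in \eqref{eq:resampling_bound}, read as a statement about the scaled count $\widehat{w_{t,i}^{-1}}=C_iM_{t,i}$. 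For $i$ with $a_{t,i}=1$ and $\sigma_i\le m$, we have $i\notin U$ and $C_i=1$, so $M_{t,i}$ is driven only by the unconditional global sample in Lines~\ref{line:global_begin}--\ref{line:global_end}. This is plain GR with $\mathcal{E}_{t,i}=\mathsf{True}$, giving $\E[M_{t,i}]=1/w_{t,i}$, which matches $(\sigma_i/m)\lor1=1$.

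It remains to check that updating $s$ via the global $a'_t$ does not interfere for $i\in U$. For such $i$, the counter $M_{t,i}$ keeps increasing exactly until $i$ leaves $U$. I will verify this by noting that each $i\in U$ also has $s_i=1$ until the same iteration at which it leaves $U$, or otherwise adjust the indexing so that $M_{t,i}$ counts the conditional trials.
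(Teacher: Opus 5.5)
Your argument is correct and is essentially the paper's: its sum over the $\sigma_i$ possible swap partners, each term equal by exchangeability, is the same counting as your observation that every rank assignment placing $i$ at rank $\theta$ has exactly $\sigma_i$ preimages under the swap, and the count then follows from Lemma~\ref{lem:cgr_general_idea} with $\sP[\mathcal{E}_{t,i}\mid\hat{L}_t]=m/\sigma_i$. Your necessity check for $\mathcal{E}_{t,i}$ (correctly read as ``$r''_{t,i}$ is among the $m$ largest'') and the $s$/$U$ bookkeeping are details the paper leaves implicit; the bookkeeping holds directly because Line~\ref{line:find_a} overwrites $a'_{t,i}$ before Line~\ref{line:global_2}. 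The one thing to fix is the framing of the last step: what you (and the paper's proof) actually show is $\E[M_{t,i}\mid\hat{L}_t]=\bigl(\tfrac{m}{\sigma_i}\land 1\bigr)\tfrac{1}{w_{t,i}}$, equivalently $\E[C_iM_{t,i}\mid\hat{L}_t]=1/w_{t,i}$, so the factor $(\sigma_i/m\lor 1)$ in the displayed equation is a typo (Appendix~\ref{append_complexity} indeed uses $m/\sigma_i$) and should be corrected rather than reinterpreted.
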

The proof of this lemma is given in Appendix~\ref{append_lem_cgr}.
The first part of this lemma shows that sampling from the conditional distribution
is realized by selection of the $\theta$-th largest element for random index $\theta\in[m]$ (Line~\ref{line:swap_begin})
and value swapping (Line~\ref{line:swap_end}),
the former of which is specific to $m$-set semi-bandits.
With this fact and the second part of this lemma on the number of resampling,
we can bound the average complexity of CGR at each round by
\begin{equation}
C_{\text{CGR}}= O\qty(md\qty(\log\qty(d/m)+1)),\n
\end{equation}
which is detailed in Appendix~\ref{append_complexity}.
Though our main contribution lies in the regret analysis,
this complexity guarantee also demonstrates the practicality of FTPL.

\section{Regret Bounds}

In this section, we first give regret bounds of FTPL with perturbation distribution $\dis\in\qty{\Fdis,\Pdis}$ in $m$-set semi-bandit problem for adversarial setting. Then, we present the regret bounds of FTPL for stochastic setting in Theorems~\ref{thm:sto_bound} and \ref{thm:sto_bound_alpha_not2}, respectively corresponding to $\alpha=2$ and $\alpha\in(1,2)\cup(2,\infty)$. 
Note that, by the argument in Lemma~\ref{lem:cgr_general_idea}, the following theorems hold for FTPL with GR or CGR.

\begin{theorem}\label{thm:adv_bound}
    In the adversarial setting, FTPL with $\dis\in\qty{\Fdis,\Pdis}$ and learning rate $\eta_t=\frac{c}{\sqrt{t}}m^{\frac{1}{2}-\frac{1}{\alpha}}d^{\frac{1}{\alpha}-\frac{1}{2}}$ for $c>0$ and $\alpha>1$ satisfies
    \begin{equation*}
        \R(T)\leq O\qty(\sqrt{mdT}).
    \end{equation*}
\end{theorem}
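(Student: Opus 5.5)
We follow the standard FTPL regret decomposition of \citet{pmlr-v201-honda23a} and \citet{JMLR:v17:15-091}, adapted to $m$-set actions. Let $\phi_t(\lambda)=\E_{r}[\max_{a\in\nec}a^\top(r-\lambda)]$, so that $\nabla\phi_t(\eta_t\hat L_t)=-w_t$. Viewing FTPL as FTRL with the convex conjugate of this smoothed potential, we will write
\[
\R(T)\le \underbrace{\sum_{t}\Big(\frac{1}{\eta_{t+1}}-\frac1{\eta_t}\Big)\E\big[\langle \hat a_{t+1}^{\rm pert}-a^*, r\rangle\big]+\frac{\E[\max_a a^\top r]}{\eta_1}}_{\text{penalty}}+\underbrace{\sum_t \E\big[\langle \hat\ell_t, w_t-w_{t+1}\rangle\big]}_{\text{stability}} .
\]
Here $w_{t+1}$ denotes the selection probabilities after adding $\hat\ell_t$ with the same $\eta_t$. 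Unbiasedness of the GR/CGR estimator (Lemma~\ref{lem:cgr_general_idea}) lets us replace $\ell_t$ by $\hat\ell_t$ in expectation.

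\textbf{Penalty term.} The expected sum of the top $m$ of $d$ i.i.d.\ Fr\'echet or Pareto variables with shape $\alpha>1$ is $O(m(d/m)^{1/\alpha})$. This follows by splitting at the quantile $x_0=(d/m)^{1/\alpha}$: tail integration gives $m x_0+d\int_{x_0}^\infty x^{-\alpha}dx=O(m (d/m)^{1/\alpha})$. With $\eta_t\propto t^{-1/2}m^{1/2-1/\alpha}d^{1/\alpha-1/2}$ the penalty is therefore $O(\sqrt T\, m(d/m)^{1/\alpha}m^{-1/2+1/\alpha}d^{1/2-1/\alpha})=O(\sqrt{mdT})$.

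\textbf{Stability term.} This is the main obstacle. Following \citet{pmlr-v201-honda23a}, the stability term reduces to bounding, for each selected base arm $i$, the quantity $\E[\hat\ell_{t,i}(w_{t,i}-w_{t+1,i})]$. Conditionally on $\hat L_t$, only coordinate $i$ moves, by $\eta_t\hat\ell_{t,i}$. Using $\hat\ell_{t,i}\le M_{t,i}$-type bounds and the variance bound \eqref{variance_cgr} ($\E[\widehat{w^{-1}}^2]\le 2/w^2$), we reduce to showing
\[
\E\big[\hat\ell_{t,i}\,(w_{t,i}(\lambda)-w_{t,i}(\lambda+\hat\ell_{t,i}e_i))\big]\lesssim \eta_t\,\frac{-\partial_{\lambda_i}w_{t,i}}{w_{t,i}^{2}}\cdot w_{t,i},
\]
followed by a bound of the form $-\partial_{\lambda_i}w_i\le C\, w_i^{1+1/\alpha}$ times an $m$-dependent factor. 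To establish this, we write $w_i=\sP[r_i-\lambda_i\ \text{exceeds the $m$-th largest of } (r_j-\lambda_j)_{j\ne i}]$ and condition on the $m$-th order statistic $\Theta$ of the other arms. This gives $w_i=\E[1-F(\lambda_i+\Theta)]$ and $-\partial w_i=\E[f(\lambda_i+\Theta)]$. For Fr\'echet-type tails, $f(x)\le C (1-F(x))^{1+1/\alpha}$ for large $x$, and Jensen then gives $-\partial w_i\lesssim w_i^{1+1/\alpha}$. Handling heavy $\hat\ell$ (the $e^{-\hat\ell}$ type monotonicity ratio $w_i(\lambda+x e_i)/w_i(\lambda)$) uses the same monotone-hazard argument as in the MAB case. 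For Fr\'echet we must additionally handle small $x$ near the left end of the support, where $f/(1-F)^{1+1/\alpha}$ is not bounded; we would try to show this region contributes only when $w_i$ is bounded below by a constant and absorb it there.

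\textbf{Summing over arms.} The stability per round is $O(\eta_t\sum_i w_{t,i}^{1-1/\alpha})$. By Hölder with $\sum_i w_{t,i}=m$, we get $\sum_i w_{t,i}^{1-1/\alpha}\le d^{1/\alpha}m^{1-1/\alpha}$. Summing $\eta_t$ over $t$ gives $O(\sqrt T\,m^{1/2-1/\alpha}d^{1/\alpha-1/2}d^{1/\alpha}m^{1-1/\alpha})$. To balance this with the penalty, the per-arm bound must actually read $\eta_t w_{t,i}^{1-1/\alpha}m^{1/\alpha-1/2}\cdots$, i.e.\ the $m$-dependence must come out as $(d/m)^{1/\alpha}$ matching; we expect the conditioning on the $m$-th order statistic to supply exactly an extra $m^{-1/\alpha}$-type factor, since $\Theta$ is typically of size $(d/m)^{1/\alpha}$. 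Getting this $m$-dependence sharp, rather than the lossy $\sqrt{m^2d^{1/\alpha}T}$ of the preliminary version, is the delicate part. Combining both terms yields $\R(T)=O(\sqrt{mdT})$.
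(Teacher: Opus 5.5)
Your regret decomposition is the same as the paper's, and your penalty bound is correct. The quantile split at $(d/m)^{1/\alpha}$ is a cleaner route than the order-statistics computation of Lemma~\ref{lem:max_sum_perturbation}, since $1-F(x)\le x^{-\alpha}$ for both distributions.

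The stability argument, however, fails at its central step. Write $X=1-F(\lambda_i+\Theta)$. You correctly have $-\partial_{\lambda_i}w_i=\E[f(\lambda_i+\Theta)]\le C\,\E[X^{1+1/\alpha}]$. But $x\mapsto x^{1+1/\alpha}$ is convex, so Jensen gives $\E[X^{1+1/\alpha}]\ge w_i^{1+1/\alpha}$, which is the wrong direction. What you actually need is a reverse-Jensen statement: $X$ has no heavy upper tail relative to its mean, uniformly over all configurations of the other $\lambda_j$. Since $\Theta$ is the $m$-th largest of heterogeneously shifted heavy-tailed variables, this is exactly the hard part of the theorem. The only cheap estimate is $\E[X^{1+1/\alpha}]\le\|X\|_\infty^{1/\alpha}\E[X]$. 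It recovers just the $1/(0\lor\ul_i)$ bound, which is at best $O(1)$ per arm for arms near the top, and so gives only $O(\eta_t d)$ stability per round.

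The paper does not prove a probability-based bound like yours. It proves the rank-based bound $J_i/I_i\le C(1\land m/\sigma_i)^{1/\alpha}$ (Lemma~\ref{lem:sigma_i}), where $J_i\ge-\partial_{\lambda_i}\phi_i/(\alpha+1)$ and $J_i$ is monotone in $\lambda_i$. Getting there requires the recursive reduction of Lemma~\ref{lem:lambda_star}:
\begin{itemize}
\item split $I_i$ and $J_i$ via Lemma~\ref{lem:tool_relation};
\item show that the partial ratio $J_{i,\wm,j}/I_{i,\wm,j}$ is increasing in $\lambda_j$ (Lemma~\ref{lem:both_increasing});
\item chain Lemmas~\ref{lem:bgaf}, \ref{lem:important_inequality} and \ref{lem:infty} to raise all better arms to $\ul_i$ and remove the worse ones, at the price of a maximum over reduced slot counts $\wm$.
\end{itemize}
The resulting symmetric integrals are then Beta functions. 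Your plan has nothing in place of this step.

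Your last paragraph is chasing an arithmetic slip rather than a real phenomenon. If $-\partial w_i\lesssim w_i^{1+1/\alpha}$, then $\eta_t(-\partial w_i)/w_i$ gives $\eta_t w_i^{1/\alpha}$, not $\eta_t w_i^{1-1/\alpha}$. Hölder then gives $\sum_i w_i^{1/\alpha}\le m^{1/\alpha}d^{1-1/\alpha}$. With the stated $\eta_t$, this balances the penalty $m^{1-1/\alpha}d^{1/\alpha}/\eta_T$ with no extra factor, consistent with the paper's $\sum_i(1\land m/i)^{1/\alpha}$. No additional $m^{-1/\alpha}$ from conditioning on $\Theta$ is needed. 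Your Fr\'echet left-endpoint worry is also unnecessary: $f(x)\le\alpha(1-F(x))^{1+1/\alpha}$ holds for all $x>0$. The obstruction lies in the averaging over $\Theta$, not in any pointwise density bound.
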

The proof is provided in Sections~\ref{sec:regret_analysis} and \ref{sec:stab_adv}. 
This result shows that FTPL with Fr\'{e}chet or Pareto distributions with shape $\alpha>1$ achieves the optimality in adversarial $m$-set semi-bandit problem. 
The explicit constants are provided in the proof in Section~\ref{sec:proof_adv_bound}, which are simpler for Pareto distribution than for Fr\'{e}chet distribution.
The main challenge of the proof arises from a component called stability term, as the relation between the base-arm selection probability and its derivative is intricate, which is detailed in Section~\ref{sec:stab_adv}.

The following result shows that FTPL with GR or CGR with $\mathcal{F}_2$ or $\mathcal{P}_2$ can achieve a logarithmic regret in stochastic $m$-set semi-bandit problem.
\begin{theorem}\label{thm:sto_bound} 
    Assume that $a^*=\argmin_{a\in\mathcal{A}}a^\top \mu$ is unique and let $\Delta_i=\mu_i-\max_{j:a^*_j=1}\mu_j$, $\Delta=\min_{i:a^*_i=0}\Delta_i$.
    Then, FTPL with $\dis=\qty{\mathcal{F}_2,\mathcal{P}_2}$ and learning rate $\eta_t=\frac{c}{\sqrt{t}}$ for some constant $c>0$ satisfies
    \begin{equation*}
        \R(T)\leq O\qty(\sum_{i:a^*_i=0}\frac{\log T}{\Delta_i})+O\qty(\frac{m^3d}{\Delta}).
    \end{equation*}
\end{theorem}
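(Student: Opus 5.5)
We follow the standard FTPL/FTRL self-bounding route used for BOBW in MAB \citep{pmlr-v201-honda23a,pmlr-v247-lee24a}, adapted to $m$-set semi-bandits.

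\emph{Step 1: decomposition.} We split the pseudo-regret into a stability term and a penalty term,
\[
\R(T)\le \E\Big[\sum_{t}\big\langle \hat\ell_t,\, w_t-w_{t+1}\big\rangle\Big]+\E\Big[\sum_t\Big(\tfrac{1}{\eta_{t+1}}-\tfrac{1}{\eta_t}\Big)\big\langle w_{t+1}-a^*,\, -r\big\rangle\Big]
\]
in expectation over the perturbation. Here $w_t=\E[a_t\mid\hat L_t]$, and the estimator's variance is controlled only through the last inequality of \eqref{variance_cgr}. Both terms are bounded per round in terms of $w_t$. The stability term at round $t$ is $O\big(\eta_t\sum_{i} w_{t,i}^{-1}\,(-\partial w_{t,i}/\partial \hat L_{t,i})\big)$. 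The penalty at round $t$ is $O\big((1/\eta_{t+1}-1/\eta_t)\cdot \E[\text{gap of perturbation order statistics}]\big)$.

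\emph{Step 2: gap-dependent per-round bounds for $\alpha=2$.} We aim to show that both terms are bounded by $O\big(\tfrac{1}{\sqrt t}\sum_{i:a^*_i=0}\sqrt{w_{t,i}}\big)$ plus a residual. The residual is controlled by $\sum_{i:a^*_i=0}w_{t,i}$ with coefficient $\mathrm{poly}(m)$, and it stays bounded when $\hat L_t$ is far from favoring $a^*$.

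For the suboptimal arms we use the Fr\'{e}chet-type tail $1-F(x)\asymp x^{-2}$. For $i$ with $\sigma_i>m$, this gives $w_{t,i}\asymp \sum(\eta_t(\hat L_{t,i}-\hat L_{t,(m)}))^{-2}$-type behavior and $|\partial w_{t,i}|\lesssim \eta_t w_{t,i}^{3/2}$. Hence $\eta_t w_{t,i}^{-1}|\partial w_{t,i}|\lesssim \eta_t\sqrt{w_{t,i}}$. The penalty term for $\alpha=2$ is bounded in the same form, because the expected contribution of arm $i$ beating the $m$-th best scales as $\sqrt{w_{t,i}}$.

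For the arms in $a^*$, we must show that their contribution is $O\big(\sum_{i:a^*_i=0}\sqrt{w_{t,i}}\big)$ rather than $O(\sqrt{m})$. This is the analogue of the "optimal arm" term in MAB. We use $1-w_{t,i}\le\sum_{j:a^*_j=0}w_{t,j}$ for $i\in a^*$ (because $\sum_i w_{t,i}=m$) together with the monotone relation between the derivative and $1-w_{t,i}$.

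\emph{Step 3: self-bounding.} Set $\eta_t=c/\sqrt t$ and use $\E[w_{t,i}]=\E[\sP(a_{t,i}=1)]$, which gives $\R(T)\ge \sum_t\sum_{i:a^*_i=0}\Delta_i\E[w_{t,i}]$. Then
\[
\R(T)=2\R(T)-\R(T)\le \sum_t\sum_{i:a^*_i=0}\Big(\tfrac{C}{\sqrt t}\sqrt{\E w_{t,i}}-\Delta_i\E w_{t,i}\Big)+(\text{residual}),
\]
and the bracketed expression is at most $\sum_t\sum_{i:a^*_i=0}\frac{C^2}{4t\Delta_i}=O\big(\sum_{i:a^*_i=0}\frac{\log T}{\Delta_i}\big)$.

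The residual, on rounds where $\hat L_t$ has not yet concentrated around $t\mu$, is handled as in \citet{pmlr-v201-honda23a}. We bound the probability that some suboptimal arm has $\hat L_{t,i}-\hat L_{t,j}\le t\Delta/2$ for some $j\in a^*$ using concentration of the GR estimates, whose variance is bounded by $1/w^2$. This gives an $O(\mathrm{poly}(m)\,d/\Delta)$ contribution, which we aim to make $O(m^3d/\Delta)$.

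\emph{Main obstacle.} The hardest step is Step 2 for the arms in $a^*$: bounding $\sum_{i\in a^*}\eta_t w_{t,i}^{-1}|\partial_i w_{t,i}|$ by the suboptimal arms' $\sqrt{w_{t,j}}$ with only $\mathrm{poly}(m)$ loss. The $m$-set structure couples the top-$m$ order statistics, so the MAB identity relating the derivative to the complement probability no longer applies directly. We would first express $w_{t,i}$ through the event that arm $i$ beats the $(m+1)$-th largest perturbed score among the other arms. We would then differentiate under this representation, and bound the resulting conditional density via the Fr\'{e}chet-type tail uniformly over the $m$ competing order statistics.
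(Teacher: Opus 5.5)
There is a genuine gap, and it sits where the $O(m^3d/\Delta)$ term has to come from.

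\textbf{Step~3 cannot control the unfavourable rounds.} You handle rounds on which $\hat L_t$ does not yet favour $a^*$ by concentration of the GR estimates around $t\mu$. This does not work. While the algorithm behaves well, a suboptimal arm has $w_{t,i}\approx(\eta_t\underline{\hat{L}}_{t,i})^{-2}\approx 1/(c^2\Delta_i^2t)$. Hence $\E[\hat\ell_{t,i}^2]\le 2/w_{t,i}$ grows linearly in $t$, and single increments are of size $\approx t$. So $\hat L_{t,i}$ fluctuates on the scale $c\Delta_i t$, the same scale as the margin $t\Delta/2$ you want to certify. Bernstein or Freedman then give a failure probability that does not decay in $t$, so nothing summable comes out. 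This is also not how \citet{pmlr-v201-honda23a} and \citet{pmlr-v247-lee24a} proceed.

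The missing ingredient is an event split combined with a regret \emph{lower} bound off the event. The paper defines $F_t$ (resp.\ $D_t$) by $\sum_{i:a_i^*=0}(\eta_t\underline{\hat{L}}_{t,i})^{-\alpha}\le m^{-\alpha/(\alpha-1)}$. On $F_t^c$ it uses only the adversarial bound $O(\sqrt{md/t})$ for stability plus penalty. It then proves $\sum_{i:a_i^*=0}\Delta_iw_{t,i}\ge c\,\Delta/m^{\alpha/(\alpha-1)}$ on $F_t^c$, by monotonicity of $\phi$ and reduction to a MAB instance on $\{i:a^*_i=0\}\cup\{j^*\}$ (Lemmas~\ref{lem:lower_bound_Ft} and \ref{lem:lower_bound_Dt}). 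For $\alpha=2$ this yields $\sum_t\max\{C\sqrt{md/t}-c\Delta/(2m^2),0\}=O(m^3d/\Delta)$. Your sketch has no lower bound on $\sum_{i:a_i^*=0}w_{t,i}$ when $\hat L_t$ is unfavourable, so your residual is never absorbed.

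\textbf{The optimal-arm step is left open, and the MAB mechanism does not transfer.} For $i\in a^*$ and $m\ge 2$, the competitor at the selection boundary is typically another optimal arm with an $O(1)$ perturbation. So $-\partial\phi_i/\partial\lambda_i$ at $\lambda=\eta_t\hat L_t$ is of order $\max f\cdot\sum_{j:a^*_j=0}\sP[r_j\ge\nu+\eta_t\underline{\hat{L}}_{t,j}]$. That is the same order as $1-w_{t,i}$, without the extra factor $(1-w_{t,i})^{1/\alpha}$ available for $m=1$.

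Consequently the optimal-arm stability is only $O\bigl(m\eta_t\sum_{j:a^*_j=0}(\eta_t\underline{\hat{L}}_{t,j})^{-2}\bigr)$; compare the corresponding terms in Lemma~\ref{lem:optimal_action_combine}. This becomes $O(\sum_j 1/\underline{\hat{L}}_{t,j})$, i.e.\ your $\eta_t\sum_j\sqrt{w_{t,j}}$, only once $\eta_t\underline{\hat{L}}_{t,j}\ge m$. That is why the paper's event carries the $m$-dependent threshold.

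Even this bound needs two ingredients your sketch lacks:
\begin{itemize}
\item the lower bound $w_{t,i}\ge 1/e$ (resp.\ $0.14$) for $i\in a^*$ on the event, which is needed to control $\E[\hat\ell_{t,i}^2]$;
\item a separate treatment of $\hat\ell_{t,i}>\zeta/\eta_t$, where arm $i$ can overtake suboptimal arms along the path so the derivative bound no longer applies. The paper handles this with the geometric tails of GR (Lemma~\ref{lem:optimal_action_bound_case2}), which also produce the summable $m^2$-type terms.
\end{itemize}

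\textbf{Per-arm claims for suboptimal arms.} Your statement $w_{t,i}\asymp(\eta_t\underline{\hat{L}}_{t,i})^{-2}$ is false in general: with many nearly tied arms, $w_{t,i}\approx m/\sigma_i$ regardless of $\underline{\hat{L}}_{t,i}$. The bound $-\partial w_{t,i}\lesssim\eta_t w_{t,i}^{3/2}$ is never proved. The paper avoids both. It bounds suboptimal-arm stability by $2(\alpha+1)/\underline{\hat{L}}_{t,i}$ (Lemma~\ref{lem:adv_bound_term}) and matches this against the aggregated lower bound $\sum\Delta_iw_{t,i}\ge c\sum\Delta_i(\eta_t\underline{\hat{L}}_{t,i})^{-\alpha}$ on the event.
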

Theorems~\ref{thm:adv_bound} and \ref{thm:sto_bound} together indicate that FTPL with $\mathcal{F}_2$ or $\mathcal{P}_2$ achieves BOBW optimality in $m$-set semi-bandit problem.
Compared to the results by \citet{zhan2025follow}, our work is not only the first to show that FTPL can achieve the minimax optimality, but also establishes a logarithmic regret with a different second-order term $O(m^3d/\Delta)$ in stochastic setting instead of $O((m^2d\log d+m^{11/3}+md^2)/\Delta)$ by \citet{zhan2025follow}.
Though neither of them dominates the other, our result maintains linear dependence on $d$. 
This becomes favorable in regimes where $m\ll d$, while the adversarial regret of $O(\sqrt{mdT\log d}+\sqrt{m^{8/3}T})$ in \citet{zhan2025follow} requires $m\ll d$ to approach (though not match) the minimax optimality. 

Beyond the case of $\alpha=2$, we also provide the following results. 
\begin{theorem}\label{thm:sto_bound_alpha_not2}
    Assume that $a^*=\argmin_{a\in\mathcal{A}}a^\top \mu$ is unique and let $\Delta_i=\mu_i-\max_{j:a^*_j=1}\mu_j$, $\Delta=\min_{i:a^*_i=0}\Delta_i$.
    Then, FTPL with $\dis\in\qty{\Fdis,\Pdis}$ and learning rate $\eta_t=\frac{c}{\sqrt{t}}m^{\frac{1}{2}-\frac{1}{\alpha}}d^{\frac{1}{\alpha}-\frac{1}{2}}$ for $c>0$ and $\alpha>2$ satisfies
    \begin{equation*}
        \R(T)\leq O\qty(\sum_{i:a^*_i=0}\frac{1}{\alpha-2}\frac{T^{\frac{\alpha-2}{2(\alpha-1)}}}{\Delta_i^{\frac{1}{\alpha-1}}m^{\frac{\alpha-2}{2(\alpha-1)}}d^{\frac{2-\alpha}{2(\alpha-1)}}}).
    \end{equation*}
    If $\alpha\in(1,2)$, then
    \begin{equation*}
        \R(T)\leq O\qty(\sum_{i:a^*_i=0}\frac{1}{2-\alpha}\frac{T^{1-\frac{\alpha}{2}}}{\Delta_i^{\alpha-1} m^{\frac{\alpha}{2}-1} d^{1-\frac{\alpha}{2}}}).
    \end{equation*}
\end{theorem}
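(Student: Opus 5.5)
The proof follows the same route as Theorem~\ref{thm:sto_bound}, but with the general-$\alpha$ learning rate and the general-$\alpha$ stability bounds in place of those for $\alpha=2$.

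\textbf{Step 1 (decomposition).} I start from the standard FTPL regret decomposition that underlies Theorem~\ref{thm:adv_bound}:
\[
\R(T)\le \E\Big[\sum_t \langle \hat\ell_t, w_t - w_{t+1}\rangle\Big] + \E\Big[\sum_t (\eta_{t+1}^{-1}-\eta_t^{-1})\,\langle \mathbb{E}[r_{t+1}\mid\hat L_{t+1}],\, a_{t+1}-a^*\rangle\Big] + O(\cdot),
\]
which splits the regret into a stability term and a penalty term. Both terms are then written as sums over suboptimal base-arms $i$ with $a^*_i=0$.

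\textbf{Step 2 (per-arm, per-round bounds).} For each such arm I bound the per-round contribution by a function of the gap $\eta_t(\hat L_{t,i}-\hat L_{t,j^*})$. For Fréchet-type tails, the selection probability of arm $i$ behaves like $(\eta_t \hat L_{t,i}\text{-gap})^{-\alpha}$ once the gap is large. Using the variance bound \eqref{variance_cgr}, the stability contribution is roughly $\eta_t w_{t,i}^{1-1/\alpha}$, up to powers of $m$ and $d$. The penalty contribution is roughly $(\eta_{t+1}^{-1}-\eta_t^{-1})\,w_{t,i}^{1-1/\alpha}$. Together these give
\[
\R(T)\lesssim \sum_t\sum_{i:a^*_i=0} \frac{1}{\sqrt t}\, \kappa\, w_{t,i}^{1-1/\alpha},
\qquad \kappa = m^{\frac1\alpha-\frac12} d^{\frac12-\frac1\alpha}
\]
(or its inverse). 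Here I reuse the bounds established for Theorem~\ref{thm:adv_bound}, but keep them per arm rather than summing them with Hölder.

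\textbf{Step 3 (self-bounding).} In the stochastic setting, $\R(T)\ge \sum_t\sum_i \Delta_i \E[w_{t,i}]$. I write $\R = 2\R - \R$ and, for each $t$ and $i$, take the pointwise maximum over $x = w_{t,i}\in[0,1]$:
\[
\max_x \Big\{ \frac{2\kappa}{\sqrt t}\, x^{1-1/\alpha} - \Delta_i x \Big\}.
\]
\begin{itemize}
\item For $\alpha>2$ the maximizer is interior. It gives a value of order $\big(\kappa/\sqrt t\big)^{\alpha}\Delta_i^{-(\alpha-1)}$, and the resulting exponent of $t$ is negative. I therefore instead take the maximum in the form $\max_x\{\cdot\}\lesssim t^{-\alpha/2}$. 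Summing over $t$ should reproduce $T^{(\alpha-2)/(2(\alpha-1))}$.
\item Checking the exponents indicates the relevant quantity is $x^{1/\alpha}$ when $\alpha<2$, and $x^{1-1/\alpha}$ when $\alpha>2$. Hence the optimization is $\max_x\{a x^{\beta}-\Delta x\}\propto a^{1/(1-\beta)}\Delta^{-\beta/(1-\beta)}$. With $\beta = 1/\alpha$ this gives a $\Delta^{-1/(\alpha-1)}$ dependence and $t^{-\alpha/(2(\alpha-1))}$, whose sum over $t$ is $T^{(\alpha-2)/(2(\alpha-1))}/(\alpha-2)$. This matches the first claim, so for $\alpha>2$ the stability/penalty term scales like $w^{1/\alpha}/\sqrt t$.
\item For $\alpha<2$ the dominant term scales like $w^{1-\alpha/2}$-type behaviour, i.e.\ $\beta = 2-\alpha$ over the $\Delta^{1-\alpha}$ weighting. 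Then $\sum_t t^{-\alpha/2}\sim T^{1-\alpha/2}/(2-\alpha)$, giving the second claim.
\end{itemize}
The $m,d$ powers follow from carrying $\kappa$ through the same optimization.

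\textbf{Main obstacle.} The hard part is Step 2: obtaining per-arm bounds in terms of $w_{t,i}$ with the correct exponent, rather than bounds that only hold after summing over arms. The analysis must also control the optimal arms' contribution through $\sum_{i:a^*_i=0} w_{t,i}$, since $m - \sum_{i:a^*_i=1} w_{t,i} = \sum_{i:a^*_i=0} w_{t,i}$. It also requires handling the regime where the estimated-loss gaps are small, which I would bound crudely by a constant absorbed into the $O(\cdot)$. I would first try to adapt the rank-based ratio bounds for $w_{t,i}$ and its derivative from Section~\ref{sec:stab_adv}, keeping $\sigma_i$ explicit.
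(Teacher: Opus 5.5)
Your Step~2 is where the proof actually lives, and it is neither derived nor correct as stated. The rank-based bound $O(\eta_t(1\land m/\sigma_i)^{1/\alpha})$ from the adversarial analysis does not involve $w_{t,i}$ at all. So it cannot be \emph{kept per arm} to give a $w_{t,i}$-dependent bound.

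What the paper uses instead is a gap-based bound valid only on the event $F_t$ (resp.\ $D_t$). On that event, $\eta_t\underline{\hat{L}}_{t,i}\le 0$ for $a^*_i=1$ and $\eta_t\underline{\hat{L}}_{t,i}\ge m^{1/(\alpha-1)}$ for $a^*_i=0$. The ingredients are:
\begin{itemize}
\item the stability of a suboptimal arm is at most $2(\alpha+1)/\underline{\hat{L}}_{t,i}$, from $J_i/I_i\le 1/\underline{\lambda}_i$ in Lemmas~\ref{lem:sigma_i} and \ref{lem:adv_bound_term};
\item the penalty is at most $\frac{\alpha}{\alpha-1}(\eta_t\underline{\hat{L}}_{t,i})^{1-\alpha}$ (Lemma~\ref{lem:penalty_adv_bound});
\item the regret lower bound is $c\sum_{i:a^*_i=0}\Delta_i(\eta_t\underline{\hat{L}}_{t,i})^{-\alpha}$ (Lemmas~\ref{lem:lower_bound_Ft}, \ref{lem:lower_bound_Dt}).
\end{itemize}

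In the self-bounding comparison, $(\eta_t\underline{\hat{L}}_{t,i})^{-\alpha}$ plays the role of $w_{t,i}$. So the stability scales like $\eta_t w^{1/\alpha}$, not $\eta_t w^{1-1/\alpha}$, while the penalty scales like $t^{-1/2}w^{1-1/\alpha}$. Two regimes follow:
\begin{itemize}
\item For $\alpha>2$ the stability dominates. Then $\beta=1/\alpha$ gives $\Delta_i^{-1/(\alpha-1)}t^{-\alpha/(2(\alpha-1))}$.
\item For $\alpha\in(1,2)$ the penalty dominates. Then $\beta=1-1/\alpha$ gives $\Delta_i^{-(\alpha-1)}t^{-\alpha/2}$, summing to $T^{1-\alpha/2}/(2-\alpha)$. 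The $1/\underline{\hat{L}}$ part is summable here because $\alpha/(2(\alpha-1))>1$.
\end{itemize}

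Your bullets contradict this and each other. For $\alpha>2$, $\sum_t t^{-\alpha/2}$ converges, so it cannot reproduce $T^{(\alpha-2)/(2(\alpha-1))}$. For $\alpha<2$, $\beta=2-\alpha$ yields $t^{-1/(2(\alpha-1))}$ and $\Delta^{-(2-\alpha)/(\alpha-1)}$, not the stated rate. Choosing exponents to match the target is not a derivation.

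Two further ingredients are missing entirely.

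\textbf{Optimal arms.} Their stability is only bounded by $O(m\eta_t)$ per round in the adversarial analysis, which sums to order $\sqrt{T}$. It must therefore be charged to suboptimal-arm statistics. The identity $\sum_{i:a^*_i=1}(\phi_i(\eta_t\hat L_t)-\phi_i(\eta_t(\hat L_t+\hat\ell_t)))=\sum_{i:a^*_i=0}(\phi_i(\eta_t(\hat L_t+\hat\ell_t))-\phi_i(\eta_t\hat L_t))$ cannot be applied directly when $m>1$. The summands in the stability term are weighted by different $\hat\ell_{t,i}$, and individual differences need not have a fixed sign.

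The paper handles this in two stages. First, it replaces $\hat\ell_t$ on the optimal arms by the common shift $\max_{i:a^*_i=1}\hat\ell_{t,i}\sum_{i:a^*_i=1}e_i$, at a cost of $O(\sum_{j:a^*_j=0}1/\underline{\hat{L}}_{t,j})$. This is Lemmas~\ref{lem:optimal_action_add} and \ref{lem:optimal_action_combine}, which need $\min_{i:a^*_i=1}w_{t,i}\ge 1/e$ (or $0.14$) and the threshold $m^{1/(\alpha-1)}$ on $F_t$. Second, it bounds the derivative of $\phi_i$ for suboptimal $i$, using a $\zeta$-truncation of large estimates (Lemmas~\ref{lem:optimal_action_bound_case2}, \ref{lem:optimal_action_bound}). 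This is the main technical step of the stochastic analysis, and your plan only names it.

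\textbf{Small-gap regime.} It cannot be bounded by a constant absorbed into $O(\cdot)$. The number of rounds on $F_t^c$ is random and not bounded a priori, so a per-round constant gives linear regret. The paper instead proves $\sum_{i:a^*_i=0}\Delta_i w_{t,i}\ge c\,\Delta/m^{\alpha/(\alpha-1)}$ on $F_t^c$. It then self-bounds the adversarial rate $C\sqrt{md/t}$ against this, which produces the additive $O(m^{(2\alpha-1)/(\alpha-1)}d/\Delta)$ term.
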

Section~\ref{eq:proof_sto_bound_outline} provides an outline of the proofs of Theorems~\ref{thm:sto_bound} and \ref{thm:sto_bound_alpha_not2}, and the explicit constants are given in the detailed proof in Section~\ref{subsec:proofs_sto_bounds}. 
Although our regret bounds for FTPL with $\alpha\in(1,2)\cup(2,\infty)$ are not logarithmic and therefore do not match the regret lower bound in the stochastic setting, this result indicates that FTPL can achieve better dependence on horizon $T$ than $O(\sqrt{T})$ in adversarial setting since $\frac{\alpha-2}{2(\alpha-1)}<\frac{1}{2}$ for $\alpha>2$ and $1-\frac{\alpha}{2}<\frac{1}{2}$ for $\alpha\in(1,2)$.

\section{Experiments}\label{sec:experiments}

\begin{figure}[t]
  \centering
  \begin{minipage}{\textwidth}
    \centering
    \begin{subfigure}{0.43\textwidth}
      \centering
      \includegraphics[width=0.98\linewidth]{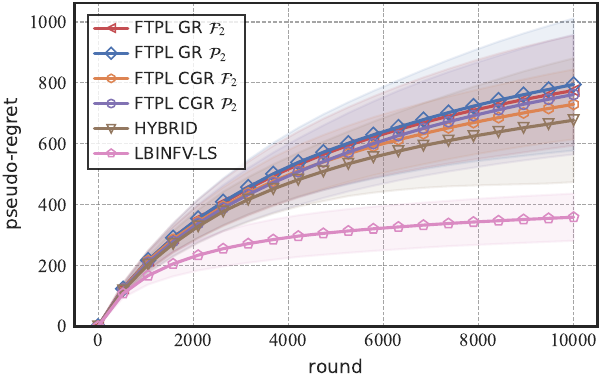}
      \caption{stochastic, $m=3$, $d=16$.}
      \label{fig:regret_sto_3_16}
    \end{subfigure}\quad
    \begin{subfigure}{0.43\textwidth}
      \centering
      \includegraphics[width=0.98\linewidth]{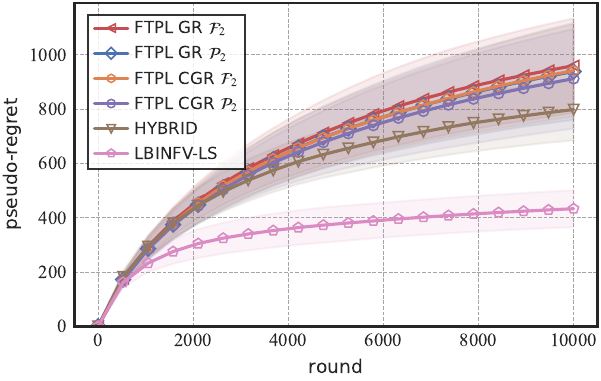}
      \caption{stochastic, $m=5$, $d=20$.}
      \label{fig:regret_sto_5_20}
    \end{subfigure}

    \vskip 2mm

    \begin{subfigure}{0.43\textwidth}
      \centering
      \includegraphics[width=0.98\linewidth]{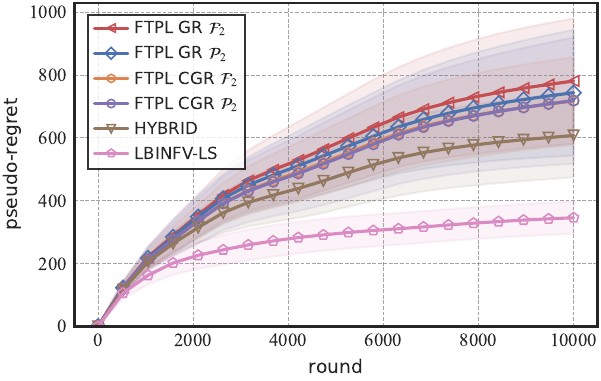}
      \caption{adversarial, $m=3$, $d=16$.}
      \label{fig:regret_adv_3_16}
    \end{subfigure}\quad
    \begin{subfigure}{0.43\textwidth}
      \centering
      \includegraphics[width=0.98\linewidth]{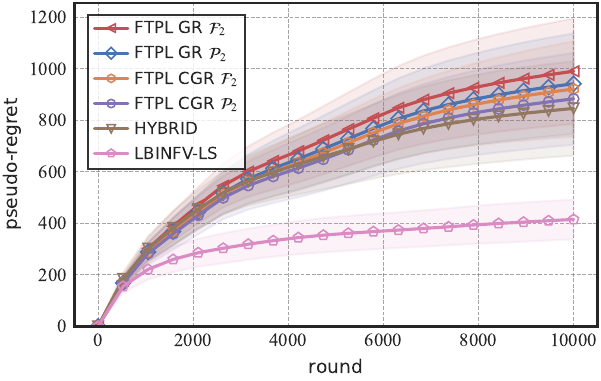}
      \caption{adversarial, $m=5$, $d=20$.}
      \label{fig:regret_adv_5_20}
    \end{subfigure}
    \caption{Pseudo regret.}
    \label{fig:regret}
  \end{minipage}

  \vskip 5mm

  \begin{minipage}{\textwidth}
    \centering
    \includegraphics[width=0.5\textwidth]{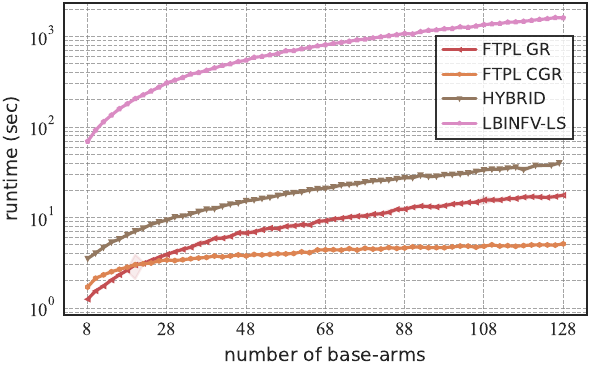}
    \caption{Runtime (sec) for adversarial setting and different $d$.}
    \label{fig:runtime}
  \end{minipage}
\end{figure}

In this section, we present results of experiments conducted to compare the regret and computational efficiency of FTPL with existing BOBW policies, including \HYBRID~\citep{zimmert2019beating} and \LBINFV~\citep{tsuchiya2023further}. 
We examine the performance of FTPL with geometric resampling (\GR) and conditional geometric resampling (\CGR), with perturbation following shape-$2$ Fr\'{e}chet distribution $\mathcal{F}_2$ and Pareto distribution $\mathcal{P}_2$. 
We use the same learning rate $\eta_t=1/\sqrt{t}$ for {\sf{FTPL}} and {\sf{HYBRID}} as that of \citet{zimmert2019beating}. 
Since \LBINFV{} relies on log-barrier regularization, it may suffer from numerical instability with extreme probability values.\kern0.06em\footnote{We used the implementation of \LBINFV{} released by \citet{tsuchiya2023further} available at \url{https://github.com/tsuchhiii/bobw-variance/tree/master}.} 
We also observed errors due to numerical instability for some random seeds in {\sf{HYBRID}} when $d \gtrsim 100$.\kern0.06em\footnote{We used the implementation of {\sf{HYBRID}} released by \citet{zimmert2019beating} available at \url{https://github.com/diku-dk/CombSemiBandits/tree/master}.} 
Therefore, we report the results of \LBINFV{} and {\sf{HYBRID}} over $20$ trials to avoid abnormal termination due to unexpected errors. 
In contrast, {\sf{FTPL}} behaved stably and we report its results over $100$ trials.
The experiments were conducted on an AMD EPYC 7763 CPU, implemented in Python 3.9 using the NumPy library.

\paragraph{Environments} Following \citet{zimmert2019beating} and \citet{tsuchiya2023further}, we consider the stochastic setting and the stochastically constrained adversarial setting, where the losses are sampled from Bernoulli distribution in both settings and the suboptimality gap is decided as $\Delta=0.125$. 
For the stochastic setting, the mean losses are decided as $(1-\Delta)/2$ for $m$ optimal base-arms and $(1+\Delta)/2$ for the remaining $(d-m)$ suboptimal base-arms. 

For the stochastically constrained adversarial setting, the mean losses of the optimal base-arms and suboptimal base-arms switch between $(0,\Delta)$ and $(1-\Delta,1)$, with the duration between alternations increasing exponentially
with a factor of $1.6$ after each switch. 
Both settings are similar to those in \citet{zimmert2019beating} and \citet{tsuchiya2023further}.

\paragraph{Regret}
We compare the empirical regret performance of \GR{} with that of \CGR{}, \HYBRID{} and \LBINFV{}, which is presented in Figure~\ref{fig:regret}.
In all the settings, we can see that \CGR{} always performs sightly better than or almost the same as \GR{}, both of which are a little worse than \HYBRID{}. 
The performance of \LBINFV{} is substantially better than the others, benefiting from its refined regularizer and adaptive learning rate that yields better environment-dependent regret guarantees. 
Extending FTPL to exhibit similar adaptivity remains an important direction for future work.

\paragraph{Computational Efficiency}
Figure~\ref{fig:runtime} shows the runtime for action selection over $10000$ rounds of \GR{}, \CGR{}, \HYBRID{}, and \LBINFV{} for $m=4$ and varying $d$ from $8$ to $128$. 
Since the choice of perturbation distribution has negligible impact on the runtime of {\sf{FTPL}}, we only show the result for $\mathcal{F}_2$ to avoid redundancy. 
When $d$ is small, both \GR{} and \CGR{} run much faster than \HYBRID{} and \LBINFV{}. Especially, the runtime of \LBINFV{} is on the order of tens of times larger than that of the other algorithms.
As $d$ increases, the runtime of \CGR{} remains small thanks to its optimization-free nature and high termination probability, while that of the others grows sharply, resulting in an increasingly pronounced efficiency gap. 

\section{Regret Analysis}\label{sec:regret_analysis}

In this section, we first give the general tools for analysis in Section~\ref{subsec:tool} and the regret decomposition used in both adversarial and stochastic settings in Section~\ref{sec:regret_decomposition}. Next, we give important parts for Theorem~\ref{thm:adv_bound} on the adversarial setting in Sections~\ref{sec:stability}--\ref{sec:proof_adv_bound} and a proof outline for Theorems~\ref{thm:sto_bound} and \ref{thm:sto_bound_alpha_not2} for the stochastic setting in Section~\ref{eq:proof_sto_bound_outline}. The remaining parts of the proofs are given in Sections~\ref{sec:stab_adv} and \ref{sec:stab_sto}.

\subsection{General Tools for Analysis}\label{subsec:tool}
In this part, we introduce some key tools for the subsequent regret analysis of FTPL. 
Recall that we denote the rank of $i$-th element of $\bm{u}$ among index set $\B$ in descending order as $\sigma_i(\bm{u},\B)$, that is, $u_{i}$ is the $\sigma_i(\bm{u},\B)$-th largest among $\{u_j:j\in\B\}$. We omit $\bm{u}$ (resp.~$\B$) when $\bm{u}=-\hat{L}_t$ (resp.~$\B=[d]$).
The probability that $r-\eta_t\hat{L}_t$ becomes the $\theta$-th smallest is written as $\phi_{i,\theta}(\eta_t\hat{L}_t;\dis)$, where for $\lambda\in[0,\infty)^d$
\begin{align*}
    \phi_{i,\theta}(\lambda;\dis)
    \coloneqq
    \sP_{r=\qty(r_1,\dots,r_d) \sim \dis} \qty[ \sigma_i\qty(r-\lambda ) = \theta].
\end{align*}

Then, we can write the probability vector of selecting base-arms as $w_t=\phi(\eta_t\hat{L}_t;\dis)=(\phi_1(\eta_t\hat{L}_t;\dis),\cdots,\phi_d(\eta_t\hat{L}_t;\dis))$, where for $\lambda\in[0,\infty)^d$
\begin{align*}
    \phi_i(\lambda;\dis) &\coloneqq
    \mathbb{P}_{r=\qty(r_1,\dots,r_d) \sim \dis} \qty[ \sigma_i\qty(r-\lambda ) \leq m]
    =\sum_{\theta=1}^{m}\phi_{i,\theta}(\lambda;\dis).
    \numberthis\label{eq:arm_probability}
\end{align*}
In addition, for $\lambda\in[0,\infty)^d$ we define
\begin{align*}
    I_i(\lambda;\dis) &\coloneqq
    \mathbb{P}_{r=\qty(r_1,\dots,r_d) \sim \dis} \qty[ \sigma_i\qty(r-\lambda) \leq m,\,r_i \ge \nu+\lambda_i].
    \numberthis\label{eq:definition_I}
\end{align*}
Since the probabilities of some events for different sizes of each action $\wm$ and base-arm sets $\B\subset[d]$ will be considered in the subsequent analysis, we introduce the parameters $\wm$ and $\B$ into $I_i(\cdot)$. 
For $\B\ni i$ and $\widetilde{\lambda}_1\leq\widetilde{\lambda}_2\leq\cdots\leq\widetilde{\lambda}_d$ denoting the sorted elements of $\lambda\in\mathbb{R}^d$ we define
\begin{align*}
    \phi_i(\lambda;\dis,\wm,\B)
    &\coloneqq
    \sP_{r=\qty(r_1,\dots,r_d) \sim \dis} \qty[ \sigma_i\qty(r-\lambda,\B) \leq \wm]\\
    &=
    \int_{\nu-\widetilde{\lambda}_{\wm}}^\infty 
    \sP_{\{r_k\}_{k\in[d]\setminus\{i\}} \sim \dis,\,r_i=z+\lambda_i}
    \qty[\sigma_i\qty(r-\lambda,\B)\leq\wm]
    \dd F(z+\lambda_i),\numberthis\label{eq:definition_phi_extension_no_underline}
    \displaybreak[0]\\
    &=
    \int_{\nu}^\infty 
    \sP_{\{r_k\}_{k\in[d]\setminus\{i\}} \sim \dis,\,r_i=z+\ul_i}
    \qty[\sigma_i\qty(r-\lambda,\B)\leq\wm]
    \dd F(z+\ul_i),\numberthis\label{eq:definition_phi_extension}
    \displaybreak[0]\\
    I_i(\lambda;\dis,\wm,\B)
    &\coloneqq
    \sP_{r=\qty(r_1,\dots,r_d) \sim \dis} \qty[ \sigma_i\qty(r-\lambda,\B) \leq \wm,\,r_i \ge \nu+\lambda_i]\\
    &=
    \int_{\nu}^\infty 
    \sP_{\{r_k\}_{k\in[d]\setminus\{i\}} \sim \dis,\,r_i=z+\lambda_i}
    \qty[\sigma_i\qty(r-\lambda,\B)\leq\wm]
    \dd F(z+\lambda_i),
\end{align*}
where 
$\nu=0$ (resp. $\nu=1$) for 
Fr\'{e}chet (resp. Pareto) distribution 
denotes the left endpoint of the support of $F$. 
Here, underlines in \eqref{eq:definition_phi_extension} denote the gap of a vector
from its $\wm$-th smallest element, i.e., $\ul_i=\lambda_i-\widetilde{\lambda}_{\wm}$ for all $i\in[d]$. 
The definitions above are extensions of \eqref{eq:arm_probability} and \eqref{eq:definition_I}, and we have
\begin{equation*}
    \phi_i(\lambda;\dis)=\phi_i(\lambda;\dis,m,[d])\quad\text{and}\quad I_i(\lambda;\dis)=I_i(\lambda;\dis,m,[d]).
\end{equation*}
Under the definitions above, 
one can see that
\begin{equation}\label{eq:phi_I_relation}
    \phi_i(\lambda;\dis,\wm,\B)=I_i(\ul;\dis,\wm,\B).
\end{equation}

For the analysis of the derivative, we define
\begin{multline}\label{eq:definition_J}
    J_i(\lambda;\dis,\wm,\B)\coloneqq
    \E_{r=\qty(r_1,\dots,r_d) \sim \dis} \qty[ \frac{1}{r_i}\ind\qty[\sigma_i\qty(r-\lambda; \B) \le \wm,\,r_i\ge \nu+\lambda_i]]\\
    =\int_{\nu}^\infty \frac{1}{z+\lambda_i}\sP_{\{r_k\}_{k\in[d]\setminus\{j\}} \sim \dis,\,r_i=z+\lambda_i}
    \qty[\sigma_i\qty(r-\lambda,\B)\leq\wm] \dd F(z+\lambda_i),
\end{multline}
which corresponds to $\frac{\partial }{\partial \lambda_i}I_i(\lambda;\dis,\wm,\B)$ up to some constant factor. 
When $\wm=m$ and $\B=[d]$, we simply write $J_i(\lambda;\dis)=J_i(\lambda;\dis,m,[d])$.

Next, we define
\begin{equation*}
    \phi_{i,\theta,j}(\lambda;\dis,\B) 
    \coloneqq
    \sP_{r=\qty(r_1,\dots,r_d) \sim \dis}\qty[\sigma_i\qty(r-\lambda,\B)=\theta\leq\sigma_j\qty(r-\lambda,\B)],
\end{equation*}
which can be expressed as
\begin{multline}\label{eq:prob_theta_j}
    \phi_{i,\theta,j}(\lambda;\dis,\B)\\
    =
    \int_{\nu-\widetilde{\lambda}_{\wm}}^\infty F(z+\lambda_j)
    \sP_{\{r_k\}_{k\in[d]\setminus\{i,j\}} \sim \dis,\,r_i=z+\lambda_i}
    \qty[\sigma_i\qty(r-\lambda,\B\setminus\{j\})=\theta]
    \dd F(z+\lambda_i)
\end{multline}
if $\theta\leq\wm$. 
In addition, we define
\begin{multline}\label{eq:I_theta_j}
    I_{i,\theta,j}(\lambda;\dis,\B) 
    \coloneqq
    \sP_{r=\qty(r_1,\dots,r_d) \sim \dis}\qty[\sigma_i\qty(r-\lambda,\B)=\theta\leq\sigma_j\qty(r-\lambda,\B),\,r_i \ge \nu+\lambda_i]\\
    =
    \int_{\nu}^\infty F(z+\lambda_j)
    \sP_{\{r_k\}_{k\in[d]\setminus\{i,j\}} \sim \dis,\,r_i=z+\lambda_i}
    \qty[\sigma_i\qty(r-\lambda,\B\setminus\{j\})=\theta]
    \dd F(z+\lambda_i).
\end{multline}
Corresponding to \eqref{eq:I_theta_j}, we define
\begin{multline}\label{eq:prob_k_J}
    J_{i,\theta,j}(\lambda;\dis,\B)
    \coloneqq
    \E_{r=\qty(r_1,\dots,r_d) \sim \dis} \qty[ \frac{1}{r_i}\ind\qty[\sigma_i\qty(r-\lambda,\B)=\theta\leq\sigma_j\qty(r-\lambda,\B),\,r_i\ge \nu+\lambda_i]]\\
    =
    \int_{\nu}^\infty \frac{1}{z+\lambda_i}F(z+\lambda_j)\sP_{\{r_k\}_{k\in[d]\setminus\{i,j\}} \sim \dis,\,r_i=z+\lambda_i}
    \qty[\sigma_i\qty(r-\lambda,\B\setminus\{j\})=\theta]\dd F(z+\lambda_i).
\end{multline}
We can also express the integrands above in an explicit way by using $F(z+\lambda_k)$ or $F(z+\ul_k)$ for $k\in[d]\setminus \{i\}$, but we do not use such expressions in the following analysis.

The following lemma establishes some relations among the above definitions.
\begin{lemma}\label{lem:tool_relation}
    For $\B\supset  \{i,j\}$ and $\wm\le \left\lvert\B\right\rvert$, we have
    \begin{align*}
        \phi_i(\lambda;\dis,\wm,\B)&=
        \phi_i(\lambda;\dis,\wm-1,\B\setminus\qty{j})+\phi_{i,\wm,j}(\lambda;\dis,\B),\\
        I_i(\lambda;\dis,\wm,\B)&=
        I_i(\lambda;\dis,\wm-1,\B\setminus\qty{j})+I_{i,\wm,j}(\lambda;\dis,\B),\\
        J_i(\lambda;\dis,\wm,\B)&=
        J_i(\lambda;\dis,\wm-1,\B\setminus\qty{j})+J_{i,\wm,j}(\lambda;\dis,\B).
    \end{align*}
\end{lemma}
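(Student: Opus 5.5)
The three identities come from one event decomposition, applied to three integrands. Write $\sigma_i=\sigma_i(r-\lambda,\B)$ and $\sigma'_i=\sigma_i(r-\lambda,\B\setminus\{j\})$. Since ties have probability zero under the continuous distribution $\dis$, we may work on the almost-sure event where all $r_k-\lambda_k$ are distinct.

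The core step is the identity
\begin{equation*}
\{\sigma_i\le \wm\}=\{\sigma'_i\le \wm-1\}\,\dot\cup\,\{\sigma_i=\wm\le\sigma_j\}\quad\text{a.s.}
\end{equation*}
Removing $j$ from $\B$ lowers the rank of $i$ by one exactly when $j$ is ranked above $i$, that is, when $\sigma_j<\sigma_i$. Otherwise the rank is unchanged. We check the two inclusions as follows.
\begin{itemize}
\item[(a)] Suppose $\sigma_i\le\wm-1$. Then $\sigma'_i\le\sigma_i\le\wm-1$.
\item[(b)] Suppose $\sigma_i=\wm$. If $\sigma_j<\sigma_i$, then $\sigma'_i=\wm-1$, so we are in the first event. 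If $\sigma_j>\sigma_i$, i.e.\ $\sigma_j\ge\wm$ (equality only when $j=i$), we are in the second event.
\item[(c)] Conversely, suppose $\sigma'_i\le \wm-1$. Then $\sigma_i\le\sigma'_i+1\le\wm$.
\end{itemize}

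Disjointness needs one more check. If $\sigma_i=\wm\le\sigma_j$ with $j\ne i$, then $j$ is below $i$, so $\sigma'_i=\sigma_i=\wm>\wm-1$. The degenerate case $j=i$ only makes sense with $\B\setminus\{i\}$ in the first term. There the first event is interpreted as empty and $\{\sigma_i=\wm\le\sigma_i\}=\{\sigma_i=\wm\}$, but then the decomposition fails for $\sigma_i<\wm$. So I would read the lemma as $j\ne i$, which is its intended use, and note this explicitly. This case check, with the rank shift caused by deleting $j$, is the only real content of the proof.

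Given the partition, I multiply its indicators by $\ind[r_i\ge\nu+\lambda_i]$ and take expectations under $r\sim\dis$. This gives the first two identities, using the definitions of $\phi_i(\cdot;\dis,\wm,\B)$, $\phi_{i,\theta,j}$, $I_i(\cdot;\dis,\wm,\B)$ and $I_{i,\theta,j}$. Multiplying instead by $\frac{1}{r_i}\ind[r_i\ge\nu+\lambda_i]$ gives the $J$ identity, via \eqref{eq:definition_J} and \eqref{eq:prob_k_J}. This weight is nonnegative, and bounded by $1/\nu$ for Pareto. For Fréchet, integrability follows from finiteness of $\E[1/r_i]$, so linearity of expectation applies.

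No integral manipulation is needed, because the integral representations are equivalent to the probabilistic definitions by conditioning on $r_i$.
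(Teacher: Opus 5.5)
Your proposal is correct and follows the paper's proof. Both use the same disjoint decomposition $\{\sigma_i(r-\lambda,\B)\le\wm\}=\{\sigma_i(r-\lambda,\B\setminus\{j\})\le\wm-1\}\cup\{\sigma_i(r-\lambda,\B)=\wm\le\sigma_j(r-\lambda,\B)\}$, then take expectations with weights $1$, $\ind[r_i\ge\nu+\lambda_i]$ and $\frac{1}{r_i}\ind[r_i\ge\nu+\lambda_i]$. Your explicit case check (which the paper only asserts), the almost-sure handling of ties, and the remark that $j\neq i$ is implicitly required are welcome additions. One small wording point: the $\phi$ identity uses the unweighted indicators, not the ones multiplied by $\ind[r_i\ge\nu+\lambda_i]$.
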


\begin{proof2}
    Since we have
\begin{align}
\qty{\sigma_i\qty(r-\lambda,\B)\le \wm}
=
\qty{\sigma_i\qty(r-\lambda,\B\setminus\{j\})\le \wm-1}
\cup
\qty{\sigma_i\qty(r-\lambda,\B)= \wm<\sigma_j\qty(r-\lambda,\B)}
\n
\end{align}
and the two events in the RHS are disjoint, we can decompose $\phi_i(\cdot)$ as
    \begin{align*}
        \phi_i(\lambda;\dis,\wm,\B)
        &=
        \sP_{r=\qty(r_1,\dots,r_d) \sim \dis}\qty[\sigma_i\qty(r-\lambda,\B)\le \wm]\\
        &=
        \sP_{r=\qty(r_1,\dots,r_d) \sim \dis}\qty[\sigma_i\qty(r-\lambda,\B\setminus\{j\})\le \wm-1]\\
        &\hspace{2cm}+\sP_{r=\qty(r_1,\dots,r_d) \sim \dis}\qty[\sigma_i\qty(r-\lambda,\B)= \wm<\sigma_j\qty(r-\lambda,\B)]
        \numberthis\label{eq:phi_decompose_prob}
        \\
        &=\phi_i(\lambda;\dis,\wm-1,\B\setminus\qty{j})+\phi_{i,\wm,j}(\lambda;\dis,\B)
    \end{align*}
and similarly
\begin{align*}
    I_i(\lambda;\dis,\wm,\B)&=
        I_i(\lambda;\dis,\wm-1,\B\setminus\qty{j})+I_{i,\wm,j}(\lambda;\dis,\B),\\
    J_i(\lambda;\dis,\wm,\B)
    &=J_i(\lambda;\dis,\wm-1,\B\setminus\qty{j})+J_{i,\wm,j}(\lambda;\dis,\B).\dqed
\end{align*}
\end{proof2}

In the following, we simply write $\sigma_i$ to denote the number of arms (including $i$ itself) whose cumulative losses do
not exceed $\hat{L}_{t,i}$, i.e., $\sigma_i=\sigma_i(-\hat{L}_t,[d])$.
To derive an upper bound, we employ the tools introduced above to provide lemmas related to the relation between the base-arm selection probability and its derivatives.

\subsection{Regret Decomposition}\label{sec:regret_decomposition}

To evaluate the regret of FTPL, we firstly decompose the regret which is expressed as
\begin{equation*}
    \mathcal{R}(T)=\E\qty[\sum_{t=1}^{T}\left\langle \ell_t,a_t-a^*\right\rangle]=\sum_{t=1}^{T}\E\qty[\left\langle \ell_t,a_t-a^*\right\rangle]=\sum_{t=1}^{T}\E\qty[\left\langle \hat{\ell}_t,a_t-a^*\right\rangle].
\end{equation*}
This can be decomposed in the following way. 
Firstly, similarly to Lemma~3 in \citet{pmlr-v201-honda23a}, we prove the general framework of the regret decomposition that can be applied to general distributions.
Our decomposition avoids the extra terms in the earlier analyses \citep{pmlr-v201-honda23a, pmlr-v247-lee24a}, similar in spirit to the recent simplifications \citep{kim2025follow, zhan2025follow}, but achieved through more direct arguments without relying on auxiliary steps through the notions of potential function and its convex conjugate.

\begin{lemma}\label{lem:regret_decomposition}
    For any $\alpha>1$ and $\dis\in\qty{\Fdis,\Pdis}$,
    \begin{align*}
        \R(T) \leq \sum_{t=1}^{T} &\mathbb{E} \qty[ \left\langle \hat{\ell}_t, \phi\qty(\eta_t \hat{L}_t ; \dis) - \phi\qty(\eta_t (\hat{L}_t + \hat{\ell}_t) ; \dis) \right\rangle ]  \\
        &+ \sum_{t=1}^T \qty(\frac{1}{\eta_{t+1}}-\frac{1}{\eta_t}) \E_{r_{t+1}\sim\dis}\qty[\left\langle r_{t+1}, a_{t+1} - a^*\right\rangle]
        + \frac{\E_{r_1 \sim \dis} \qty[ a_1^\top r_1 ]}{\eta_1}.\numberthis\label{eq:regret_decomposition}
    \end{align*}
    Here,
    \begin{equation*}
        \E_{r\sim\dis}\qty[a_1^\top r_1]
        \leq
        \begin{cases}
            \qty(\frac{\alpha}{\alpha-1}(m-1)^{1-\frac{1}{\alpha}}+\Gamma\qty(1-\frac{1}{\alpha}))\qty(d+1)^{\frac{1}{\alpha}}, & \text{if } \dis = \Pdis, \\
            \qty(\frac{\alpha}{\alpha-1}(m-1)^{1-\frac{1}{\alpha}}+\Gamma\qty(1-\frac{1}{\alpha}))\qty(d+1)^{\frac{1}{\alpha}}+m, & \text{if } \dis = \Fdis,
        \end{cases}
    \end{equation*}
    where $\Gamma(\cdot)$ is the gamma function.
\end{lemma}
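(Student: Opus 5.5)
The proof proceeds in two parts. The first is the "be-the-leader"-type decomposition. The second is a bound on $\E[a_1^\top r_1]$, the expected sum of the $m$ largest of $d$ i.i.d.\ perturbations.

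\textbf{Decomposition.} Write $\langle \hat\ell_t, w_t - a^*\rangle = \langle \hat\ell_t, w_t - w_{t+1}'\rangle + \langle \hat\ell_t, w_{t+1}'-a^*\rangle$, where $w_{t+1}'=\phi(\eta_t(\hat L_t+\hat\ell_t);\dis)$. This uses $\E[\langle\hat\ell_t,a_t\rangle\mid\hat L_t]=\langle \hat\ell_t$-expectation$, w_t\rangle$, which holds by unbiasedness and the conditional independence of the GR/CGR estimator given $a_t$.

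The first piece is exactly the stability term in \eqref{eq:regret_decomposition}. For the second, introduce a common perturbation $r$ and the potential
\[
\Phi_t(L)=\E_r\Bigl[\min_{a\in\nec} a^\top\bigl(L-r/\eta_t\bigr)\Bigr].
\]
By the envelope argument, $\nabla\Phi_t(L)=\phi(\eta_t L;\dis)$, and $\Phi_t$ is concave in $L$. Concavity gives
\[
\langle\hat\ell_t,\phi(\eta_t(\hat L_t+\hat\ell_t))\rangle\le \Phi_t(\hat L_{t+1})-\Phi_t(\hat L_t).
\]
To avoid the convex-conjugate machinery, I would instead argue directly with the minimizer. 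Let $a_{t+1}^{(t)}$ denote the argmin under $\eta_t$ and $\hat L_{t+1}$ with the same $r$. Then
\[
\langle \hat\ell_t, a^{(t)}_{t+1}\rangle \le \bigl[\hat L_{t+1}^\top a^{(t)}_{t+1}- r^\top a^{(t)}_{t+1}/\eta_t\bigr]-\bigl[\hat L_t^\top a^{(t)}_t - r^\top a_t^{(t)}/\eta_t\bigr],
\]
since $a^{(t)}_t$ minimizes at time $t$. Taking expectations and summing telescopes these potential-type terms.

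Changing the learning rate from $\eta_t$ to $\eta_{t+1}$ in the potential costs at most $(1/\eta_{t+1}-1/\eta_t)\E[r^\top a_{t+1}]$. This follows by evaluating the $\eta_{t+1}$-objective at its own minimizer $a_{t+1}$ and using $\eta_{t+1}\le\eta_t$. At the end, the final potential is bounded by $\hat L_{T+1}^\top a^*-\E[r^\top a^*]/\eta_{T+1}$, since $a^*$ is feasible. Subtracting $\sum_t\langle\hat\ell_t,a^*\rangle=\hat L_{T+1}^\top a^*$ leaves $-\E[r^\top a^*]/\eta_{T+1}$. This is rewritten as $-\sum_t(1/\eta_{t+1}-1/\eta_t)\E[r^\top a^*]-\E[r^\top a^*]/\eta_1$, which combines with the $a_{t+1}$ terms to give $\langle r_{t+1},a_{t+1}-a^*\rangle$. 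The initial potential contributes $\E[a_1^\top r_1]/\eta_1$, and dropping the nonpositive $-\E[r^\top a^*]/\eta_1$ term requires only $r\ge\nu\ge0$.

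The careful bookkeeping is that $\E[\langle r_{t+1},a^*\rangle]$ is the same for every $t$, by i.i.d.\ sampling. Since $\E[\hat\ell_t\mid\mathcal F_{t-1}]=\ell_t$ and $a^*$ is fixed, the regret equals the expected $\hat\ell$-regret.

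\textbf{Bounding $\E[a_1^\top r_1]$.} Since $\hat L_1=0$, $a_1^\top r_1=\sum_{k=1}^m r_{(k)}$, the sum of the top $m$ order statistics. I would bound
\[
\E\Bigl[\sum_{k\le m} r_{(k)}\Bigr]\le \E[r_{(1)}]+\sum_{k=2}^{m}\E[r_{(k)}].
\]
For Pareto, $\E[r_{(k)}]=\frac{\Gamma(d+1)\Gamma(k-1/\alpha)}{\Gamma(k)\Gamma(d+1-1/\alpha)}$. Gautschi's inequality bounds the ratio $\Gamma(d+1)/\Gamma(d+1-1/\alpha)$ by $(d+1)^{1/\alpha}$ and $\Gamma(k-1/\alpha)/\Gamma(k)$ by $(k-1)^{-1/\alpha}$ for $k\ge2$. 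The $k=1$ term gives $\Gamma(1-1/\alpha)(d+1)^{1/\alpha}$. Summing the remaining terms with an integral comparison gives
\[
\sum_{k=1}^{m-1}k^{-1/\alpha}\le\frac{\alpha}{\alpha-1}(m-1)^{1-1/\alpha}.
\]

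For Fréchet, I would couple it to Pareto. The Fréchet quantile $(\log(1/u))^{-1/\alpha}$ is at most $(1-u)^{-1/\alpha}+1$, the Pareto quantile plus one, because $\log(1/u)\ge 1-u$. Order statistics therefore inherit this bound, producing the extra $+m$.

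\textbf{Main obstacle.} The main obstacle is the telescoping with time-varying $\eta_t$ while keeping the stability term in exactly the stated form, $\phi(\eta_t(\hat L_t+\hat\ell_t))$ with the old rate. The order of the two steps is what makes this work: first compare the argmin with the same $\eta_t$ before and after adding $\hat\ell_t$, then switch the rate from $\eta_t$ to $\eta_{t+1}$ at a fixed $\hat L_{t+1}$.
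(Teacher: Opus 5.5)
Your proposal is correct and is essentially the paper's proof. The telescoping rests on the same three minimality facts: $u_t$ minimizes $\langle\hat L_t-r/\eta_t,\cdot\rangle$; the $\eta_t$-minimizer at $\hat L_{t+1}$ beats $u_{t+1}$; and $u_{T+1}$ beats $a^*$. You merely organize them through $\min_a a^\top(\hat L_t-r/\eta_t)$ instead of the paper's three-term split. The bound on $\E[a_1^\top r_1]$ likewise uses the same Pareto order-statistic formula, Gautschi's inequality, and the CDF-dominance coupling for Fr\'{e}chet.

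Two small remarks. First, your sentence about $\E[\langle\hat\ell_t,a_t\rangle\mid\hat L_t]$ is not right as written, because $\hat\ell_{t,i}a_{t,i}=\hat\ell_{t,i}$. The identity you actually use, $\E\langle\ell_t,a_t\rangle=\E\langle\hat\ell_t,w_t\rangle$, does hold by unbiasedness. Second, your inequality $\log(1/u)\ge 1-u$ already shows that the Fr\'{e}chet quantile is at most the Pareto quantile, with no $+1$. So the extra $+m$ is in fact unnecessary, though the stated bound of course still follows.
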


Following the terminology used in the analysis of BOBW policies \citep{zimmert2019beating,zimmert2021tsallis,pmlr-v201-honda23a,pmlr-v247-lee24a}, we refer to the first and second term of \eqref{eq:regret_decomposition} as \textit{stability term} and \textit{penalty term}, respectively.
\medskip

\begin{proof}
    Let us consider random variable $r \in [0, \infty)^d$ that independently follows $\dis$, and is independent from the randomness $\{\ell_t, r_t\}_{t=1}^{T}$ of the environment and the policy.
    Define for $s \in \mathbb{N}$
    \begin{equation}\label{eq:def_of_h_u}
        h_s(w) : w \in \Delta_d \mapsto \left\langle \hat{L}_t - \frac{r}{\eta_t}, w \right\rangle, \qq{and} u_s := \argmin_{w \in \Delta_d} h_s(w),
    \end{equation}
    where $\Delta_d = \{ p \in [0,1]^d : \sum_{i \in [d]} p_i= m \}$. Then, since $r_t$ and $r$ are identically distributed given $\hat{L}_t$, we have
    \begin{equation}\label{eq:decom_expectation}
    \mathbb{E} [ u_t | \hat{L}_t ] = w_t, \quad \mathbb{E} [ \langle r, u_t \rangle | \hat{L}_t ] = \mathbb{E} [ a_t^\top r | \hat{L}_t ].
    \end{equation}
Denote the optimal action as $a^*$. Recalling $\hat{L}_t = \sum_{s=1}^{t} \hat{\ell}_s$, we have
\begin{align*}
\R(T)= \sum_{t=1}^{T} \E\qty[\left\langle \hat{\ell}_t, w_t - a^* \right\rangle] = \sum_{t=1}^{T} \E\qty[ \E\qty[\left\langle \hat{\ell}_t, u_t - a^* \right\rangle \m \hat{L}_t] ].
\end{align*}
Here, we have
\begin{align*}
    \sum_{t=1}^T \left\langle \hat{\ell}_t, u_t - a^* \right\rangle &= \sum_{t=1}^T \left\langle \hat{\ell}_t, u_t - u_t' \right\rangle + \sum_{t=1}^T \left\langle \hat{\ell}_t, u_t' - a^* \right\rangle \\
    &=  \sum_{t=1}^T \left\langle \hat{\ell}_t, u_t - u_t' \right\rangle +  \sum_{t=1}^T \left\langle \hat{\ell}_t, u_t' - u_{t+1} \right\rangle + \sum_{t=1}^T \left\langle \hat{\ell}_t, u_{t+1} - a^* \right\rangle, \numberthis{\label{eq: decompose overall}}
\end{align*}
where $u_t' = \argmin_{w \in \Delta_d} \left\langle \hat{L}_{t+1} - \frac{r}{\eta_t}, w \right\rangle$ so that $\E\qty[u_t'\m \hat{L}_{t+1}] = \phi(\eta_t \hat{L}_{t+1})$.

For any $w \in \Delta_d$, by definition of $h_t(\cdot)$ in (\ref{eq:def_of_h_u}), we have
\begin{align*}
   \left\langle \hat{\ell}_{t}, w \right\rangle = \left\langle \hat{L}_{t+1} - \hat{L}_t, w \right\rangle = h_{t+1}(w) - h_t(w) + \qty(\frac{1}{\eta_{t+1}}- \frac{1}{\eta_t}) \left\langle r, w \right\rangle,
\end{align*}
which implies
\begin{align*}
    \left\langle \hat{\ell}_t, u_{t+1} - a^* \right\rangle &= \left\langle \hat{L}_{t+1} - \hat{L}_t, u_{t+1} - a^* \right\rangle \\
    &= h_{t+1}(u_{t+1}) - h_t(u_{t+1}) - h_{t+1}(a^*) + h_t(a^*) + \qty(\frac{1}{\eta_{t+1}}- \frac{1}{\eta_t}) \left\langle r, u_{t+1} - a^* \right\rangle.
\end{align*}
Its summation over $t$, which is the third term of the RHS in (\ref{eq: decompose overall}), satisfies
\begin{align*}
    \sum_{t=1}^T \left\langle \hat{\ell}_t, u_{t+1} - a^* \right\rangle &= \sum_{t=1}^T h_t(u_t) - h_t(u_{t+1})  + \qty(\frac{1}{\eta_{t+1}}- \frac{1}{\eta_t}) \left\langle r, u_{t+1} - a^* \right\rangle \\
    &\hspace{3.7cm}+ h_{T+1}(u_{T+1}) - h_{T+1}(a^*) + h_1(a^*) - h_1(u_1) \\
    &\leq \sum_{t=1}^T h_t(u_t) - h_t(u_{t+1})  + \qty(\frac{1}{\eta_{t+1}}- \frac{1}{\eta_t}) \left\langle r, u_{t+1} - a^* \right\rangle + \frac{\left\langle r, u_{1} - a^* \right\rangle}{\eta_1}
\end{align*}
since $u_s = \argmin_w h_s(w)$ and $\hat{L}_1 = 0$, which implies
\begin{equation*}
    h_{T+1}(u_{T+1}) - h_{T+1} (a^*) \leq 0, \qq{and} h_1(a^*) - h_1(u_1) = \frac{\left\langle r, u_{1} - a^* \right\rangle}{\eta_1}.
\end{equation*}
By injecting this result into (\ref{eq: decompose overall}), we obtain
\begin{align*}
    \sum_{t=1}^T \left\langle \hat{\ell}_t, u_t - a^* \right\rangle &\leq \sum_{t=1}^T \left\langle \hat{\ell}_t, u_t - u_t' \right\rangle +  \sum_{t=1}^T \left\langle \hat{\ell}_t, u_t' - u_{t+1} \right\rangle + h_t(u_t) - h_{t+1} (u_{t+1})  \\
     &\qquad+ \sum_{t=1}^T \qty(\frac{1}{\eta_{t+1}}- \frac{1}{\eta_t}) \left\langle r, u_{t+1} - a^* \right\rangle + \frac{\left\langle r, u_{1} - a^* \right\rangle}{\eta_1}. \numberthis{\label{eq: decompose overall2}}
\end{align*}
By definition of $h_t(\cdot)$ and $u_t=\argmin_w h_t(w)$, we have
\begin{align*}
    \left\langle \hat{\ell}_t, u_t' - u_{t+1} \right\rangle + h_t(u_t) - h_{t+1} (u_{t+1}) &\leq \left\langle \hat{\ell}_t, u_t' - u_{t+1} \right\rangle + h_t(u_t') - h_{t+1} (u_{t+1})\\  
    &=  \left\langle \hat{\ell}_t, u_t' - u_{t+1} \right\rangle +  \left\langle \hat{L}_t - \frac{r}{\eta_t}, u_t' - u_{t+1} \right\rangle \\
    &= \left\langle \hat{L}_{t+1}- \frac{r}{\eta_t}, u_t' - u_{t+1} \right\rangle \\
    &\leq 0. \tag{by $u_t' = \argmin_w \langle \hat{L}_{t+1}- r/\eta_t, w \rangle$}
\end{align*}
By injecting this result into (\ref{eq: decompose overall2}), we have
\begin{align*}
     \sum_{t=1}^T \left\langle \hat{\ell}_t, u_t - a^* \right\rangle &\leq \sum_{t=1}^T \left\langle \hat{\ell}_t, u_t - u_t' \right\rangle +\sum_{t=1}^T \qty(\frac{1}{\eta_{t+1}}- \frac{1}{\eta_t}) \left\langle r, u_{t+1} - a^* \right\rangle + \frac{\left\langle r, u_{1} - a^* \right\rangle}{\eta_1} \\
     &\leq \sum_{t=1}^T \left\langle \hat{\ell}_t, u_t - u_t' \right\rangle +\sum_{t=1}^T \qty(\frac{1}{\eta_{t+1}}- \frac{1}{\eta_t}) \left\langle r, u_{t+1} - a^* \right\rangle + \frac{\left\langle r, u_{1} \right\rangle}{\eta_1} 
\end{align*}
when the perturbation is non-negative. 
Note that the expectation of $a_1^\top r_1$ in \eqref{eq:decom_expectation} can be bounded by applying Lemma~\ref{lem:max_sum_perturbation} given in Appendix~\ref{app:lemmas}, since it is not larger than the sum of the largest $m$ perturbations among $r_{1,1},r_{1,2},\dots,r_{1,d}$.
Therefore, using \eqref{eq:decom_expectation} and taking the expectation with respect to $r$ concludes the proof.
\end{proof}

\subsection{Stability Term}\label{sec:stability}

In the standard multi-armed bandit problem, the central challenge in the regret analysis of FTPL lies on the analysis of the relation between arm-selection probability function and its derivatives \citep{abernethy2015fighting,pmlr-v201-honda23a,pmlr-v247-lee24a}. 
This challenge is further amplified in the $m$-set semi-bandit problem, where the base-arm selection probability, whose explicit form is given in \eqref{eq:definition_phi_extension}, exhibits significantly greater complexity. 
To this end, here we provide some key properties of $\phi_i(\lambda;\dis)=I_i(\ul;\dis)$, whose derivation is a central technical difficulty in the analysis of FTPL.

The following lemma analyzes the ratio function $J_i(\ul;\dis)/I_i(\ul;\dis)$, showing that it can be upper bounded by a maximum of several simpler ratio functions, where the parameters satisfy some strong constraint. 
For notational simplicity, we define $[a:b]=\{a,a+1,\dots,b\}$ if $b\ge a$ and $[a:b]=\varnothing$ otherwise.
\begin{lemma}\label{lem:lambda_star}
    For $\lambda\in\mathbb{R}^d$, it holds that
    \begin{equation}\label{eq:lambda_star}
        \frac{J_i(\ul;\dis)}{I_i(\ul;\dis)}\leq
        \max_{(j,\wm)\in[m\land i]^2: j+\wm\le (m\land i)+1}
        \qty{\frac{J_i(\lambda^*;\dis,\wm,[j:i])}{I_i(\lambda^*;\dis,\wm,[j:i])}},
    \end{equation}
    where
    \begin{equation*}
        \lambda_k^* = 
        \begin{cases}
            \ul_i, & \text{if } k\leq i,\\
            \ul_k, & \text{if } k> i.
        \end{cases}
    \end{equation*}
\end{lemma}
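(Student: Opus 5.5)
The plan is to read the ratio as a conditional expectation and then simplify the configuration step by step. By \eqref{eq:definition_J}, $J_i/I_i$ is the expectation of $1/(z+\lambda_i)$ under the probability measure on $z\ge\nu$ with density proportional to
\[
g(z)=\sP_{r_i=z+\lambda_i}\bigl[\sigma_i(r-\lambda,\B)\le\wm\bigr]\,f(z+\lambda_i).
\]
Two facts will do all the work. The first is a monotone-likelihood fact: multiplying $g$ by a nondecreasing function of $z$ can only decrease the expectation of the decreasing function $1/(z+\lambda_i)$. The second is the mediant inequality $\frac{a+b}{c+d}\le\max\{\frac ac,\frac bd\}$.

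Without loss of generality I relabel so that $\ul_1\le\dots\le\ul_d$. The target configuration $\lambda^*$ is then obtained by raising every $\ul_k$ with $k<i$ up to the level $\ul_i$, and the index set $[j:i]$ with reduced size $\wm$ is obtained by deleting arms. I will carry out these modifications one arm at a time, processing arms in increasing order of index, and check at each step that the ratio can only increase or split into a maximum of admissible ratios.

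First I handle the arms $k<i$. Take the smallest remaining index $j'$ and apply Lemma~\ref{lem:tool_relation} with that arm, writing
\[
I_i(\cdot;\wm,\B)=I_i(\cdot;\wm-1,\B\setminus\{j'\})+I_{i,\wm,j'}(\cdot;\B),
\]
and similarly for $J_i$. The mediant inequality bounds $J_i/I_i$ by the larger of two ratios. In the first, the arm $j'$ is removed and $\wm$ drops by one; this produces the pairs $(j,\wm)$ with $j+\wm\le (m\land i)+1$, because each deletion of a low-index arm costs exactly one unit of $\wm$. In the second ratio, $I_{i,\wm,j'}$ and $J_{i,\wm,j'}$, the arm $j'$ enters only through the factor $F(z+\lambda_{j'})$ in \eqref{eq:I_theta_j} and \eqref{eq:prob_k_J}. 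I will show that replacing $\lambda_{j'}$ by the larger value $\ul_i$ does not decrease this ratio. For this I plan to use that
\[
\frac{F(z+\ul_i)}{F(z+\lambda_{j'})}
\]
is nonincreasing in $z$ when $\ul_i\ge\lambda_{j'}$, which holds for both Fr\'echet and Pareto by a direct computation of the log-derivative. I then recombine the $\theta=\wm$ piece with arm $j'$ sitting at level $\ul_i$, again via Lemma~\ref{lem:tool_relation}, and iterate. The cap $\wm\le m\land i$ comes from the fact that $i$ can have at most $i$ arms at or below it.

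Next I handle the arms $k>i$, which have $\ul_k\ge\ul_i$. Their conditional contribution to the event $\{\sigma_i\le\wm\}$ given $r_i=z+\lambda_i$ is a mixture of factors $F(z+\ul_k)$ and $1-F(z+\ul_k)$. Equivalently, one can view them as an independent random count $N(z)$ of arms ranked above $i$, which is stochastically nonincreasing in $z$. The event then becomes $\{\text{count among }[j:i]\le \wm-N(z)\}$. Conditioning on $N$ and applying the mediant inequality once more over the values of $N$ reduces each term to a configuration on the index set $[j:i]$ with an even smaller $\wm$. Such a configuration is already covered by the maximum in \eqref{eq:lambda_star}, so the arms $k>i$ can be dropped from $\B$, which matches $\B=[j:i]$. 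This is consistent with $\lambda^*_k=\ul_k$ for $k>i$ being irrelevant in the final expression.

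I expect the main obstacle to be the monotonicity step for a single arm $j'<i$: showing that raising $\lambda_{j'}$ increases $J/I$ inside the term $I_{i,\wm,j'}$ even though the remaining factor $\sP[\sigma_i(\cdot,\B\setminus\{j'\})=\wm]$ is not monotone in $z$. I will first try to prove it purely through the likelihood-ratio argument above, which requires no sign information on the remaining factor. Only if that fails will I attempt an explicit differentiation in $\lambda_{j'}$, using the specific forms of $F$ in \eqref{eq:frechet_def} and \eqref{eq:pareto_def}.
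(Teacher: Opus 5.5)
Your single-arm monotonicity step is sound. For $b\ge a$, $F(z+b)/F(z+a)$ is nonincreasing in $z$ because $f/F$ is decreasing for $\Fdis$ and $\Pdis$. So raising $\lambda_{j'}$ inside $J_{i,\wm,j'}/I_{i,\wm,j'}$ can only increase that ratio, which is the content of Lemma~\ref{lem:both_increasing}. The gap is the step ``recombine the $\theta=\wm$ piece \dots\ and iterate''. Write $J_i/I_i=(b+g)/(a+f)$, with $b/a$ the $(\wm-1,\B\setminus\{j'\})$ ratio and $g/f$ the piece. The mediant inequality leaves you with $\max\{b/a,\,g'/f'\}$, where $g'/f'$ is the piece at the raised point, and this does not reassemble. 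Whenever $g'/f'>b/a$ the mediant satisfies $(b+g')/(a+f')<g'/f'$, so $g'/f'\le\max\{b/a,(b+g')/(a+f')\}$ is false. Keeping the pieces in the maximum does not work either: they are not terms of \eqref{eq:lambda_star} and can exceed all of them. Take $\mathcal{P}_2$, $m=2$, $i=d=3$ and constant $\lambda$. Both sides of \eqref{eq:lambda_star} equal $4/7$, but your first split already yields $\max\{8/15,24/35\}=24/35$. Here $24/35=\mathbb{E}[1/X]$ for the median $X$ of three i.i.d.\ $\mathcal{P}_2$ variables.

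The missing idea is Lemma~\ref{lem:bgaf}. The piece's mass $f=I_{i,\wm,j'}$ is itself nondecreasing in $\lambda_{j'}$ (through the factor $F(z+\lambda_{j'})$), and so is $g/f$. Hence $(b+g(x_1))/(a+f(x_1))\le\max\{b/a,(b+g(x_2))/(a+f(x_2))\}$. The \emph{full} ratio at the raised point therefore appears directly, as in Lemma~\ref{lem:important_inequality}, and nothing is ever split off. This is what lets the paper's induction over $\lambda\pn{k}$ close. It also uses the fact that deleting arm $k_0+1$ is the same as deleting arm $1+m-\wm$ once both sit at level $\ul_i$.

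Your reduction for the arms $k>i$ has the same defect. Conditioning on $N(z)=n$ does not give the configuration on $[j:i]$ with $\wm-n$. It reweights the $z$-density by $\sP[N(z)=n]$, which for $n\ge 1$ is not nondecreasing in $z$. For a single arm $k>i$ it equals $1-F(z+\ul_k)$, which is decreasing, so your likelihood-ratio fact gives the comparison in the wrong direction. The paper instead removes these arms one at a time via Lemma~\ref{lem:infty}, which is Lemma~\ref{lem:important_inequality} with $\lambda'_k\to\infty$. It bounds the $(\wm,\B)$ ratio by the larger of the $(\wm-1,\B\setminus\{k\})$ and $(\wm,\B\setminus\{k\})$ ratios, and applying it for $k=d,\dots,i+1$ yields exactly the sets $[j:i]$.

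So the obstacle you flagged, monotonicity of the piece, is the easy part. What is missing is an inequality that, unlike the mediant inequality, keeps full ratios. That inequality requires monotonicity in $\lambda_j$ of both the mass and the ratio of the split-off term.
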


This is one of the most challenging parts of the proof in this paper, which also plays a crucial role in the regret analysis of FTPL.
Actually, in multi-armed bandits (i.e. the case of $m=1$), the ratio function $J_i(\lambda;\dis,1,[d])/I_i(\lambda;\dis,1,[d])$ was shown to be monotonically increasaing with respect to $\lambda_j$ ($j\neq i$) under some mild assumptions on the perturbation distribution, which directly leads to a structurally simpler upper bound \citep{pmlr-v201-honda23a,pmlr-v247-lee24a}, compared to the form of \eqref{eq:lambda_star}.
However, in $m$-set semi-bandits, such monotonicity property is very complicated to analyze, which makes the analysis more challenging.
Instead, we show that $J_i(\lambda_;\dis)/I_i(\lambda_;\dis)$ has a property similar to the monotonicity in $\lambda_j$ ($j\neq i$) with an extra component related to the case of $d-1$ base-arms, and we prove this lemma by recursively applying this property. 
Further explanation and the detailed proofs are provided in Sections~\ref{subsec:proof_monotonicity} and \ref{subsec:proof_lambda_star}.

By Lemma~\ref{lem:lambda_star}, the task of upper bounding $J_i(\ul;\dis)/I_i(\ul;\dis)$ is greatly simplified, since it suffices to consider ratio functions $J_i(\lambda^*;\cdot)/I_i(\lambda^*;\cdot)$, which have a simpler structure thanks to the independence on $\ul_j$ for all $j\neq i$. 
Building on the analysis of the upper bound given in Lemma~\ref{lem:lambda_star}, we obtain the following lemma, which is the key to the main theorems.
\begin{lemma}\label{lem:sigma_i}
    For $\dis\in\{\Fdis,\Pdis\}$, 
    \begin{equation*}
        \frac{J_i(\ul;\dis)}{I_i(\ul;\dis)}
        \leq
        \frac{1}{0\lor\ul_i}.
    \end{equation*}
    Moreover, if $\sigma_i=\sigma_i(-\lambda)$, i.e., $\lambda_i$ is the $\sigma_i$-th smallest among $\lambda_1,\dots,\lambda_d$, it holds that
    \begin{equation*}
        \frac{J_i(\ul;\dis)}{I_i(\ul;\dis)}
        \leq 
        \begin{cases}
            \qty(49+\frac{7\cdot 4^{1/\alpha}}{1-1/\alpha})(1\land m/\sigma_i)^{1/\alpha}, & \text{if }\dis=\Fdis,\\
            \frac{9}{2}(1\land m/\sigma_i)^{1/\alpha}, &\text{if } \dis=\Pdis.
        \end{cases}
    \end{equation*}
\end{lemma}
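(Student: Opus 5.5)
The first inequality needs no machinery. In the integral representation of $J_i$ and $I_i$ with $\lambda$ replaced by $\ul$, the weight $\frac{1}{z+\ul_i}$ multiplies exactly the integrand of $I_i$. Since the integration variable satisfies $z\ge\nu\ge0$, we have $z+\ul_i\ge 0\lor\ul_i$ whenever $\ul_i>0$, so the weight is at most $1/(0\lor\ul_i)$. When $\ul_i\le0$ the right-hand side is $+\infty$ and there is nothing to prove. This gives $J_i\le I_i/(0\lor\ul_i)$ termwise.

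For the second bound I apply Lemma~\ref{lem:lambda_star}. This reduces the task to bounding, uniformly over $(j,\wm)$ with $j+\wm\le(m\land\sigma_i)+1$, the ratio $J_i(\lambda^*;\dis,\wm,[j:i])/I_i(\lambda^*;\dis,\wm,[j:i])$. Here I index arms by their rank, so $i=\sigma_i$. In $\lambda^*$, every arm in $[j:i]$ carries the same shift $\ul_i=:x$. The event $\{\sigma_i(r-\lambda^*,[j:i])\le\wm\}$ therefore depends only on the raw perturbations: $r_i$ must be among the top $\wm$ of the $n:=i-j+1$ i.i.d.\ values $\{r_k\}_{k\in[j:i]}$, together with the constraint $r_i\ge\nu+x$. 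Conditioning on $r_i=z+x$, the probability is a binomial tail $P_n(z)=\sP[\mathrm{Bin}(n-1,1-F(z+x))\le\wm-1]$.

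The ratio thus becomes a one-dimensional quantity,
\[
\frac{\int_\nu^\infty (z+x)^{-1}P_n(z)\,\dd F(z+x)}{\int_\nu^\infty P_n(z)\,\dd F(z+x)},
\]
that is, $\E[1/r_i\mid\cdot]$ with $r_i$ conditioned to lie among the top $\wm$ of $n$ draws and to be at least $\nu+x$. Because the weight $1/r$ is decreasing, the ratio should be largest when $x$ is as small as allowed, i.e.\ at the lower truncation $r_i\ge\nu$. So I bound it by $\E[1/r\mid r\ \text{is among the top}\ \wm\ \text{of}\ n]$, possibly up to a constant from the truncation. A heavy-tailed computation with tail $1-F(y)\approx y^{-\alpha}$ then shows that the top-$\wm$ order statistics of $n$ draws are of size $(n/\wm)^{1/\alpha}$, so $\E[1/r]\lesssim(\wm/n)^{1/\alpha}$.

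It remains to optimize over the admissible pairs. With $n=i-j+1$ and $\wm\le (m\land i)+1-j$, I must show $\wm/n\lesssim 1\land m/\sigma_i$. If $j\le m$ is small, then $n\ge i-m+1$ is comparable to $i$. Large $j$ forces $\wm$ to be small, and the ratio $\wm/(i-j+1)$ is maximized at $j=1$, giving $\wm/n\le (m\land i)/i$. For Pareto the top-$\wm$ integral is explicit, since $F(y)=1-y^{-\alpha}$ makes the binomial tail a polynomial in $y^{-\alpha}$. This yields the clean constant $9/2$ via Beta-function identities. For Fréchet I would use the sandwich $y^{-\alpha}/2\le 1-e^{-y^{-\alpha}}\le y^{-\alpha}$ for $y\ge1$ and treat $y<1$ separately, which produces the $4^{1/\alpha}$ and $\Gamma$-type factors $1/(1-1/\alpha)$ in the stated constant.

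I expect the main obstacle to be the monotonicity-in-$x$ step. The conditioning event also involves $x$ through $P_n$, so I would verify that the ratio of integrals is nonincreasing in $x$ by a likelihood-ratio (Chebyshev sum inequality) argument. Failing that, I would directly bound the numerator by $\int_\nu^\infty z^{-1}P_n\,\dd F(z+x)$ and compare it with the denominator on the range $z\ge (n/\wm)^{1/\alpha}$, where $P_n$ is bounded below by a constant.
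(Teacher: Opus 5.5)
Apart from the Fr\'echet estimate, your outline is the paper's proof. The first bound is exactly the paper's argument. For the second, the paper also applies Lemma~\ref{lem:lambda_star} and uses that all arms of $[j:i]$ carry the same shift $\ul_i$ in $\lambda^*$, which turns each ratio into a binomial mixture over $k=i-j+1$ i.i.d.\ draws. It then removes the truncation, evaluates Beta integrals (with Gautschi's inequality) to get $\frac92(\wm/k)^{1/\alpha}$ for Pareto, and maximizes $\wm/k$ at $j=1$ as you do.

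The step you single out as the main obstacle is in fact immediate. After substituting $r=z+\ul_i$, the binomial factor is a function of $r$ alone, so $\ul_i$ enters only through the lower limit $\nu+\ul_i$. Lemma~\ref{lem:mono_B} (the average of a decreasing weight over $[x,\infty)$ is nonincreasing in $x$) then moves that limit to $\nu$ at no cost, and the truncation is vacuous when $\ul_i\le 0$. The denominator then equals exactly $\wm/k$. Drop your fallback: under Fr\'echet $\nu=0$, so for $x>0$ bounding $\frac{1}{z+x}$ by $\frac1z$ makes the numerator infinite.

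The genuine gap is the Fr\'echet case. With $u=F(r)$ the reduced ratio is $\frac{k}{\wm}\sum_{\theta\le\wm}\binom{k-1}{\theta-1}\int_0^1(\log\frac1u)^{1/\alpha}(1-u)^{\theta-1}u^{k-\theta}\,\dd u$. Your sandwich controls only large $r$; its counterpart in the paper is $\log\frac1u\le 2(1-u)$ on $[\frac12,1]$, which reduces to the Pareto Beta sum at cost $2^{1/\alpha}$.

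The difficulty is small $r$. Your size heuristic describes typical values of the top-$\wm$ order statistics. But $\E[1/r]$ also sees the atypical event that a top-$\wm$ value is small, where $1/r=(\log\frac1u)^{1/\alpha}$ is unbounded. After multiplication by $k/\wm$, which can be of order $k$, this event must be shown to be very rare, and ``treat $y<1$ separately'' supplies no mechanism for that.

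The paper splits the ranks at $\lfloor k/4\rfloor$.
\begin{itemize}
\item For $\theta\le k/4$, the integral over $u\le\frac12$ is at most $2^{-(k-1)}$. The resulting $\mathrm{Bin}(k-1,\frac12)$ lower tail at level $\frac14$ is at most $e^{-(k-1)/8}$ by Hoeffding, so this piece is at most $\frac{k}{\wm}e^{-(k-1)/8}\le 40(\wm/k)^{1/\alpha}$.
\item For $\theta>k/4$, it accepts the crude bound $\frac{7}{1-1/\alpha}$ (from $\log\frac1u\le\frac1u$ and Gautschi). This is affordable only because then $\wm>k/4$, i.e.\ $\ind[k/4\le\wm]\le(4\wm/k)^{1/\alpha}$.
\end{itemize}
That case split, not the sandwich, is the source of $4^{1/\alpha}$ and $1/(1-1/\alpha)$ in the stated constant, so your account of the constant is not an argument.

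In your own conditional-expectation framing, the regime $\wm>k/4$ is even easier. Conditioning on being in the top $\wm$ makes $r$ stochastically larger, so the ratio is at most $\E[1/r]=\Gamma(1+1/\alpha)<(4\wm/k)^{1/\alpha}$. But the concentration estimate for the small-$r$ region when $\wm\le k/4$ is indispensable, and that is what your plan is missing.
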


The proof is given in Section~\ref{subsec:proof_sigma_i}. This upper bound follows from taking the maximum of the bounds on the individual terms appearing in the RHS of \eqref{eq:lambda_star}, which reveals that the case $(j,\tilde{m})=(1, m \land i)$ is dominant in \eqref{eq:lambda_star}.

Adopting a more general version of the argument in \citet{pmlr-v201-honda23a} and \citet{pmlr-v247-lee24a} with the result from Lemma~\ref{lem:sigma_i}, we derive the following lemma, which provides an upper bound for each component of the stability term. 

\begin{lemma}\label{lem:adv_bound_term}
    For any $\alpha>1$ and $\dis\in\qty{\Fdis,\Pdis}$, if $\sigma_i=\sigma_i(-\hat{L}_t)$, i.e., $\hat{L}_t$ is the $\sigma_i$-th smallest among $\{\hat{L}_{t,j}\}_{j\in[d]}$, then
    \begin{equation*}
        \E\qty[\hat{\ell}_{t,i}\qty(\phi_i\qty(\eta_t \hat{L}_t;\dis)-\phi_i\qty(\eta_t \qty(\hat{L}_t+\hat{\ell}_t);\dis))\m \hat{L}_t]
        \leq\\
        \frac{2(\alpha+1)}{0\lor\underline{\hat{L}}_{t,i}}
        \land
        \acon\eta_t\qty(1\land\frac{m}{\sigma_i})^{\frac{1}{\alpha}},
    \end{equation*}
    where
    \begin{equation*}
        \acon=
        \begin{cases}
            2(\alpha+1)\qty(49+\frac{7\cdot 4^{1/\alpha}}{1-1/\alpha}), & \text{if }\dis=\Fdis,\\
            9(\alpha+1), &\text{if } \dis=\Pdis.
        \end{cases} 
    \end{equation*}
\end{lemma}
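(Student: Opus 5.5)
We would follow the MAB argument of \citet{pmlr-v201-honda23a,pmlr-v247-lee24a}, adapted to the $m$-set structure through Lemma~\ref{lem:sigma_i}. Conditionally on $\hat{L}_t$, we have $\hat{\ell}_t=\ell_{t,i}\widehat{w_{t,i}^{-1}}e_i$ on the event $a_{t,i}=1$, and $\hat{\ell}_{t,i}=0$ otherwise. Hence only the $i$-th coordinate of the loss moves, and the quantity to bound is
\[
\E\qty[\hat{\ell}_{t,i}\qty(\phi_i(\eta_t\hat{L}_t)-\phi_i(\eta_t\hat{L}_t+\eta_t\hat{\ell}_{t,i}e_i))\m\hat{L}_t].
\]
We write $x=\hat{\ell}_{t,i}$ and study the one-dimensional map $g(y)=\phi_i(\eta_t\hat{L}_t+ye_i)$ for $y\ge0$. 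Using \eqref{eq:phi_I_relation}, shifting $\lambda_i$ alone leaves $\widetilde\lambda_{m}$ unchanged as long as the rank of $i$ does not cross the $m$-th position. One checks the crossing case separately: once $i$ falls out of the top $m$, the $m$-th smallest element is attained by another arm and $\ul_i$ only grows. In either case $g(y)=I_i(\ul+ye_i)$ up to this reparametrization.

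Next we differentiate $I_i$ in its $i$-th argument. We substitute $z\mapsto z+\lambda_i$ in the integral defining $I_i$, which fixes the other perturbations' thresholds. For $F$ Fr\'echet or Pareto, the tail satisfies $f(x)\le(\alpha+1)\cdot(\text{tail factor})/x$ in the appropriate sense; for Pareto, $f(x)=\alpha x^{-1}(1-F(x))$. This gives
\[
-\frac{\partial}{\partial\lambda_i}I_i(\lambda)\le(\alpha+1)J_i(\lambda).
\]
This is the meaning of the remark after \eqref{eq:definition_J} that $J_i$ is the derivative up to a constant. Combined with Lemma~\ref{lem:sigma_i}, we get $-g'(y)\le(\alpha+1)\,g(y)\,B(y)$, where $B(y)$ is the ratio bound of Lemma~\ref{lem:sigma_i} evaluated at the shifted point.

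Two consequences follow. First, $\log g$ decreases at rate at most $(\alpha+1)B$. Second, since $\ul_i$ only increases under the shift and $\sigma_i$ only increases, $B(y)\le B(0)$. Both bounds in Lemma~\ref{lem:sigma_i} are monotone in the right direction: $1/(0\lor\ul_i)$ decreases in $\ul_i$, and $(1\land m/\sigma_i)^{1/\alpha}$ decreases in $\sigma_i$. Therefore
\[
g(0)-g(x)\le g(0)\qty(1-e^{-(\alpha+1)B(0)\eta_t x})\le g(0)\min\qty{1,(\alpha+1)B(0)\eta_t x}.
\]

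We then take the expectation. Given $\hat{L}_t$, the arm is selected with probability $w_{t,i}=g(0)$, and in that case $x=\ell_{t,i}\widehat{w_{t,i}^{-1}}$. By Lemma~\ref{lem:cgr_general_idea},
\[
\E[x^2\mid a_{t,i}=1]\le\E[\widehat{w_{t,i}^{-1}}^2]\le 2/w_{t,i}^2.
\]
This follows from the bound $\mathrm{Var}\le w^{-2}-w^{-1}$ together with mean $w^{-1}$. Using the linear branch $g(0)(\alpha+1)B(0)\eta_tx$, we obtain
\[
\E[x(g(0)-g(x))]\le w_{t,i}\cdot w_{t,i}(\alpha+1)B(0)\eta_t\cdot\frac{2}{w_{t,i}^2}=2(\alpha+1)\eta_tB(0).
\]
Substituting the second bound of Lemma~\ref{lem:sigma_i} gives the $\acon\eta_t(1\land m/\sigma_i)^{1/\alpha}$ term with exactly the stated constants, namely twice $(\alpha+1)$ times $49+\frac{7\cdot4^{1/\alpha}}{1-1/\alpha}$, or $\frac92$.

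For the other branch, we do not want a factor $\eta_t$. Here we redo the integration in the original coordinates. The perturbation ratio bound $1/(0\lor\ul_i)$ is stated for the scaled vector $\lambda=\eta_t\hat{L}_t$. The derivative with respect to $x$ therefore carries a factor $\eta_t$, while $B(0)=1/(\eta_t\underline{\hat{L}}_{t,i})$, and the two cancel. This yields $2(\alpha+1)/(0\lor\underline{\hat{L}}_{t,i})$. Taking the minimum of the two branches proves the lemma.

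The main obstacle is the derivative inequality $-\partial_{\lambda_i}I_i\le(\alpha+1)J_i$ at the level of the $m$-set integrals, together with the monotonicity of $B$ along the path when $i$'s rank crosses $m$ and the normalization $\ul$ switches reference arm. I would handle this by noting that $\phi_i(\lambda)$ is invariant under common shifts of $\lambda$. We can then work directly with $\phi_i$ in unnormalized coordinates, where $\partial_{\lambda_i}\phi_i$ is an integral against $f(z+\lambda_i)$. The bound $f(u)\le(\alpha+1)(1-F(u))/u$ on the support, which holds for Pareto with constant $\alpha$ and for Fr\'echet after a short check, then gives the comparison with $J_i$ pointwise in the integrand.
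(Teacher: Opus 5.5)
Your outline has the same structure as the paper's proof: a single-coordinate shift, the bound $-\partial_{\lambda_i}\phi_i\le(\alpha+1)J_i(\ul)$, Lemma~\ref{lem:sigma_i}, and $\E[\widehat{w_{t,i}^{-1}}^2]\le 2/w_{t,i}^2$. However, the step you single out as the main obstacle is resolved with the wrong inequality.

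Write $-\partial_{\lambda_i}\phi_i=\E[f(\tau)]$, where $\tau$ is the random threshold (determined by $r_{-i}$) that $r_i$ must exceed for arm $i$ to be selected. In normalized coordinates the constraint $r_i\ge\nu+\ul_i$ inside $J_i$ is vacuous on the selection event. This fails at the unnormalized point $\eta_t\hat L_t$, so the comparison must be with $J_i(\ul)$. There one has $J_i(\ul)=\E[\int_\tau^\infty f(u)u^{-1}\,\dd u]$, so what you actually need is $f(\tau)\le(\alpha+1)\int_\tau^\infty f(u)u^{-1}\,\dd u$.

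The hazard bound $f(u)\le(\alpha+1)(1-F(u))/u$ does not give this. It only compares $-\partial_{\lambda_i}\phi_i$ with $(\alpha+1)\E[(1-F(\tau))/\tau]$. That quantity dominates $(\alpha+1)J_i(\ul)$, because $\int_\tau^\infty f(u)u^{-1}\,\dd u\le(1-F(\tau))/\tau$, so the comparison runs the wrong way. For Pareto the exact power law rescues it if you use the constant $\alpha$. For Fr\'echet, substituting $s=u^{-\alpha}$ gives $\int_\tau^\infty f(u)u^{-1}\,\dd u=\int_0^{\tau^{-\alpha}}s^{1/\alpha}e^{-s}\,\dd s\le\Gamma(1+1/\alpha)$. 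Meanwhile $(1-F(\tau))/\tau\to\infty$ as $\tau\to0$, and small $\tau$ does occur, e.g.\ when $i$ is a top arm. So no constant at all comes out of this route.

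The missing ingredient is a bound on the \emph{derivative} of the density: $-f'(x)\le\frac{\alpha+1}{x}f(x)$. This is an equality for Pareto and immediate from the explicit $f'$ for Fr\'echet; see \eqref{eq:f_derivative_relation}. Used inside the integral representation \eqref{eq:phi_derivative_expression}, it yields $(\alpha+1)J_i(\ul)$ directly. Equivalently, $f(\tau)=\int_\tau^\infty -f'(u)\,\dd u\le(\alpha+1)\int_\tau^\infty f(u)u^{-1}\,\dd u$.

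Second, your opening reduction is misstated. For $m\ge2$ the estimator is $\hat\ell_t=\sum_{j:a_{t,j}=1}\ell_{t,j}\widehat{w_{t,j}^{-1}}e_j$, so all selected coordinates move, not only the $i$-th. Replacing $\phi_i(\eta_t(\hat L_t+\hat\ell_t))$ by $\phi_i(\eta_t(\hat L_t+\hat\ell_{t,i}e_i))$ is valid only as an upper bound on the difference. It uses that $\phi_i$ is nondecreasing in every $\lambda_j$ with $j\neq i$, together with $\hat\ell_{t,i}\ge0$. This is the paper's inclusion $\underline{\Omega}\subset\overline{\Omega}$, or Lemma~\ref{lem:phi_monotone} with $\B=\{i\}$.

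With these two repairs the rest of your argument goes through. Your Gr\"onwall step is a legitimate variant of the paper's step:
\begin{itemize}
\item The paper freezes $J_i(\underline{\eta_t\hat L_t+xe_i})\le J_i(\eta_t\underline{\hat L}_t)$ by monotonicity of $J_i$ along the path. It divides by $w_{t,i}=I_i(\eta_t\underline{\hat L}_t)$ only at the end.
\item You divide by $g(y)$ pointwise and freeze the explicit bound $B(y)\le B(0)$. This needs only the evident monotonicity of $B$ in $\ul_i$ and $\sigma_i$.
\end{itemize}
Both routes give $2(\alpha+1)\eta_t B(0)$, and hence exactly the stated constants.
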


Here, recall that $\underline{\hat{L}}_{t,i}$ denotes the gap between $\hat{L}_{t,i}$ and the $m$-th smallest element of $\hat{L}_t$. 
Therefore, $\underline{\hat{L}}_{t,i}$ can be non-positive, in which case we define $1/(0\lor\underline{\hat{L}}_{t,i})=1/0=\infty$.
By using the second term of this bound we can immediately bound the stability term as shown in Lemma~\ref{lem:stability} below, which is used in both adversarial and stochastic settings. 
Furthermore, we additionally use the first term in stochastic setting to obtain a tighter bound by the self-bounding technique. 

\medskip

\begin{proof}
    Define
    \begin{equation*}
        \underline{\Omega}=\qty{r:\qty[\argmin\limits_{a \in \nec} \left\{a^\top (\eta_t (\hat{L}_t + (\ell_{t,i}\widehat{w_{t,i}^{-1}}) e_i) - r)\right\}]_i=1}
    \end{equation*}
    and
    \begin{equation*}
        \overline{\Omega } =\qty{r:\qty[\argmin\limits_{a \in \nec} \left\{a^\top (\eta_t (\hat{L}_t + \hat{\ell}_t) - r)\right\}]_i=1}.
    \end{equation*}
    Then, we have
    \begin{equation*}
        \phi_i\qty(\eta_t\qty(\hat{L}_t + \qty(\ell_{t,i}\widehat{w_{t,i}^{-1}}) e_i);\dis)=\sP_{r\sim\dis}\qty(\underline{\Omega}),\quad \phi_i\qty(\eta_t\qty(\hat{L}_t+\hat{\ell}_t);\dis)=\sP_{r\sim\dis}\qty(\overline{\Omega}).
    \end{equation*}
    Since $\underline{\Omega}\subset\overline{\Omega}$, we immediately have
    \begin{equation*}
        \phi_i\qty(\eta_t\qty(\hat{L}_t + \qty(\ell_{t,i}\widehat{w_{t,i}^{-1}}) e_i);\dis)\leq\phi_i\qty(\eta_t\qty(\hat{L}_t+\hat{\ell}_t);\dis),
    \end{equation*}
    with which we have
    \begin{align*}
        \phi_i\qty(\eta_t\hat{L}_t;\dis)-\phi_i\qty(\eta_t\qty(\hat{L}_t+\hat{\ell}_t);\dis)
        &\leq\phi_i\qty(\eta_t\hat{L}_t;\dis) - \phi_i\qty(\eta_t\qty(\hat{L}_t + \qty(\ell_{t,i}\widehat{w_{t,i}^{-1}}) e_i);\dis)\\ 
        &= \int_0^{\eta_t\ell_{t,i} \widehat{w_{t,i}^{-1}}} -\frac{\partial \phi_i}{\partial x}\qty(\eta_t\hat{L}_t + x e_i;\dis) \dd x.\numberthis\label{eq:phi_i_diff}
    \end{align*}
    Recalling that $\phi_i(\lambda;\dis)$ can be expressed as \eqref{eq:definition_phi_extension_no_underline}, for $\widetilde{\lambda}_1\leq\widetilde{\lambda}_2\leq\dots\leq\widetilde{\lambda}_d$ denoting the sorted elements of $\lambda\in\mathbb{R}^d$ we have
    \begin{equation*}
        \phi_i(\lambda;\dis)=
        \int_{\nu-\widetilde{\lambda}_{m}}^\infty 
        \sP_{\{r_k\}_{k\in[d]\setminus\{i\}} \sim \dis,\,r_i=z+\lambda_i}
        \qty[\sigma_i\qty(r-\lambda)\leq m]
    \dd F(z+\lambda_i).
    \end{equation*}
    Here, $\sP[\cdot]$ in the integrand above does not depend on $\lambda_i$ because the $i$-th component of $r-\lambda$ is fixed to $z$ under $r_i=z+\lambda_i$. 
    Therefore, 
    one can see that $\phi'_i(\lambda;\dis)=\frac{\partial \phi_i}{\partial \lambda_i}(\lambda;\dis)$ satisfies
    \begin{align*}
        -\phi'_i&(\lambda;\dis)
        =
        \int_{\nu-\widetilde{\lambda}_{m}}^\infty -f'(z+\lambda_i)\sP_{\{r_k\}_{k\in[d]\setminus\{i\}} \sim \dis,\,r_i=z+\lambda_i}
        \qty[\sigma_i\qty(r-\lambda)\leq m] \dd z\\
        &\hspace{3cm}-\ind[\sigma_i(-\lambda)=m]f(\nu)\sP_{\{r_k\}_{k\in[d]\setminus\{i\}} \sim \dis,\,r_i=z+\lambda_i}
        \qty[\sigma_i\qty(r-\lambda)\leq m]\\
        &\leq
        \int_{\nu-\widetilde{\lambda}_{m}}^\infty -f'(z+\lambda_i)\sP_{\{r_k\}_{k\in[d]\setminus\{i\}} \sim \dis,\,r_i=z+\lambda_i}
        \qty[\sigma_i\qty(r-\lambda)\leq m] \dd z\\
        &=
        \int_{\nu}^\infty -f'(z+\ul_i)\sP_{\{r_k\}_{k\in[d]\setminus\{i\}} \sim \dis,\,r_i=z+\ul_i}
        \qty[\sigma_i\qty(r-\ul)\leq m] \dd z,\numberthis\label{eq:phi_derivative_expression}
    \end{align*}
    where the last equality holds by $\ul_i=\lambda_i-\widetilde{\lambda}_m$ for any $i\in[d]$.
    Now we divide the proof into two cases.
    When $\dis=\Fdis$, since for $x>0$,
    \begin{equation*}
        f'(x)=\qty(-\frac{\alpha(\alpha+1)}{x^{\alpha+2}}+\frac{\alpha^2}{x^{2(\alpha+1)}})e^{-1/x^\alpha},
    \end{equation*}
    we have
    \begin{equation}\label{eq:f_derivative_relation}
        -f'(x)=\qty(\frac{\alpha(\alpha+1)}{x^{\alpha+2}}-\frac{\alpha^2}{x^{2(\alpha+1)}})e^{-1/x^\alpha}\leq
        \frac{\alpha(\alpha+1)}{x^{\alpha+2}}e^{-1/x^\alpha}
        =\frac{\alpha+1}{x}f(x).
    \end{equation}
    Recall the definition of $J_i(\cdot)$ given in \eqref{eq:definition_J}. 
    By \eqref{eq:phi_i_diff}--\eqref{eq:f_derivative_relation} we have
    \begin{align*}
        \lefteqn{\phi_i\qty(\eta_t\hat{L}_t;\Fdis)-\phi_i\qty(\eta_t\qty(\hat{L}_t+\hat{\ell}_t);\Fdis)}\\
        &\leq
        \int_0^{\eta_t\ell_{t,i} \widehat{w_{t,i}^{-1}}}\int_{\nu}^\infty \frac{\alpha+1}{z+\underline{\eta_t\hat{L}_t+xe_i}_i}\sP_{\{r_k\}_{k\in[d]\setminus\{j\}} \sim \dis,\,r_i=z+\underline{\eta_t\hat{L}_t+xe_i}_i}
        \qty[\sigma_i\qty(r-\underline{\eta_t\hat{L}_t+xe_i})\leq\wm] \\
        &\hspace{11.5cm}\dd F(z+\underline{\eta_t\hat{L}_t+xe_i}_i)\dd x\\
        &=
        (\alpha+1)\int_0^{\eta_t\ell_{t,i} \widehat{w_{t,i}^{-1}}} J_i\qty(\underline{\eta_t\hat{L}_t + x e_i};\Fdis) \dd x \\
        &\leq (\alpha+1)\int_0^{\eta_t\ell_{t,i} \widehat{w_{t,i}^{-1}}} J_i\qty(\eta_t\underline{\hat{L}}_t;\Fdis) \dd x\numberthis\label{eq:J_mono_frechet}\\
        &=(\alpha+1)\eta_t\ell_{t,i}  J_i\qty(\eta_t\underline{\hat{L}}_t;\Fdis)\widehat{w_{t,i}^{-1}},
    \end{align*}
    where \eqref{eq:J_mono_frechet} follows from the monotonicity of $J_i(\eta_t\underline{\hat{L}}_t;\Fdis)$.
    We obtain the same result for the case of Pareto distribution $\dis=\Pdis$ since $-f'(x)=(\alpha+1)f(x)/x$ holds as an equality instead of the inequality in \eqref{eq:f_derivative_relation}.

    By Lemma~\ref{lem:cgr_general_idea}, the unbiased estimator $\widehat{{w_{t,i}^{-1}}}$ for $w_{t,i}^{-1}$ provided by the original GR or CGR satisfies $\mathrm{Var}[\widehat{{w_{t,i}^{-1}}}| \hat{L}_t,a_{t,i}]\leq 1/w_{t,i}^2$. 
    Therefore, we have
    \begin{equation}\label{eq:square_expectation}
        \E\qty[\widehat{{w_{t,i}^{-1}}}^2\m\hat{L}_t,a_{t,i}]=
        \E^2\qty[\widehat{{w_{t,i}^{-1}}}\m\hat{L}_t,a_{t,i}]+
        \mathrm{Var}\qty[\widehat{{w_{t,i}^{-1}}}\m\hat{L}_t,a_{t,i}]\leq \frac{2}{w_{t,i}^2}.
    \end{equation}
    Since $\hat{\ell}_{t,i}=\ind[a_{t,i}=1]\ell_{t,i}\widehat{{w_{t,i}^{-1}}}$, for $\dis\in\qty{\Fdis,\Pdis}$ we obtain
    \begin{align*}
        \lefteqn{\E\qty[\hat{\ell}_{t,i}\qty(\phi_i\qty(\eta_t\hat{L}_t;\dis)-\phi_i\qty(\eta_t\qty(\hat{L}_t+\hat{\ell}_t);\dis))\m \hat{L}_t]}\\
        &\leq\E\qty[\ind\qty[a_{t,i}=1]\ell_{t,i}\widehat{{w_{t,i}^{-1}}}\cdot(\alpha+1)\eta_t\ell_{t,i}  J_i\qty(\eta_t\underline{\hat{L}}_t;\dis)\widehat{w_{t,i}^{-1}}\m \hat{L}_t]\\
        &\leq2(\alpha+1)\eta_t\E\qty[w_{t,i}\frac{\ell^2_{t,i}J_i\qty(\eta_t\underline{\hat{L}}_t;\dis)}{w^2_{t,i}}\m \hat{L}_t]
        \displaybreak[0]\\
        &\leq2(\alpha+1)\eta_t\E\qty[\frac{J_i\qty(\eta_t\underline{\hat{L}}_t;\dis)}{I_i\qty(\eta_t\underline{\hat{L}}_t;\dis)}\m \hat{L}_t]
        \tag{by $w_{t,i}=\phi_i(\eta_t\hat{L}_t;\dis)$ and \eqref{eq:phi_I_relation}}
        \displaybreak[0]\\
        &\leq
        \frac{2(\alpha+1)}{0\lor\underline{\hat{L}}_{t,i}}
        \land
        \acon\eta_t\qty(1\land\frac{m}{\sigma_i})^{\frac{1}{\alpha}},
    \end{align*}
    where the last inequality follows from Lemma~\ref{lem:sigma_i}.
\end{proof}

\begin{lemma}\label{lem:stability}
    For any $\eta_t\hat{L}_t$, $\alpha>1$ and $\dis\in\qty{\Fdis,\Pdis}$, it holds that
    \begin{equation*}
        \E\qty[\hat{\ell}_t \qty(\phi\qty(\eta_t \hat{L}_t;\dis)-\phi\qty(\eta_t \qty(\hat{L}_t+\hat{\ell}_t);\dis))\m \hat{L}_t]\leq\\
        \acon\eta_t\qty(m+\frac{\alpha}{\alpha-1} m^{\frac{1}{\alpha}}(d-m)^{1-\frac{1}{\alpha}}).
    \end{equation*}
\end{lemma}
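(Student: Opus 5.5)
The plan is to sum the per-coordinate bound of Lemma~\ref{lem:adv_bound_term} over $i\in[d]$, keeping only its second branch. By linearity, the left-hand side equals
\[
\sum_{i=1}^d \E\qty[\hat{\ell}_{t,i}\qty(\phi_i(\eta_t\hat{L}_t;\dis)-\phi_i(\eta_t(\hat{L}_t+\hat{\ell}_t);\dis))\m\hat{L}_t].
\]
Each summand is at most $\acon\eta_t(1\land m/\sigma_i)^{1/\alpha}$, where $\sigma_i=\sigma_i(-\hat{L}_t)$ is the rank of $\hat{L}_{t,i}$. The statement then reduces to bounding the purely combinatorial sum $\sum_{i=1}^d (1\land m/\sigma_i)^{1/\alpha}$.

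One subtlety is ties in $\hat{L}_t$. Under ties, $\sigma_i$ as defined counts arms with loss not exceeding $\hat{L}_{t,i}$, so several arms may share the same large value of $\sigma_i$. In that case $\sigma_i$ is only at least the position of $i$ in any fixed tie-broken ordering. Since $(1\land m/\sigma)^{1/\alpha}$ is nonincreasing in $\sigma$, I will fix a tie-breaking permutation $\pi$ with $\sigma_i\ge\pi(i)$. This bounds the sum by $\sum_{k=1}^d (1\land m/k)^{1/\alpha}$. Alternatively, the tie-broken rank can be used directly if Lemma~\ref{lem:adv_bound_term} is read that way.

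Next I split the sum at $k=m$. The first $m$ terms contribute exactly $m$. For the remaining terms,
\[
\sum_{k=m+1}^{d} (m/k)^{1/\alpha}\le m^{1/\alpha}\int_m^{d} x^{-1/\alpha}\dd x=\frac{\alpha}{\alpha-1}m^{1/\alpha}\qty(d^{1-1/\alpha}-m^{1-1/\alpha}),
\]
using that $x^{-1/\alpha}$ is decreasing. To obtain the stated form, I use subadditivity of the concave map $y\mapsto y^{1-1/\alpha}$ on $[0,\infty)$: $d^{1-1/\alpha}\le m^{1-1/\alpha}+(d-m)^{1-1/\alpha}$. This gives $d^{1-1/\alpha}-m^{1-1/\alpha}\le(d-m)^{1-1/\alpha}$. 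Multiplying by $\acon\eta_t$ yields the claim.

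There is no deep obstacle here, since the heavy lifting is in Lemmas~\ref{lem:sigma_i} and \ref{lem:adv_bound_term}. The only points needing care are the tie handling in the rank and the subadditivity step that converts $d^{1-1/\alpha}-m^{1-1/\alpha}$ into $(d-m)^{1-1/\alpha}$.
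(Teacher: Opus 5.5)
Your proposal is correct and follows the paper's proof essentially verbatim. Both sum the second branch of Lemma~\ref{lem:adv_bound_term} over $i$, bound $\sum_{i>m}(m/i)^{1/\alpha}$ by $m^{1/\alpha}\int_m^d x^{-1/\alpha}\,\mathrm{d}x$, and then use $d^{1-1/\alpha}-m^{1-1/\alpha}\le (d-m)^{1-1/\alpha}$. Your explicit handling of ties, via monotonicity of $(1\land m/\sigma)^{1/\alpha}$ in $\sigma$, is a harmless refinement of the paper's implicit tie-breaking convention.
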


\begin{proof2}
    By Lemma~\ref{lem:adv_bound_term}, we immediately have
    \begin{align*}
        \E\Big[\hat{\ell}_t \qty(\phi\qty(\eta_t\hat{L}_t;\dis)-\phi\qty(\eta_t\qty(\hat{L}_t+\hat{\ell}_t);\dis))\Big| &\hat{L}_t\Big]
        \leq
        \acon\eta_t\qty(m+m^{1/\alpha}\sum_{i=m+1}^d i^{-\frac{1}{\alpha}})\\
        &\leq
        \acon\eta_t\qty(m+m^{1/\alpha}\int_m^d x^{-\frac{1}{\alpha}} \dd x)\\
        &=\acon\eta_t\qty(m+\frac{\alpha}{\alpha-1}m^{\frac{1}{\alpha}}(d^{1-\frac{1}{\alpha}}-m^{1-\frac{1}{\alpha}}))\\
        &\leq
        \acon\eta_t\qty(m+\frac{\alpha}{\alpha-1}m^{\frac{1}{\alpha}}(d-m)^{1-\frac{1}{\alpha}}).\dqed
    \end{align*}
\end{proof2}

\subsection{Penalty Term}\label{subsec:penalty_term}
The penalty term is bounded in the following lemma.
\begin{lemma}\label{lem:penalty_adv_bound}
    For any $\hat{L}_t\in\mathbb{R}^d$, $\alpha>1$ and $\dis\in\qty{\Fdis,\Pdis}$, we have
    \begin{multline}\label{eq:penalty_adv_bound}
        \E_{r_{t+1}\sim\dis}\qty[\left\langle r_{t+1}, a_{t+1} - a^*\right\rangle\m\hat{L}_t]
        \\\leq
        \begin{cases}
            \qty(\frac{\alpha}{\alpha-1}(m-1)^{1-\frac{1}{\alpha}}+\Gamma\qty(1-\frac{1}{\alpha}))\qty(d+1)^{\frac{1}{\alpha}}+m, & \text{if } \dis = \Fdis,\\
            \qty(\frac{\alpha}{\alpha-1}(m-1)^{1-\frac{1}{\alpha}}+\Gamma\qty(1-\frac{1}{\alpha}))\qty(d+1)^{\frac{1}{\alpha}}, & \text{if } \dis = \Pdis.
        \end{cases}
    \end{multline}
    In addition, if $\underline{\hat{L}}_{t,i}>0$ for any $i$ with $a^*_i=0$, we also have
    \begin{equation*}
        \E_{r_{t+1}\sim\dis}\qty[\left\langle r_{t+1}, a_{t+1} - a^*\right\rangle\m\hat{L}_t]
        \leq
        \frac{\alpha}{\alpha-1}\sum_{i:a^*_i=0}\frac{1}{(\eta_t\underline{\hat{L}}_{t,i})^{\alpha-1}}.
    \end{equation*} 
\end{lemma}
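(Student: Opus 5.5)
The first bound is immediate. Since perturbations are non-negative ($\nu\ge 0$), I would drop $-\langle r_{t+1},a^*\rangle\le 0$ and use $\langle r_{t+1},a_{t+1}\rangle\le\sum_{k=1}^m r_{t+1,(k)}$, where $r_{(1)}\ge r_{(2)}\ge\cdots$ are the order statistics of the $d$ i.i.d.\ perturbations. This holds because $a_{t+1}$ selects exactly $m$ coordinates. The expected sum of the top $m$ order statistics is then bounded by Lemma~\ref{lem:max_sum_perturbation} in Appendix~\ref{app:lemmas}, exactly as for $\E[a_1^\top r_1]$ in Lemma~\ref{lem:regret_decomposition}. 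That gives the stated constants, including the extra $+m$ for Fr\'{e}chet, which comes from the shift between Fr\'{e}chet and Pareto tails. Note that $r_{t+1}$ is independent of $\hat{L}_t$, so conditioning plays no role.

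For the second bound, write $\lambda=\eta_t\hat{L}_t$. The plan is to control the suboptimal coordinates by pairing. Let $S=\{i:a^*_i=0\}$ and $S^c$ the support of $a^*$, with $|S^c|=m$. The assumption $\underline{\hat{L}}_{t,i}>0$ for all $i\in S$ means every $i\in S$ is strictly worse than the $m$-th smallest cumulative loss. I would first check that this forces $S^c$ to be exactly the set of the $m$ smallest components of $\lambda$, so that $\widetilde{\lambda}_m=\max_{j\in S^c}\lambda_j$ and $\underline{\lambda}_j\le 0$ for $j\in S^c$.

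Next write $\langle r,a_{t+1}-a^*\rangle=\sum_{i\in S}a_{t+1,i}r_i-\sum_{j\in S^c}(1-a_{t+1,j})r_j$. The number of $i\in S$ chosen equals the number of $j\in S^c$ dropped, so I would match them as a bijection. For a matched pair, $r_i-\lambda_i\ge r_j-\lambda_j$, since $i$ beat $j$. Hence
\[
r_i-r_j\le\lambda_i-\lambda_j+(r_i-r_j)\le r_i-(\lambda_i-\lambda_j)\le r_i-\underline{\lambda}_i ,
\]
where the last step uses $\lambda_j\le\widetilde{\lambda}_m$ and $r_j\ge0$.

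Even more simply, each pair contributes at most $r_i\,\ind[r_i\ge\underline{\lambda}_i+\nu]$ up to the nonnegative $r_j$ being subtracted. So I would bound
\[
\langle r,a_{t+1}-a^*\rangle\le\sum_{i\in S}r_i\,\ind\bigl[r_i\ge \underline{\lambda}_i\bigr],
\]
because an arm $i\in S$ can only be selected if $r_i-\lambda_i\ge r_j-\lambda_j\ge-\widetilde{\lambda}_m$ for some $j\in S^c$ with $r_j\ge 0$. Taking expectations, $\E[r\,\ind[r\ge x]]=\int_x^\infty z\,\dd F(z)$ is at most $\int_x^\infty\alpha z^{-\alpha}\dd z=\frac{\alpha}{\alpha-1}x^{1-\alpha}$. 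For Pareto this is exact when $x\ge1$ (and the bound is trivially fine when $x<1$, since then it exceeds the mean). For Fr\'{e}chet it follows from $f(z)\le\alpha z^{-\alpha-1}$. Summing over $i\in S$ gives the claim.

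The main obstacle is the combinatorial step: justifying the pairing argument cleanly, in particular that a selected suboptimal arm forces $r_i\ge\underline{\lambda}_i$, and that the negative terms from dropped optimal arms can simply be discarded. The tail-integral computation is routine.
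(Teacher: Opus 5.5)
Your argument is correct and is essentially the paper's proof. For the first part you bound by the sum of the top $m$ order statistics and apply Lemma~\ref{lem:max_sum_perturbation}. For the second part you discard the nonpositive terms $-(1-a_{t+1,j})r_{t+1,j}$, use that a selected $i$ with $a^*_i=0$ forces $r_{t+1,i}\ge\nu+\eta_t\underline{\hat{L}}_{t,i}$, and integrate the tail with $f(z)\le\alpha z^{-\alpha-1}$. Your justification of that implication, via an unselected arm among the $m$ smallest, is exactly what the paper leaves implicit.

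The pairing is unnecessary. Delete the displayed chain: its middle step $\lambda_i-\lambda_j+(r_i-r_j)\le r_i-(\lambda_i-\lambda_j)$, equivalently $2(\lambda_i-\lambda_j)\le r_j$, is false in general. Also, keeping the threshold $\nu+\underline{\lambda}_i$ gives $\frac{\alpha}{\alpha-1}(\nu+\underline{\lambda}_i)^{1-\alpha}\le\frac{\alpha}{\alpha-1}\underline{\lambda}_i^{1-\alpha}$ directly, which removes your separate Pareto case.
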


\begin{proof}
    Let $r_k^*$ be the $k$-th largest perturbation among $r_{1,1},r_{1,2},\dots,r_{1,d}$, which are i.i.d. from $\mathcal{D}_{\alpha}$ for $k\in[d]$ and $\alpha>1$.
    Then, we have
    \begin{equation*}
        \E_{r_{t+1}\sim\dis}\qty[\left\langle r_{t+1}, a_{t+1} - a^*\right\rangle\m\hat{L}_t ]
        \leq
        \E_{r_{t+1}\sim\dis}\qty[a_{t+1}^\top r_{t+1}\m\hat{L}_t]
        \leq
        \E_{r\sim\dis}\qty[\sum_{k=1}^m r_k^*\m\hat{L}_t].
    \end{equation*}
    Note that $\E_{r\sim\dis}\qty[\sum_{k=1}^m r_k^*\m\hat{L}_t]$ does not depend on $\hat{L}_t$.
    Then, we have
    \begin{equation*}
        \E_{r_{t+1}\sim\dis}\qty[\left\langle r_{t+1}, a_{t+1} - a^*\right\rangle\m\hat{L}_t ]
        \leq
        \E_{r\sim\dis}\qty[\sum_{k=1}^m r_k^*].
    \end{equation*}
    Then, by applying Lemma~\ref{lem:max_sum_perturbation} given in Appendix~\ref{app:lemmas}, we obtain \eqref{eq:penalty_adv_bound}.
    
    Next we show the latter part of the lemma, which considers the case $\underline{\hat{L}}_{t,i}>0$ for any $i$ with $a^*_i=0$.
    When $\dis\in\qty{\Fdis,\Pdis}$, since $f(x)\leq \alpha/x^{\alpha+1}$ for $x\geq \nu$, we have
    \begin{align*}
        \E_{r_{t+1}\sim\dis}\qty[\left\langle r_{t+1}, a_{t+1} - a^*\right\rangle\m\hat{L}_t]
        &\leq
        \sum_{i:a^*_i=0}\E_{r_{t+1}\sim\dis}\qty[a_{t+1,i} r_{t+1,i}\m\hat{L}_t]\\
        &=
        \sum_{i:a^*_i=0}\E_{r_{t+1}\sim\dis}\qty[\ind\qty[a_{t+1,i}=1] r_{t+1,i}\m\hat{L}_t]\\
        &\leq
        \sum_{i:a^*_i=0}\E_{r_{t+1}\sim\dis}\qty[\ind\qty[r_{t+1,i}\geq\nu+\eta_t\underline{\hat{L}}_{t,i}] r_{t+1,i}\m\hat{L}_t]\\
        &=
        \sum_{i:a^*_i=0} \int_{\nu}^\infty (z+\eta_t\underline{\hat{L}}_{t,i})f\qty(z+\eta_t\underline{\hat{L}}_{t,i})\dd z\\
        &\leq
        \sum_{i:a^*_i=0} \int_\nu^\infty \frac{\alpha}{(z+\eta_t\underline{\hat{L}}_{t,i})^\alpha}\dd z 
        \tag{by $f(x)\leq \alpha/x^{\alpha+1}$ for $x\geq \nu$}\\
        &\leq
        \frac{\alpha}{\alpha-1}\sum_{i:a^*_i=0}\frac{1}{(\eta_t\underline{\hat{L}}_{t,i})^{\alpha-1}},
    \end{align*}
    which concludes the proof.
\end{proof}

\subsection{Proof of Theorem~\ref{thm:adv_bound}}\label{sec:proof_adv_bound}

By combining Lemmas~\ref{lem:regret_decomposition}, \ref{lem:stability} and \ref{lem:penalty_adv_bound} 
with $\eta_t=\frac{c}{\sqrt{t}}m^{\frac{1}{2}-\frac{1}{\alpha}}d^{\frac{1}{\alpha}-\frac{1}{2}}$, for $\dis=\Pdis$ we have
\begin{align*}
    \R(T)&\leq
    C_1(\Pdis) cm^{\frac{1}{2}-\frac{1}{\alpha}}d^{\frac{1}{\alpha}-\frac{1}{2}}\qty(m+\frac{\alpha}{\alpha-1} m^{\frac{1}{\alpha}}(d-m)^{1-\frac{1}{\alpha}})\sum_{t=1}^T \frac{1}{\sqrt{t}}\\
    &\hspace{2cm}+
    \frac{\qty(\frac{\alpha}{\alpha-1}(m-1)^{1-\frac{1}{\alpha}}+\Gamma\qty(1-\frac{1}{\alpha}))\qty(d+1)^{\frac{1}{\alpha}}}{cm^{\frac{1}{2}-\frac{1}{\alpha}}d^{\frac{1}{\alpha}-\frac{1}{2}}}\sum_{t=1}^T\qty(\sqrt{t+1}-\sqrt{t})
    \\
    &\hspace{2cm}+
    \frac{\qty(\frac{\alpha}{\alpha-1}(m-1)^{1-\frac{1}{\alpha}}+\Gamma\qty(1-\frac{1}{\alpha}))\qty(d+1)^{\frac{1}{\alpha}}}{cm^{\frac{1}{2}-\frac{1}{\alpha}}d^{\frac{1}{\alpha}-\frac{1}{2}}}\numberthis\label{eq:adv_bound_to_replace}
    \displaybreak[0]\\
    &\leq 
        2 C_1(\Pdis)c\qty(\qty(\frac{m}{d})^{1-\frac{1}{\alpha}}+\frac{\alpha}{\alpha-1}\qty(1-\frac{m}{d})^{1-\frac{1}{\alpha}})\sqrt{mdT}
        \\
    &\hspace{2cm}+\frac{\qty(\frac{\alpha}{\alpha-1}(1-\frac{1}{m})^{1-\frac{1}{\alpha}}+\Gamma\qty(1-\frac{1}{\alpha})m^{\frac{1}{\alpha}-1})(1+\frac{1}{d})^{\frac{1}{\alpha}}}{c}
    \qty(\sqrt{mdT}+\sqrt{md}),
\end{align*}
which implies that the regret is of order $O(\sqrt{mdT})$ with $C_1(\Pdis)=9(\alpha+1)$.
The analysis for $\dis=\Fdis$ is similar with the difference that $\frac{\qty(\frac{\alpha}{\alpha-1}(m-1)^{1-\frac{1}{\alpha}}+\Gamma\qty(1-\frac{1}{\alpha}))\qty(d+1)^{\frac{1}{\alpha}}}{cm^{\frac{1}{2}-\frac{1}{\alpha}}d^{\frac{1}{\alpha}-\frac{1}{2}}}$ in \eqref{eq:adv_bound_to_replace} is replaced with $\frac{\qty(\frac{\alpha}{\alpha-1}(m-1)^{1-\frac{1}{\alpha}}+\Gamma\qty(1-\frac{1}{\alpha}))\qty(d+1)^{\frac{1}{\alpha}}+m}{cm^{\frac{1}{2}-\frac{1}{\alpha}}d^{\frac{1}{\alpha}-\frac{1}{2}}}$ coming from Lemmas~\ref{lem:regret_decomposition} and \ref{lem:penalty_adv_bound}. 
This leads to the bound of
\begin{multline*}
    \R(T)\leq 
        2 C_1(\Fdis)c\qty(\qty(\frac{m}{d})^{1-\frac{1}{\alpha}}+\frac{\alpha}{\alpha-1}\qty(1-\frac{m}{d})^{1-\frac{1}{\alpha}})\sqrt{mdT}
        \\
    +\frac{\qty(\frac{\alpha}{\alpha-1}(1-\frac{1}{m})^{1-\frac{1}{\alpha}}+\Gamma\qty(1-\frac{1}{\alpha})m^{\frac{1}{\alpha}-1})(1+\frac{1}{d})^{\frac{1}{\alpha}}+\qty(\frac{m}{d})^{\frac{1}{\alpha}}}{c}
    \qty(\sqrt{mdT}+\sqrt{md}),
\end{multline*}
which also implies that the regret is of order $O(\sqrt{mdT})$. 
\hfill\BlackBox
\smallskip

\subsection{Outline for Stochastic Regret Bound}\label{eq:proof_sto_bound_outline}

In this part, we explain how to derive the regret bound for stochastic setting, noting in particular that the bound becomes logarithmic when $\alpha=2$.
Although the overall proof consists of two cases respectively corresponding to perturbation following $\Fdis$ or $\Pdis$ with shape $\alpha\geq 2$, and those with shape $\alpha\in(1,2)$, each case is established in essentially the same way.
Therefore,
we only provide the outline for the case of $\Fdis$ or $\Pdis$ with $\alpha\geq 2$ here. 

Firstly, the regret in stochastic setting can be expressed as
\begin{equation*}
    \R(T)=\E\qty[\sum_{t=1}^T \sum_{i:a^*_i=0}w_{t,i}\Delta_i].
\end{equation*}
As an extension of the technique applied in \cite{pmlr-v201-honda23a} and \citet{pmlr-v247-lee24a}, we will define an event $F_t$ (resp. $D_t$) for the perturbation distribution $\Fdis$ (resp. $\Pdis$), under which $\hat{L}_{t,i}$ of non-optimal base-arm $i$ (i.e., $i$ with $a^*_i=0$) is sufficiently large compared with $\hat{L}_{t,j}$ for any base-arm $j$ of the optimal action (i.e., $j$ with $a^*_j=1$) so that $\eta_t\underline{\hat{L}}_{t,i}\geq m^{1/(\alpha-1)}$ for any $i$ with $a^*_i=0$. 
In the remainder of this section, we implicitly consider $\Fdis$ using $F_t$, but the overall outline itself is the same under $\Pdis$ with $D_t$.

As discussed in Section~\ref{subsec:regret_optimal_action}, the most challenging part of the analysis is to address the stability term associated with the optimal action, which is expressed as 
\begin{equation}
    S\qty(\hat{\ell}_t;\eta_t,\hat{L}_t)\coloneqq
    \E\qty[\sum_{i:a^*_i=1}\hat{\ell}_{t,i}\qty(\phi_i(\eta_t\hat{L}_t)-\phi_i(\eta_t(\hat{L}_t+\hat{\ell}_t)))\m\hat{L}_t].\label{expression_S}
\end{equation}
Since $\underline{\hat{L}}_{t,i}\leq 0$ for all $i$ with $a^*_i=1$ on $F_t$, the bound $O(1/(0\lor\underline{\hat{L}}_{t,i}))$ in Lemma~\ref{lem:adv_bound_term} becomes meaningless when applying the self-bounding technique. 
Therefore, we need to express $S(\hat{\ell}_t;\eta_t,\hat{L}_t)$ in terms of statstics of the remaining base-arms, i.e., those with $a^*_i=0$. 
In the MAB (case of $m=1$), we can easily achieve this goal by applying a trivial relation between arm-selection probabilities expressed as
\begin{equation}\label{eq:optimal_action_key_equation}
    \sum_{i:a^*_i=1}\qty(\phi_i(\eta_t\hat{L}_t)-\phi_i(\eta_t(\hat{L}_t+\hat{\ell}_t)))=\sum_{i:a^*_i=0}\qty(\phi_i(\eta_t(\hat{L}_t+\hat{\ell}_t))-\phi_i(\eta_t\hat{L}_t))
\end{equation} 
because there is only one $i$ such that $a^*_i=1$.
However, in $m$-set semi-bandits,
it is difficult to bound \eqref{expression_S} by a form such that
\eqref{eq:optimal_action_key_equation} is applicable,
because each component of the summation in \eqref{expression_S}
is not always increasing or decreasing $\hat{\ell}_{t,i}$
due to the complicated behavior of $\phi_i(\eta_t(\hat{L}_t+\hat{\ell}_t))$ in the combinatorial bandits,
while it is always nondecreasing when $m=1$.

To address this difficulty, we conduct a detailed analysis on the gap of $\phi_i(\cdot)$ between different inputs in
Lemmas~\ref{lem:optimal_action_add} and \ref{lem:optimal_action_combine}, which reveals that
$S(\hat{\ell}_t;\eta_t,\hat{L}_t)$ can be bounded as
\begin{equation}\label{eq:outline_bound_first_term}
    S\qty(\hat{\ell}_t;\eta_t,\hat{L}_t)\leq\E\qty[
\sum_{i:a_i^*=1}\delta_i\qty(\phi_i(\eta_t\hat{L}_t)-\phi_i(\eta_t(\hat{L}_t+\delta)))\m\hat{L}_t]+
    O\qty(\sum_{i:a^*_i=0}\frac{1}{\underline{\hat{L}}_{t,i}}),
\end{equation}
where $\delta=\max_{i:a_i^*=1}\hat{\ell}_{t,i}\sum_{i:a_i^*=1}e_i$. 
Here, the second term is specific to our semi-bandit setting
and deriving this bound is the main contribution in the analysis for the stochastic setting.
Thanks to this bound,
we can apply \eqref{eq:optimal_action_key_equation}
to reduce the remaining analysis to a one similar to
\citet{pmlr-v247-lee24a}.

Now we give an outline for the remaining steps.
By \eqref{eq:outline_bound_first_term}
we can show that on $F_t$
\begin{equation*}
    S\qty(\hat{\ell}_t;\eta_t,\hat{L}_t)=\E\qty[\sum_{i:a^*_i=1}\hat{\ell}_{t,i}\qty(\phi_i(\eta_t\hat{L}_t)-\phi_i(\eta_t(\hat{L}_t+\hat{\ell}_t)))\m\hat{L}_t]\leq
    O\qty(\sum_{i:a^*_i=0}\frac{1}{\underline{\hat{L}}_{t,i}}).
\end{equation*}
as shown in Lemma~\ref{lem:optimal_action_bound}. 
By combining this with Lemmas~\ref{lem:adv_bound_term}--\ref{lem:penalty_adv_bound}, we obtain
\begin{equation*}
    \R(T)\leq
    \E\qty[\sum_{t=1}^T O\qty(\ind\qty[F_t]\sum_{i:a^*_i=0}\frac{1}{\underline{\hat{L}}_{t,i}} + \ind\qty[F_t^c]\sqrt{\frac{md}{t}})].
\end{equation*}

Next, by analyzing the regret lower bound on $F_t$ and $F_t^c$ separately, we can obtain
\begin{equation*}
    \R(T)\geq
    \E\qty[\sum_{t=1}^T \Omega\qty(\ind\qty[F_t]\sum_{i:a^*_i=0}\frac{\Delta_i t^\frac{\alpha}{2}}{m^{\frac{\alpha}{2}-1}d^{1-\frac{\alpha}{2}}\underline{\hat{L}}_{t,i}^\alpha} + \ind\qty[F_t^c]\frac{\Delta}{m^{\frac{\alpha}{\alpha-1}}})]
\end{equation*} 
as detailed in Section~\ref{subsec:regret_lower_bounds}, 
where $\Delta=\min_{i:a^*_i=0}\Delta_i$ and $\Omega$ denotes the big-Omega notation. 

By combining these results, we obtain the bound in a form ready to apply the self-bounding technique given by
\begin{equation*}
    \frac{\R(T)}{2}\leq
    \E\qty[\sum_{t=1}^T O\qty(\ind\qty[F_t]\qty(\sum_{i:a^*_i=0}\frac{1}{\underline{\hat{L}}_{t,i}}-\frac{\Delta_i t^\frac{\alpha}{2}}{2m^{\frac{\alpha}{2}-1}d^{1-\frac{\alpha}{2}}\underline{\hat{L}}_{t,i}^\alpha}) + \ind\qty[F_t^c]\qty(\sqrt{\frac{md}{t}}-\frac{\Delta}{2m^{\frac{\alpha}{\alpha-1}}}))].
\end{equation*}
Since $Ax-Bx^\alpha\leq A\frac{\alpha}{\alpha-1}(\frac{A}{\alpha B})^{\frac{1}{\alpha-1}}$ for any $A,B>0$ and $\alpha>1$, we can further bound the above by
\begin{equation*}
    \R(T)\leq
    \sum_{t=1}^TO\qty(\sum_{i:a^*_i=0}\frac{1}{
        \Delta_i^{\frac{1}{\alpha-1}}
        m^{\frac{2-\alpha}{2(\alpha-1)}}
        d^{\frac{\alpha-2}{2(\alpha-1)}}
        t^{\frac{\alpha}{2(\alpha-1)}}
        })
    +O\qty(\frac{m^{\frac{2\alpha-1}{\alpha-1}}d}{\Delta}),
\end{equation*}
which yields a logarithmic regret bound when $\alpha=2$. By formalizing the outline above, we complete the proofs of Theorems~\ref{thm:sto_bound} and \ref{thm:sto_bound_alpha_not2} in Section~\ref{subsec:proofs_sto_bounds}.

\section{Analysis on Stability Term}\label{sec:stab_adv}

This section provides the technical details underlying the proof of upper bounds on the stability term. 
We first provide some properties of function $J_i(\lambda;\dis,\wm,\B)/I_i(\lambda;\dis,\wm,\B)$ in Lemmas~\ref{lem:important_inequality} and \ref{lem:infty}, which are instrumental in deriving the upper bound in Lemma~\ref{lem:lambda_star}. 
Then, we present detailed proofs of Lemmas~\ref{lem:lambda_star}--\ref{lem:stability}.

\subsection{Proof of Supporting Properties}\label{subsec:proof_monotonicity}

We first present Lemmas~\ref{lem:both_increasing} and \ref{lem:bgaf}, which will be used in the proof of Lemma~\ref{lem:important_inequality}. 
\begin{lemma}\label{lem:both_increasing}
    Let $\psi (x):[\nu,\infty)\to\mathbb{R}$ denote a non-negative function, independent of $\lambda_j$, and satisfying $\psi(x)=0$ for all $x\in[\nu, \nu-\lambda_i]$ whenever $\lambda_i<0$. 
    If $j\neq i$ and $F(x)$ is the cumulative distribution function of Fr\'{e}chet or Pareto distributions, then
    \begin{equation*}
        \frac{\int_{\nu}^\infty \psi(z)F(z+\lambda_j)/(z+\lambda_i) \dd z}{\int_{\nu}^\infty \psi(z)F(z+\lambda_j) \dd z}
    \end{equation*}
    is monotonically increasing in $\lambda_j$.
\end{lemma}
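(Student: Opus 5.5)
We plan to avoid differentiating in $\lambda_j$ and instead use a likelihood-ratio (monotone-ratio) argument. For fixed $\lambda_i$ and a value $\lambda$ of $\lambda_j$, define the probability measure on $[\nu,\infty)$ with density
\[
g_\lambda(z)=\frac{\psi(z)F(z+\lambda)}{\int_\nu^\infty \psi(u)F(u+\lambda)\dd u}.
\]
The ratio in the statement is then $\E_{g_\lambda}[h(Z)]$ with $h(z)=1/(z+\lambda_i)$. Since $\psi$ does not depend on $\lambda_j$, only the factor $F(z+\lambda)$ moves with $\lambda$. The claim therefore reduces to the following: the family $\{g_\lambda\}$ decreases in the likelihood-ratio order as $\lambda$ grows, and $h$ is nonincreasing.

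\emph{Key step 1: $\log F$ is concave on its support.} Equivalently, the reverse hazard rate $f/F$ is nonincreasing there.
\begin{itemize}
\item For $\Fdis$ we have $f(x)/F(x)=\alpha x^{-(\alpha+1)}$, which is decreasing on $(0,\infty)$.
\item For $\Pdis$ we have $f(x)/F(x)=\alpha x^{-(\alpha+1)}/(1-x^{-\alpha})=\alpha/(x^{\alpha+1}-x)$. Its denominator $x^{\alpha+1}-x$ is increasing on $x>1$ because its derivative $(\alpha+1)x^\alpha-1$ is positive there, so $f/F$ is decreasing.
\end{itemize}
Log-concavity then shows that for $\lambda<\lambda'$ the ratio $F(z+\lambda')/F(z+\lambda)$ is nonincreasing in $z$. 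On the region where $F(z+\lambda)=0<F(z+\lambda')$ (small $z$) we set the ratio to $+\infty$. This only happens at the left end, so monotonicity is preserved. Consequently $g_{\lambda'}(z)/g_\lambda(z)$ is nonincreasing in $z$ on the support of $\psi$.

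\emph{Key step 2: from likelihood-ratio order to the expectation.} If $g_{\lambda'}/g_\lambda$ is nonincreasing, then $g_{\lambda'}$ is stochastically smaller than $g_\lambda$. To see this directly, take $z<w$ and symmetrize:
\[
\E_{g_{\lambda'}}[h]-\E_{g_\lambda}[h]\propto\tfrac12\iint \psi(z)\psi(w)\bigl(F(z+\lambda')F(w+\lambda)-F(w+\lambda')F(z+\lambda)\bigr)\bigl(h(z)-h(w)\bigr)\dd z\dd w .
\]
The two brackets have the same sign, so the integral is nonnegative. Since $h(z)=1/(z+\lambda_i)$ is nonincreasing, $\E_{g_{\lambda}}[h]$ is nondecreasing in $\lambda$, which is the monotonicity claimed in Lemma~\ref{lem:both_increasing}.

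The step needing care is well-definedness and positivity of $h$ on the support of $\psi$. Here the hypothesis $\psi(x)=0$ for $x\in[\nu,\nu-\lambda_i]$ when $\lambda_i<0$ is used: it ensures $z+\lambda_i\ge\nu\ge 0$ wherever $\psi(z)>0$. In the Fréchet case ($\nu=0$), equality $z+\lambda_i=0$ occurs only on a null set where $F$ and the integrand vanish, so $h$ is finite $\psi$-a.e. We also need the denominators to be positive (or the statement to be read on the set of $\lambda_j$ where they are), which holds since $F(z+\lambda)>0$ for large $z$ whenever $\psi\not\equiv 0$ there.

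The main obstacle is the Pareto case. There $F$ has a kink and a zero region at $\nu=1$, so a derivative-based proof would need a.e.\ arguments. The ratio-ordering proof sidesteps this, and only the log-concavity check of Key step 1 is distribution-specific.
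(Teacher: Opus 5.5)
Your proof is correct. It is the two-point (finite-difference) version of the paper's argument rather than a new idea.

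The paper writes the ratio as $g(\lambda_j)/h(\lambda_j)$, differentiates in $\lambda_j$, and shows $g'h-gh'\ge 0$ by symmetrizing a double integral. Up to the weight $\psi(z)\psi(w)F(z+\lambda_j)F(w+\lambda_j)$, the kernel is the product of $\bigl(\frac{1}{z+\lambda_i}-\frac{1}{w+\lambda_i}\bigr)$ and the difference of the reverse hazard rate $f/F$ evaluated at $z+\lambda_j$ and $w+\lambda_j$. The sign is then checked separately for each distribution, using that $x^{\alpha+1}$ (Fr\'echet) and $x^{\alpha+1}-x$ on $[1,\infty)$ (Pareto) are increasing; this is exactly your Key step~1.

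You instead compare two values $\lambda<\lambda'$ directly, via the bracket $F(z+\lambda')F(w+\lambda)-F(w+\lambda')F(z+\lambda)$. This buys three things:
\begin{itemize}
\item You never differentiate under the integral sign.
\item The kink of the Pareto CDF at $1$ and the zero regions of $F$ are handled by your elementary case check, rather than by a.e.\ differentiability and ad hoc domain restrictions.
\item The only distributional property used, log-concavity of $F$, is made explicit, so the lemma holds verbatim for any perturbation with nonincreasing reverse hazard rate.
\end{itemize}
The paper's route instead gives the sign of the derivative in explicit closed form.

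One small correction to your well-definedness remark. At $z=-\lambda_i$ (Fr\'echet, $\lambda_i=0$) the factor $F(z+\lambda_j)$ need not vanish; that point is simply a null set. What both your symmetrization and the paper's actually require is $\int\psi(z)F(z+\lambda_j)/(z+\lambda_i)\,\mathrm{d}z<\infty$. The lemma does not assume this for general $\psi$, but it holds in the application, where $\psi(z)$ contains the factor $f(z+\lambda_i)$.
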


Lemma~\ref{lem:both_increasing} is a variant of Lemma~9 in \citet{pmlr-v247-lee24a}, and the proof follows similar arguments. We provide the detailed proof in Appendix~\ref{app:proof_both_increasing}.

\begin{lemma}\label{lem:bgaf}
    Let $a,b>0$, $f(x),g(x)>0$, where $x\in\mathbb{R}$. If both $f(x)$ and $g(x)/f(x)$ are monotonically increasing in $x$, then for any $x_1<x_2$, we have 
    \begin{equation*}
        \frac{b+g(x_1)}{a+f(x_1)}\leq\frac{b}{a}\lor\frac{b+g(x_2)}{a+f(x_2)}.
    \end{equation*}
    Provided that $\lim_{x\to\infty}\qty(b+g(x))/\qty(a+f(x))$ exists, for any $x_0\in\mathbb{R}$ we have
    \begin{equation*}
        \frac{b+g(x_0)}{a+f(x_0)}\leq\frac{b}{a}\lor\lim_{x\to\infty}\frac{b+g(x)}{a+f(x)}.
    \end{equation*}
\end{lemma}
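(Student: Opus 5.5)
The plan is an elementary mediant argument. We compare the two fractions through the ratio $r(x)=g(x)/f(x)$ and treat the two possible orderings of $b/a$ and $r(x_1)$ separately. The basic fact is that $\frac{b+g(x)}{a+f(x)}$ is the weighted average
\begin{equation*}
\frac{b+g(x)}{a+f(x)}=\frac{a}{a+f(x)}\cdot\frac{b}{a}+\frac{f(x)}{a+f(x)}\cdot r(x).
\end{equation*}
It therefore always lies between $b/a$ and $r(x)$. Since $f>0$ is increasing, the weight $f(x)/(a+f(x))$ placed on $r(x)$ is nondecreasing in $x$.

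\emph{Case 1: $r(x_1)\le b/a$.} The average at $x_1$ is at most $\max\{b/a,r(x_1)\}=b/a$, so the claim holds trivially.

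\emph{Case 2: $r(x_1)>b/a$.} Because $r$ is increasing, $r(x_2)\ge r(x_1)>b/a$. Moving from $x_1$ to $x_2$ has two effects, and both push the average up.
\begin{itemize}
\item The weight on the larger value increases, since $f(x_2)\ge f(x_1)$.
\item The larger value itself increases, from $r(x_1)$ to $r(x_2)$.
\end{itemize}
To make this rigorous, write $p_k=f(x_k)/(a+f(x_k))$ for $k=1,2$, so that $p_1\le p_2$. Then
\begin{equation*}
\frac{b+g(x_1)}{a+f(x_1)}=\frac{b}{a}+p_1\qty(r(x_1)-\frac{b}{a})\le \frac{b}{a}+p_2\qty(r(x_1)-\frac{b}{a})\le \frac{b}{a}+p_2\qty(r(x_2)-\frac{b}{a})=\frac{b+g(x_2)}{a+f(x_2)}.
\end{equation*}
The first inequality uses $p_1\le p_2$ together with $r(x_1)-b/a>0$. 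The second uses $p_2\ge 0$ and $r(x_2)\ge r(x_1)$. Combining the two cases gives the first claim.

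For the second claim, fix $x_0$ and apply the first claim with $x_1=x_0$ and any $x_2=x>x_0$:
\begin{equation*}
\frac{b+g(x_0)}{a+f(x_0)}\le \frac{b}{a}\lor \frac{b+g(x)}{a+f(x)}.
\end{equation*}
Now let $x\to\infty$. The left-hand side does not depend on $x$. The map $y\mapsto \frac{b}{a}\lor y$ is continuous, and the limit of the fraction exists by assumption. Non-strict inequalities are preserved under limits, so the bound carries over with the limit on the right.

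I expect no real obstacle here. The only points needing care are the sign condition in Case 2, which is exactly why the case split is made, and noting that no monotonicity of $g$ itself is required.
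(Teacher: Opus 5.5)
Your proof is correct and follows essentially the same route as the paper: a case split on how $b/a$ compares with the ratio $g/f$, then in the nontrivial case two monotone moves, namely replacing $g(x_1)/f(x_1)$ by $g(x_2)/f(x_2)$ and raising the weight from $f(x_1)$ to $f(x_2)$ (which the paper phrases as monotonicity of $z\mapsto (b+\frac{g(x_2)}{f(x_2)}z)/(a+z)$), and finally letting $x_2\to\infty$. The only cosmetic difference is that you split on $g(x_1)/f(x_1)$ rather than $g(x_2)/f(x_2)$ and do the two moves in the opposite order, so your easy case does not even need the monotonicity of $g/f$.
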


\begin{proof}
    According to the assumption, we have 
    \begin{equation*}
        f(x_1)\leq f(x_2) \text{ and } \frac{g(x_1)}{f(x_1)}\leq \frac{g(x_2)}{f(x_2)}.
    \end{equation*}
    If $b/a>g(x_2)/f(x_2)$, then we have
        \begin{equation*}
            \frac{b+g(x_1)}{a+f(x_1)}\leq\frac{b}{a}\lor\frac{g(x_1)}{f(x_1)}\leq\frac{b}{a}\lor\frac{g(x_2)}{f(x_2)}\leq\frac{b}{a}\leq\frac{b}{a}\lor\frac{b+g(x_2)}{a+f(x_2)}.
        \end{equation*}
    On the other hand, if $b/a\leq g(x_2)/f(x_2)$, then we have
        \begin{equation*}
            \frac{b+g(x_1)}{a+f(x_1)}=\frac{b+f(x_1)\frac{g(x_1)}{f(x_1)}}{a+f(x_1)}\leq\frac{b+f(x_1)\frac{g(x_2)}{f(x_2)}}{a+f(x_1)}.
        \end{equation*}
        Let $h(z)=\qty(b+\frac{g(x_2)}{f(x_2)}z)/\qty(a+z)=\frac{b}{a}+\qty(\frac{g(x_2)}{f(x_2)}-\frac{b}{a})z/(a+z)$, which is monotonically increasing in $z\in[f(x_1),f(x_2)]$ when $b/a\leq g(x_2)/f(x_2)$. 
        Therefore, we have
        \begin{equation*}
            \frac{b+f(x_1)\frac{g(x_2)}{f(x_2)}}{a+f(x_1)}\leq\frac{b+f(x_2)\frac{g(x_2)}{f(x_2)}}{a+f(x_2)}=\frac{b+g(x_2)}{a+f(x_2)}\leq\frac{b}{a}\lor\frac{b+g(x_2)}{a+f(x_2)}.
        \end{equation*}
    Combining both cases, we have
    \begin{equation*}
        \frac{b+g(x_1)}{a+f(x_1)}\leq\frac{b}{a}\lor\frac{b+g(x_2)}{a+f(x_2)}
    \end{equation*}
    for any $x_1<x_2$. If $\lim_{x\to\infty}\qty(b+g(x))/\qty(a+f(x))$ exists, the result for the infinite case follows directly by taking the limit of $x_2\to\infty$.
\end{proof}

Now we present the following lemma, which shows an important property of the function $J_i(\lambda;\dis,\wm,\B)/I_i(\lambda;\dis,\wm,\B)$.

\begin{lemma}\label{lem:important_inequality}
For $\wm\ge 1$
and $j\in\B\setminus\qty{i}$, let $\lambda'\in\mathbb{R}^d$ be such that $\lambda'_j\geq \lambda_j$ and $\lambda_k'=\lambda_k$ for all $k\neq j$.
    Then, we have
    \begin{equation*}
        \frac{J_i(\lambda;\dis,\wm,\B)}{I_i(\lambda;\dis,\wm,\B)}\leq\frac{J_i(\lambda';\dis,\wm-1,\B\setminus\qty{j})}{I_i(\lambda';\dis,\wm-1,\B\setminus\qty{j})}\lor\frac{J_i(\lambda';\dis,\wm,\B)}{I_i(\lambda';\dis,\wm,\B)}.
    \end{equation*}
\end{lemma}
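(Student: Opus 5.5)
The plan is to split $I_i$ and $J_i$ with Lemma~\ref{lem:tool_relation} into a part that does not depend on $\lambda_j$ and a part that does, and then apply Lemma~\ref{lem:bgaf}. By Lemma~\ref{lem:tool_relation},
\begin{equation*}
\frac{J_i(\lambda;\dis,\wm,\B)}{I_i(\lambda;\dis,\wm,\B)}
=\frac{J_i(\lambda;\dis,\wm-1,\B\setminus\{j\})+J_{i,\wm,j}(\lambda;\dis,\B)}{I_i(\lambda;\dis,\wm-1,\B\setminus\{j\})+I_{i,\wm,j}(\lambda;\dis,\B)}.
\end{equation*}
The first terms in the numerator and denominator involve only the ranking within $\B\setminus\{j\}$, so they do not depend on $\lambda_j$. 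We may therefore write them as $b$ and $a$, and they equal the same quantities evaluated at $\lambda'$.

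Next, view the remaining terms as functions of $x=\lambda_j$, with all other coordinates fixed:
\begin{equation*}
f(x)=I_{i,\wm,j}(\lambda;\dis,\B),\qquad g(x)=J_{i,\wm,j}(\lambda;\dis,\B).
\end{equation*}
By \eqref{eq:I_theta_j} and \eqref{eq:prob_k_J},
\begin{align*}
f(x)&=\int_\nu^\infty \psi(z)F(z+x)\dd z,\\
g(x)&=\int_\nu^\infty \psi(z)F(z+x)/(z+\lambda_i)\dd z,
\end{align*}
where
\begin{equation*}
\psi(z)=f(z+\lambda_i)\,\sP_{\{r_k\}_{k\in[d]\setminus\{i,j\}}\sim\dis,\,r_i=z+\lambda_i}\qty[\sigma_i(r-\lambda,\B\setminus\{j\})=\wm].
\end{equation*}
This $\psi$ is non-negative and independent of $\lambda_j$. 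On the integration range $z\ge\nu$ we have $z+\lambda_i\ge\nu+\lambda_i$, so $f(z+\lambda_i)$ is defined. If $\lambda_i<0$, $f(z+\lambda_i)=0$ below $z=\nu-\lambda_i$, which gives the vanishing condition of Lemma~\ref{lem:both_increasing}. Then:
\begin{itemize}
\item $f$ is nondecreasing in $x$, since $F$ is a CDF;
\item $g/f$ is monotonically increasing by Lemma~\ref{lem:both_increasing}.
\end{itemize}
Lemma~\ref{lem:bgaf} with $x_1=\lambda_j\le x_2=\lambda'_j$ then gives
\begin{equation*}
\frac{b+g(\lambda_j)}{a+f(\lambda_j)}\le \frac{b}{a}\lor\frac{b+g(\lambda'_j)}{a+f(\lambda'_j)}.
\end{equation*}
This is exactly the claim, because $b/a=J_i(\lambda';\dis,\wm-1,\B\setminus\{j\})/I_i(\lambda';\dis,\wm-1,\B\setminus\{j\})$ and the second ratio recombines into $J_i(\lambda';\dis,\wm,\B)/I_i(\lambda';\dis,\wm,\B)$ by Lemma~\ref{lem:tool_relation}. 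If $\lambda'_j=\lambda_j$ the claim is trivial.

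Degenerate cases need separate handling, since Lemma~\ref{lem:bgaf} assumes $a,b,f,g>0$.
\begin{itemize}
\item If $\wm=1$, then $I_i(\cdot;\dis,0,\cdot)=0$ and $b/a$ should be read as $0$ (or dropped). The claim then reduces to monotonicity of $g/f$, which is Lemma~\ref{lem:both_increasing} directly.
\item If $f$ vanishes, or $a=0$ while $\wm\ge2$, the claim either follows by continuity or is immediate, because the ratio equals one of the two pieces.
\end{itemize}
For strict positivity of $a$, note that the heavy-tailed densities put positive mass everywhere on $[\nu,\infty)$, so $a>0$ whenever the relevant event is possible.

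The main obstacle is checking the hypotheses of Lemma~\ref{lem:both_increasing}: that $\psi$ truly does not depend on $\lambda_j$, and that the lower integration limit is $\nu$ uniformly in $\lambda_j$. The latter uses the form \eqref{eq:I_theta_j} in terms of $\lambda$, not $\ul$, so that shifting $\lambda_j$ does not move the lower integration limit. I would verify this by writing the event $\{\sigma_i(r-\lambda,\B)=\wm\le\sigma_j(r-\lambda,\B)\}$ as
\begin{equation*}
\{\sigma_i(r-\lambda,\B\setminus\{j\})=\wm\}\cap\{r_j-\lambda_j\le z\},
\end{equation*}
which factorizes by independence of the coordinates of $r$ and yields the factor $F(z+\lambda_j)$.
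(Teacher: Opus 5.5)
Your proposal is correct and follows the paper's proof essentially verbatim: you decompose via Lemma~\ref{lem:tool_relation}, obtain monotonicity of $J_{i,\wm,j}/I_{i,\wm,j}$ in $\lambda_j$ from Lemma~\ref{lem:both_increasing} with the same $\psi$, and apply Lemma~\ref{lem:bgaf}, using that the $(\wm-1,\B\setminus\{j\})$ parts do not depend on $\lambda_j$. Your extra checks---that $I_{i,\wm,j}$ is nondecreasing in $\lambda_j$ (a hypothesis of Lemma~\ref{lem:bgaf}) and the degenerate cases, such as $\wm=1$ where $I_i(\cdot;\dis,0,\cdot)=0$---are details the paper leaves implicit.
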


\begin{proof}
    By Lemma~\ref{lem:tool_relation}, we have
    \begin{equation}\label{eq:J_phi_decomose}
        \frac{J_i(\lambda;\dis,\wm,\B)}{I_i(\lambda;\dis,\wm,\B)}
        =
        \frac{J_i(\lambda;\dis,\wm-1,\B\setminus\qty{j})+J_{i,\wm,j}(\lambda;\dis,\B)}{I_i(\lambda;\dis,\wm-1,\B\setminus\qty{j})+I_{i,\wm,j}(\lambda;\dis,\B)}.
    \end{equation}

    Note that
    $J_{i,\wm,j}(\lambda;\dis,\B)/I_{i,\wm,j}(\lambda;\dis,\B)$ is monotonically increasing in $\lambda_j$ because we can apply Lemma~\ref{lem:both_increasing} with 
    \begin{equation*}
        \psi(z)\coloneqq f(z+\lambda_i)\sP_{\{r_k\}_{k\in[d]\setminus\{i,j\}} \sim \mathcal{D},\,r_i=z+\lambda_i}
        \qty[\sigma_i\qty(r-\lambda,\B\setminus\{j\})=\theta]
    \end{equation*}
    by the representations given in \eqref{eq:I_theta_j} and \eqref{eq:prob_k_J}.
    Then, by noting that $J_i(\ul;\dis,\wm-1,\B\setminus\{j\})$ and $I_i(\ul;\dis,\wm-1,\B\setminus\{j\})$ do not depend on $\ul_j$ and applying Lemma~\ref{lem:bgaf} to \eqref{eq:J_phi_decomose} we obtain
    \begin{align*}
        \frac{J_i(\lambda;\dis,\wm,\B)}{I_i(\lambda;\dis,\wm,\B)}
        &\leq \frac{J_i(\lambda;\dis,\wm-1,\B\setminus\qty{j})}{I_i(\lambda;\dis,\wm-1,\B\setminus\qty{j})}
        \lor
        \frac{J_i(\lambda';\dis,\wm-1,\B\setminus\qty{j})+J_{i,\wm,j}(\lambda';\dis,\B)}{I_i(\lambda';\dis,\wm-1,\B\setminus\qty{j})+I_{i,\wm,j}(\lambda';\dis,\B)}\\
        &=
        \frac{J_i(\lambda;\dis,\wm-1,\B\setminus\qty{j})}{I_i(\lambda;\dis,\wm-1,\B\setminus\qty{j})}
        \lor
        \frac{J_i(\lambda';\dis,\wm,\B)}{I_i(\lambda';\dis,\wm,\B)}\numberthis\label{eq:monotonic_intermediate}\\
        &=
        \frac{J_i(\lambda';\dis,\wm-1,\B\setminus\qty{j})}{I_i(\lambda';\dis,\wm-1,\B\setminus\qty{j})}
        \lor
        \frac{J_i(\lambda';\dis,\wm,\B)}{I_i(\lambda';\dis,\wm,\B)},
    \end{align*}
    where we again used the fact that $J_i(\lambda;\dis,\wm-1,\B\setminus\qty{j})$ and $I_i(\lambda;\dis,\wm-1,\B\setminus\qty{j})$
do not depend on $\lambda_j$ in the last equality.
\end{proof}

The following lemma presents a special case of Lemma~\ref{lem:important_inequality}, where we take $\lambda'_j\to\infty$. 
\begin{lemma}\label{lem:infty}
Let $\wm\geq 1$. If $\B\supset  \{i,j\}$ and $\wm\le \left\lvert\B\right\rvert$, we have
    \begin{equation*}
        \frac{J_i(\lambda;\dis,\wm,\B)}{I_i(\lambda;\dis,\wm,\B)}\leq\frac{J_i(\lambda;\dis,\wm-1,\B\setminus\qty{j})}{I_i(\lambda;\dis,\wm-1,\B\setminus\qty{j})}\lor\frac{J_i(\lambda;\dis,\wm,\B\setminus\qty{j})}{I_i(\lambda;\dis,\wm,\B\setminus\qty{j})}.
    \end{equation*}
\end{lemma}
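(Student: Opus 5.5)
We obtain this from Lemma~\ref{lem:important_inequality} by letting $\lambda'_j\to\infty$. Fix $\lambda$, and for $x\ge\lambda_j$ let $\lambda'(x)$ agree with $\lambda$ except that $\lambda'_j=x$. Lemma~\ref{lem:important_inequality} gives, for every $x\ge\lambda_j$,
\begin{equation*}
\frac{J_i(\lambda;\dis,\wm,\B)}{I_i(\lambda;\dis,\wm,\B)}\leq\frac{J_i(\lambda'(x);\dis,\wm-1,\B\setminus\qty{j})}{I_i(\lambda'(x);\dis,\wm-1,\B\setminus\qty{j})}\lor\frac{J_i(\lambda'(x);\dis,\wm,\B)}{I_i(\lambda'(x);\dis,\wm,\B)}.
\end{equation*}
The first term on the right involves only the coordinates in $\B\setminus\{j\}$, so it does not depend on $x$. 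It equals the first term in the claimed bound.

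It remains to take $x\to\infty$ in the second term. By Lemma~\ref{lem:tool_relation} with the roles arranged so that $j$ is removed, the event $\{\sigma_i(r-\lambda'(x),\B)\le\wm,\ r_i\ge\nu+\lambda_i\}$ differs from $\{\sigma_i(r-\lambda'(x),\B\setminus\{j\})\le\wm,\ r_i\ge\nu+\lambda_i\}$ only on the event $r_j-x> r_i-\lambda_i$. The first of these events is contained in the second.

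That difference event has probability at most $\sP[r_j> x+\nu]\to0$, using the fact that $r_i-\lambda_i\ge\nu$ on the event. The integrand $\frac{1}{r_i}\ind[\cdot]$ in $J$ is bounded by $1/\nu$ for Pareto. For Fr\'echet we instead use dominated convergence, since $J_i(\lambda;\dis,\wm,\B\setminus\{j\})\le \E[1/r_i\,\ind[r_i\ge\nu+\lambda_i]]<\infty$ when $\lambda_i>0$, or in general because $\E[1/r_i]<\infty$ for the Fr\'echet law.

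It then follows, by monotone or dominated convergence as $x\to\infty$, that
\begin{equation*}
I_i(\lambda'(x);\dis,\wm,\B)\to I_i(\lambda;\dis,\wm,\B\setminus\{j\})\quad\text{and}\quad J_i(\lambda'(x);\dis,\wm,\B)\to J_i(\lambda;\dis,\wm,\B\setminus\{j\}).
\end{equation*}
The value of $\lambda_j$ is irrelevant once $j$ is removed. Provided $I_i(\lambda;\dis,\wm,\B\setminus\{j\})>0$, the ratio therefore converges to $J_i(\lambda;\dis,\wm,\B\setminus\{j\})/I_i(\lambda;\dis,\wm,\B\setminus\{j\})$.

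Positivity holds because $\wm\ge1$ and the perturbations have unbounded support, so $r_i$ large enough makes $i$ rank first. If $\wm>|\B\setminus\{j\}|$, the condition is trivially satisfied. Passing to the limit in the displayed inequality yields the claim.

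The only delicate step is justifying the limit of $J$, namely integrability of $1/r_i$ near $\nu=0$ for Fr\'echet. This holds because the Fr\'echet density decays like $e^{-1/x^\alpha}$ as $x\to0$, so all negative moments are finite.
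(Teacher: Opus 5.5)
Your proof is correct and follows the same route as the paper: invoke Lemma~\ref{lem:important_inequality} (the paper uses its intermediate form \eqref{eq:monotonic_intermediate}), observe that the first term does not depend on $\lambda'_j$, and let $\lambda'_j\to\infty$ using $I_i(\lambda';\dis,\wm,\B)\to I_i(\lambda;\dis,\wm,\B\setminus\{j\})$ and the analogous limit for $J_i$. Your justification of these limits is cleaner than the paper's displayed computation. You show the difference event lies in $\{r_j>\lambda'_j+\nu\}$, use $\E[1/r_i]<\infty$ in the Fr\'echet case, and check positivity of the limiting denominator. The paper instead asserts $\sP[\wm<\sigma_j(r-\lambda,\B)]\to 0$, although this probability tends to $1$ when $\wm<|\B|$, and writes the intermediate limit with $\wm-1$; it nevertheless reaches the same conclusion.
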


\begin{proof}
    By \eqref{eq:monotonic_intermediate} we have
    \begin{align}
        \frac{J_i(\lambda;\dis,\wm,\B)}{I_i(\lambda;\dis,\wm,\B)}
        &\leq \frac{J_i(\lambda;\dis,\wm-1,\B\setminus\qty{j})}{I_i(\lambda;\dis,\wm-1,\B\setminus\qty{j})}\lor\frac{J_i(\lambda';\dis,\wm,\B)}{I_i(\lambda';\dis,\wm,\B)}
        \n
    \end{align}
    for any
    $\lambda'\in\mathbb{R}^d$ such that $\lambda'_j\geq \lambda_j$ and $\lambda_k'=\lambda_k$ for all $k\neq j$.
    Then we have
    \begin{align}
        \frac{J_i(\lambda;\dis,\wm,\B)}{I_i(\lambda;\dis,\wm,\B)}
        &\leq \frac{J_i(\lambda;\dis,\wm-1,\B\setminus\qty{j})}{I_i(\lambda;\dis,\wm-1,\B\setminus\qty{j})}\lor
    \limsup_{\lambda_j'\to \infty}
    \frac{J_i(\lambda';\dis,\wm,\B)}{I_i(\lambda';\dis,\wm,\B)}
    \label{eq:ratio_limit}
    \end{align}
    by taking the limit.

    Since we have
    \begin{align*}
        &\lefteqn{\lim_{\lambda_j\to \infty}
        \sP_{r=\qty(r_1,\dots,r_d) \sim \mathcal{D}}\qty[\sigma_i\qty(r-\lambda,\B)= \wm<\sigma_j\qty(r-\lambda,\B)]}\\
        &\le
        \lim_{\lambda_j\to \infty}
        \sP_{r=\qty(r_1,\dots,r_d) \sim \mathcal{D}}\qty[\wm<\sigma_j\qty(r-\lambda,\B)]
        =0
    \end{align*}
    by the continuity of a probability measure,
    we obtain from \eqref{eq:phi_decompose_prob} that
    \begin{align*}
    \lefteqn{\lim_{\lambda_j\to \infty}
    I_i(\lambda;\dis,\wm,\B)}\\
    &=
    \sP_{r=\qty(r_1,\dots,r_d) \sim \mathcal{D}}\qty[\sigma_i\qty(r-\lambda,\B\setminus\{j\})\le \wm-1,\,r_i\ge \nu+\lambda_i]\\
    &\hspace{1.8cm}+
    \lim_{\lambda_j\to \infty}
    \sP_{r=\qty(r_1,\dots,r_d) \sim \mathcal{D}}\qty[\sigma_i\qty(r-\lambda,\B)= \wm<\sigma_j\qty(r-\lambda,\B),\,r_i\ge \nu+\lambda_i]\\
    &=
    \sP_{r=\qty(r_1,\dots,r_d) \sim \mathcal{D}}\qty[\sigma_i\qty(r-\lambda,\B\setminus\{j\})\le \wm-1,\,r_i\ge \nu+\lambda_i]\\
    &=
    I_i(\lambda;\dis,\wm,\B\setminus\{j\})\numberthis\label{eq:limit_phi}
    \end{align*}
    and similarly we have
    \begin{align}
    \lim_{\lambda_j\to \infty}
    J_i(\lambda;\dis,\wm,\B)
    &=
    J_i(\lambda;\dis,\wm,\B\setminus\{j\}).\label{eq:limit_J}
    \end{align}
    We obtain the lemma by combining \eqref{eq:ratio_limit} with \eqref{eq:limit_phi} and \eqref{eq:limit_J}.
\end{proof}

In the proofs of Lemmas~\ref{lem:lambda_star} and \ref{lem:sigma_i}, without loss of generality, we assume $\lambda_1\leq\lambda_2\leq\cdots\leq\lambda_d$ (ties are broken arbitrarily) so that $\sigma_i=i$ for notational simplicity.

\subsection{Proof of Lemma~\ref{lem:lambda_star}}\label{subsec:proof_lambda_star}

\textbf{Lemma~\ref{lem:lambda_star} (Restated)}
\textit{For $\lambda\in\mathbb{R}^d$, it holds that
    \begin{equation*}
        \frac{J_i(\ul;\dis)}{I_i(\ul;\dis)}\leq
        \max_{(j,\wm)\in[m\land i]^2: j+\wm\le (m\land i)+1}
        \qty{\frac{J_i(\lambda^*;\dis,\wm,[j:i])}{I_i(\lambda^*;\dis,\wm,[j:i])}},
    \end{equation*}
    where
    \begin{equation*}
        \lambda_k^* = 
        \begin{cases}
            \ul_i, & \text{if } k\leq i,\\
            \ul_k, & \text{if } k> i.
        \end{cases}
    \end{equation*}
}    

\begin{proof}
    In this proof, we locally use $\lambda\pn{0},\lambda\pn{1},\lambda\pn{2},\dots,\lambda\pn{i-1}$ to denote a sequence of $d$-dimensional vectors defined as follows.  
    Fix $i$. 
    Define $\lambda\pn{0}=\ul$, for $j\leq i-1$ we define
    \begin{equation*}
        \lambda\pn{j}_k = 
        \begin{cases}
            \ul_i, & \text{if } k\in[j]\cup\qty{i},\\
            \ul_k, & \text{otherwise.}
        \end{cases}
    \end{equation*}
    Consequently, we have $\lambda\pn{i-1}=\lambda^*.$

First we show
for $k\in \{0,1,\dots,i-1\}$ that
\begin{align}
\frac{J_i(\ul;\dis)}{I_i(\ul;\dis)}
&
\le \max_{\wm \in[m-k:m]}\left\{
\frac{J_i(\lambda\pn{k};\dis,\wm,[1+m-\wm:d])}{I_i(\lambda\pn{k};\dis,\wm,[1+m-\wm:d])}
\right\}
\label{induction0}
\end{align}
by induction. 
The statement \eqref{induction0} becomes trivial when $k=0$.
Assume that the statement holds for
$k= k_0<i-1$. 
Then we obtain from Lemma~\ref{lem:important_inequality} that
\begin{align}
\frac{J_i(\ul;\dis)}{I_i(\ul;\dis)}
&\le
 \max_{\wm \in[m-k_0:m]}\left\{
\frac{J_i(\lambda\pn{k_0};\dis,\wm,[1+m-\wm:d])}{I_i(\lambda\pn{k_0};\dis,\wm,[1+m-\wm:d])}
\right\}
\nn
&\le
 \max_{\wm \in[m-k_0:m]}\left\{
\frac{J_i(\lambda\pn{k_0+1};\dis,\wm-1,[1+m-\wm:d]\setminus\{k_0+1\})}{I_i(\lambda\pn{k_0+1};\dis,\wm-1,[1+m-\wm:d]\setminus\{k_0+1\})}
\right\}
\nn
&\qquad\lor
 \max_{\wm \in[m-k_0:m]}\left\{
\frac{J_i(\lambda\pn{k_0+1};\dis,\wm,[1+m-\wm:d])}{I_i(\lambda\pn{k_0+1};\dis,\wm,[1+m-\wm:d])}
\right\}.
\label{reduction1}
\end{align}
Note that we have
\begin{align}
\lefteqn{
 \max_{\wm \in[m-k_0:m]}\left\{
\frac{J_i(\lambda\pn{k_0+1};\dis,\wm-1,[1+m-\wm:d]\setminus\{k_0+1\})}{I_i(\lambda\pn{k_0+1};\dis,\wm-1,[1+m-\wm:d]\setminus\{k_0+1\})}
\right\}
}
\nn
&=
 \max_{\wm \in[m-k_0:m]}\left\{
\frac{J_i(\lambda\pn{k_0+1};\dis,\wm-1,[2+m-\wm:d])}{I_i(\lambda\pn{k_0+1};\dis,\wm-1,[2+m-\wm:d])}
\right\}
\label{la_swap}
\\
&=
 \max_{\wm \in[m-k_0-1:m-1]}\left\{
\frac{J_i(\lambda\pn{k_0+1};\dis,\wm,[1+m-\wm:d])}{I_i(\lambda\pn{k_0+1};\dis,\wm,[1+m-\wm:d])}
\right\},
\label{reduction2}
\end{align}
where the first equality holds since
$\lambda\pn{k_0+1}_{k_0+1}=\lambda\pn{k_0+1}_{1+m-\wm}=\ul_i$ holds in \eqref{la_swap} by $1+m-\wm\le k_0+1$.
By \eqref{reduction1} and \eqref{reduction2}
we have
\begin{align}
\frac{J_i(\ul;\dis)}{I_i(\ul;\dis)}
&\le
 \max_{\wm \in[m-k_0-1:m-1]}\left\{
\frac{J_i(\lambda\pn{k_0+1};\dis,\wm,[1+m-\wm:d])}{I_i(\lambda\pn{k_0+1};\dis,\wm,[1+m-\wm:d])}
\right\}
\nn
&\qquad\lor
\max_{\wm \in[m-k_0:m]}\left\{
\frac{J_i(\lambda\pn{k_0+1};\dis,\wm,[1+m-\wm:d])}{I_i(\lambda\pn{k_0+1};\dis,\wm,[1+m-\wm:d])}
\right\}
\nn
&=
\max_{\wm \in[m-k_0-1:m]}\left\{
\frac{J_i(\lambda\pn{k_0+1};\dis,\wm,[1+m-\wm:d])}{I_i(\lambda\pn{k_0+1};\dis,\wm,[1+m-\wm:d])}
\right\},
\n
\end{align}
which completes the induction step.

Now, by letting $k=i-1$ in \eqref{induction0} we have
\begin{align}
\frac{J_i(\ul;\dis)}{I_i(\ul;\dis)}
&
\le \max_{\wm \in[m-i+1:m]}\left\{
\frac{J_i(\lambda^*;\dis,\wm,[1+m-\wm:d])}{I_i(\lambda^*;\dis,\wm,[1+m-\wm:d])}
\right\}.
\label{induction3}
\end{align}
Here, by repeatedly applying Lemma~\ref{lem:infty} with $j=d, d-1,\dots,i+1$
we have
\begin{align}
\lefteqn{
\frac{J_i(\lambda^*;\dis,\wm,[1+m-\wm:d])}{I_i(\lambda^*;\dis,\wm,[1+m-\wm:d])}
}\nn
&\le
\frac{J_i(\lambda^*;\dis,\wm-1,[1+m-\wm:d-1])}{I_i(\lambda^*;\dis,\wm-1,[1+m-\wm:d-1])}
\lor
\frac{J_i(\lambda^*;\dis,\wm,[1+m-\wm:d-1])}{I_i(\lambda^*;\dis,\wm,[1+m-\wm:d-1])}
\nn
&\le
\max_{k\in\{0,1,2\}}
\frac{J_i(\lambda^*;\dis,\wm-k,[1+m-\wm:d-2])}{I_i(\lambda^*;\dis,\wm-k,[1+m-\wm:d-2])}
\nn
&\le \cdots
\nn
&\le
\max_{k\in\{0,1,\dots,d-i\}}
\frac{J_i(\lambda^*;\dis,\wm-k,[1+m-\wm:i])}{I_i(\lambda^*;\dis,\wm-k,[1+m-\wm:i])}.
\label{induction4}
\end{align}
By combining \eqref{induction3} and \eqref{induction4} we have
\begin{align}
\frac{J_i(\ul;\dis)}{I_i(\ul;\dis)}
&
\le \max_{\wm \in[m-i+1:m]}
\max_{k\in\{0,1,\dots,d-i\}}
\frac{J_i(\lambda^*;\dis,\wm-k,[1+m-\wm:i])}{I_i(\lambda^*;\dis,\wm-k,[1+m-\wm:i])}
\nn
&
\le
\max_{\wm\in [m], j\in[i],\,j+\wm \le m+1}
\frac{J_i(\lambda^*;\dis,\wm,[j:i])}{I_i(\lambda^*;\dis,\wm,[j:i])}
\nn
&
=
\max_{\wm\in [m], j\in[i],\,j+\wm\le (m\land i)+1}
\frac{J_i(\lambda^*;\dis,\wm,[j:i])}{I_i(\lambda^*;\dis,\wm,[j:i])}
\label{induction5}
\\
&
=
\max_{(\wm,j)\in[m\land i]^2,\,j+\wm\le (m\land i)+1}
\frac{J_i(\lambda^*;\dis,\wm,[j:i])}{I_i(\lambda^*;\dis,\wm,[j:i])},
\n
\end{align}
where \eqref{induction5} follows since
$\frac{J_i(\lambda^*;\dis,\wm,[j:i])}{I_i(\lambda^*;\dis,\wm,[j:i])}=-\infty$
if $w\ge |[j:i]|$.
\end{proof}

\subsection{Proof of Lemma~\ref{lem:sigma_i}}\label{subsec:proof_sigma_i}

When $\ul_i \leq 0$, the bound by $1/0=\infty$ trivially holds. 
Threfore, it suffices to consider the case $\ul_i > 0$. In this case, 
the first statement is immediate from
\begin{align*}
\lefteqn{
J_i(\ul;\dis,\B)
}\\
&=
    \sum_{\theta=1}^{m}\int_{\nu}^\infty \frac{1}{z+\ul_i}\sum_{\bm{v}\in\mathcal{S}_{i,\theta}^\B}\qty(\prod_{j:v_j=1}\qty(1-F(z+\ul_j))\prod_{j:v_j=0,j\in\B\setminus\qty{i}}F(z+\ul_j)) \dd F(z+\ul_i)
\\
&\le
\max_{w\ge \nu}\frac{1}{w+\ul_i}\sum_{\theta=1}^{m}
    \int_{\nu}^\infty \sum_{\bm{v}\in\mathcal{S}_{i,\theta}^\B}\qty(\prod_{j:v_j=1}\qty(1-F(z+\ul_j))\prod_{j:v_j=0,j\in\B\setminus\qty{i}}F(z+\ul_j)) \dd F(z+\ul_i)
\\
&=
\frac{1}{\nu+\ul_i}
I_i(\ul;\dis,\B).
\end{align*}

On the other hand,
by noticing that the elements of $\lams$ are the same for arms $k \in [i]$ 
in Lemma~\ref{lem:lambda_star}, we obtain
\begin{align}
    \frac{J_i(\ul;\dis)}{I_i(\ul;\dis)}
    &\leq
    \max_{(j,\wm)\in[m\land i]^2: j+\wm\le (m\land i)+1}
    \qty{\frac{J_i(\lams;\dis,\wm,[j:i])}{I_i(\lams;\dis,\wm,[j:i])}}
\nn
    &=
    \max_{(j,\wm)\in[m\land i]^2: j+\wm\le (m\land i)+1}
    \qty{\frac{J_i(\lams;\dis,\wm,[i-j+1])}{I_i(\lams;\dis,\wm,[i-j+1])}}
\nn
&=
\begin{cases}
    \max_{\wm\in[i], k\in[\wm: i]}
    \qty{\frac{J_i(\lams;\dis,\wm,[k])}{I_i(\lams;\dis,\wm,[k])}},
&i\le m,\\
    \max_{\wm\in[m], k\in[i-m+\wm: i]}
    \qty{\frac{J_i(\lams;\dis,\wm,[k])}{I_i(\lams;\dis,\wm,[k])}},
&i> m.\\
\end{cases}
\label{decomp_jphi1}
\end{align}
Here each component in the maximum is bounded by
\begin{align}
    \frac{J_i(\lambda^*;\dis,\wm,[k])}{I_i(\lambda^*;\dis,\wm,[k])}
    &=
    \frac{\sum_{\theta\in[\wm]}\binom{k-1}{\theta-1}\int_{\nu}^\infty \frac{f(z+\ul_i)}{z+\ul_i}\qty(1-F(z+\ul_i))^{\theta-1}F^{k-\theta}(z+\ul_i)\dd z}
    {\sum_{\theta\in[\wm]}\binom{k-1}{\theta-1}\int_{\nu}^\infty f(z+\ul_i)\qty(1-F(z+\ul_i))^{\theta-1}F^{k-\theta}(z+\ul_i)\dd z}
\nn
    &=
    \frac{\sum_{\theta\in[\wm]}\binom{k-1}{\theta-1}\int_{\nu+\ul_i}^\infty \frac{f(z)}{z}\qty(1-F(z))^{\theta-1}F^{k-\theta}(z)\dd z}
    {\sum_{\theta\in[\wm]}\binom{k-1}{\theta-1}\int_{\nu+\ul_i}^\infty f(z)\qty(1-F(z))^{\theta-1}F^{k-\theta}(z)\dd z}
\nn
    &\le
    \frac{\sum_{\theta\in[\wm]}\binom{k-1}{\theta-1}\int_{\nu}^\infty \frac{f(z)}{z}\qty(1-F(z))^{\theta-1}F^{k-\theta}(z)\dd z}
    {\sum_{\theta\in[\wm]}\binom{k-1}{\theta-1}\int_{\nu}^\infty f(z)\qty(1-F(z))^{\theta-1}F^{k-\theta}(z)\dd z}
\label{jphi1}
\\
    &=
\frac{k}{\wm}
\sum_{\theta\in[\wm]}\binom{k-1}{\theta-1}\int_{\nu}^\infty \frac{f(z)}{z}\qty(1-F(z))^{\theta-1}F^{k-\theta}(z)\dd z, 
\label{jphi2}
\\
    &=
\begin{cases}
\frac{k}{\wm}
\sum_{\theta\in[\wm]}\binom{k-1}{\theta-1}
\int_{0}^1 \left(\log 1/u\right)^{1/\alpha}\qty(1-u)^{\theta-1}u^{k-\theta}\dd u, 
&\dis=\Fdis,\\
\frac{k}{\wm}
\sum_{\theta\in[\wm]}\binom{k-1}{\theta-1}
\int_{0}^1\qty(1-u)^{\theta-1+1/\alpha}u^{k-\theta}\dd u, 
&\dis=\Pdis,
\end{cases}
\label{eq:lambda_star_bound_theta_F}
\end{align}
where 
\eqref{jphi1} holds by Lemma~\ref{lem:mono_B} with $g(x):=\sum_{\theta\in[\wm]}\binom{k-1}{\theta-1}f(x)\qty(1-F(x))^{\theta-1}F^{k-\theta}(x)$ and $f(x):=\frac{1}{x}$,
and \eqref{jphi2} holds
because the denominator is the probability that an element of $k$ i.i.d.~RVs
becomes at least the $\wm$-th largest.
We can obtain the last equality by
letting $u=F(z)$ with the densities of Fr\'echet and Pareto distributions
given in \eqref{eq:frechet_def} and \eqref{eq:pareto_def} respectively.

\subsubsection{Pareto Distribution}
Let us consider the case $\dis=\Pdis$.
We can bound the sum of integrals in \eqref{eq:lambda_star_bound_theta_F}
by
{\allowdisplaybreaks[4]%
\begin{align}
\lefteqn{
\sum_{\theta\in[\wm]}\binom{k-1}{\theta-1}
\int_{0}^1\qty(1-u)^{\theta-1+1/\alpha}u^{k-\theta}\dd u 
}\nn
    &=
\sum_{\theta\in[\wm]}\binom{k-1}{\theta-1}
B\qty(\theta+\frac{1}{\alpha},k+1-\theta)
\nn
    &=
\sum_{\theta\in[\wm]}\binom{k-1}{\theta-1}
\frac{\Gamma(\theta+\frac{1}{\alpha})\Gamma(k+1-\theta)}{\Gamma(k+1+\frac{1}{\alpha})} 
\nn
    &=
\sum_{\theta\in[\wm]}
\frac{\Gamma(k)}{\Gamma(\theta)\Gamma(k+1-\theta)}
\frac{\Gamma(\theta+\frac{1}{\alpha})\Gamma(k+1-\theta)}{\Gamma(k+1+\frac{1}{\alpha})} 
\nn
    &=
\sum_{\theta\in[\wm]}
\frac{\Gamma(k)}{(k+\frac{1}{\alpha})\Gamma(k+\frac{1}{\alpha})} 
\frac{\Gamma(\theta+\frac{1}{\alpha})}{\Gamma(\theta)}
    \tag*{(by $\Gamma(n) = (n - 1)\Gamma(n - 1)$)}
\nn
    &\le
k^{-1-1/\alpha}
\sum_{\theta\in[\wm]}
(\theta+1/\alpha)^{1/\alpha}
    \tag*{(by Gautschi's inequality)}
\nn
    &\le
k^{-1-1/\alpha}
\int_{1}^{\wm+1}(x+1/\alpha)^{1/\alpha}\dd x
\nn
    &\le
k^{-1-1/\alpha}
\frac{(\wm+1+1/\alpha)^{1+1/\alpha}}{1+1/\alpha}
\nn
    &\le
\frac{(\wm+1+1/\alpha)^{1+1/\alpha}}{(1+1/\alpha)(\wm)^{1+1/\alpha}}
(\wm/k)^{1+1/\alpha}
\nn
    &\le
\frac{9}{2}(m/k)^{1+1/\alpha},
\label{sum_twice}
\end{align}}%
where $B\qty(a,b)=\int_0^1 (1-t)^{a-1}t^{b-1}\dd t=\frac{\Gamma(a)\Gamma(b)}{\Gamma(a+b)}$
denotes the Beta function, and the last inequality holds
because $\sup_{x\ge 1, y\in[0,1]}\frac{(x+1+y)^{1+y}}{(1+y)x^{1+y}}= 9/2$ holds by an elementary calculation.
Combining this with \eqref{decomp_jphi1} and 
\eqref{eq:lambda_star_bound_theta_F}
we obtain
\begin{align}
    \frac{J_i(\ul;\Pdis)}{I_i(\ul;\Pdis)}
&\le
\begin{cases}
\frac{9}{2}
    \max_{\wm\in[i], k\in[\wm: i]}
\left(\wm/k\right)^{1/\alpha},
&i\le m,\\
\frac{9}{2}\max_{\wm\in[m], k\in[i-m+\wm: i]}
\left(\wm/k\right)^{1/\alpha},
&i> m,\\
\end{cases}
\nn
&\le
\begin{cases}
\frac{9}{2},
&i\le m,\\
\frac{9}{2}
    \max_{\wm\in[m]}
\left(
\frac{\wm}{i-m+\wm}
\right)^{1/\alpha},
&i> m,\\
\end{cases}
\nn
&=
\frac{9}{2}
(1\land m/i)^{1/\alpha}.\n
\end{align}

\subsubsection{Fr\'{e}chet Distribution}
We decompose the summation in \eqref{eq:lambda_star_bound_theta_F}
by denoting $\wm_k =\wm\land \lfloor k/4\rfloor$ as
\begin{align}
\lefteqn{
\frac{k}{\wm}\sum_{\theta\in[\wm]}\binom{k-1}{\theta-1}
\int_0^1 \qty(\log\frac{1}{u})^{\frac{1}{\alpha}}\qty(1-u)^{\theta-1}u^{k-\theta}\dd u
}\nn
&=
\frac{k}{\wm}\sum_{\theta\in[\wm_k ]}\binom{k-1}{\theta-1}
\int_0^{\frac{1}{2}} \qty(\log\frac{1}{u})^{\frac{1}{\alpha}}\qty(1-u)^{\theta-1}u^{k-\theta}\dd u
\nn
&\quad+
\frac{k}{\wm}\sum_{\theta\in[\wm_k ]}\binom{k-1}{\theta-1}
\int_{\frac{1}{2}}^1 \qty(\log\frac{1}{u})^{\frac{1}{\alpha}}\qty(1-u)^{\theta-1}u^{k-\theta}\dd u
\nn
&\quad+
\frac{k}{\wm}\sum_{\theta\in[\wm]\setminus[\wm_k ]}\binom{k-1}{\theta-1}
\int_0^1 \qty(\log\frac{1}{u})^{\frac{1}{\alpha}}\qty(1-u)^{\theta-1}u^{k-\theta}\dd u.
\label{decomp_binom}
\end{align}

For the first term, we bound the integral by
\begin{align}
\int_0^{\frac{1}{2}} \qty(\log\frac{1}{u})^{\frac{1}{\alpha}}\qty(1-u)^{\theta-1}u^{k-\theta}\dd u
&\le
    \int_0^\frac{1}{2} \qty(1-u)^{\theta-1}u^{k-\theta-1/\alpha}\dd u
\nn
&\le
    2\int_0^\frac{1}{2} \qty(1-u)^{\theta}u^{k-\theta-1/\alpha}\dd u,
\label{fre_case1}
\end{align}
where we used $\log\frac{1}{u}\leq \frac{1}{u}$ for $u\in (0,1]$.
Here
the integrand is unimodal and takes the maximum at 
\begin{align}
u=\frac{k-\theta-1/\alpha}{k-1/\alpha},
\n
\end{align}
which becomes greater than or equal to $1/2$ when $k\geq 2/\alpha$ by $\theta\le k/4$. 
Moreover, the case $k<2/\alpha$ cannot appear because $1\leq\theta\leq k/4$ cannot hold if $k<2/\alpha$ for $\alpha>1$. 
This means that the integrand is increasing in $u\in[0,1/2]$ and therefore
\begin{align}
2\int_0^\frac{1}{2} \qty(1-u)^{\theta}u^{k-\theta-1/\alpha}\dd u
\le
2\cdot 1/2\cdot \qty(1/2)^{\theta}(1/2)^{k-\theta-1/\alpha}
=
2^{1/\alpha-k}\le 2^{-(k-1)}.\n
\end{align}

By combining this fact with \eqref{fre_case1},
the first term of \eqref{decomp_binom} is bounded by
\begin{align}
\frac{k}{\wm}\sum_{\theta\in[\wm_k ]}\binom{k-1}{\theta-1}
\int_0^{\frac{1}{2}} \qty(\log\frac{1}{u})^{\frac{1}{\alpha}}\qty(1-u)^{\theta-1}u^{k-\theta}\dd u
&\le
\frac{k}{\wm}
\sum_{\theta\in[\wm_k ]}\binom{k-1}{\theta-1}2^{-(k-1)}
\nn
&\le
\frac{k}{\wm}
\mathrm{e}^{-2(k-1)(1/2-1/4)^2}
\label{binary_hoeffding}
\\
&=
\frac{k}{\wm}
\mathrm{e}^{-(k-1)/8}
\nn
&\le
\frac{40}{\wm k}
\le
40(\wm/k)^{1/\alpha}.
\label{fre_term1}
\end{align}
Here, in \eqref{binary_hoeffding}
we used the fact that $\sum_{\theta\in[\wm_k ]}\binom{k-1}{\theta-1}2^{-(k-1)}$ is the probability that $k-1$ i.i.d.~Bernoulli RVs following $\mathrm{Ber}(1/2)$ take the mean at most $(\wm_k -1)/(k-1)\le 1/4$
and applied Hoeffding's inequality.
The last inequality follows from $x\mathrm{e}^{-(x-1)}\le 40/x$ for $x>0$.

The second term of \eqref{decomp_binom} is bounded by
\begin{align}
\lefteqn{
\frac{k}{\wm}\sum_{\theta\in[\wm_k ]}\binom{k-1}{\theta-1}
\int_{\frac{1}{2}}^1 \qty(\log\frac{1}{u})^{\frac{1}{\alpha}}\qty(1-u)^{\theta-1}u^{k-\theta}\dd u
}\nn
&\le
\frac{2^{1/\alpha}k}{\wm}\sum_{\theta\in[\wm_k ]}\binom{k-1}{\theta-1}
\int_\frac{1}{2}^1 \qty(1-u)^{\theta-1+1/\alpha}u^{k-\theta}\dd u
\label{log12}
\\
&\le
\frac{2^{1/\alpha}k}{\wm}\sum_{\theta\in[\wm_k ]}\binom{k-1}{\theta-1}
B(\theta+1/\alpha, k-\theta+1)
\nn
&\le
\frac{2^{1/\alpha}k}{\wm}
\cdot \frac{9}{2}
\left(\frac{\wm_k }{k}\right)^{1+1/\alpha}
\tag*{(by \eqref{sum_twice})}
\nn
&\le
9(\wm/k)^{1/\alpha},
\label{fre_term2}
\end{align}
where \eqref{log12} follows from
$\log\frac{1}{u}
    =
    \log\qty(1+\frac{1-u}{u})
    \leq
    \frac{1-u}{u}
    \leq
    2(1-u)$ for $u\in[1/2,1]$.

Now we bound the third term of \eqref{decomp_binom}
in a way similar to \eqref{sum_twice}.
Note that this term is not vacant only when
$\wm_k <\wm$, that is,
$\lfloor k/4\rfloor \le \wm-1$. 
Then
\begin{align}
\lefteqn{
\frac{k}{\wm}
\sum_{\theta\in[\wm]\setminus[\wm_k ]}\binom{k-1}{\theta-1}
\int_0^1 \qty(\log\frac{1}{u})^{\frac{1}{\alpha}}\qty(1-u)^{\theta-1}u^{k-\theta}\dd u
}
\nn
&\le
\frac{k}{\wm}
\ind[\lfloor k/4\rfloor \le \wm-1]
\sum_{\theta=\lfloor k/4\rfloor+1}^{\wm}\binom{k-1}{\theta-1}
\int_0^1 \qty(1-u)^{\theta-1}u^{k-\theta-1/\alpha}\dd u
\tag*{(by $\log 1/u\le 1/u$)}
\nn
&=
\frac{k}{\wm}
\ind[\lfloor k/4\rfloor \le \wm-1]
\sum_{\theta=\lfloor k/4\rfloor+1}^{\wm}
\frac{\Gamma(k)}{\Gamma(\theta)\Gamma(k-\theta+1)}
\frac{\Gamma(\theta)\Gamma(k-\theta+1-1/\alpha)}{\Gamma(k+1-1/\alpha)}
\nn
&\le
\frac{k}{\wm}
\ind[k/4\le \wm]
k^{-1+1/\alpha}
\sum_{\theta=\lfloor k/4\rfloor+1}^{\wm}
(k-\theta+1-1/\alpha)^{-1/\alpha}
\tag*{(by Gautschi's inequality)}
\nn
&\le
4
\ind[k/4\le \wm]
k^{1/\alpha-1}
\frac{(k-\lfloor k/4\rfloor-1/\alpha)^{1-1/\alpha}}{1-1/\alpha}
\nn
&\le
4
\ind[k/4\le \wm]
\frac{(3/4+(1-1/\alpha)/k)^{1-1/\alpha}}{1-1/\alpha} 
\nn
&\le
\frac{7}{1-1/\alpha}
\ind[k/4\le \wm].
\label{fre_term3}
\end{align}

By substituting\ \eqref{fre_term1}, \eqref{fre_term2} and \eqref{fre_term3} into \eqref{decomp_binom}
we obtain
\begin{align}
\frac{k}{\wm}\sum_{\theta\in[\wm]}\binom{k-1}{\theta-1}
\int_0^1 \qty(\log\frac{1}{u})^{\frac{1}{\alpha}}\qty(1-u)^{\theta-1}u^{k-\theta}\dd u
&\le
49
(\wm/k)^{1/\alpha}
+
\frac{7}{1-1/\alpha}\ind[k/4\le \wm].
\n
\end{align}
Putting this inequality,
\eqref{decomp_jphi1} and \eqref{eq:lambda_star_bound_theta_F} together 
we obtain
\begin{align*}
\frac{J_i(\ul;\Fdis)}{I_i(\ul;\Fdis)}
&\le
\begin{cases}
{\displaystyle\max_{\wm\in[i], k\in[\wm: i]}}
\left\{
49
(\wm/k)^{1/\alpha}
+
\frac{7}{1-1/\alpha}\ind[k/4\le \wm]
\right\},
&i\le m,\\
{\displaystyle \max_{\wm\in[m], k\in[i-m+\wm: i]}}
\left\{
49
(\wm/k)^{1/\alpha}
+
\frac{7}{1-1/\alpha}\ind[k/4\le \wm]
\right\},
&i> m,\\
\end{cases}
\nn
&\le
\begin{cases}
{\displaystyle\max_{\wm\in[i]}}
\left\{
49
+
\frac{7}{1-1/\alpha}
\right\},
&i\le m,\\
{\displaystyle \max_{\wm\in[m]}}
\left\{
49
(\wm/(i-m+\wm))^{1/\alpha}
+
\frac{7}{1-1/\alpha}\ind[(i-m+\wm)/4\le \wm]
\right\},
&i> m,\\
\end{cases}
\nn
&=
\begin{cases}
49
+
\frac{7}{1-1/\alpha},
&i\le m,\\
49
(m/i)^{1/\alpha}
+
\frac{7}{1-1/\alpha}\ind[i/4\le m],
&i> m,\\
\end{cases}
\nn
&\le
\left(49+\frac{7\cdot 4^{1/\alpha}}{1-1/\alpha}\right)\left(1\land \frac{m}{i}\right)^{1/\alpha}.\dqed
\end{align*}

\section{Regret Bound for Stochastic Bandits}\label{sec:stab_sto}
In this section, we provide the proofs of Theorems~\ref{thm:sto_bound} and \ref{thm:sto_bound_alpha_not2} on the stochastic regret bounds by applying the self-bounding technique \citep{zimmert2021tsallis}, which is a typical tool for establishing BOBW property of FTRL. 
In Sections~\ref{subsec:regret_lower_bounds} and \ref{subsec:regret_optimal_action}, we extend the techniques of \citet{pmlr-v201-honda23a} and \citet{pmlr-v247-lee24a} to the $m$-set semi-bandit problem, deriving a regret lower bound and regret for the optimal action of the policy.
Both are key components required for the regret analysis by the self-bounding technique, with detail presented in Section~\ref{subsec:proofs_sto_bounds}.
In this proof, we consider the event $F_t$ (resp. $D_t$) for $\Fdis$ (resp. $\Pdis$), which is defined as
\begin{align*}
    F_t &\coloneqq\qty{\sum_{i:a_i^*=0}\frac{1}{(\eta_t\underline{\hat{L}}_{t,i})^\alpha}\leq\frac{1}{m^{\alpha/(\alpha-1)}}},\\
    D_t &\coloneqq\qty{\sum_{i:a_i^*=0} \frac{1}{(2^{\frac{1}{\alpha}}+\eta_t\underline{\hat{L}}_{t,i})^\alpha}
    \leq 
    \frac{1}{(2^{\frac{1}{\alpha}}+m^{1/(\alpha-1)})^\alpha}}.
\end{align*} 
One can see that on both events, there exists the key property that
\begin{equation}\label{eq:key_property}
    \eta_t\underline{\hat{L}}_{t,i}\leq 0,\,\forall i:a_i^*=1,\quad
    \text{and}\quad
    \eta_t\underline{\hat{L}}_{t,i}\geq m^{1/(\alpha-1)},\,\forall i:a_i^*=0.
\end{equation}
\subsection{Regret Lower Bounds}\label{subsec:regret_lower_bounds}
Here, we provide the regret lower bounds on $\sum_{i:a^*_i=0}\Delta_i w_{t,i}$ for $\Fdis$ and $\Pdis$ in Lemmas~\ref{lem:lower_bound_Ft} and \ref{lem:lower_bound_Dt}, respectively.
\begin{lemma}\label{lem:lower_bound_Ft}
    Let $\Delta\coloneqq\min_{i:a_i^*=0}\Delta_i$. 
    Then, there exists some distribution-dependent constant $c_{s,1}(\Fdis),c_{s,2}(\Fdis)\in\qty(0,1)$ such that
    \begin{enumerate}[label=(\roman*)]
        \item On $F_t$, $\sum_{i:a^*_i=0}\Delta_i w_{t,i}\geq c_{s,1}(\Fdis)\sum_{i:a^*_i=0}\frac{\Delta_i}{(\eta_t\underline{\hat{L}}_{t,i})^\alpha}$ and $w_{t,i}\geq 1/e$ for any $i$ with $a^*_i=1$.
        \item On $F_t^c$, $\sum_{i:a^*_i=0}\Delta_i w_{t,i}\geq c_{s,2}(\Fdis)\Delta/m^{\alpha/(\alpha-1)}$.
    \end{enumerate}
\end{lemma}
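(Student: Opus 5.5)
We work throughout with $\lambda=\eta_t\hat{L}_t$ and the gaps $\ul_i=\eta_t\underline{\hat{L}}_{t,i}$. On $F_t$ the key property \eqref{eq:key_property} holds: every $i$ with $a^*_i=1$ has $\ul_i\le 0$, and every $i$ with $a^*_i=0$ has $\ul_i\ge m^{1/(\alpha-1)}$. The plan is to bound $w_{t,i}=\phi_i(\lambda;\Fdis)$ from below using explicit events on the perturbation $r\sim\Fdis$, with separate arguments for the optimal arms, the suboptimal arms on $F_t$, and the case $F_t^c$.

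\textbf{Step 1: optimal arms on $F_t$.} Fix $j$ with $a^*_j=1$. Base-arm $j$ is selected whenever $r_j-\lambda_j$ exceeds $r_k-\lambda_k$ for every suboptimal $k$. Since $\lambda_j\le\widetilde{\lambda}_m$, a sufficient event is $\{r_j\ge 1\}\cap\{r_k<1+\ul_k\ \forall k:a^*_k=0\}$. By independence its probability is
\[
(1-e^{-1})\prod_{k:a^*_k=0}e^{-(1+\ul_k)^{-\alpha}}\ge(1-e^{-1})\exp\Bigl(-\sum_{k}\ul_k^{-\alpha}\Bigr)\ge(1-e^{-1})e^{-m^{-\alpha/(\alpha-1)}}.
\]
This does not immediately give $1/e$. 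It is cleaner to use the event $\{r_j\ge r^{(m)}\}$, where $r^{(m)}$ denotes the $m$-th largest of the optimal perturbations; I expect some care is needed here. As a first attempt I would take the event $\{\max_{k:a^*_k=0}(r_k-\ul_k)<0\}$. On this event all suboptimal arms rank below every optimal arm (the optimal arms have $r-\ul\ge 0$), so all $m$ optimal arms are selected. Its probability is $\prod_k F(\ul_k)=\exp(-\sum_k\ul_k^{-\alpha})\ge\exp(-m^{-\alpha/(\alpha-1)})\ge 1/e$, since $m\ge1$. This gives $w_{t,i}\ge 1/e$ directly.

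\textbf{Step 2: suboptimal arms on $F_t$.} For $i$ with $a^*_i=0$, a sufficient event for selection is that $r_i-\ul_i$ exceeds $r_j-\ul_j$ for all but at most $m-1$ other arms. I would lower-bound this by the event $\{r_i\ge 2\ul_i\}\cap\{\text{all }k\neq i\text{ with }a^*_k=0\text{ have } r_k<\ul_k\}\cap\{\text{at least one optimal } j \text{ has } r_j-\ul_j< \ul_i\}$. The last condition is implied by $\min_j r_j\le \ul_i$; to avoid it entirely, I would instead require that $i$ beat the optimal arm with the smallest $r_j-\ul_j$. A simpler route is to ask only that $r_i-\ul_i$ be larger than $\min_{j:a^*_j=1}(r_j-\ul_j)$, and to bound this minimum using $\ul_j\le0$ and $r_j\le$ a typical constant. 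Concretely, I take the event $\{r_{j_0}\le 1\}$ for one fixed optimal arm $j_0$ with $\ul_{j_0}=0$ (the $m$-th arm, which exists), together with $\{r_i\ge 1+\ul_i\}$ and the Step~1 event restricted to $k\ne i$. These are independent, and their probabilities are at least $e^{-1}$, $1-e^{-(1+\ul_i)^{-\alpha}}\ge c\,(1+\ul_i)^{-\alpha}\ge c'\ul_i^{-\alpha}$ (using $\ul_i\ge1$), and $\ge 1/e$ respectively. This gives $w_{t,i}\ge c_{s,1}\ul_i^{-\alpha}$; multiplying by $\Delta_i$ and summing yields (i).

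\textbf{Step 3: the case $F_t^c$.} Here $\sum_{i:a^*_i=0}\ul_i^{-\alpha}>m^{-\alpha/(\alpha-1)}$, where we set $\ul_i^{-\alpha}=\infty$ if $\ul_i\le0$. If some suboptimal $i$ has $\ul_i\le m^{1/(\alpha-1)}$ (up to a constant), I would show $w_{t,i}\gtrsim m^{-\alpha/(\alpha-1)}$ via the same construction as in Step~2: either $i$ is currently among the top $m$, in which case $\phi_i\ge$ const by the Step~1 argument, or it has gap $x\le m^{1/(\alpha-1)}$ and probability $\gtrsim x^{-\alpha}$. The remaining arms cause trouble because, unlike on $F_t$, they need not be far away. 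If no suboptimal arm has a small gap, then the sum of $\ul_i^{-\alpha}$ is large only through many arms; in that case I use $\sum_{i:a^*_i=0}w_{t,i}\ge 1-\prod_k F(\ul_k)\ge 1-e^{-m^{-\alpha/(\alpha-1)}}\gtrsim m^{-\alpha/(\alpha-1)}$. This identity in fact handles all of $F_t^c$ at once: the probability that some suboptimal arm is selected is at least the probability that some suboptimal $k$ has $r_k-\ul_k\ge\max$ of the optimal arms' values. With $\Delta_i\ge\Delta$ this gives (ii).

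\textbf{Main obstacle.} The hard part is the bound $\sP[\text{some suboptimal selected}]\ge 1-\prod_kF(\ul_k)$ in Step~3, because the optimal arms' values $r_j-\ul_j$ can exceed $0$ arbitrarily. I would handle this by conditioning on $\{r_{j_0}\le 1\}$ (probability $\ge 1/e$) for the $m$-th ranked arm $j_0$. I would then use arms $k$ with $r_k\ge 1+\ul_k$, which gives the same bound with $\ul_k$ replaced by $1+\ul_k$, and absorb the constants into $c_{s,2}$ using $(1+x)^{-\alpha}\ge 2^{-\alpha}x^{-\alpha}$ for $x\ge1$, treating arms with $x<1$ separately as constant-probability arms.
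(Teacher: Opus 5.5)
Your Steps~1 and~2 are correct, and they are more elementary than the paper's route. The paper lower-bounds $w_{t,i}$ by the MAB selection probability on $\{i:a^*_i=0\}\cup\{j^*\}$ via Lemma~\ref{lem:phi_monotone} and then invokes Lemma~21 of \citet{pmlr-v247-lee24a}. You instead exhibit explicit events:
\begin{itemize}
\item $r_k<\ul_k$ for all suboptimal $k$, giving $w_{t,j}\ge e^{-m^{-\alpha/(\alpha-1)}}\ge 1/e$;
\item $r_{j_0}\le 1$, $r_i\ge 1+\ul_i$, and $r_k<\ul_k$ for the other suboptimal $k$, giving $w_{t,i}\ge e^{-2}(1-e^{-1})2^{-\alpha}\ul_i^{-\alpha}$.
\end{itemize}
These work because on $F_t$ the $m$-th ranked arm is optimal and every other suboptimal arm is pushed below all optimal arms.

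Step~3, however, has a genuine gap: your repair never shows that the proposed event forces a suboptimal selection.
\begin{itemize}
\item The bound $\sum_{i:a^*_i=0}w_{t,i}\ge 1-\prod_k F(\ul_k)$ is not valid as stated, as you partly recognize.
\item Your criterion is that a suboptimal $k$ beats the \emph{maximum} of the optimal arms' values. Fixing $r_{j_0}\le 1$ for one arm does not control that maximum. Fixing all $m$ optimal perturbations costs $e^{-m}$, which destroys the $m^{-\alpha/(\alpha-1)}$ scaling.
\item On $F_t^c$ the $m$-th ranked arm $j_0$ may itself be suboptimal. Then $\{r_{j_0}\le 1,\ r_k\ge 1+\ul_k\}$ forces no suboptimal selection, since all $m$ optimal arms can still outrank $k$.
\item Your fallback, that a suboptimal arm in the current top $m$ has constant selection probability, fails for individual arms. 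With $m-1$ arms far ahead and $d-m+1$ arms tied at the $m$-th value, each tied arm is selected with probability about $1/(d-m+1)$. Only the \emph{sum} over suboptimal arms is bounded below.
\end{itemize}

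The missing idea is the counting fact $\|a^*\|_1=m$. If no suboptimal arm is selected, the selected set is exactly $a^*$. So it suffices that some suboptimal arm beats a single \emph{optimal} arm, namely $j^*=\argmax_{j:a^*_j=1}\hat{L}_{t,j}$, which always has $\ul_{j^*}\ge 0$. The event $\{r_{j^*}\le 1\}\cap\{\exists k:a^*_k=0,\ r_k\ge 1+\ul_k\}$ then does force a suboptimal selection. Its probability is $e^{-1}\bigl(1-\prod_{k:a^*_k=0}F(1+\ul_k)\bigr)$. Splitting on whether some $\ul_k<1$ gives $\sum_{i:a^*_i=0}w_{t,i}\ge m^{-\alpha/(\alpha-1)}/(e(2^\alpha+1))$ on all of $F_t^c$.

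The paper uses the same insight in two places. On $F^c_{t,2}$ it sends the other optimal arms' losses to $-\infty$, reducing to the MAB on $\{i:a^*_i=0\}\cup\{j^*\}$. On $F^c_{t,1}$ it compares against an optimal $j_1$ with $\ul_{j_1}\ge 0$. With this fix, your event-based argument would give a shorter proof of (ii) than the paper's case split and integral computation.
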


\begin{proof}
Recall that on $F_t$, we have $\eta_t\underline{\hat{L}}_{t,i}\leq 0$ for any $i$ with $a^*_i=1$ and $\eta_t\underline{\hat{L}}_{t,i}> m^{1/(\alpha-1)}$ for any $i$ with $a^*_i=0$. 
    Then, by letting $j^*=\argmax_{j:a^*_j=1}\hat{L}_{t,j}$ so that $\hat{L}_{t,j^*}$ is the $m$-th smallest cumulative loss among $[d]$ on $F_t$,
    for any $i$ with $a^*_i=0$ or $i=j^*$ we have
    \begin{align*} 
        w_{t,i}&=\phi_{i}(\eta_t\hat{L}_t;m,[d])\\
        &\geq \lim_{\max_{j:a_j^*=1,j\neq j^*}\hat{L}_{t,j}\to -\infty}\phi_i(\eta_t\hat{L}_t;m,[d])\\
        &=\lim_{\max_{j:a_j^*=1,j\neq j^*}\hat{L}_{t,j}\to -\infty}\phi_{i,m}(\eta_t\hat{L}_t;m,[d])\tag*{(by $\phi_{i,\theta}(\eta_t\hat{L}_t;m,[d])\to 0$ for $\theta\in[m-1]$)}\\
        &=\phi_i(\eta_t\hat{L}_t;1,\qty{i:a_i^*=0}\cup\qty{j^*}),\numberthis\label{eq:lower_bound_key_Ft}
    \end{align*}
where recall that $\phi_{i}(\lambda;\wm, \mathcal{B})$ was defined in \eqref{eq:definition_phi_extension}. 
Here, $\phi_i(\eta_t\hat{L}_t;1,\qty{i:a_i^*=0}\cup\qty{j^*})$ is nothing but
the arm-selection probability of arm $i$ among arm set $\mathcal{B}=\qty{i:a_i^*=0}\cup\qty{j^*}$ in the MAB with the same cumulative loss vector.
Furthremore,
$j^*$ is the arm with the $m$-th smallest cumulative loss $\hat{L}_{t,i}$ among arm set $[d]$
and
is also the arm with the smallest cumulative loss $\hat{L}_{t,i}$ among arm set $\mathcal{B}=\qty{i:a_i^*=0}\cup\qty{j^*}$.
Therefore, recalling that $\underline{\hat{L}}_{t,i}=\hat{L}_{t,i}-\mbox{($m$-th smallest $\hat{L}_{t,i'}$ among the arm set)}$ we see that, 
for any $i\in\mathcal{B}$, $\underline{\hat{L}}_{t,i}$ under arm set $\mathcal{B}$ with $m=1$ (MAB) is identical to $\underline{\hat{L}}_{t,i}$ under arm set $[d]$ with the original $m$.

Now we apply a known result for the MAB
to the above setting.
Let $\mathcal{B}$ be an arm set and $j\in\mathcal{B}$ be arbitrary.
Lemma~21 of \citet{pmlr-v247-lee24a} shows that, in the case of $m=1$,
if $\qty{\sum_{i\in \mathcal{B}\setminus\{j\}}\frac{1}{(\eta_t\underline{\hat{L}}_{t,i})^\alpha}\leq 1}$ holds then\footnote{%
In Lemma~21 of \citet{pmlr-v247-lee24a} the arm $j$ is specified to be the optimal arm, but this property is used nowhere in their proof.
Similarly, the property of $\Delta_i\ge 0$ is also used nowhere in their proof, and thus it also holds under $\Delta_i$ determined based
on the $m$-set setting.}
    \begin{equation}\label{eq:lower_bounds_mab_Ft}
        \sum_{i\in \mathcal{B}\setminus\{j\}}\Delta_i w_{t,i}\geq
c_{s,1}(\Fdis)\sum_{i\in \mathcal{B}\setminus\{j\}}\frac{\Delta_i}{(\eta_t\underline{\hat{L}}_{t,i})^\alpha}
        \quad\text{and}\quad
        w_{t,j}\geq \frac{1}{e},
    \end{equation}
where
$c_{s,1}(\Fdis)=\max_{b>0} \frac{\exp\qty( -\qty( 1 + \frac{1}{b^\alpha} ))}{(1+b)^\alpha} \in (0,1)$.

We can apply this result with $\mathcal{B}:=\qty{i:a_i^*=0}\cup\qty{j^*}$ and $j:=j^*$ to the above setting
because
the condition for \eqref{eq:lower_bounds_mab_Ft} indeed holds under $F_t$ since
\begin{align}
        F_t=\qty{\sum_{i:a_i^*=0}\frac{1}{(\eta_t\underline{\hat{L}}_{t,i})^\alpha}\leq\frac{1}{m^{\alpha/(\alpha-1)}}}
        &\subset 
        \qty{\sum_{i:a_i^*=0}\frac{1}{(\eta_t\underline{\hat{L}}_{t,i})^\alpha}\leq 1}
=
        \qty{\sum_{i\in \mathcal{B}\setminus \{j\}}\frac{1}{(\eta_t\underline{\hat{L}}_{t,i})^\alpha}\leq 1}.\n
\end{align}
Then we obtain the following two results on $F_t$ for $m$-set semi-bandits.
    First, we have
    \begin{align*}
        \sum_{i:a_i^*=0} \Delta_i w_{t,i}
        &\geq
        \sum_{i:a_i^*=0} \Delta_i \phi_i(\eta_t\hat{L}_t;1,\qty{i:a_i^*=0}\cup\qty{j^*})\tag{by \eqref{eq:lower_bound_key_Ft}}\\
        &\geq c_{s,1}(\Fdis)\sum_{i:a_i^*=0}\frac{\Delta_i}{(\eta_t\underline{\hat{L}}_{t,i})^\alpha}.\tag{by the first inequality in \eqref{eq:lower_bounds_mab_Ft}}
    \end{align*}
    Second, for any $i$ with $a^*_i=1$, we have
    \begin{equation*}
        w_{t,i}\geq w_{t,j^*}\geq \phi_{j^*}(\eta_t\hat{L}_t;1,\qty{i:a_i^*=0}\cup\qty{j^*})\geq \frac{1}{e},
    \end{equation*}
    where the second inequality follows from \eqref{eq:lower_bound_key_Ft} and the last inequality follows from the second inequality in \eqref{eq:lower_bounds_mab_Ft}.

    Next, we consider the lower bound on $F_t^c$. 
    Define
    \begin{align*}
        F_{t,1}^c &\coloneqq \qty{\exists j:a_j^*=0, \eta_t\underline{\hat{L}}_{t,j}\leq 0},\\
        F_{t,2}^c &\coloneqq \qty{\sum_{i:a_i^*=0}\frac{1}{(\eta_t\underline{\hat{L}}_{t,i})^\alpha}>\frac{1}{m^{\alpha/(\alpha-1)}}\mbox{ and }\forall i:a_i^*=0,\,\eta_t\underline{\hat{L}}_{t,i}>0}.
    \end{align*}
    Then, we have
    \begin{equation*}
        F_t^c=F_{t,1}^c\cup F_{t,2}^c.
    \end{equation*}
    Now, let us consider these two subevents separately. 
    On the first subevent $F_{t,1}^c$, since there exists a base-arm $j_0$ with $a^*_{j_0}=0$ satisfying $\eta_t\underline{\hat{L}}_{t,j_0}\leq 0$, there must exist at least one base-arm $j_1$ with $a^*_{j_1}=1$ such that $\eta_t\underline{\hat{L}}_{t,j_1}\geq 0\geq \eta_t\underline{\hat{L}}_{t,j_0}$, leading to $\eta_t\hat{L}_{t,j_1}\geq \eta_t\hat{L}_{t,j_0}$. 
    Therefore, we have
    \begin{align*}
        \sum_{i:a_i^*=0} \Delta_i w_{t,i}
        &\geq
        \Delta\sum_{i:a_i^*=0} \phi_i(\eta_t\hat{L}_t;m,[d])\\
        &\geq 
        \lim_{\min_{i:a_i^*=0,i\neq j_0}\hat{L}_{t,i}\to +\infty}\Delta\sum_{i:a_i^*=0} \phi_i(\eta_t\hat{L}_t;m,[d])\tag*{(by Lemma~\ref{lem:phi_monotone})}\\
        &=
        \Delta\cdot\phi_{j_0}(\eta_t\hat{L}_t;m,\qty{i:a^*_i=1}\cup\qty{j_0})\\
        &\geq 
        \lim_{\max_{i:a_i^*=1,i\neq j_1}\hat{L}_{t,i}\to -\infty} \Delta\cdot\phi_{j_0}(\eta_t\hat{L}_t;m,\qty{i:a^*_i=1}\cup\qty{j_0})\tag*{(by Lemma~\ref{lem:phi_monotone})}\\
        &=
        \Delta\cdot\phi_{j_0}(\eta_t\hat{L}_t;1,\qty{j_0,j_1})\\
        &\geq 
        \frac{\Delta}{2},\numberthis\label{eq:lower_bound_Ftc_1}
    \end{align*}
    where the last inequality holds since $\eta_t\hat{L}_{t,j_1}\geq \eta_t\hat{L}_{t,j_0}$.
    On the other hand, on $F_{t,2}^c$, by letting $j^*=\argmax_{j:a^*_j=1}\hat{L}_{t,j}$ we have
    \begin{align*}
        \sum_{i:a_i^*=0} w_{t,i}&=
        \sum_{i:a_i^*=0} \phi_i(\eta_t\hat{L}_t;m,[d])\\
        &\geq \lim_{\max_{i:a_i^*=1,i\neq j^*}\hat{L}_{t,i}\to -\infty}\sum_{i:a_i^*=0}\phi_i(\eta_t\hat{L}_t;m,[d])\tag*{(by Lemma~\ref{lem:phi_monotone})}\\
        &=\sum_{i:a_i^*=0}\phi_i(\eta_t\hat{L}_t;1,\qty{i:a_i^*=0}\cup\qty{j^*}).\numberthis\label{eq:lower_bound_Ftc_2_intermediate}
    \end{align*}
    Note that for any $i$ with $a^*_i=0$, the value of $\underline{\hat{L}}_{t,i}$ appearing in the integral expression of $\phi_{i}(\eta_t\hat{L}_t;m,[d])$ is the same as that in $\phi_i(\eta_t\hat{L}_t;1,\qty{i:a_i^*=0}\cup\qty{j^*})$, where the definition of $\phi_i(\cdot)$ is given in \eqref{eq:definition_phi_extension}. The latter is exactly the arm-selection probability of arm $i$ among $d-m+1$ arms $\qty{i:a_i^*=0}\cup\qty{j^*}$ in MAB setting with the same cumulative loss vector.
    Then, following the proof of \citet{pmlr-v247-lee24a}, we give a lower bound for $\sum_{i:a^*_i=0}\Delta_i w_{t,i}$ on $F_{t,2}^c$ as follows. 
    The argument is identical to theirs, except that we use a different lower bound for $\eta_t \underline{\hat{L}}_{t,i}$ as given in \eqref{eq:key_property}.
    
    Let 
    $\underline{\hat{L}}'=\min_{i:a_i^*=0}\underline{\hat{L}}_{t,i}$. 
    When 
    $\sum_{i:a_i^*=0}\frac{1}{(\eta_t\underline{\hat{L}}_{t,i})^\alpha}>1/m^{\alpha/(\alpha-1)}$ 
    and 
    $\eta_t\underline{\hat{L}}_{t,i}>0$ for any $i$ with $a^*_i=0$, 
    for any $z>\eta_t \underline{\hat{L}}'$ we have
    \begin{align*}
    \sum_{i\in\{i:a_i^*=0\}\cup\{j^*\}} \frac{1}{(z + \eta_t \underline{\hat{L}}_{t,i})^\alpha}
    &\leq
    \sum_{i:a_i^*=0}\frac{1}{(z + \eta_t \underline{\hat{L}}_{t,i})^\alpha}
    + \frac{1}{z^\alpha} 
    \displaybreak[0]\\
    &\leq
    \sum_{i:a^*_i=0} \frac{1}{(z + \eta_t \underline{\hat{L}}_{t,i})^\alpha}
    + \frac{1}{\qty(\frac{z + \eta_t \underline{\hat{L}}'}{2})^\alpha}
    \displaybreak[0]\\
    &\leq
    \sum_{i:a^*_i=0} \frac{1}{(z + \eta_t \underline{\hat{L}}_{t,i})^\alpha}
    + \sum_{i:a^*_i=0} \frac{2^\alpha}{(z + \eta_t \underline{\hat{L}}_{t,i})^\alpha}
    \displaybreak[0]\\
    &=
    \sum_{i:a^*_i=0} \frac{2^\alpha + 1}{(z + \eta_t \underline{\hat{L}}_{t,i})^\alpha}.
    \end{align*}
    Therefore, by applying the above inequality to the integral expression of $\phi_i(\cdot)$ given in \eqref{eq:definition_phi_extension}, 
    we obtain
    \begin{align*}
        \sum_{i:a^*_i=0}\Delta_i w_{t,i}
        &\geq
        \sum_{i:a^*_i=0}\Delta_i \phi_i(\eta_t\hat{L}_t;1,\qty{i:a_i^*=0}\cup\qty{j^*})\tag{by \eqref{eq:lower_bound_Ftc_2_intermediate}}\\
        &=
        \int_0^{\infty}
        \sum_{i:a^*_i=0} \Delta_i f\qty(z + \eta_t \underline{\hat{L}}_{t,i})
        \prod_{i'\in\{j:a_j^*=0,\,j\neq i\}\cup\{j^*\}} F\qty(z + \eta_t \underline{\hat{L}}_{t,i'})\dd z\\
        &=
        \int_0^{\infty}
        \sum_{i:a^*_i=0} \frac{\Delta_i}{(z + \eta_t \underline{\hat{L}}_{t,i})^{\alpha+1}}
        \exp\qty(-\sum_{i'\in\{i:a_i^*=0\}\cup\{j^*\}} \frac{1}{(z + \eta_t \underline{\hat{L}}_{t,i'})^\alpha}) \dd z \\
        &\geq
        \alpha \Delta \int_{\eta_t \underline{\hat{L}}'}^\infty
        \qty(\sum_{i:a^*_i=0} \frac{1}{(z + \eta_t \underline{\hat{L}}_{t,i})^{\alpha+1}})
        \exp\qty(-\sum_{i:a^*_i=0} \frac{2^\alpha + 1}{(z + \eta_t \underline{\hat{L}}_{t,i})^\alpha}) \dd z 
        \displaybreak[0]\\
        &=
        \frac{\Delta}{2^\alpha + 1}
        \qty(1-\exp\qty(-\sum_{i:a^*_i=0}\frac{2^\alpha + 1}{(\eta_t \underline{\hat{L}}' + \eta_t \underline{\hat{L}}_{t,i})^\alpha})) 
        \displaybreak[0]\\
        &\geq
        \frac{\Delta}{2^\alpha + 1}
        \qty(1-\exp\qty(-\sum_{i:a^*_i=0}\frac{2^\alpha + 1}{2^\alpha (\eta_t\underline{\hat{L}}_{t,i})^\alpha})) 
        \displaybreak[0]\\
        &\geq
        \frac{\Delta}{2^\alpha + 1}\qty(1-\exp\qty(-\frac{2^\alpha + 1}{2^\alpha m^{\alpha/(\alpha-1)}})) 
        \displaybreak[0]\\
        &\geq
        \frac{\Delta}{2^\alpha + 1}\frac{2^\alpha + 1}{2^\alpha \qty(m^{\alpha/(\alpha-1)}+1)+1}
        \displaybreak[0]\\
        &=
        \frac{\Delta}{2^\alpha \qty(m^{\alpha/(\alpha-1)}+1)+1},\numberthis\label{eq:lower_bound_Ftc_2}
    \end{align*}
    where the last inequality follows from $1-e^{-x}\geq x/(1+x)$ for $x\geq -1$. Note that for $\alpha>1$ and $m\geq 1$, it holds that 
    \begin{equation*}
        \frac{\Delta}{2}>
        \frac{\Delta}{2^\alpha \qty(m^{\alpha/(\alpha-1)}+1)+1}.
    \end{equation*}
    Therefore, by combining \eqref{eq:lower_bound_Ftc_1} and \eqref{eq:lower_bound_Ftc_2}, we have
    \begin{equation*}
        \sum_{i:a_i^*=0} \Delta_i w_{t,i}\geq 
        \frac{\Delta}{2^\alpha \qty(m^{\alpha/(\alpha-1)}+1)+1}=c_{s,2}(\Fdis)\frac{\Delta}{m^{\alpha/(\alpha-1)}},
    \end{equation*}
    where $c_{s,2}(\Fdis)=\frac{m^{\alpha/(\alpha-1)}}{2^\alpha \qty(m^{\alpha/(\alpha-1)}+1)+1}\in\qty(\frac{1}{2^{\alpha+1}+1},\frac{1}{2^\alpha})$.
\end{proof}

\begin{lemma}\label{lem:lower_bound_Dt}
    Let $\Delta\coloneqq\min_{i:a_i^*=0}\Delta_i$. 
    Then, there exist some distribution-dependent constants $c_{s,1}(\Pdis),c_{s,2}(\Pdis)\in\qty(0,1)$ such that
    \begin{enumerate}[label=(\roman*)]
        \item On $D_t$, $\sum_{i:a^*_i=0}\Delta_i w_{t,i}\geq c_{s,1}(\Pdis)\sum_{i:a^*_i=0}\frac{\Delta_i}{(\eta_t\underline{\hat{L}}_{t,i})^\alpha}$ and $w_{t,i}\geq 0.14$ for any $i$ with $a^*_i=1$.
        \item On $D_t^c$, $\sum_{i:a^*_i=0}\Delta_i w_{t,i}\geq c_{s,2}(\Pdis)\Delta/m^{\alpha/(\alpha-1)}$.
    \end{enumerate}
\end{lemma}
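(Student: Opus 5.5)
The plan is to mirror the proof of Lemma~\ref{lem:lower_bound_Ft}, replacing the Fr\'echet-specific MAB facts by their Pareto counterparts. Throughout we use the key property \eqref{eq:key_property}, which holds on $D_t$: $\eta_t\underline{\hat{L}}_{t,i}\le 0$ for $a^*_i=1$ and $\eta_t\underline{\hat{L}}_{t,i}\ge m^{1/(\alpha-1)}$ for $a^*_i=0$.

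\textbf{Part (i).} Let $j^*=\argmax_{j:a^*_j=1}\hat{L}_{t,j}$. Exactly as in \eqref{eq:lower_bound_key_Ft}, send the cumulative losses of the other optimal base-arms to $-\infty$ (Lemma~\ref{lem:phi_monotone}). This gives $w_{t,i}\ge\phi_i(\eta_t\hat{L}_t;1,\mathcal{B})$ for $\mathcal{B}=\{i:a^*_i=0\}\cup\{j^*\}$ and every $i\in\mathcal{B}$. The quantities $\underline{\hat{L}}_{t,i}$ are unchanged under this reduction.

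We then invoke the Pareto analogue of the MAB lemma of \citet{pmlr-v247-lee24a} (their lemma for the event defined through $(2^{1/\alpha}+\eta_t\underline{\hat{L}}_{t,i})^{-\alpha}$). As in the footnote to Lemma~\ref{lem:lower_bound_Ft}, that proof never uses the optimality of $j$ or the sign of $\Delta_i$. It yields $\sum_{i\in\mathcal{B}\setminus\{j^*\}}\Delta_i w_{t,i}\ge c_{s,1}(\Pdis)\sum\Delta_i/(\eta_t\underline{\hat{L}}_{t,i})^\alpha$ and $w_{t,j^*}\ge 0.14$, provided $\sum_{i:a^*_i=0}(2^{1/\alpha}+\eta_t\underline{\hat{L}}_{t,i})^{-\alpha}\le 1/(2^{1/\alpha}+1)^\alpha$. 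This condition follows from $D_t$ because $m^{1/(\alpha-1)}\ge1$.

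If we prefer a self-contained argument, write $\phi_i$ via \eqref{eq:definition_phi_extension} with $F(x)=1-x^{-\alpha}$. Use $\prod F\ge 1-\sum(1-F)$ on $z\ge 2^{1/\alpha}$, so that the product of the other CDFs is at least $1/2$ there. Integrating $f(z+\eta_t\underline{\hat{L}}_{t,i})$ over $z\ge 2^{1/\alpha}$ then gives $(2^{1/\alpha}+\eta_t\underline{\hat{L}}_{t,i})^{-\alpha}\gtrsim(\eta_t\underline{\hat{L}}_{t,i})^{-\alpha}$, since $\eta_t\underline{\hat{L}}_{t,i}\ge1$.

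For $w_{t,j^*}$, the other arms' total tail mass satisfies $\sum_{i}(1-F(z+\eta_t\underline{\hat{L}}_{t,i}))\le\sum_i(z+\eta_t\underline{\hat{L}}_{t,i})^{-\alpha}$. This is controlled by the $D_t$ condition once $z\ge 2^{1/\alpha}$, so $w_{t,j^*}\ge(1-F(2^{1/\alpha}))\cdot(\text{product bound})$. Computing this gives a constant near $0.14$. Finally, $w_{t,i}\ge w_{t,j^*}$ for the remaining optimal arms, since $\phi$ is monotone in the loss ordering.

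\textbf{Part (ii).} Split $D_t^c=D^c_{t,1}\cup D^c_{t,2}$. Here $D^c_{t,1}$ is the event that some suboptimal $j_0$ has $\eta_t\underline{\hat{L}}_{t,j_0}\le 0$, and $D^c_{t,2}$ is its complement within $D_t^c$.

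On $D^c_{t,1}$, the two-arm reduction of \eqref{eq:lower_bound_Ftc_1} is distribution-free and gives $\Delta/2$.

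On $D^c_{t,2}$, reduce to the MAB over $\mathcal{B}$ as in \eqref{eq:lower_bound_Ftc_2_intermediate}. Then lower bound $\sum_{i:a^*_i=0}\phi_i$ by the probability that the maximum of $r_i-\eta_t\underline{\hat{L}}_{t,i}$ over suboptimal arms exceeds $r_{j^*}$. Intersect with the event $r_{j^*}\le 2^{1/\alpha}$, which has probability $1/2$. This gives at least $\tfrac12\bigl(1-\prod_i F(2^{1/\alpha}+\eta_t\underline{\hat{L}}_{t,i})\bigr)$. Using $1-\prod(1-p_i)\ge \frac{\sum p_i}{1+\sum p_i}$ together with the violated $D_t$ condition, this is at least of order $1/m^{\alpha/(\alpha-1)}$. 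Multiplying by $\Delta$ and taking the minimum with $\Delta/2$ gives $c_{s,2}(\Pdis)$.

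\textbf{Main obstacle.} The main obstacle is the explicit constant $0.14$ and matching the $D_t$ threshold, which requires tracking the $2^{1/\alpha}$ shifts carefully in the Pareto integrals.
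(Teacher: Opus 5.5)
Your proposal is correct. Part (i) follows the paper's argument: send the other optimal losses to $-\infty$ to reduce to the MAB over $\{i:a^*_i=0\}\cup\{j^*\}$, note that $D_t$ lies inside the event of Lemma~22 of \citet{pmlr-v247-lee24a} because $m^{1/(\alpha-1)}\ge 1$, import both conclusions, and finish with $w_{t,i}\ge w_{t,j^*}$. The $D^c_{t,1}$ case also matches the paper.

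You take a genuinely different route on $D^c_{t,2}$. The paper uses the integral representation of $\phi_i$ and the bound $\prod F\ge\exp(-\sum(1-F)/F)$. It restricts to $z\ge 2^{1/\alpha}$ to extract the factor $e^{-1}$ and $(1-F)/F\le 2(1-F)$, then integrates to obtain $\Delta\frac{e^{-1}}{2}\bigl(1-\exp(-2\sum_j(1-F(2^{1/\alpha}+\eta_t\underline{\hat{L}}_{t,j})))\bigr)$. You instead observe that the total suboptimal mass in the reduced MAB is $\sP[\max_i(r_i-\eta_t\underline{\hat{L}}_{t,i})>r_{j^*}]$. You lower bound it by the product event $\{r_{j^*}\le 2^{1/\alpha}\}\cap\{\max_i(r_i-\eta_t\underline{\hat{L}}_{t,i})>2^{1/\alpha}\}$, which by independence gives $\frac12\bigl(1-\prod_i F(2^{1/\alpha}+\eta_t\underline{\hat{L}}_{t,i})\bigr)$. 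Then $1-\prod(1-p_i)\ge\frac{\sum p_i}{1+\sum p_i}$ and the violated $D_t$ condition finish it. Your version is shorter, avoids the exponential manipulations, and gives a slightly better prefactor ($\frac12$ rather than $\frac{e^{-1}}{2}$). The paper's version has the merit of paralleling the Fr\'echet computation in Lemma~\ref{lem:lower_bound_Ft} step by step. To close your argument, state explicitly $(2^{1/\alpha}+m^{1/(\alpha-1)})^\alpha\le 2^{\alpha-1}(2+m^{\alpha/(\alpha-1)})$, which turns your bound into $c\,\Delta/m^{\alpha/(\alpha-1)}$ with $c$ depending only on $\alpha$.

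Your optional self-contained route for (i) also works, but one constant is off. For $z\ge 2^{1/\alpha}$ you have $F(z)\ge\frac12$. The suboptimal product is at least $1-(2^{1/\alpha}+1)^{-\alpha}\ge\frac23$, using $(2^{1/\alpha}+1)^\alpha\ge 3$ for $\alpha>1$. So the product of the other CDFs is at least $\frac13$, not $\frac12$; this is harmless. The same computation gives $w_{t,j^*}\ge\frac12\cdot\frac23=\frac13>0.14$, so the constant $0.14$ you flag as the main obstacle is not actually one.
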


\begin{proof}
    Recall that on $D_t$, we have $\eta_t\underline{\hat{L}}_{t,i}\leq 0$ for any $i$ with $a^*_i=1$ and $\eta_t\underline{\hat{L}}_{t,i}> m^{1/(\alpha-1)}$ for any $i$ with $a^*_i=0$. 
    Similarly to the proof of Lemma~\ref{lem:lower_bound_Ft}, by letting $j^*=\argmax_{j:a^*_j=1}\hat{L}_{t,j}$ so that $\hat{L}_{t,j^*}$ is the $m$-th smallest cumulative loss among $[d]$ on $D_t$, for any $i$ with $a^*_i=0$ or $i=j^*$ we have
    \begin{equation}\label{eq:lower_bound_key_Dt}
        w_{t,i}
        =
        \phi_{i}(\eta_t\hat{L}_t;m,[d])
        \geq
        \phi_i(\eta_t\hat{L}_t;1,\qty{i:a_i^*=0}\cup\qty{j^*}),
    \end{equation}
    where the value of $\underline{\hat{L}}_{t,i}$ appearing in the integral expression of $\phi_{i}(\eta_t\hat{L}_t;m,[d])$ is the same as that in $\phi_i(\eta_t\hat{L}_t;1,\qty{i:a_i^*=0}\cup\qty{j^*})$, both of which take the value $\underline{\hat{L}}_{t,i}=\hat{L}_{t,i}-\hat{L}_{t,j^*}$. 
    In addition, one can see that $\phi_i(\eta_t\hat{L}_t;1,\qty{i:a_i^*=0}\cup\qty{j^*})$ is exactly the arm-selection probability of arm $i$ among $d-m+1$ arms $\{j: j=i\text{ or }j:a^*_j=0\}$ in MAB setting with the same cumulative loss vector.
    Note that since $F\qty(2^{\frac{1}{\alpha}}+m^{1/(\alpha-1)})\geq F\qty(2^{\frac{1}{\alpha}}+1)$, the event $D_t$ defined here satisfies
    \begin{multline*}
        D_t
        =
        \qty{\sum_{i:a_i^*=0} \frac{1}{(2^{\frac{1}{\alpha}}+\eta_t\underline{\hat{L}}_{t,i})^\alpha}
        \leq 
        \frac{1}{(2^{\frac{1}{\alpha}}+m^{1/(\alpha-1)})^\alpha}}
        \\
        \subset\qty{\sum_{i:a_i^*=0} \frac{1}{(2^{\frac{1}{\alpha}}+\eta_t\underline{\hat{L}}_{t,i})^\alpha}
        \leq 
        \frac{1}{(2^{\frac{1}{\alpha}}+1)^\alpha}}.
    \end{multline*}
    Here, the RHS is actually the event considered in Lemma~22 of \citet{pmlr-v247-lee24a} for MAB setting.
    Then, on this event, 
    we apply the lower bounds established in their Lemma~22, which we restate in our notation as follows.

    Lemma~22 of \citet{pmlr-v247-lee24a} shows that, in the case of $m=1$ where $\|a^*\|_1=1$, by letting base-arm $j^*$ be such that
    $a^*_{j^*}=1$, we have
    \begin{equation}\label{eq:lower_bounds_mab_Dt}
        \sum_{i:a^*_i=0}\Delta_i w_{t,i}\geq c_{s,1}(\Pdis)\sum_{i:a_i^*=0}\frac{\Delta_i}{(\eta_t\underline{\hat{L}}_{t,i})^\alpha}
        \quad\text{and}\quad
        w_{t,j^*}\geq 0.14,
    \end{equation}
    where $c_{s,1}(\Pdis)$ is given as $\alpha/e(2^{1/\alpha}+1)^\alpha\approx\frac{e^{-1}}{2}$.
    
    Similarly to Lemma~\ref{lem:lower_bound_Ft}, as a direct consequence, we have the following two results on $F_t$ for $m$-set semi-bandits.
    First, we have
    \begin{align*}
        \sum_{i:a_i^*=0} \Delta_i w_{t,i}
        &\geq
        \sum_{i:a_i^*=0} \Delta_i \phi_i(\eta_t\hat{L}_t;1,\qty{i:a_i^*=0}\cup\qty{j^*})\tag{by \eqref{eq:lower_bound_key_Dt}}\\
        &\geq c_{s,1}(\Pdis)\sum_{i:a_i^*=0}\frac{\Delta_i}{(\eta_t\underline{\hat{L}}_{t,i})^\alpha}.\tag{by the first inequality in \eqref{eq:lower_bounds_mab_Dt}}
    \end{align*}
    The second is that, for any $i$ with $a^*_i=1$,
    \begin{equation*}
        w_{t,i}\geq w_{t,j^*}\geq \phi_{j^*}(\eta_t\hat{L}_t;1,\qty{i:a_i^*=0}\cup\qty{j^*})\geq 0.14,
    \end{equation*}
    where the second inequality follows from \eqref{eq:lower_bound_key_Dt} and the last inequality follows from the second inequality in \eqref{eq:lower_bounds_mab_Dt}. 
    
    Next, we consider the lower bound on $D_t^c$. Define 
    \begin{align*}
        D_{t,1}^c &\coloneqq \qty{\exists j:a_j^*=0, \eta_t\underline{\hat{L}}_{t,j}\leq 0},\displaybreak[0]\\
        D_{t,2}^c &\coloneqq \Bigg\{\sum_{i:a_i^*=0} \frac{1}{(2^{\frac{1}{\alpha}}+\eta_t\underline{\hat{L}}_{t,i})^\alpha}
        >
        \frac{1}{(2^{\frac{1}{\alpha}}+m^{1/(\alpha-1)})^\alpha}
        \mbox{ and }\forall i:a_i^*=0,\,\eta_t\underline{\hat{L}}_{t,i}>0\Bigg\}.
    \end{align*}
    Then, we have
    \begin{equation*}
        D_t^c=D_{t,1}^c\cup D_{t,2}^c.
    \end{equation*}
    On $D_{t,1}^c$, by the same reasoning as in the proof of Lemma~\ref{lem:lower_bound_Ft}, which establishes \eqref{eq:lower_bound_Ftc_1}, we have
    \begin{equation}\label{eq:lower_bound_Dtc_1}
        \sum_{i:a_i^*=0} \Delta_i w_{t,i}\geq \frac{\Delta}{2}.
    \end{equation}
    On the other hand, on $D_{t,2}^c$, similarly to \eqref{eq:lower_bound_Ftc_2_intermediate} in the proof of Lemma~\ref{lem:lower_bound_Ft}, by letting $j^*=\argmax_{j:a^*_j=1}\hat{L}_{t,j}$ we have
    \begin{equation*}
        \sum_{i:a_i^*=0} w_{t,i}=\sum_{i:a_i^*=0}\phi_{i}(\eta_t\hat{L}_t;m,[d])\geq\sum_{i:a_i^*=0}\phi_i(\eta_t\hat{L}_t;1,\qty{i:a_i^*=0}\cup\qty{j^*}).
    \end{equation*}
    Here, for any $i$ with $a^*_i=0$, the value of $\underline{\hat{L}}_{t,i}$ appearing in the integral expression of $\phi_{i}(\eta_t\hat{L}_t;m,[d])$ is the same as that in $\phi_i(\eta_t\hat{L}_t;1,\qty{i:a_i^*=0}\cup\qty{j^*})$, where the latter is actually the arm-selection probability of arm $i$ among $d-m+1$ arms $\qty{i:a_i^*=0}\cup\qty{j^*}$ in MAB setting with the same cumulative loss vector.
    Similarly, following the proof of \citet{pmlr-v247-lee24a}, we give a lower bound for $\sum_{i:a^*_i=0}\Delta_i w_{t,i}$ on $D_{t,2}^c$ as follows. 
    The argument is essentially the same as theirs, except that a different lower bound for $\eta_t \underline{\hat{L}}_{t,i}$ given in \eqref{eq:key_property} is used here.
    
    \begin{align*}
        &\lefteqn{\sum_{i:a^*_i=0} \Delta_i w_{t,i}
        \geq 
        \sum_{i:a_i^*=0}\Delta_i\phi_i(\eta_t\hat{L}_t;1,\qty{i:a_i^*=0}\cup\qty{j^*})}
        \displaybreak[0]\\
        &=
        \int_{1}^{\infty}
        \sum_{i:a^*_i=0} \qty(\Delta_i f\qty(z + \eta_t \underline{\hat{L}}_{t,i})
        \prod_{j \neq i} F\qty(z + \eta_t \underline{\hat{L}}_{t,j}))\dd z
        \displaybreak[0]\\
        &\geq
        \int_{1}^{\infty}
        \qty(\sum_{i:a^*_i=0} \Delta_i f\qty(z + \eta_t \underline{\hat{L}}_{t,i}))
        \prod_{j \in \{i:a_i^*=0\}\cup\{j^*\}} F\qty(z + \eta_t \underline{\hat{L}}_{t,j})\dd z
        \displaybreak[0]\\
        &\geq
        \int_{1}^{\infty}
        \qty(\sum_{i:a^*_i=0} \Delta_i f\qty(z + \eta_t \underline{\hat{L}}_{t,i}))
        \exp\qty(- \sum_{j \in \{i:a_i^*=0\}\cup\{j^*\}}\frac{1 - F\qty(z + \eta_t \underline{\hat{L}}_{t,j})}
        {F\qty(z + \eta_t \underline{\hat{L}}_{t,j})})\dd z\numberthis\label{eq:lower_bound_Dtc_2_intermediate1}
        \displaybreak[0]\\
        &\geq
        \int_{1}^{\infty}\qty(\sum_{i:a^*_i=0} \Delta_i f\qty(z + \eta_t \underline{\hat{L}}_{t,i}))
        \exp\qty(- \sum_{j:a^*_j=0}\frac{1 - F\qty(z + \eta_t \underline{\hat{L}}_{t,j})}{F\qty(z + \eta_t \underline{\hat{L}}_{t,j})})\exp\qty(- \frac{1 - F(z)}{F(z)})\dd z,
    \end{align*}
    where \eqref{eq:lower_bound_Dtc_2_intermediate1} holds since $e^{-\frac{x}{1-x}}<1-x$ for $x< 1$. Then, similarly, we have
    \begin{align*}
        &
        \lefteqn{\int_{1}^{\infty}\qty(\sum_{i:a^*_i=0} \Delta_i f\qty(z + \eta_t \underline{\hat{L}}_{t,i}))
        \exp\qty(- \sum_{j:a^*_j=0}\frac{1 - F\qty(z + \eta_t \underline{\hat{L}}_{t,j})}{F\qty(z + \eta_t \underline{\hat{L}}_{t,j})})\exp\qty(- \frac{1 - F(z)}{F(z)})\dd z}\\
        &\geq
        \Delta\int_{2^{\frac{1}{\alpha}}}^{\infty}\qty(\sum_{i:a^*_i=0}  f\qty(z + \eta_t \underline{\hat{L}}_{t,i}))
        \exp\qty(- \sum_{j:a^*_j=0}\frac{1 - F\qty(z + \eta_t \underline{\hat{L}}_{t,j})}{F\qty(z + \eta_t \underline{\hat{L}}_{t,j})})\exp\qty(- \frac{1 - F(z)}{F(z)})\dd z
        \displaybreak[0]\\
        &\geq
        \Delta e^{-1}\int_{2^{\frac{1}{\alpha}}}^{\infty}\qty(\sum_{i:a^*_i=0}  f\qty(z + \eta_t \underline{\hat{L}}_{t,i}))
        \exp\qty(- 2\sum_{j:a^*_j=0}\qty(1 - F\qty(z + \eta_t \underline{\hat{L}}_{t,j})))\dd z\numberthis\label{eq:lower_bound_Dtc_2_intermediate2}
        \displaybreak[0]\\
        &=
        \Delta \frac{e^{-1}}{2}\qty(1-\exp\qty(- 2\sum_{j:a^*_j=0}\qty(1 - F\qty(2^{\frac{1}{\alpha}} + \eta_t \underline{\hat{L}}_{t,j}))))
        \displaybreak[0]\\
        &\geq
        \Delta \frac{e^{-1}}{2}\qty(1-\exp\qty(- 2\qty(1 - F\qty(2^{\frac{1}{\alpha}} + m^{1/(\alpha-1)})))),\numberthis\label{eq:lower_bound_Dtc_2}
    \end{align*}
    where \eqref{eq:lower_bound_Dtc_2_intermediate2}
    holds since $e^{-\frac{1-x}{x}}$ is increasing with respect to $x\in(0,1)$ and $F(z)\geq F(2^{\frac{1}{\alpha}})=1/2$ for any $z\geq 2^{\frac{1}{\alpha}}$.
    Combining the above two cases, we have
    \begin{equation*}
        \sum_{i:a_i^*=0} \Delta_i w_{t,i}\geq \Delta\min\qty{\frac{1}{2},\frac{e^{-1}}{2}\qty(1 - \exp\qty(-2\qty(1 - F\qty(2^{\frac{1}{\alpha}} + m^{1/(\alpha-1)}))))}.
    \end{equation*} 
    Here, we have
    \begin{equation*}
        1 - \exp\qty(-2\qty(1 - F\qty(2^{\frac{1}{\alpha}} + m^{1/(\alpha-1)})))
        =
        1-\exp\qty(-\frac{2}{\qty(2^{1/\alpha}+m^{1/(\alpha-1)})^\alpha}).
    \end{equation*}
    Note that, on the one hand,
    \begin{multline*}
        \Delta\qty(1-\exp\qty(-\frac{2}{\qty(2^{1/\alpha}+m^{1/(\alpha-1)})^\alpha}))\\
        \leq \Delta\qty(1-\exp\qty(-\frac{2}{(2^{1/\alpha}+1)^\alpha}))
        <\Delta\qty(1-e^{-2/3})<\frac{\Delta}{2}.
    \end{multline*}
    Therefore, by combining \eqref{eq:lower_bound_Dtc_1} and \eqref{eq:lower_bound_Dtc_2}, it suffices to give a lower bound for
    \begin{equation*}
        \sum_{i:a_i^*=0} \Delta_i w_{t,i}\geq
        \Delta\frac{e^{-1}}{2}\qty(1 - \exp\qty(-2\qty(1 - F\qty(2^{\frac{1}{\alpha}} + m^{1/(\alpha-1)})))).
    \end{equation*}
    On the other hand, there exists a constant $c_{s,2}(\Pdis)=\frac{2m^{\alpha/(\alpha-1)}}{2^{\alpha-1}(2+m^{\alpha/(\alpha-1)})+2}\in\qty(\frac{1}{3\cdot 2^{\alpha-2}+1},\frac{1}{2^{\alpha-2}})$ such that
    \begin{align*}
        &
        \Delta\qty(1-\exp\qty(-\frac{2}{\qty(2^{1/\alpha}+m^{1/(\alpha-1)})^\alpha}))\\
        &\geq
        \frac{2\Delta}{\qty(2^{1/\alpha}+m^{1/(\alpha-1)})^\alpha+2}\tag*{(by $1-e^{-x}\geq x/(1+x)$ for $x\geq 0$)}\\
        &\geq 
        \frac{2\Delta}{2^{\alpha-1}(2+m^{\alpha/(\alpha-1)})+2}
        \tag*{(by $\qty(\frac{x^{1/\alpha}+y^{1/\alpha}}{2})\leq \qty(\frac{x+y}{2})^{1/\alpha}$)}\\
        &=c_{s,2}(\Pdis)\frac{\Delta}{m^{\alpha/(\alpha-1)}},
    \end{align*}
    which concludes the proof.
\end{proof}

\subsection{Regret for the Optimal Action}\label{subsec:regret_optimal_action}
To apply the self-bounding technique, we need to express the regret of the optimal action in terms of the statistics of the remaining base-arms, i.e., those with $a^*_i=0$.
Compared to the analysis of single optimal arm in MAB problem studied by \citet{pmlr-v201-honda23a} and \citet{pmlr-v247-lee24a}, the analysis of optimal action is substantially more challenging, since the optimal action comprises multiple base-arms with inherent dependencies.

To tackle this challenge, we first establish an intermediate upper bound on the stability term associated with base-arms of the optimal action. 
This bound is then related to the remaining base-arms, thereby completing the argument.

It is worth noting that a closely related work by \citet{zhan2025follow}, focusing on \mbox{shape-$2$} Fr\'{e}chet distribution, develops a different line of analysis of the regret for the optimal action. 
In their method, the stability term associated with the optimal action is bounded by $O(\eta_t d \sum_{i:a^*_i=0}w_{t,i})$ with $\eta_t=c/\sqrt{t}$, which incurs an additional factor of $d$ in the second-order term of the regret bound. 
Furthermore, their method appears to be tailored to the specific form of Fr\'{e}chet distribution and does not readily extend to general 
Fr\'{e}chet-type distributions. 
Our method maintains the linear dependence on $d$ and instead incurs an additional factor of $m^{\alpha/(\alpha-1)}$. 
Moreover, our method actually leverages the common structure shared by Fr\'{e}chet-type distributions, allowing for further generalization in future studies.

\begin{lemma}\label{lem:optimal_action_add}
    Define $\delta_{\B}=\eta_t\max_{i:a^*_i=1}\hat{\ell}_{t,i}\sum_{i\in\B}e_i$ for any set $\B\subset[d]$. 
    Then, for any $\lambda\in\mathbb{R}^d$,  $\B\subsetneq \B^k\subset \qty{i:a^*_i=1}$ where 
    $\B^k\setminus \B=\{k\}$, on $F_t$ (resp. $D_t$) for $\dis=\Fdis$ (resp. $\dis=\Pdis$) it holds that
    \begin{multline*}
        \sum_{i \in \B}\phi_i(\eta_t\underline{\hat{L}}_t)-\sum_{i \in \B}\phi_i\qty(\eta_t\underline{\hat{L}}_t+\delta_\B)
        \\ \leq
        \sum_{i \in \B^k}\phi_i(\eta_t\underline{\hat{L}}_t)-\sum_{i \in \B^k}\phi_i\qty(\eta_t\underline{\hat{L}}_t+\delta_{\B^k})+\sum_{j:a_j^*=0}\frac{c_{s,3}(\dis)\eta_t}{(\eta_t\underline{\hat{L}}_{t,j})^\alpha}\max_{i:a^*_i=1}\hat{\ell}_{t,i},
    \end{multline*}
    where
    \begin{equation*}
        c_{s,3}(\dis)=
        \begin{cases}
            4/e^2, & \dis=\Fdis,\\
            \alpha, & \dis=\Pdis.
        \end{cases}
    \end{equation*}
\end{lemma}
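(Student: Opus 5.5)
The plan is to reduce the claim to a bound on how much $\phi_k$ can increase when the other optimal base-arms in $\B$ are shifted. Throughout, write $\lambda=\eta_t\underline{\hat{L}}_t$ and $\delta=\eta_t\max_{i:a^*_i=1}\hat{\ell}_{t,i}$, so that $\delta_\B=\delta\sum_{i\in\B}e_i$ and $\delta_{\B^k}=\delta_\B+\delta e_k$. Moving the $\B$-terms to the right-hand side, the statement becomes
\begin{equation*}
\sum_{i\in\B}\qty(\phi_i(\lambda+\delta_{\B^k})-\phi_i(\lambda+\delta_\B))
\le \phi_k(\lambda)-\phi_k(\lambda+\delta_{\B^k})+\sum_{j:a^*_j=0}\frac{c_{s,3}(\dis)\,\delta}{\lambda_j^{\alpha}}.
\end{equation*}

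\textbf{Step 1: mass conservation.} I would use $\sum_{i\in[d]}\phi_i(\cdot)=m$, which holds for every input. Going from $\lambda+\delta_\B$ to $\lambda+\delta_{\B^k}$ only increases coordinate $k$. Hence $\phi_i$ is nondecreasing for every $i\neq k$ (the monotonicity in Lemma~\ref{lem:phi_monotone}), and the total increase over $i\neq k$ equals the decrease of $\phi_k$. Dropping the nonnegative increases of the arms outside $\B^k$, this gives
\begin{equation*}
\sum_{i\in\B}\qty(\phi_i(\lambda+\delta_{\B^k})-\phi_i(\lambda+\delta_\B))\le \phi_k(\lambda+\delta_\B)-\phi_k(\lambda+\delta_{\B^k}).
\end{equation*}
It therefore suffices to show
\begin{equation*}
\phi_k(\lambda+\delta_\B)-\phi_k(\lambda)\le \delta\sum_{j:a^*_j=0}\frac{c_{s,3}(\dis)}{\lambda_j^\alpha}.
\end{equation*}
This is the statement that worsening the other optimal arms by $\delta$ barely helps arm $k$.

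\textbf{Step 2: derivative of $\phi_k$.} I would write the difference as $\int_0^\delta\sum_{i\in\B}\frac{\partial\phi_k}{\partial\lambda_i}(\lambda+x\sum_{i'\in\B}e_{i'})\,\dd x$. The mixed derivative $\partial\phi_k/\partial\lambda_i$ is a boundary density: with $r_k-\lambda_k=r_i-\lambda_i$, arm $k$ must sit exactly at rank $m$ and arm $i$ at rank $m+1$. On $F_t$ (resp.\ $D_t$) every optimal arm has $\lambda\le0$, and \eqref{eq:key_property} holds throughout the segment, since shifting $\B$ by $x\ge0$ only moves optimal arms. At such a configuration fewer than $m-1$ optimal arms other than $k$ lie above $k$, because $i$ is below $k$. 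So at least one suboptimal arm $j$ must satisfy $r_j-\lambda_j\ge r_k-\lambda_k$. Summing over $i\in\B$ and conditioning on which suboptimal $j$ is in the top, I expect to obtain
\begin{equation*}
\sum_{i\in\B}\frac{\partial\phi_k}{\partial\lambda_i}\le\sum_{j:a^*_j=0}\int_\nu^\infty\qty(1-F(z+\lambda_j))\,g(z)\,\dd z,
\end{equation*}
where $g$ is the density of the common boundary level. A convenient choice of $g$ is the density of the $m$-th largest among the shifted optimal perturbations, integrated against $\dd F$ of $k$.

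\textbf{Step 3: constants.} I would bound $1-F(z+\lambda_j)\le(z+\lambda_j)^{-\alpha}\le\lambda_j^{-\alpha}$ and bound the remaining integral of the boundary density by a sup-type constant. For Pareto this uses $f\le f(1)=\alpha$, which yields $c_{s,3}=\alpha$. For Fréchet I would use $\sup_x f(x)\,x\cdots$-type bounds; the value $4/e^2$ should come from maximizing an expression like $x^2e^{-x}$ at $x=2$, i.e.\ $\sup_z z^{-2\alpha}\,\alpha\,e^{-1/z^\alpha}\cdots$.

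\textbf{Main obstacle.} I expect Step 2 to be the hardest: making the boundary-density bound rigorous, so that the sum over $i\in\B$ contributes only one factor of density rather than $|\B|$ factors. My first attempt would be the union over $i$ of the disjoint events ``$i$ is the arm immediately below $k$''. These events partition the crossing, so their densities add to at most a single density of the gap between $k$ and the next arm.
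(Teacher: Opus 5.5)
Your proof is correct and takes a genuinely different route from the paper. The paper never isolates a single arm. It writes the difference of the two sides as the expectation of the indicator expression \eqref{target_one} and shows that this expression is negative only on a configuration \eqref{opt_sto3}--\eqref{opt_sto6}. In that configuration a crossing arm $i_k\in\mathcal{B}$ is uniquely determined by the coordinates outside $\{j,k\}$ (cf.\ \eqref{opt_sto9}), some suboptimal $j$ lies in the top $m$ at $\lambda$, and $r_k$ falls in a window \eqref{opt_sto7} of width $\delta$. Independence then gives $\mathbb{P}[r_j\ge\nu+\lambda_j]\cdot\delta\sup f$ without any differentiation.

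Your Step~1 (mass conservation plus Lemma~\ref{lem:phi_monotone}) instead collapses the whole $\mathcal{B}$-sum into the nonnegative single-arm gain $\phi_k(\lambda+\delta_{\mathcal{B}})-\phi_k(\lambda)$. This is a more transparent statement (penalizing the other optimal arms barely helps $k$), and it removes the case analysis. The price is twofold. You bound a slightly larger event, since the paper's event also requires $k$ to be outside the top $m$ again at $\lambda+\delta_{\mathcal{B}^k}$. And you need a derivative/order-statistic computation, rather than a combinatorial uniqueness argument, to avoid a factor $|\mathcal{B}|$. Both routes give the same bound.

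When you write out Steps~2--3, two details need fixing.

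\textbf{The choice of $g$.} The density $g$ must depend on $j$ and involve all remaining arms, not only the optimal ones. ``The $m$-th largest among the optimal perturbations'' is not meaningful, since only $m-1$ optimal arms differ from $k$. Let $\mu=\lambda+x\sum_{i\in\mathcal{B}}e_i$ and $N_S(z)=|\{l\notin S: r_l-\mu_l>z\}|$. Then
\[
\sum_{i\in\mathcal{B}}\frac{\partial\phi_k}{\partial\mu_i}(\mu)=\int_{\nu-\lambda_k}^\infty f(z+\lambda_k)\sum_{i\in\mathcal{B}}f(z+\mu_i)\,\mathbb{P}\bigl[N_{\{i,k\}}(z)=m-1\bigr]\,\mathrm{d}z .
\]
Only $m-2$ optimal arms lie outside $\{i,k\}$, so
\[
\mathbb{P}[N_{\{i,k\}}(z)=m-1]\le\sum_{j:a^*_j=0}\bigl(1-F(z+\lambda_j)\bigr)\,\mathbb{P}[N_{\{i,j,k\}}(z)=m-2].
\]
Summing $f(z+\mu_i)\,\mathbb{P}[N_{\{i,j,k\}}(z)=m-2]$ over $i$ gives at most the density $g_j$ of the $(m-1)$-th largest of $\{r_l-\mu_l\}_{l\neq j,k}$. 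Hence $\int f(z+\lambda_k)\,g_j(z)\,\mathrm{d}z\le\sup f$, which is exactly your ``disjoint crossing events'' idea made precise. Along the segment you only use $\mu_k=\lambda_k\le 0$ (so $z\ge\nu$) and $\mu_j=\lambda_j$ for suboptimal $j$. The shifted $\mathcal{B}$-coordinates can become positive, so do not claim that \eqref{eq:key_property} persists.

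\textbf{The constant.} In both proofs $c_{s,3}=\sup_x f(x)$. For Pareto this is $\alpha$. For Fr\'echet it is $\alpha(1+1/\alpha)^{1+1/\alpha}e^{-(1+1/\alpha)}$, which equals $4/e^2$ only at $\alpha=1$; your $x^2e^{-x}$ heuristic is that case. The paper's stated $4/e^2$ has the same defect. This affects only constants, not rates.
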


\begin{proof}
    Firstly, by rearranging the terms we obtain
\begin{multline*}
    \qty(\sum_{i \in \B^k}\phi_i(\eta_t\underline{\hat{L}}_t)-\sum_{i \in \B^k}\phi_i(\eta_t\underline{\hat{L}}_t+\delta_{\B^k}))-\qty(\sum_{i \in \B}\phi_i(\eta_t\underline{\hat{L}}_t)-\sum_{i \in \B}\phi_i(\eta_t\underline{\hat{L}}_t+\delta_\B))\\
    =\qty(\phi_k(\eta_t\underline{\hat{L}}_t)-\phi_k(\eta_t\underline{\hat{L}}_t+\delta_{\B^k}))-
    \sum_{i \in \B}\qty(\phi_i(\eta_t\underline{\hat{L}}_t+\delta_{\B^k})-\phi_i(\eta_t\underline{\hat{L}}_t+\delta_\B)).
\end{multline*}
Then, we have
\begin{align*}
    \phi_k(\eta_t\underline{\hat{L}}_t)-\phi_k(\eta_t\underline{\hat{L}}_t+\delta_{\B^k})
    &=
    \sP\qty[\sigma_k(r-\eta_t\underline{\hat{L}}_t)\leq m]-
    \sP\qty[\sigma_k(r-(\eta_t\underline{\hat{L}}_t+\delta_{\B^k}))\leq m
    ]\\
    &=
    \sP\qty[
    \sigma_k(r-\eta_t\underline{\hat{L}}_t)\leq m,\,
    \sigma_k(r-(\eta_t\underline{\hat{L}}_t+\delta_{\B^k}))>m].
\end{align*}
Similarly, we have
\begin{equation*}
    \phi_i(\eta_t\underline{\hat{L}}_t+\delta_{\B^k})-\phi_i(\eta_t\underline{\hat{L}}_t+\delta_\B)
    =
    \sP\qty[
    \sigma_i(r-(\eta_t\underline{\hat{L}}_t+\delta_{\B^k}))\leq m,\,
    \sigma_i(r-(\eta_t\underline{\hat{L}}_t+\delta_\B))>m].
\end{equation*}
Then we have
\begin{align*}
    \lefteqn{\qty(\phi_k(\eta_t\underline{\hat{L}}_t)-\phi_k(\eta_t\underline{\hat{L}}_t+\delta_{\B^k}))-
    \sum_{i \in \B}\qty(\phi_i(\eta_t\underline{\hat{L}}_t+\delta_{\B^k} )-\phi_i(\eta_t\underline{\hat{L}}_t+\delta_\B))}\\
    &=
    \E\bigg[\ind\qty[\sigma_k(r-\eta_t\underline{\hat{L}}_t)\leq m,\,\sigma_k(r-(\eta_t\underline{\hat{L}}_t+\delta_{\B^k}))
    >m]\\
    &\hspace{3cm}-
    \sum_{i\in \B}\ind\qty[\sigma_i(r-(\eta_t\underline{\hat{L}}_t+\delta_{\B^k}))\leq m,\,\sigma_i(r-(\eta_t\underline{\hat{L}}_t+\delta_\B))>m]\bigg].\numberthis\label{eq:expect_diff}
\end{align*}
Here, note that $r-(\eta_t\underline{\hat{L}}_t+\delta_{\B^k})$ and $r-(\eta_t\underline{\hat{L}}_t+\delta_\B)$ are different only for one element.
Therefore, the event
\begin{equation*}
    \qty{\sigma_i(r-(\eta_t\underline{\hat{L}}_t+\delta_{\B^k}))\leq m,\,\sigma_i(r-(\eta_t\underline{\hat{L}}_t+\delta_\B))> m}
\end{equation*}
can occur for at most one $i\in \B$,
which we will denote (if exists) by $i_k\in \B$ and must satisfy
\begin{align}
&\sigma_{i_k}(r-\eta_t\underline{\hat{L}}_t-\delta_{\B^k})= m,\label{opt_sto3}\\
&\sigma_{i_k}(r-\eta_t\underline{\hat{L}}_t-\delta_\B)= m+1.\label{opt_sto4}\\
&\sigma_{k}(r-\eta_t\underline{\hat{L}}_t-\delta_{\B^k})\ge m+1,\label{opt_sto5}\\
&\sigma_{k}(r-\eta_t\underline{\hat{L}}_t-\delta_\B)\le m.\label{opt_sto6}
\end{align}

Accordingly, we can see that if
\begin{multline}\label{target_one}
\ind\qty[\sigma_k(r-\eta_t\underline{\hat{L}}_t)\leq m,\,\sigma_k(r-\eta_t\underline{\hat{L}}_t-\delta_{\B^k})>m]
\\
-\sum_{i\in \B}\ind\qty[\sigma_i(r-\eta_t\underline{\hat{L}}_t-\delta_{\B^k})\leq m,\,\sigma_i(r-\eta_t\underline{\hat{L}}_t-\delta_\B)> m]
\end{multline}
is negative, it must be $-1$, which can occur only when there exists $i_k\in \B$ satisfying
\begin{align}
&\sigma_k(r-\eta_t\underline{\hat{L}}_t)> m \;\mbox{ or }\;\sigma_k(r-\eta_t\underline{\hat{L}}_t-\delta_{\B^k})\le m,\label{opt_sto2}
\end{align}
and \eqref{opt_sto3}--\eqref{opt_sto6}.
Here, \eqref{opt_sto5} and \eqref{opt_sto2} implies
$\sigma_k(r-\eta_t\underline{\hat{L}}_t)> m$. 
Therefore, by $a_k^*=1$ there must exist
$j$ such that $a_j^*=0$ and
\begin{align}
\sigma_j(r-\eta_t\underline{\hat{L}}_t)\le m,
\label{opt_km}
\end{align}
which trivially implies
\begin{align}
r_{t,j}-\eta_t\hat{L}_{t,j}
\geq
\nu-\eta_t\max_{i:a_i^*=0}\hat{L}_{t,i}.
\label{opt_rj}
\end{align}
Since $\sigma_j(r-\eta_t\underline{\hat{L}}_t)\le \sigma_j(r-\eta_t\underline{\hat{L}}_t-\delta_{\B^k})$ by $j\notin \B^k$ from $a_j^*=0$,
we also see from \eqref{opt_sto3} and \eqref{opt_km} that
\begin{align}
\sigma_j(r-\eta_t\underline{\hat{L}}_t-\delta_{\B^k})< m.
\label{opt_km2}
\end{align}
By \eqref{opt_sto3} and \eqref{opt_sto5}
and
by \eqref{opt_sto4} and \eqref{opt_sto6},
we have
\begin{align}
(r-\eta_t\underline{\hat{L}}_t-\delta_{\B^k})_{i_k}>(r-\eta_t\underline{\hat{L}}_t-\delta_{\B^k})_{k}
\;\mbox{ and }\;
(r-\eta_t\underline{\hat{L}}_t-\delta_{\B})_{i_k}<(r-\eta_t\underline{\hat{L}}_t-\delta_{\B})_{k},
\n
\end{align}
respectively.
This is equivalent to
$0<(r-\eta_t\underline{\hat{L}}_t)_{i_k}-(r-\eta_t\underline{\hat{L}}_t)_{k}< \eta_t\max_{i:a^*_i=1}\hat{\ell}_{t,i}$
or equivalently
\begin{align}
r_k\in
\left[
r_{i_k}-\eta_t\underline{\hat{L}}_{t,i_k}+\eta_t\underline{\hat{L}}_{t,k}-\eta_t\max_{i:a^*_i=1}\hat{\ell}_{t,i}
,\;
r_{i_k}-\eta_t\underline{\hat{L}}_{t,i_k}+\eta_t\underline{\hat{L}}_{t,k}
\right].
\label{opt_sto7}
\end{align}

Finally, by \eqref{opt_sto3}, \eqref{opt_sto5} and \eqref{opt_km2} we have
\begin{align}
\sigma_{i_k}(r-\eta_t\underline{\hat{L}}_t+\delta_{\B^k},[d]\setminus\qty{j,k})= m-1.
\label{opt_sto9}
\end{align}

Note that \eqref{opt_sto9} can hold for at most one $i_k\in \B\setminus\{j\}$, and it is determined only by
$(r-\eta_t\underline{\hat{L}}_t+\delta_{\B^k})_{[d]\setminus \{j,k\}}$.
Therefore, from the above discussions we have
\begin{align}
-\eqref{target_one}
&\le
\sum_{j: a_j^*=0}
\ind\qty[\eqref{opt_rj},\,\mbox{$i_k$ s.t. $\eqref{opt_sto9}$ exists}, \eqref{opt_sto7}]
\n
\end{align}
and by taking the expectation with respect to $r_t$ we have
\begin{equation}\label{eq:target_one_as_prob}
\E[-\eqref{target_one}]
\le
\sum_{j: a_j^*=0}
\sP\left[\eqref{opt_rj}\right]\cdot
\E\Big[
\ind\qty[\mbox{$i_k$ s.t. $\eqref{opt_sto9}$ exists}]
\;\sP\qty[\eqref{opt_sto7}|\{r_{t,i}\}_{i\in [d]\setminus\{j,k\}}]
\Big].
\end{equation}
Here, on $F_t$ or $D_t$ we have
\begin{equation}\label{eq:first_pr}
    \sP\qty[\eqref{opt_rj}]=\sP\qty[r_{t,j}\geq \nu+\eta_t\underline{\hat{L}}_{t,j}]
    \leq\frac{1}{(\eta_t\underline{\hat{L}}_{t,j})^\alpha},
\end{equation}
where the last inequality holds since
\begin{equation*}
    \sP\qty[r_{t,j}\geq \eta_t\underline{\hat{L}}_{t,j}]=
    1-F\qty(\eta_t\underline{\hat{L}}_{t,j})=1-\exp\qty{-\frac{1}{(\eta_t\underline{\hat{L}}_{t,j})^\alpha}}\leq\frac{1}{(\eta_t\underline{\hat{L}}_{t,j})^\alpha}
\end{equation*}
for $\dis=\Fdis$, and
\begin{equation*}
    \sP\qty[r_{t,j}\geq 1+\eta_t\underline{\hat{L}}_{t,j}]=
    1-F\qty(1+\eta_t\underline{\hat{L}}_{t,j})=\frac{1}{(1+\eta_t\underline{\hat{L}}_{t,j})^\alpha}<\frac{1}{(\eta_t\underline{\hat{L}}_{t,j})^\alpha}
\end{equation*}
for $\dis=\Pdis$.

Then, by letting $f_{k}(x)$ be the pdf of $r_{k}$,
we have
\begin{align}\label{eq:second_pr}
\lefteqn{
\sP\qty[\eqref{opt_sto7}|\{r_{t,i}\}_{i\in [d]\setminus\{j,k\}}]
}\nn
&=
\sP\left[
\left.r_k\in
\left[
r_{i_k}-\eta_t\underline{\hat{L}}_{t,i_k}+\eta_t\underline{\hat{L}}_{t,k}-\eta_t\max_{i:a^*_i=1}\hat{\ell}_{t,i}
,\;
r_{i_k}-\eta_t\underline{\hat{L}}_{t,i_k}+\eta_t\underline{\hat{L}}_{t,k}
\right]
\right|
\{r_{t,i}\}_{i\in [d]\setminus\{j,k\}}
\right]
\nn
&\le
\eta_t\max_{i:a^*_i=1}\hat{\ell}_{t,i}
\max_{x\geq \nu}f_k(x)
\leq
\begin{cases}
    \frac{4}{e^2}\eta_t\max_{i:a^*_i=1}\hat{\ell}_{t,i}, & \dis=\Fdis,\\
    \alpha\eta_t\max_{i:a^*_i=1}\hat{\ell}_{t,i}, & \dis=\Pdis,
\end{cases}
\end{align}
where $f_k(x)=f(x)$ since the condition is independent of $r_k$.

The proof is completed by substituting \eqref{eq:first_pr} and \eqref{eq:second_pr} into \eqref{eq:target_one_as_prob} and combining the resulting expression with \eqref{eq:expect_diff}.
\end{proof}

By applying Lemma~\ref{lem:optimal_action_add} repeatedly, we can bound the stability term associated with base-arms of the optimal action $a^*$ as follows.
\begin{lemma}\label{lem:optimal_action_combine}
    On $F_t$ (resp. $D_t$) for $\dis=\Fdis$ (resp. $\dis=\Pdis$) we have
    \begin{multline*}
        \E\qty[\sum_{i:a_i^*=1}\hat{\ell}_{t,i}\qty(\phi_i(\eta_t\hat{L}_t)-\phi_i(\eta_t(\hat{L}_t+\hat{\ell}_t)))\m\hat{L}_t]
        \\\leq
        \E\qty[\max_{i:a_i^*=1}\hat{\ell}_{t,i}\sum_{i:a_i^*=1}\qty(\phi_i(\eta_t\hat{L}_t)-\phi_i(\eta_t(\hat{L}_t+\max_{i:a_i^*=1}\hat{\ell}_{t,i}\sum_{i:a_i^*=1}e_i)))\m\hat{L}_t]
        +\sum_{j:a_j^*=0}\frac{c_{s,4}(\dis)}{\underline{\hat{L}}_{t,j}},
    \end{multline*}
    where 
    \begin{equation*}
        c_{s,4}(\dis)=
        \begin{cases}
            4/e, & \dis=\Fdis,\\
            7.2\alpha, & \dis=\Pdis.
        \end{cases}
    \end{equation*}
\end{lemma}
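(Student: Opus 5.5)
The plan is to reduce the left-hand side to the ``all optimal coordinates shifted by the same amount'' form. We do this in two moves: first discard the coordinates of $\hat{\ell}_t$ outside $a^*$ by monotonicity, then reach the uniform shift $\delta$ through a telescoping chain of applications of Lemma~\ref{lem:optimal_action_add}. Write $\lambda=\eta_t\hat{L}_t$ and $M=\max_{i:a^*_i=1}\hat{\ell}_{t,i}$. Since $\phi_i(\lambda)$ depends on $\lambda$ only through differences, we may work with $\eta_t\underline{\hat{L}}_t$ as in Lemma~\ref{lem:optimal_action_add}.

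\textbf{Step 1 (discard non-optimal coordinates).} Increasing $\lambda_j$ for $j$ with $a^*_j=0$ can only increase $\phi_i$ for $i\neq j$ (Lemma~\ref{lem:phi_monotone}). Hence, for every $i$ with $a^*_i=1$,
\[
\phi_i\bigl(\eta_t(\hat{L}_t+\hat{\ell}_t)\bigr)\ge \phi_i\Bigl(\lambda+\eta_t\textstyle\sum_{k:a^*_k=1}\hat{\ell}_{t,k}e_k\Bigr),
\]
so it suffices to bound $\sum_{i:a^*_i=1}\hat{\ell}_{t,i}\bigl(\phi_i(\lambda)-\phi_i(\lambda+\eta_t\hat{\ell}^{*})\bigr)$, where $\hat{\ell}^*$ denotes $\hat{\ell}_t$ restricted to the optimal arms.

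\textbf{Step 2 (Abel summation over nested sets).} Sort the optimal arms as $k_1,\dots,k_m$ with $\hat{\ell}_{t,k_1}\ge\dots\ge\hat{\ell}_{t,k_m}$, and let $\B_s=\{k_1,\dots,k_s\}$. Using again that raising other coordinates raises $\phi_i$, together with the fact that $\phi_i$ is decreasing in its own coordinate, each term $\phi_i(\lambda)-\phi_i(\lambda+\eta_t\hat{\ell}^*)$ for $i=k_s$ is bounded by the corresponding term with shift $\eta_t\hat{\ell}_{t,k_s}\sum_{k\in\B_s}e_k$. Summation by parts in the weights $\hat{\ell}_{t,k_s}-\hat{\ell}_{t,k_{s+1}}\ge0$ then expresses the sum as a nonnegative combination of quantities $\sum_{i\in\B_s}\bigl(\phi_i(\lambda)-\phi_i(\lambda+\delta_{\B_s})\bigr)$. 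To each such quantity we apply Lemma~\ref{lem:optimal_action_add} along the chain $\B_s\subsetneq\B_{s+1}\subsetneq\dots\subsetneq\{i:a^*_i=1\}$. This yields the main term $M\sum_{i:a^*_i=1}\bigl(\phi_i(\lambda)-\phi_i(\lambda+\delta)\bigr)$ plus an error of the form
\[
C\,\eta_t M\cdot\bigl(\text{number of chain steps}\bigr)\cdot\sum_{j:a^*_j=0}\frac{1}{(\eta_t\underline{\hat{L}}_{t,j})^{\alpha}}\cdot(\text{weights}).
\]
The Abel weights telescope to at most $M$, so each chain step costs at most $c_{s,3}\eta_t M^2\sum_j(\eta_t\underline{\hat{L}}_{t,j})^{-\alpha}$.

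\textbf{Step 3 (expectation and the event $F_t$/$D_t$).} Taking conditional expectation, we use \eqref{eq:square_expectation} to get $\E[\hat{\ell}_{t,i}^2\mid\hat{L}_t]\le 2/w_{t,i}$. Lemmas~\ref{lem:lower_bound_Ft} and \ref{lem:lower_bound_Dt} give $w_{t,i}\ge 1/e$ (resp.\ $0.14$) for optimal $i$, so $\E[M^2\mid\hat{L}_t]$ is controlled. Finally we convert $\eta_t(\eta_t\underline{\hat{L}}_{t,j})^{-\alpha}=\underline{\hat{L}}_{t,j}^{-1}(\eta_t\underline{\hat{L}}_{t,j})^{1-\alpha}$ and use \eqref{eq:key_property}, namely $(\eta_t\underline{\hat{L}}_{t,j})^{\alpha-1}\ge m$, to absorb the polynomial-in-$m$ factors, which gives the term $c_{s,4}/\underline{\hat{L}}_{t,j}$.

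\textbf{Main obstacle.} The delicate point is the bookkeeping of $m$-factors. Bounding each $\hat{\ell}_{t,i}$ by $M$ separately and chaining from every singleton would give $O(m^2)$ applications of Lemma~\ref{lem:optimal_action_add}, too many for an $m$-free constant. The Abel-summation ordering is meant to keep the total error proportional to a single chain weighted by $\E[M\hat{\ell}]$-type moments. If this still leaves a residual factor of $m$, I would first try to use the stronger defining inequality of $F_t$, namely $\sum_j(\eta_t\underline{\hat{L}}_{t,j})^{-\alpha}\le m^{-\alpha/(\alpha-1)}$, together with a sharper moment bound, $\E[M^2\mid\hat{L}_t]\le\sum_i\E[\hat{\ell}_{t,i}^2\mid\hat{L}_t]$ restricted to the single arm that attains $M$, to cancel it.
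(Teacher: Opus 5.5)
Your proposal has a genuine gap. It is the one you flag yourself under ``Main obstacle'', and neither repair you suggest closes it.

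However the chain is organized, the error left after Step~2 is of order $c_{s,3}\,m\,\eta_t M^2\sum_{j:a^*_j=0}(\eta_t\underline{\hat{L}}_{t,j})^{-\alpha}$. With your Abel weights the multiplier is $M\sum_{s}(M-\hat{\ell}_{t,k_s})$. This equals $(m-1)M^2$ when one optimal arm has loss $1$ and the others loss $0$, so summation by parts gains nothing in order.

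The factor $m$ from the at most $m$ chain steps is exactly what $(\eta_t\underline{\hat{L}}_{t,j})^{\alpha-1}\ge m$ pays for, via $\eta_t(\eta_t\underline{\hat{L}}_{t,j})^{-\alpha}\le 1/(m\underline{\hat{L}}_{t,j})$. Per $j$ this is all that $F_t$ yields; it is tight for a single suboptimal arm on the boundary of $F_t$. So the summed form of $F_t$ cannot cancel a second factor $m$ in a bound of the form $\sum_j 1/\underline{\hat{L}}_{t,j}$. Everything therefore hinges on $\E[M^2\mid\hat{L}_t]=O(1)$.

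The tools in your Step~3 give only $\E[M^2\mid\hat{L}_t]\le\sum_{i:a^*_i=1}2/w_{t,i}=O(m)$. ``Restricting to the arm attaining $M$'' is not a valid inequality, because that arm is random.

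The missing idea is a tail bound on $M$. A suboptimal arm $j$ can enter the top $m$ only if $r_j\ge\nu+\eta_t\underline{\hat{L}}_{t,j}$, so on $F_t$
\[
\epsilon:=\sum_{i:a^*_i=1}(1-w_{t,i})=\sum_{j:a^*_j=0}w_{t,j}\le\sum_{j:a^*_j=0}(\eta_t\underline{\hat{L}}_{t,j})^{-\alpha}\le m^{-\alpha/(\alpha-1)}.
\]
We have $M\le\max_{i:a^*_i=1,\,a_{t,i}=1}M_{t,i}$, and the resamples are i.i.d.\ given $\hat{L}_t$. Hence $\sP[M>k\mid\hat{L}_t]\le\sum_{i:a^*_i=1}(1-w_{t,i})^k\le\epsilon^k$ for $k\ge1$, and so $\E[M^2\mid\hat{L}_t]\le(1+\epsilon)/(1-\epsilon)^2\le 6$ when $m\ge2$. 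For $m=1$, use $w_{t,i}\ge1/e$ from Lemma~\ref{lem:lower_bound_Ft}. On $D_t$ the same argument works with $\epsilon\le 2(2^{1/\alpha}+m^{1/(\alpha-1)})^{-\alpha}\le 2/3$. Under CGR the optimal arms have $\sigma_i\le m$ on these events, so they use the plain GR counter. This is where the strong form of $F_t$ genuinely enters: in the moment of $M$, not in cancelling the chain length.

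As for Step~2, the paper does not use summation by parts, and your $O(m^2)$ worry is a misdiagnosis. The paper restricts to $\B^+$, the optimal arms whose decrement is nonnegative, and bounds their weights by $M$. It then applies Lemma~\ref{lem:phi_monotone} to the \emph{set sum} $\sum_{i\in\B^+}\phi_i$, which is decreasing in the coordinates in $\B^+$ and increasing in the others, to pass to the common shift $\delta_{\B^+}$. A single chain of at most $m$ applications of Lemma~\ref{lem:optimal_action_add} then reaches $\{i:a^*_i=1\}$.

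Your Step~2 as written does not produce common-shift quantities. After the per-arm replacement, arm $k_s$ carries its own shift $\eta_t\hat{\ell}_{t,k_s}\sum_{k\in\B_s}e_k$. The partial sums over $s\le r$ then cannot be compared with $\sum_{i\in\B_r}(\phi_i(\lambda)-\phi_i(\lambda+\delta_{\B_r}))$ by per-arm monotonicity, since the own coordinate and the other coordinates move up together. It does work if you skip that replacement, sum by parts with the common argument $\lambda+\eta_t\hat{\ell}^*$, and apply set-monotonicity to each $\sum_{i\in\B_r}\phi_i$.

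Finally, the paper's displayed error term carries a single factor $\max_{i:a^*_i=1}\hat{\ell}_{t,i}$, whose conditional expectation it bounds by $1/\min_{i:a^*_i=1}w_{t,i}$. The error in Lemma~\ref{lem:optimal_action_add} is already linear in this maximum and is then multiplied by the outer weight, so the correct factor is its square, as in your bookkeeping. The obstacle you identified is therefore real. A tail bound like the one above is what is needed to get an $m$-free constant, possibly different from the stated $c_{s,4}$.
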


\begin{proof}
    Firstly, by letting $\B^{+}=\qty{i\in[d]:a_i^*=1,\,\phi_i(\eta_t\hat{L}_t)-\phi_i(\eta_t(\hat{L}_t+\hat{\ell}_t))\geq 0}$, we have
    \begin{align*}
        \E\qty[\sum_{i:a_i^*=1}\hat{\ell}_{t,i}\qty(\phi_i(\eta_t\hat{L}_t)-\phi_i(\eta_t(\hat{L}_t+\hat{\ell}_t)))\m\hat{L}_t]
        &\leq
        \E\qty[\sum_{i\in \B^{+}}\hat{\ell}_{t,i}\qty(\phi_i(\eta_t\hat{L}_t)-\phi_i(\eta_t(\hat{L}_t+\hat{\ell}_t)))\m\hat{L}_t]\\
        &\hspace{-1cm}\leq
        \E\qty[\max_{i:a_i^*=1}\hat{\ell}_{t,i}\sum_{i\in \B^{+}}\qty(\phi_i(\eta_t\hat{L}_t)-\phi_i(\eta_t(\hat{L}_t+\hat{\ell}_t)))\m\hat{L}_t].
    \end{align*}
    Then, by Lemma~\ref{lem:phi_monotone}, since 
    $\hat{\ell}_{t,i}\leq\max_{i:a_i^*=1}\hat{\ell}_{t,i}$ for $i\in \B^{+}$, and 
    $\hat{\ell}_{t,i}\geq 0$ for $i\in[d]\setminus \B^{+}$ 
    we obtain that
    \begin{equation*}
        \sum_{i\in \B^{+}}\phi_i(\eta_t(\hat{L}_t+\hat{\ell}_t))\geq
        \sum_{i\in \B^{+}}\phi_i\qty(\eta_t(\hat{L}_t+\max_{i:a_i^*=1}\hat{\ell}_{t,i}\sum_{i\in \B^{+}}e_i)).
    \end{equation*}
    Therefore, we have
    \begin{multline}\label{eq:optimal_action_intermediate_positive}
        \E\qty[\sum_{i:a_i^*=1}\hat{\ell}_{t,i}\qty(\phi_i(\eta_t\hat{L}_t)-\phi_i(\eta_t(\hat{L}_t+\hat{\ell}_t)))\m\hat{L}_t]
        \\\leq
        \E\qty[\max_{i:a_i^*=1}\hat{\ell}_{t,i}\sum_{i\in \B^{+}}\qty(\phi_i(\eta_t\hat{L}_t)-\phi_i\qty(\eta_t(\hat{L}_t+\max_{i:a_i^*=1}\hat{\ell}_{t,i}\sum_{i\in \B^{+}}e_i)))\m\hat{L}_t].
    \end{multline}
    Then, by repeatedly applying Lemma~\ref{lem:optimal_action_add} to the right-hand side of the above inequality for at most $m$ steps, when $\dis=\Fdis$, we have
    \begin{multline*}
        \eqref{eq:optimal_action_intermediate_positive}
        \leq
        \E\qty[\max_{i:a_i^*=1}\hat{\ell}_{t,i}\sum_{i:a_i^*=1}\qty(\phi_i(\eta_t\hat{L}_t)-\phi_i(\eta_t(\hat{L}_t+\max_{i:a_i^*=1}\hat{\ell}_{t,i}\sum_{i:a_i^*=1}e_i)))\m\hat{L}_t]\\
        +\E\qty[\sum_{j:a_j^*=0}\frac{4m\eta_t}{e^2(\eta_t\underline{\hat{L}}_{t,j})^\alpha}\max_{i:a^*_i=1}\hat{\ell}_{t,i}\m\hat{L}_t].
    \end{multline*}
    Then, we have
    \begin{align*}
        \E\qty[\sum_{j:a_j^*=0}\frac{4m\eta_t}{e^2(\eta_t\underline{\hat{L}}_{t,j})^\alpha}\max_{i:a^*_i=1}\hat{\ell}_{t,i}\m\hat{L}_t]
        &\leq
        \E\qty[\sum_{j:a_j^*=0}\frac{4m\eta_t}{e^2(\eta_t\underline{\hat{L}}_{t,j})^\alpha}\frac{\max_{i:a^*_i=1}\ell_{t,i}}{\min_{i:a^*_i=1}w_{t,i}}\m\hat{L}_t]\\
        &=
        \sum_{j:a_j^*=0}\frac{4m\eta_t}{e(\eta_t\underline{\hat{L}}_{t,j})^\alpha}\tag{$\min_{i:a^*_i=1}w_{t,i}\geq\frac{1}{e}$  on $F_t$ by Lemma~\ref{lem:lower_bound_Ft}}\\
        &\leq
        \sum_{j:a_j^*=0}\frac{4}{e\underline{\hat{L}}_{t,j}},
    \end{align*}
    where the last inequality holds since 
    $(\eta_t\underline{\hat{L}}_{t,j})^{\alpha-1}\geq m$ for all $j$ such that $a_j^*=0$ on $F_t$.

    Similarly, when $\dis=\Pdis$, on $D_t$ we have
    \begin{multline*}
        \eqref{eq:optimal_action_intermediate_positive}
        \leq
        \E\qty[\max_{i:a_i^*=1}\hat{\ell}_{t,i}\sum_{i:a_i^*=1}\qty(\phi_i(\eta_t\hat{L}_t)-\phi_i(\eta_t(\hat{L}_t+\max_{i:a_i^*=1}\hat{\ell}_{t,i}\sum_{i:a_i^*=1}e_i)))\m\hat{L}_t]\\
        +\E\qty[\sum_{j:a_j^*=0}\frac{\alpha m\eta_t}{(\eta_t\underline{\hat{L}}_{t,j})^\alpha}\max_{i:a^*_i=1}\hat{\ell}_{t,i}\m\hat{L}_t].
    \end{multline*}
    Here, 
    \begin{equation*}
        \E\qty[\sum_{j:a_j^*=0}\frac{\alpha m\eta_t}{(\eta_t\underline{\hat{L}}_{t,j})^\alpha}\max_{i:a^*_i=1}\hat{\ell}_{t,i}\m\hat{L}_t]\leq \sum_{j:a_j^*=0}\frac{7.2\alpha}{\underline{\hat{L}}_{t,j}},
    \end{equation*}
    where $7.2$ is introduced by bounding $\min_{i:a^*_i=1}w_{t,i}\geq 1/0.14$ on $D_t$ from Lemma~\ref{lem:lower_bound_Dt}.
\end{proof}

The following lemma extends the findings in \citet{pmlr-v201-honda23a}, providing the bound of $\E[ \ind[\hat{\ell}_{t,i} > \zeta/\eta_t  ]\hat{\ell}_{t,i} | \hat{L}_t ]$ under Fr\'{e}chet and Pareto distributions for $m$-set semi-bandits. 
In addition, we provide a bound for $\E[ \ind[\hat{\ell}_{t,j} > \zeta/\eta_t  ]\hat{\ell}_{t,i} | \hat{L}_t ]$ for $i\neq j$, which follows from an analysis. 
Both results will be used in the proof of Lemma~\ref{lem:optimal_action_bound}.

\begin{lemma}[Extensions of partial results of Lemma~11 in \citet{pmlr-v201-honda23a}]\label{lem:optimal_action_bound_case2}
    Let $\wlow$ denote the lower bound of $\min_{i:a^*_i=1}w_{t,i}$.
    Then,
    on $F_t$ (resp. $D_t$) for $\dis=\Fdis$ (resp. $\dis=\Pdis$), for any $\hat{L}_t$, $\zeta \in (0,1)$ and $i\neq j$, it holds that
    \begin{equation*}
        \E\left[ \ind\qty[\hat{\ell}_{t,i} > \frac{\zeta}{\eta_t}  ]\hat{\ell}_{t,i} \m \hat{L}_t \right]\leq \frac{1}{1 - \wlow} (1 - \wlow)^{\tfrac{\zeta}{\eta_t}} \qty( \frac{\zeta}{\eta_t} + \frac{1}{\wlow} )
    \end{equation*}
    and
    \begin{equation*}
        \E\left[ \ind\qty[\hat{\ell}_{t,j} > \frac{\zeta}{\eta_t}  ]\hat{\ell}_{t,i} \m \hat{L}_t \right]\leq
        (1 - \wlow)^{\frac{\zeta}{\eta_t} -1},
    \end{equation*}
    where
    \begin{equation*}
        \wlow=
        \begin{cases}
            1/e, & \dis=\Fdis,\\
            0.14, & \dis=\Pdis.
        \end{cases}
    \end{equation*}
    Furthermore, when $\eta_t = cm^{\frac{1}{2}-\frac{1}{\alpha}}d^{\frac{1}{\alpha}-\frac{1}{2}}/\sqrt{t}$, we have
    \begin{equation*}
        \sum_{t=1}^{\infty}\frac{1}{1 - \wlow} (1 - \wlow)^{\tfrac{\zeta}{\eta_t}} \qty( \frac{\zeta}{\eta_t} + \frac{1}{\wlow} )
        \leq O\qty( c^2m^{1-\frac{2}{\alpha}}d^{\frac{2}{\alpha}-1}  ),
    \end{equation*}
    and
    \begin{equation*}
        \sum_{t=1}^{\infty}(1 - \wlow)^{\frac{\zeta}{\eta_t} -1}
        \leq O\qty( c^2m^{1-\frac{2}{\alpha}}d^{\frac{2}{\alpha}-1}  ).
    \end{equation*}
\end{lemma}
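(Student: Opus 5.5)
The plan is to reduce both tail bounds to elementary facts about geometric random variables, and then sum the resulting geometric-type series over $t$.

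\textbf{Setting up the estimator.} On $F_t$ (resp.\ $D_t$), \eqref{eq:key_property} gives $\eta_t\underline{\hat{L}}_{t,i}\le 0$ for every $i$ with $a^*_i=1$, so such an $i$ has $\sigma_i\le m$. Hence the CGR factor is $C_i=1$ and the conditioning event is trivial, so GR and CGR coincide for these arms. I will therefore take $i,j$ to be optimal base-arms, which is the case needed in Lemma~\ref{lem:optimal_action_bound}. Given $\hat{L}_t$ and $a_{t,i}=1$, the count $M_{t,i}$ is geometric with success probability $w_{t,i}$. By Lemmas~\ref{lem:lower_bound_Ft} and \ref{lem:lower_bound_Dt} we have $w_{t,i}\ge\wlow$. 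Since $\ell_{t,i}\le 1$, we get $\hat{\ell}_{t,i}\le \ind[a_{t,i}=1]M_{t,i}$, and $\{\hat{\ell}_{t,i}>x\}\subset\{M_{t,i}>x\}$ with $x=\zeta/\eta_t$.

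\textbf{First bound.} I will compute directly
\[
\E\bigl[\ind[M>x]M\bigr]=\sum_{n\ge\lceil x\rceil} n\,w(1-w)^{n-1}=(1-w)^{\lceil x\rceil-1}\bigl(\lceil x\rceil+\tfrac{1-w}{w}\bigr).
\]
Both $(1-w)^{n-1}$ and $n+1/w$ are decreasing in $w$, so replacing $w$ by $\wlow$ gives an upper bound. Using $x\le\lceil x\rceil< x+1$ together with $(1-w)^{\lceil x\rceil -1}\le (1-w)^{x-1}$ should reproduce $\frac{1}{1-\wlow}(1-\wlow)^{x}(x+1/\wlow)$, after possibly absorbing the $\ind[a_{t,i}=1]$ factor, whose probability is at most $1$.

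\textbf{Second bound ($i\neq j$).} Here $M_{t,i}$ and $M_{t,j}$ are driven by the same resampling sequence, so they are dependent. I will use the fact that the event $\{M_{t,j}>x\}$ only requires arm $j$ to be missed in the first $\lfloor x\rfloor$ i.i.d.\ resamples, and by independence of those resamples this has probability at most $(1-\wlow)^{\lfloor x\rfloor}\le(1-\wlow)^{x-1}$. To control the extra factor $M_{t,i}$, I will condition on the resampling history and use $\E[M_{t,i}\ind[a_{t,i}=1]]=\ell\text{-weighted }w_{t,i}\cdot w_{t,i}^{-1}\le 1$: multiplying by the selection indicator $\ind[a_{t,i}=1]$ cancels the $1/w_{t,i}$. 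The aim is to show that the conditional expectation of $\ind[a_{t,i}=1]M_{t,i}$ given $\{M_{t,j}>x\}$ stays $O(1)$ by memorylessness, which yields $(1-\wlow)^{x-1}$.

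This decoupling is the step I expect to be the main obstacle. The naive route, splitting $M_{t,i}\le \lfloor x\rfloor+M'_{t,i}$, leaves an extra factor of order $x$. If exact cancellation fails, I would fall back on that split and accept a bound like $(1-\wlow)^{x-1}(x+1/\wlow)$. That weaker bound still gives the same summed order below.

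\textbf{Summation over $t$.} Put $\eta_t=c'/\sqrt t$ with $c'=cm^{\frac12-\frac1\alpha}d^{\frac1\alpha-\frac12}$, so $x=\zeta\sqrt t/c'$. Comparing with integrals,
\[
\sum_t(1-\wlow)^{\zeta\sqrt t/c'}\Bigl(\tfrac{\zeta\sqrt t}{c'}+\tfrac{1}{\wlow}\Bigr)\lesssim\int_0^\infty e^{-\kappa\sqrt u/c'}\bigl(\sqrt u/c'+1\bigr)\dd u=O(c'^2),
\]
where $\kappa=\zeta\log\frac{1}{1-\wlow}$. The substitution $u=c'^2s^2$ makes this clear. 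Since $c'^2=c^2m^{1-\frac2\alpha}d^{\frac2\alpha-1}$, this gives both stated sums.
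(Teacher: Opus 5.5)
Your reduction to plain GR for optimal arms on $F_t$/$D_t$ (where $\sigma_i\le m$) is correct, and so are the closed-form tail computation for the first inequality and the summations over $t$. For the first bound this is a self-contained version of what the paper simply cites from \citet{pmlr-v201-honda23a}. The gap is the second inequality, at exactly the step you flagged. Memorylessness only handles the resamples after $n_0=\lfloor x\rfloor$. During the first $n_0$ resamples, conditioned on $\{M_{t,j}>x\}$, arm $i$ is hit with per-step probability $\sP[a'_{t,i}=1\mid a'_{t,j}=0,\hat{L}_t]$ rather than $w_{t,i}$, and nothing in your proposal controls this. Even an $O(1)$ bound on the conditional mean of $\ind[a_{t,i}=1]M_{t,i}$ would only give the inequality up to a constant; the statement needs $w_{t,i}\E[M_{t,i}\mid M_{t,j}>x]\le 1$. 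Your fallback is valid but gives $(1-\wlow)^{x-1}(x+1/\wlow)$, which is strictly weaker than the claimed $(1-\wlow)^{x-1}$. So the second displayed inequality is not proved.

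The missing ingredient is a single-draw correlation bound $\sP[a'_{t,i}=a'_{t,j}=1\mid\hat{L}_t]\le w_{t,i}w_{t,j}$. Equivalently, $p_{00}:=\sP[a'_{t,i}=a'_{t,j}=0\mid\hat{L}_t]\le(1-w_{t,i})(1-w_{t,j})$. This does not follow from memorylessness and would need its own proof for top-$m$ selection. Given it, summing $\sP[M_{t,j}>n_0,\,M_{t,i}\ge k]$ over $k$ yields
\[
\E\bigl[\ind[M_{t,j}>n_0]\,M_{t,i}\bigm|\hat{L}_t\bigr]=\sum_{k=1}^{n_0+1}p_{00}^{\,k-1}(1-w_{t,j})^{n_0-k+1}+p_{00}^{\,n_0}\,\frac{1-w_{t,i}}{w_{t,i}}.
\]
This is nondecreasing in $p_{00}$ and equals $(1-w_{t,j})^{n_0}/w_{t,i}$ at $p_{00}=(1-w_{t,i})(1-w_{t,j})$. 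Since $a_t$ is independent of the resamples given $\hat{L}_t$, multiplying by $\sP[a_{t,i}=1\mid\hat{L}_t]=w_{t,i}$ gives exactly $(1-w_{t,j})^{\lfloor x\rfloor}\le(1-\wlow)^{x-1}$.

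The paper's proof does not resolve this either. Its first line, $\E[\ind[\hat{\ell}_{t,j}>\zeta/\eta_t]\hat{\ell}_{t,i}]=\E[\ind[a_{t,j}=1,\hat{\ell}_{t,j}>\zeta/\eta_t]\ell_{t,i}]$, drops the factor $\ind[a_{t,i}=1]\widehat{w_{t,i}^{-1}}$ despite the shared resampling sequence. That is precisely the decoupling you were worried about. It would be an identity if the inclusion indicators of $i$ and $j$ were independent, and an inequality under the correlation bound above, but the paper argues neither. Your fallback, by contrast, is fully rigorous. Since $\sum_t(1-\wlow)^{\zeta/\eta_t-1}(\zeta/\eta_t+1/\wlow)$ is still $O(c^2m^{1-2/\alpha}d^{2/\alpha-1})$, it suffices for every downstream use: the $m^2$ term in Lemma~\ref{lem:optimal_action_bound} and the bound on $\sum_t C_{4,t}$. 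What you have is a sound proof of a slightly weaker lemma that serves the same purpose, not a proof of the lemma as stated.
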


\begin{proof}
    The proof of the results for 
    $\E\left[ \ind\qty[\hat{\ell}_{t,i} > \frac{\zeta}{\eta_t}  ]\hat{\ell}_{t,i} \m \hat{L}_t \right]$ 
    has been provided in \citet{pmlr-v201-honda23a}. 
    Here, we provide the proof of the results for 
    $\E\left[ \ind\qty[\hat{\ell}_{t,j} > \frac{\zeta}{\eta_t}  ]\hat{\ell}_{t,i} \m \hat{L}_t \right]$.
    \begin{align*}
        \E\qty[\ind\qty[\hat{\ell}_{t,j}> \zeta/\eta_t]\hat{\ell}_{t,i}]
        &=\E\qty[\ind\qty[a_{t,j}=1,\,\hat{\ell}_{t,j}> \zeta/\eta_t]\ell_{t,i}]\\
        &\leq \E\qty[\ind\qty[a_{t,j}=1,\,\hat{\ell}_{t,j}> \zeta/\eta_t]]\tag*{(by $\ell_{t,i}\leq 1$)}\\
        &\leq \sP\qty[\widehat{w_{t,j}^{-1}}> \zeta/\eta_t]\tag*{(by $\hat{\ell}_{t,j}=\ell_{t,i}\widehat{w_{t,j}^{-1}}\leq \widehat{w_{t,j}^{-1}}$ when $a_{t,j}=1$)}\\
        &\leq (1-w_{t,j})^{\lfloor\zeta/\eta_t \rfloor}\\
        &\leq (1 - \wlow)^{\frac{\zeta}{\eta_t} -1},
    \end{align*}
    where $\wlow$ denotes the lower bound of $\min_{i:a^*_i=1}w_{t,i}$ on $F_t$ or $D_t$. By Lemma~\ref{lem:lower_bound_Ft} (resp.~Lemma~\ref{lem:lower_bound_Dt}), we can set $w^*(\Fdis)=1/e$ (resp.~$w^*(\Pdis)=0.14$).

    When $\eta_t=cm^{\frac{1}{2}-\frac{1}{\alpha}}d^{\frac{1}{\alpha}-\frac{1}{2}}/\sqrt{t}$, we have
    \begin{align*}
        \sum_{t=1}^{\infty}\E\qty[\ind\qty[\hat{\ell}_{t,j}> \zeta/\eta_t]\hat{\ell}_{t,i}]&\leq \sum_{t=1}^\infty (1 - \wlow)^{\frac{\zeta}{\eta_t} -1}\\
        &= \frac{1}{1 - \wlow}\sum_{t=1}^\infty (1 - \wlow)^{\frac{\zeta}{\eta_t}}\\
        &\leq \frac{1}{1 - \wlow}\int_0^\infty (1 - \wlow)^{\zeta\sqrt{t}/cm^{\frac{1}{2}-\frac{1}{\alpha}}d^{\frac{1}{\alpha}-\frac{1}{2}}} \dd t\\
        &= \frac{2}{1 - \wlow}\qty(\frac{cm^{\frac{1}{2}-\frac{1}{\alpha}}d^{\frac{1}{\alpha}-\frac{1}{2}}}{\zeta\log\frac{1}{1 - \wlow}})^2\\
        &=O\qty(c^2m^{1-\frac{2}{\alpha}}d^{\frac{2}{\alpha}-1}),
    \end{align*}
    which concludes the proof.
\end{proof}

The following lemma provides an upper bound on the regret of the optimal action under Fr\'{e}chet or Pareto distributions with shape $\alpha>1$.

\begin{lemma}\label{lem:optimal_action_bound}
    Let $\zeta\in(0,1)$ and $\alpha>1$. On $F_t$ (resp. $D_t$) for $\dis=\Fdis$ (resp. $\dis=\Pdis$) we have
    \begin{align*}
        \lefteqn{\E\qty[\sum_{i:a_i^*=1}\hat{\ell}_{t,i}\qty(\phi_i(\eta_t\hat{L}_t)-\phi_i(\eta_t(\hat{L}_t+\hat{\ell}_t)))\m\hat{L}_t]\leq\qty(\frac{2\alpha }{\wlow(1-\zeta)^{\alpha+1}}+c_{s,4}(\dis))\sum_{i:a_i^*=0}\frac{1}{\underline{\hat{L}}_{t,i}}}\\
        &\hspace{3.2cm}+\frac{m}{1 - \wlow} (1 - \wlow)^{\tfrac{\zeta}{\eta_t}} \qty( \frac{\zeta}{\eta_t} + \frac{1}{\wlow} )+
        m^2(1 - \wlow)^{\frac{\zeta}{\eta_t} -1},
    \end{align*}
    where $c_{s,4}(\dis)$ and $\wlow$ are defined in Lemmas~\ref{lem:optimal_action_combine} and \ref{lem:optimal_action_bound_case2}, respectively.
\end{lemma}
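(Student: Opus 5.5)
Combining Lemma~\ref{lem:optimal_action_combine} with an MAB-style treatment of the remaining term gives the bound. Write $\hat{\ell}^*_t=\max_{i:a_i^*=1}\hat{\ell}_{t,i}$ and $\delta=\eta_t\hat{\ell}^*_t\sum_{i:a_i^*=1}e_i$. Lemma~\ref{lem:optimal_action_combine} bounds the left-hand side by
\begin{equation*}
\E\Bigl[\hat{\ell}^*_t\sum_{i:a_i^*=1}\bigl(\phi_i(\eta_t\hat{L}_t)-\phi_i(\eta_t\hat{L}_t+\delta)\bigr)\Bigm|\hat{L}_t\Bigr]+\sum_{j:a_j^*=0}\frac{c_{s,4}(\dis)}{\underline{\hat{L}}_{t,j}}.
\end{equation*}
The second term already matches the $c_{s,4}(\dis)$ part of the claim. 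For the first term, the key identity \eqref{eq:optimal_action_key_equation} applies because $\sum_{i\in[d]}\phi_i\equiv m$ and $\delta$ shifts only the optimal coordinates. This turns the first term into $\E[\hat{\ell}^*_t\sum_{j:a_j^*=0}(\phi_j(\eta_t\hat{L}_t+\delta)-\phi_j(\eta_t\hat{L}_t))\mid\hat{L}_t]$, and each summand is nonnegative.

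\textbf{Case split on $\hat{\ell}^*_t$.} Split on whether $\hat{\ell}^*_t\le\zeta/\eta_t$.

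\emph{Large-loss case.} If $\hat{\ell}^*_t>\zeta/\eta_t$, bound the bracket by $\sum_{j:a_j^*=0}\phi_j(\eta_t\hat{L}_t+\delta)\le m$. Then bound $\hat{\ell}^*_t\ind[\hat{\ell}^*_t>\zeta/\eta_t]$ by $\sum_{i:a_i^*=1}\sum_{i':a_{i'}^*=1}\ind[\hat{\ell}_{t,i'}>\zeta/\eta_t]\hat{\ell}_{t,i}$. The $m$ diagonal terms are controlled by the first bound of Lemma~\ref{lem:optimal_action_bound_case2}, and the at most $m^2$ off-diagonal terms by the second bound. This yields the last two terms of the claim; I will check whether the extra factor $m$ is absorbed, or whether the bracket should instead be bounded by $1$ in the diagonal part.

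\emph{Small-loss case.} If $\hat{\ell}^*_t\le\zeta/\eta_t$, shifting all optimal arms by $x\le\zeta$ is equivalent to shifting every non-optimal $\eta_t\hat{L}_{t,j}$ down by $x$, since $\phi$ is invariant under common translation. Each non-optimal $j$ therefore sees its gap shrink from $\eta_t\underline{\hat{L}}_{t,j}$ to at least $(1-\zeta)\eta_t\underline{\hat{L}}_{t,j}$, using $\eta_t\underline{\hat{L}}_{t,j}\ge m^{1/(\alpha-1)}\ge1$ on $F_t$ or $D_t$. I then write $\phi_j(\eta_t\hat{L}_t+\delta)-\phi_j(\eta_t\hat{L}_t)$ as an integral of the derivative along this path. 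Using the integral representation of $\phi_j$ and $f(z+u)\le\alpha/(z+u)^{\alpha+1}$, the derivative is at most $\alpha/(\nu+(1-\zeta)\eta_t\underline{\hat{L}}_{t,j})^{\alpha+1}\le\alpha(1-\zeta)^{-(\alpha+1)}(\eta_t\underline{\hat{L}}_{t,j})^{-(\alpha+1)}$. The difference is therefore at most $\eta_t\hat{\ell}^*_t\,\alpha(1-\zeta)^{-(\alpha+1)}(\eta_t\underline{\hat{L}}_{t,j})^{-(\alpha+1)}$.

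Multiplying by $\hat{\ell}^*_t$ and taking expectations, I need $\E[(\hat{\ell}^*_t)^2\mid\hat{L}_t]$. By \eqref{eq:square_expectation} and $w_{t,i}\ge\wlow$, this is at most $\sum_{i:a_i^*=1}2/w_{t,i}\le 2m/\wlow$. The resulting term is $2\alpha m\,\eta_t/(\wlow(1-\zeta)^{\alpha+1}(\eta_t\underline{\hat{L}}_{t,j})^{\alpha+1})$. Using $(\eta_t\underline{\hat{L}}_{t,j})^{\alpha-1}\ge m$ and $\eta_t\underline{\hat{L}}_{t,j}\ge1$, as in the proof of Lemma~\ref{lem:optimal_action_combine}, reduces it to $\frac{2\alpha}{\wlow(1-\zeta)^{\alpha+1}}\cdot\frac{1}{\underline{\hat{L}}_{t,j}}$. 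Summing over $j$ gives the first coefficient.

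\textbf{Main obstacle.} The hardest step is making the derivative bound rigorous, since $\phi_j$ in the $m$-set setting is not a simple product. I would differentiate the $d-m$-dimensional shift through the probability representation $\sP[\sigma_j\le m]$. The change in $\phi_j$ is bounded by the probability that $r_j$ lands in a window of width $x$ near the threshold, which is at most $x\sup_{y\ge\nu+(1-\zeta)\eta_t\underline{\hat{L}}_{t,j}}f(y)$. Any combinatorial factor arising from which optimal arm is displaced would need to be absorbed into the $m$-slack above.
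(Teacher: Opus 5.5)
Your small-loss argument is sound, but the large-loss case does not give the stated bound, and neither remedy you suggest can fix it in the order you chose.

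\textbf{The gap.} You apply Lemma~\ref{lem:optimal_action_combine} and \eqref{eq:optimal_action_key_equation} to the whole expectation before splitting. So on $\{\hat\ell^*_t>\zeta/\eta_t\}$ you are left with the bracket $\sum_{j:a_j^*=0}(\phi_j(\eta_t\hat L_t+\delta)-\phi_j(\eta_t\hat L_t))=\sum_{i:a_i^*=1}(\phi_i(\eta_t\hat L_t)-\phi_i(\eta_t\hat L_t+\delta))$. This is genuinely of order $m$ when $\eta_t\hat\ell^*_t$ is large. For instance, when $d\ge 2m$ it tends to $\sum_{i:a_i^*=1}\phi_i(\eta_t\hat L_t)\ge m\wlow$ as $\eta_t\hat\ell^*_t\to\infty$. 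Hence it cannot be bounded by $1$. The extra $m$ also cannot be absorbed, because these terms carry no $\underline{\hat L}_{t,j}$. Your route therefore yields $m$ times the last two terms of the claim.

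\textbf{The fix.} Split on the original left-hand side first; this is what the paper does.
\begin{itemize}
\item On the large-loss event, use $\hat\ell_{t,i}\ge0$ and $\phi_i(\eta_t\hat L_t)-\phi_i(\eta_t(\hat L_t+\hat\ell_t))\le\phi_i(\eta_t\hat L_t)\le 1$ for each optimal $i$. That part is then at most $\E[\ind[\hat\ell^*_t>\zeta/\eta_t]\sum_{i:a_i^*=1}\hat\ell_{t,i}\mid\hat L_t]$. Your double-sum expansion gives exactly $m$ diagonal and $m(m-1)$ off-diagonal terms, matching the claim.
\item On the small-loss event, run Lemma~\ref{lem:optimal_action_combine} with the indicator inside. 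Its inequalities hold pointwise in $\hat\ell_t$ before the final expectation, and its extra term is nonnegative, so the indicator only shrinks it. Your small-loss argument then goes through unchanged.
\end{itemize}

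\textbf{Comparison on the small-loss case.} Your argument here differs from the paper's and is, if anything, more careful. The window step you worried about carries no combinatorial factor. Write $\phi_j(\lambda)=\sP[r_j-\lambda_j>M]$, where $M$ is the $m$-th largest of $\{r_k-\lambda_k\}_{k\neq j}$.
\begin{itemize}
\item A common shift $x$ of the optimal coordinates lowers $M$ by at most $x$.
\item $M$ is independent of $r_j$.
\item Along the path the optimal arms remain the $m$ smallest coordinates, so $M+\lambda_j\ge\nu+(1-\zeta)\eta_t\underline{\hat L}_{t,j}$.
\end{itemize}
Hence the increment is at most $x\sup_{y\ge\nu+(1-\zeta)\eta_t\underline{\hat L}_{t,j}}f(y)$.

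This sharpness is actually needed. Once $(\eta_t\underline{\hat L}_{t,j})^{\alpha}\ge m$ has absorbed the $2m/\wlow$ from $\E[(\hat\ell^*_t)^2\mid\hat L_t]$, no spare factor $m$ remains when $\alpha>2$.

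The paper instead telescopes over the optimal arms one at a time. This costs a factor $m$ in the derivative bound, which it absorbs with the same inequality. It then bounds the second moment of the maximum by $2/\wlow$. However, \eqref{eq:square_expectation} controls each $\E[\hat\ell_{t,i}^2\mid\hat L_t]$ separately, so for the maximum only your sum $2m/\wlow$ is justified. Your sharper derivative bound is what lets you use that honest bound and still reach the stated coefficient.
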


\begin{proof}
    Recall \eqref{eq:key_property}, which shows that, on the event $F_t$ or $D_t$, we have 
    $\eta_t\underline{\hat{L}}_{t,i}\leq 0$ 
    for all $i$ with $a_i^*=1$ and 
    $\eta_t\underline{\hat{L}}_{t,i}\geq m^{1/(\alpha-1)}$ 
    for all $i$ with $a_i^*=0$. 
    We consider the cases 
    (a) $\max_{i:a^*_i=1}\widehat{w_{t,i}^{-1}}\leq\zeta/\eta_t$ 
    and 
    (b) $\max_{i:a^*_i=1}\widehat{w_{t,i}^{-1}}>\zeta/\eta_t$ separately.
    
    (a) Let us consider the first case, which implies 
    $\max_{i:a^*_i=1}\hat{\ell}_{t,i}\leq \max_{i:a^*_i=1}\widehat{w_{t,i}^{-1}}\leq\zeta/\eta_t$.
    Let $b_1,b_2,\dots,b_m$ be an arbitrary ordering of the set of the base-arms of the optimal action $a^*$.
    Then, denote 
    $\hat{L}\pn{k}_t=\hat{L}_t+\max_{i:a^*_i=1}\hat{\ell}_{t,i}\cdot\sum_{j=1}^k e_{b_j}$ 
    and 
    $\hat{L}\pn{0}_t=\hat{L}_t$. 
    Note that $\sum_{i=1}^d\phi_{i}(\lambda)=m$ for any $\lambda\in\mathbb{R}^d$. 
    By Lemma~\ref{lem:optimal_action_combine}, we have
    \begin{align*}
        \lefteqn{\E\qty[\ind\qty[\max_{i:a^*_i=1}\hat{\ell}_{t,i}\leq\zeta/\eta_t]\sum_{i:a_i^*=1}\hat{\ell}_{t,i}\qty(\phi_i(\eta_t\hat{L}_t)-\phi_i(\eta_t(\hat{L}_t+\hat{\ell}_t)))\m\hat{L}_t]}\\
        &\leq
        \E\qty[\ind\qty[\max_{i:a^*_i=1}\hat{\ell}_{t,i}\leq\zeta/\eta_t]\max_{i:a_i^*=1}\hat{\ell}_{t,i}\sum_{i:a_i^*=1}\qty(\phi_i(\eta_t\hat{L}_t)-\phi_i(\eta_t(\hat{L}_t+\max_{i:a_i^*=1}\hat{\ell}_{t,i}\sum_{i:a_i^*=1}e_i)))\m\hat{L}_t]\\
        &\hspace{11cm}+\sum_{i:a_i^*=0}\frac{c_{s,4}(\dis)}{\underline{\hat{L}}_{t,i}}
        \displaybreak[0]\\
        &=
        \E\qty[\ind\qty[\max_{i:a^*_i=1}\hat{\ell}_{t,i}\leq\zeta/\eta_t]\max_{i:a_i^*=1}\hat{\ell}_{t,i}\sum_{i:a_i^*=0}\qty(\phi_i(\eta_t(\hat{L}_t+\max_{i:a_i^*=1}\hat{\ell}_{t,i}\sum_{i:a_i^*=1}e_i))-\phi_i(\eta_t\hat{L}_t))\m\hat{L}_t]\\
        &\hspace{11cm}+
        \sum_{i:a_i^*=0}\frac{c_{s,4}(\dis)}{\underline{\hat{L}}_{t,i}}
        \displaybreak[0]\\
        &=
        \E\qty[\ind\qty[\max_{i:a^*_i=1}\hat{\ell}_{t,i}\leq\zeta/\eta_t]\max_{i:a_i^*=1}\hat{\ell}_{t,i}\sum_{i:a_i^*=0}\sum_{j=1}^m\qty(\phi_{i}(\eta_t\hat{L}\pn{j}_t)-\phi_{i}(\eta_t\hat{L}\pn{j-1}_t))\m\hat{L}_t]\\
        &\hspace{11cm}+\sum_{i:a_i^*=0}\frac{c_{s,4}(\dis)}{\underline{\hat{L}}_{t,i}}.\numberthis\label{eq:optimal_action_frechet_transfer}
    \end{align*}
    Note that 
    $\hat{L}_{t,i'}-\hat{L}_{t,i}=\underline{\hat{L}}_{t,i'}-\underline{\hat{L}}_{t,i}$ 
    for $i\neq i'$. 
    For any $i$ with $a^*_i=1$ and $i'$ with $a^*_{i'}=0$, we have
    \begin{equation*}
        \hat{L}_{t,i}-\hat{L}_{t,i'}
        =
        \underline{\hat{L}}_{t,i}-\underline{\hat{L}}_{t,i'}\geq 
        \min_{i':a^*_{i'}=0}\underline{\hat{L}}_{t,i'}-\max_{i:a^*_i=1}\underline{\hat{L}}_{t,i}\geq m^{1/(\alpha-1)}/\eta_t.
    \end{equation*}
    Moreover, since 
    $\max_{i:a^*_i=1}\widehat{w_{t,i}^{-1}}\leq\zeta/\eta_t$, we have 
    $\max_{i:a^*_i=1}\hat{\ell}_{t,i}\leq \zeta/\eta_t$. Therefore, for any 
    $x\leq \max_{i:a^*_i=1}\hat{\ell}_{t,i}$ and $j\in[m]$ we have
    \begin{align*}
        (\hat{L}_t\pn{j-1}+xe_{b_j})_{i'}- (\hat{L}_t\pn{j-1}+xe_{b_j})_{i}
        &=
        \hat{L}_{t,i'}- (\hat{L}_t+\max_{i:a^*_i=1}\hat{\ell}_{t,i}\sum_{i\in[j-1]}e_{u,i}+xe_{b_j})_{i}\\
        &\geq
        \hat{L}_{t,i'}- (\hat{L}_t+\max_{i:a^*_i=1}\hat{\ell}_{t,i}\sum_{i\in[j]}e_{u,i})_{i}\\
        &=(\hat{L}_{t,i'}-\hat{L}_{t,i})-\max_{i:a^*_i=1}\hat{\ell}_{t,i}\\
        &\geq \qty(\eta_t\underline{\hat{L}}_{t,i'}-\zeta m^{1/(\alpha-1)})/\eta_t\\
        &\geq (1-\zeta)(\eta_t\underline{\hat{L}}_{t,i'})/\eta_t> 0.\numberthis\label{eq:zeta_bound_frechet}
    \end{align*}
    Here, since $i$ (resp. $i'$) is arbitrary that satisfies $a_{t,i}=1$ (resp. $a_{t,i'}=0$), one can see that the components of $(\hat{L}_t\pn{j-1}+xe_{b_j})$ corresponding to the $m$ base-arms with $a_i^*=1$ remain the top-$m$ smallest among all base-arms. 
    It follows that for any $x\leq \zeta/\eta_t$,
    \begin{align*}
        \underline{\hat{L}_t\pn{j-1}+xe_{b_j}}_{i'}
        &=\min_{i:a^*_i=1}\qty{(\hat{L}_t\pn{j-1}+xe_{b_j})_{i'}- (\hat{L}_t\pn{j-1}+xe_{b_j})_{i}}\\
        &\geq (1-\zeta)(\eta_t\underline{\hat{L}}_{t,i'})/\eta_t> 0.
    \end{align*}
    In addition, for any $x\leq \zeta/\eta_t$, $i$ with $a_i^*=0$, and any $z\geq \nu$ it holds that
    \begin{align*}
        f\qty(z+\eta_t\underline{\hat{L}_t\pn{j-1}+xe_{b_j}}_i)
        &=\frac{\alpha}{(z+\eta_t\underline{\hat{L}_t\pn{j-1}+xe_{b_j}}_i)^{\alpha+1}}F\qty(z+\eta_t\underline{\hat{L}_t\pn{j-1}+xe_{b_j}}_i)\\
        &\leq \frac{\alpha}{(z+\eta_t\underline{\hat{L}_t\pn{j-1}+xe_{b_j}}_i)^{\alpha+1}}.\numberthis\label{eq:transform_pdf}
    \end{align*}
    Now we bound $\frac{\mathrm{d}}{\mathrm{d} x}\phi_{i}(\eta_t(\hat{L}_t\pn{j-1}+xe_{b_j}))$ for $x\leq \zeta/\eta_t$. 
    By Lemma~\ref{lem:tool_relation} we have
    \begin{align*}
        \lefteqn{\frac{\mathrm{d}}{\mathrm{d} x}\phi_{i}(\eta_t(\hat{L}_t\pn{j-1}+xe_{b_j}))}\\
        &=\frac{\mathrm{d}}{\mathrm{d} x}\phi_{i}\qty(\eta_t(\hat{L}_t\pn{j-1}+xe_{b_j});\dis,m-1,[d]/\{b_j\})
        +
        \frac{\mathrm{d}}{\mathrm{d} x}\phi_{i,m,b_j}\qty(\eta_t(\hat{L}_t\pn{j-1}+xe_{b_j});\dis,[d])\\
        &=
        \frac{\mathrm{d}}{\mathrm{d} x}\phi_{i,m,b_j}\qty(\eta_t(\hat{L}_t\pn{j-1}+xe_{b_j});\dis,[d]),
    \end{align*}
    where the last equality holds since $x$ only affects the $b_j$-th component of $\hat{L}_t\pn{j-1}+xe_{b_j}$. 
    Recall the definition of $\phi_{i,m,b_j}(\cdot)$ given in \eqref{eq:prob_theta_j}, and 
    note that $(\hat{L}_t\pn{j-1}+xe_{b_j})_{i'}=\hat{L}_{t,i'}\pn{j-1}+\ind[i'=b_j]x$ holds for any $i'\in[d]$. 
    By letting $b_{j'}$ with $a_{b_{j'}}^*=1$ such that $\sigma_{b_{j'}}(-(\hat{L}_t\pn{j-1}+xe_{b_j}))=m$, for any $i$ with $a_i^*=0$ we have
    \begin{align*}
        \lefteqn{\frac{\mathrm{d}}{\mathrm{d} x}\phi_{i,m,b_j}\qty(\eta_t(\hat{L}_t\pn{j-1}+xe_{b_j});\dis,[d])}\\
        &=
        \frac{\mathrm{d}}{\mathrm{d} x}\int_{\nu-\eta_t(\hat{L}_{t,b_{j'}}\pn{j-1}+\ind[b_{j'}=b_j]x)}^\infty F\qty(z+\eta_t(\hat{L}_{t,b_j}\pn{j-1}+x))f\qty(z+\eta_t\hat{L}_{t,i}\pn{j-1})
        \\
        &\hspace{3cm}
        \sP_{\{r_k\}_{k\in[d]\setminus\{i,b_j\}} \sim \mathcal{D},\,r_i=z+\eta_t\hat{L}_{t,i}\pn{j-1}}\qty[\sigma_i\qty(r-\eta_t\hat{L}_t\pn{j-1},[d]\setminus\{b_j\})=m]
        \dd z\\
        &=
        \eta_t\int_{\nu}^\infty f\qty(z+\eta_t\underline{\hat{L}_t\pn{j-1}+xe_{b_j}}_{b_j})f\qty(z+\eta_t\underline{\hat{L}_t\pn{j-1}+xe_{b_j}}_i)\\
        &\hspace{3cm}
        \sP_{\{r_k\}_{k\in[d]\setminus\{i,b_j\}} \sim \mathcal{D},\,r_i=z+\eta_t\underline{\hat{L}_t\pn{j-1}+xe_{b_j}}_i}
        \qty[\sigma_i\qty(r-\eta_t\hat{L}_t\pn{j-1},[d]\setminus\{b_j\})=m]
        \dd z\\
        &\hspace{0.5cm}+
        \ind\qty[b_{j'}=b_j]\eta_t F(\nu)f\qty(\eta_t\underline{\hat{L}_t\pn{j-1}+xe_{b_j}}_i)\\
        &\hspace{3cm}
        \sP_{\{r_k\}_{k\in[d]\setminus\{i,b_j\}} \sim \mathcal{D},\,r_i=\eta_t\underline{\hat{L}_t\pn{j-1}+xe_{b_j}}_i}
        \qty[\sigma_i\qty(r-\eta_t\hat{L}_t\pn{j-1},[d]\setminus\{b_j\})=m]
        \tag*{$\qty(\text{by }\underline{\hat{L}_t\pn{j-1}+xe_{b_j}}_i=(\hat{L}_t\pn{j-1}+xe_{b_j})_i-(\hat{L}_t\pn{j-1}+xe_{b_j})_{b_{j'}})$}
        \displaybreak[0]\\
        &\leq
        \eta_t\int_{\nu-\eta_t(\hat{L}_{t,b_{j'}}\pn{j-1}+\ind[b_{j'}=b_j]x)}^\infty f\qty(z+\eta_t\underline{\hat{L}_t\pn{j-1}+xe_{b_j}}_{b_j})f\qty(z+\eta_t\underline{\hat{L}_t\pn{j-1}+xe_{b_j}}_i)\dd z
        \tag{by $\sP[\cdot]\leq 1$ and $F(\nu)=0$}
        \displaybreak[0]\\
        &\leq
        \eta_t\int_{\nu}^\infty f\qty(z+\eta_t\underline{\hat{L}_t\pn{j-1}+xe_{b_j}}_{b_j})\frac{\alpha}{(z+\eta_t\underline{\hat{L}_t\pn{j-1}+xe_{b_j}}_i)^{\alpha+1}}\dd z
        \tag{by \eqref{eq:transform_pdf}}\\
        &\leq
        \frac{\alpha\eta_t}{(\eta_t\underline{\hat{L}_t\pn{j-1}+xe_{b_j}}_i)^{\alpha+1}}\int_{\nu}^\infty f\qty(z)\dd z\\
        &\leq
        \frac{\alpha\eta_t}{(1-\zeta)^{\alpha+1}(\eta_t\underline{\hat{L}}_{t,i})^{\alpha+1}}\tag*{(by \eqref{eq:zeta_bound_frechet} and $\int_{\nu}^\infty f\qty(z)\dd z=1$)}\\
        &=\frac{\alpha}{m(1-\zeta)^{\alpha+1}\underline{\hat{L}}_{t,i}}.
    \end{align*}
    Here, the last equality follows from the property \eqref{eq:key_property} of $F_t$ or $D_t$, which implies that
    \begin{equation*}
        (\eta_t\underline{\hat{L}}_{t,i})^\alpha
        \geq
        m^{\alpha/(\alpha-1)}\geq m,\,\forall i:a^*_i=0.
    \end{equation*}
    Then, we have
    \begin{align*}
        \phi_{i}(\eta_t\hat{L}\pn{j}_t)-\phi_{i}(\eta_t\hat{L}\pn{j-1}_t)
        &=
        \int_0^{\max_{i:a^*_i=1}\hat{\ell}_{t,i}}\frac{\mathrm{d}}{\mathrm{d} x}\phi_{i}(\eta_t(\hat{L}_t\pn{j-1}+xe_{b_j}))\dd x\\
        &\leq 
        \max_{i:a^*_i=1}\hat{\ell}_{t,i}\frac{\alpha}{m(1-\zeta)^{\alpha+1}\underline{\hat{L}}_{t,i}}.\numberthis\label{eq:optimal_action_frechet_mL}
    \end{align*}
    By combining \eqref{eq:optimal_action_frechet_transfer} and \eqref{eq:optimal_action_frechet_mL}, we obtain
    \begin{align*}
        \lefteqn{\E\qty[\ind\qty[\max_{i:a^*_i=1}\hat{\ell}_{t,i}\leq\zeta/\eta_t]\sum_{i:a_i^*=1}\hat{\ell}_{t,i}\qty(\phi_i(\eta_t\hat{L}_t)-\phi_i(\eta_t(\hat{L}_t+\hat{\ell}_t)))\m\hat{L}_t]}\displaybreak[0]\\
        &\leq
        \E\qty[\ind\qty[\max_{i:a^*_i=1}\hat{\ell}_{t,i}\leq\zeta/\eta_t]\max_{i:a_i^*=1}\hat{\ell}_{t,i}\sum_{i:a_i^*=0}\sum_{j=1}^m\qty(\phi_{i}(\eta_t\hat{L}\pn{j}_t)-\phi_{i}(\eta_t\hat{L}\pn{j-1}_t))\m\hat{L}_t]\\
        &\hspace{10.6cm}+\sum_{i:a_i^*=0}\frac{c_{s,4}(\dis)}{\underline{\hat{L}}_{t,i}}
        \displaybreak[0]\\
        &\leq
        \E\qty[\ind\qty[\max_{i:a^*_i=1}\hat{\ell}_{t,i}\leq\zeta/\eta_t]\qty(\max_{i:a_i^*=1}\hat{\ell}_{t,i})^2\sum_{i:a_i^*=0}\frac{\alpha}{(1-\zeta)^{\alpha+1}\underline{\hat{L}}_{t,i}}\m\hat{L}_t]
        +\sum_{i:a_i^*=0}\frac{c_{s,4}(\dis)}{\underline{\hat{L}}_{t,i}}
        \displaybreak[0]\\
        &\leq
        \E\qty[\max_{i:a_i^*=1}\hat{\ell}_{t,i}^2\sum_{i:a_i^*=0}\frac{\alpha}{(1-\zeta)^{\alpha+1}\underline{\hat{L}}_{t,i}}\m\hat{L}_t]
        +\sum_{i:a_i^*=0}\frac{c_{s,4}(\dis)}{\underline{\hat{L}}_{t,i}}
        \displaybreak[0]\\
        &\leq
        \E\qty[\max_{i:a_i^*=1}\frac{2\ell_{t,i}^2}{w_{t,i}}\sum_{i:a_i^*=0}\frac{\alpha}{(1-\zeta)^{\alpha+1}\underline{\hat{L}}_{t,i}}\m\hat{L}_t]
        +\sum_{i:a_i^*=0}\frac{c_{s,4}(\dis)}{\underline{\hat{L}}_{t,i}}\tag*{(by \eqref{eq:square_expectation})}
        \displaybreak[0]\\
        &\leq
        \qty(\frac{2\alpha}{\wlow(1-\zeta)^{\alpha+1}}+c_{s,4}(\dis))\sum_{i:a_i^*=0}\frac{1}{\underline{\hat{L}}_{t,i}}.
        \numberthis\label{eq:optimal_action_frechet_bound_case1}
    \end{align*}

    (b) Next, we consider the second case where $\max_{i:a^*_i=1}\widehat{w_{t,i}^{-1}}>\zeta/\eta_t$. 
    By Lemma~\ref{lem:optimal_action_bound_case2} we have
    \begin{align*}
        \lefteqn{\E\qty[\ind\qty[\max_{i:a^*_i=1}\hat{\ell}_{t,i}\geq\zeta/\eta_t]\sum_{i:a_i^*=1}\hat{\ell}_{t,i}\qty(\phi_i(\eta_t\hat{L}_t)-\phi_i(\eta_t(\hat{L}_t+\hat{\ell}_t)))\m\hat{L}_t]}\\
        &\leq
        \E\qty[\ind\qty[\max_{i:a^*_i=1}\hat{\ell}_{t,i}\geq\zeta/\eta_t]\sum_{i:a_i^*=1}\hat{\ell}_{t,i}\m\hat{L}_t]\\
        &\leq
        \sum_{i:a_i^*=1}\sum_{j:a_j^*=1}\E\qty[\ind\qty[\hat{\ell}_{t,i}\geq\zeta/\eta_t]\hat{\ell}_{t,j}\m\hat{L}_t]\\
        &=
        \sum_{i:a_i^*=1}\E\qty[\ind\qty[\hat{\ell}_{t,i}\geq\zeta/\eta_t]\hat{\ell}_{t,i}\m\hat{L}_t]+\sum_{i:a_i^*=1}\sum_{j:a_j^*=1,j\neq i}\E\qty[\ind\qty[\hat{\ell}_{t,i}\geq\zeta/\eta_t]\hat{\ell}_{t,i}\m\hat{L}_t]\\
        &\leq
        \frac{m}{1 - \wlow} (1 - \wlow)^{\tfrac{\zeta}{\eta_t}} \qty( \frac{\zeta}{\eta_t} + \frac{1}{\wlow} )+
        m^2(1 - \wlow)^{\frac{\zeta}{\eta_t} -1}.\numberthis\label{eq:optimal_action_frechet_bound_case2}
    \end{align*}
    We obtain the lemma by combining 
    \eqref{eq:optimal_action_frechet_bound_case1} 
    and 
    \eqref{eq:optimal_action_frechet_bound_case2}.
\end{proof}

\subsection{Proofs of Theorems~\ref{thm:sto_bound} and \ref{thm:sto_bound_alpha_not2}}\label{subsec:proofs_sto_bounds}

In this section we provide proofs of the regret bounds for stochastic setting with general shape parameters $\alpha>1$ in Theorem~\ref{thm:sto_bound} ($\alpha=2$) and Theorem~\ref{thm:sto_bound_alpha_not2} ($\alpha<2$ and $\alpha>2$). 
We divide the proofs into two cases $\alpha\geq 2$ and $\alpha<2$, both of which follow essentially the same argument. The result of Theorem~\ref{thm:sto_bound} ($\alpha=2$) is recovered as a special case of the first case. 

In the following, for simplicity, let 
$\ma=m^{\frac{1}{2}-\frac{1}{\alpha}}$ 
and
$\da=d^{\frac{1}{\alpha}-\frac{1}{2}}$
so that
$\eta_t=c\ma\da/\sqrt{t}$.
\subsubsection{Analysis for Shape \texorpdfstring{$\alpha\geq 2$}{α ≥ 2}}

By aggregating the results obtained so far, it follows that the regret satisfies
\begin{align*}
    &\R(T) \leq \sum_{t=1}^{T} \E \qty[ \left\langle \hat{\ell}_t, \phi\qty(\eta_t \hat{L}_t ; \dis) - \phi\qty(\eta_t (\hat{L}_t + \hat{\ell}_t) ; \dis) \right\rangle ]  \\
    &\hspace{1.7cm}+ \sum_{t=1}^T \qty(\frac{1}{\eta_{t+1}}-\frac{1}{\eta_t}) \E_{r_{t+1}\sim\dis}\qty[\left\langle r_{t+1}, a_{t+1} - a^*\right\rangle]
    +\frac{\E_{r_1 \sim \dis} \qty[ a_1^\top r_1 ]}{\eta_1} \tag{by Lemma~\ref{lem:regret_decomposition}}
    \displaybreak[0]\\
    &= \sum_{t=1}^{T} \E\qty[\E \qty[ \left\langle \hat{\ell}_t, \phi\qty(\eta_t \hat{L}_t ; \dis) - \phi\qty(\eta_t (\hat{L}_t + \hat{\ell}_t) ; \dis) \right\rangle
    +
    \qty(\frac{1}{\eta_{t+1}}-\frac{1}{\eta_t})\left\langle r_{t+1}, a_{t+1} - a^*\right\rangle\m\hat{L}_t]] \\
    &\hspace{3cm}+
    \begin{cases}
        \frac{\qty(\frac{\alpha}{\alpha-1}(m-1)^{1-\frac{1}{\alpha}}+\Gamma\qty(1-\frac{1}{\alpha}))\qty(d+1)^{\frac{1}{\alpha}}+m}{c\ma\da}, & \dis=\Fdis,\\
        \frac{\qty(\frac{\alpha}{\alpha-1}(m-1)^{1-\frac{1}{\alpha}}+\Gamma\qty(1-\frac{1}{\alpha}))\qty(d+1)^{\frac{1}{\alpha}}}{c\ma\da}, & \dis=\Pdis,
    \end{cases}
    \tag{by Lemma~\ref{lem:regret_decomposition}}
    \displaybreak[0]\\
    &\leq \sum_{t=1}^{T} \E\qty[\E \qty[ \left\langle \hat{\ell}_t, \phi\qty(\eta_t \hat{L}_t ; \dis) - \phi\qty(\eta_t (\hat{L}_t + \hat{\ell}_t) ; \dis) \right\rangle
    +
    \frac{\left\langle r_{t+1}, a_{t+1} - a^*\right\rangle}{2c\ma\da\sqrt{t}}\m\hat{L}_t]]\\
    &\hspace{11cm}+C_2(\dis)\sqrt{md},
    \numberthis\label{eq:stochastic_regret_decomposition_pareto}
\end{align*}
where the last inequality follows from the fact that
\begin{equation*}
    \frac{1}{\eta_{t+1}}-\frac{1}{\eta_t}
    =\frac{1}{c\ma\da}\qty(\sqrt{t+1}-\sqrt{t})
    =\frac{1}{c\ma\da}\qty(\sqrt{1+1/t}-1)
    \leq \frac{1}{2c\ma\da\sqrt{t}},
\end{equation*}
and we defined
\begin{equation*}
    C_2(\dis)\coloneqq
    \begin{cases}
        \frac{1}{c}\qty(\frac{\alpha}{\alpha-1}(1-\frac{1}{m})^{1-\frac{1}{\alpha}}+\Gamma\qty(1-\frac{1}{\alpha})m^{\frac{1}{\alpha}-1})\qty(1+\frac{1}{d})^{\frac{1}{\alpha}}+\frac{1}{c}\qty(\frac{m}{d})^{\frac{1}{\alpha}}, & \dis=\Fdis,\\
        \frac{1}{c}\qty(\frac{\alpha}{\alpha-1}(1-\frac{1}{m})^{1-\frac{1}{\alpha}}+\Gamma\qty(1-\frac{1}{\alpha})m^{\frac{1}{\alpha}-1})\qty(1+\frac{1}{d})^{\frac{1}{\alpha}}, & \dis=\Pdis.
    \end{cases}
\end{equation*}

Now we consider the first term of \eqref{eq:stochastic_regret_decomposition_pareto} on $F_t$ (resp. $D_t$) for $\dis=\Fdis$ (resp. $\dis=\Pdis$), where $\eta_t\underline{\hat{L}}_{t,i}\geq m^{1/(\alpha-1)}$ for any $i$ with $a^*_i=0$. 
For $\alpha\geq 2$ we have
\begin{align*}
    \lefteqn{\E \qty[ \left\langle \hat{\ell}_t, \phi\qty(\eta_t \hat{L}_t ; \dis) - \phi\qty(\eta_t (\hat{L}_t + \hat{\ell}_t) ; \dis) \right\rangle\m\hat{L}_t]}\\
    &=
    \sum_{i:a^*_i=0}\E \qty[ \hat{\ell}_{t,i}\qty(\phi_i\qty(\eta_t \hat{L}_t ; \dis) - \phi_i\qty(\eta_t (\hat{L}_t + \hat{\ell}_t) ; \dis)) \m\hat{L}_t]\\
    &\hspace{3.5cm}+
    \sum_{i:a^*_i=1}\E \qty[ \hat{\ell}_{t,i}\qty(\phi_i\qty(\eta_t \hat{L}_t ; \dis) - \phi_i\qty(\eta_t (\hat{L}_t + \hat{\ell}_t) ; \dis)) \m\hat{L}_t]\numberthis\label{eq:to_apply_L_bound}
    \displaybreak[0]\\
    &\leq
    2(\alpha+1)\sum_{i:a^*_i=0} \frac{1}{\underline{\hat{L}}_{t,i}}
    +
    \qty(\frac{2\alpha }{\wlow(1-\zeta)^{\alpha+1}}+c_{s,4}(\dis))\sum_{i:a_i^*=0}\frac{1}{\underline{\hat{L}}_{t,i}}\\
    &\hspace{1cm}+\frac{m}{1 - \wlow} (1 - \wlow)^{\tfrac{\zeta}{\eta_t}} \qty( \frac{\zeta}{\eta_t} + \frac{1}{\wlow} )+
        m^2(1 - \wlow)^{\frac{\zeta}{\eta_t} -1},\numberthis\label{eq:stability_bound_alpha_ge_2}
\end{align*}
where the inequality follows from application of Lemmas~\ref{lem:adv_bound_term} and \ref{lem:optimal_action_bound} to the first and second term of \eqref{eq:to_apply_L_bound}, respectively. 
Here, $\zeta\in(0,1)$ is an arbitrary parameter introduced in Lemma~\ref{lem:optimal_action_bound} and we defined
\begin{equation*}
        c_{s,4}(\dis)=
        \begin{cases}
            4/e, & \dis=\Fdis,\\
            7.2\alpha, & \dis=\Pdis,
        \end{cases}
        \quad\text{and}\quad
        \wlow=
        \begin{cases}
            1/e, & \dis=\Fdis,\\
            0.14, & \dis=\Pdis.
        \end{cases}
\end{equation*}
Then, since $\underline{\hat{L}}_{t,i}>0$ for any $i$ with $a^*_i=0$ on $F_t$ or $D_t$, by applying the second part of Lemma~\ref{lem:penalty_adv_bound} we have
\begin{equation}\label{eq:penalty_bound_alpha_ge2}
    \E\qty[\frac{\left\langle r_{t+1}, a_{t+1} - a^*\right\rangle}{2c\ma\da\sqrt{t}}\m\hat{L}_t]
    \leq
    \frac{1}{2c\ma\da\sqrt{t}}\frac{\alpha}{\alpha-1}\frac{1}{(\eta_t\underline{\hat{L}}_{t,i})^{\alpha-1}}.
\end{equation} 
Combining \eqref{eq:stability_bound_alpha_ge_2} and \eqref{eq:penalty_bound_alpha_ge2}, we have

\begin{align*}
    \lefteqn{\E \qty[ \left\langle \hat{\ell}_t, \phi\qty(\eta_t \hat{L}_t ; \dis) - \phi\qty(\eta_t (\hat{L}_t + \hat{\ell}_t) ; \dis) \right\rangle
    +
    \frac{\left\langle r_{t+1}, a_{t+1} - a^*\right\rangle}{2c\ma\da\sqrt{t}}\m\hat{L}_t]}\\
    &\leq
    \sum_{i:a^*_i=0} \qty(\qty(2(\alpha+1)+\frac{2\alpha }{\wlow(1-\zeta)^{\alpha+1}}+c_{s,4}(\dis))\frac{1}{\underline{\hat{L}}_{t,i}}
    +\frac{1}{2c\ma\da\sqrt{t}}\frac{\alpha}{\alpha-1}\frac{1}{(\eta_t\underline{\hat{L}}_{t,i})^{\alpha-1}})\\
    &\hspace{2cm}+\frac{m}{1 - \wlow} (1 - \wlow)^{\tfrac{\zeta}{\eta_t}} \qty( \frac{\zeta}{\eta_t} + \frac{1}{\wlow} )+
        m^2(1 - \wlow)^{\frac{\zeta}{\eta_t} -1}
    \numberthis\label{eq:stochastic_bound_on_Dt_to_copy}
    \displaybreak[0]\\
    &\leq
    \sum_{i:a^*_i=0} \qty(\qty(2(\alpha+1)+\frac{2\alpha }{\wlow(1-\zeta)^{\alpha+1}}+c_{s,4}(\dis))\frac{1}{\underline{\hat{L}}_{t,i}}
    +\frac{1}{2(c\ma\da)^2}\frac{\alpha}{\alpha-1}\frac{1}{\underline{\hat{L}}_{t,i}})\\
    &\hspace{2cm}+\frac{m}{1 - \wlow} (1 - \wlow)^{\tfrac{\zeta}{\eta_t}} \qty( \frac{\zeta}{\eta_t} + \frac{1}{\wlow} )+
        m^2(1 - \wlow)^{\frac{\zeta}{\eta_t} -1}
    \numberthis\label{eq:Dt_power_reduction}
    \displaybreak[0]\\
    &=
    C_3(\dis)\sum_{i:a^*_i=0} 
    \frac{1}{\underline{\hat{L}}_{t,i}}
    +C_{4,t}(\dis),\numberthis\label{eq:stochastic_bound_on_Dt}
\end{align*}
where we defined
\begin{align*}
    C_3(\dis)&\coloneqq 2(\alpha+1)+\frac{2\alpha }{\wlow(1-\zeta)^{\alpha+1}}+c_{s,4}(\dis)+\frac{\alpha}{2(\alpha-1)(c\ma\da)^2},\\
    C_{4,t}(\dis)&\coloneqq 
    \frac{m}{1 - \wlow} (1 - \wlow)^{\tfrac{\zeta}{\eta_t}} \qty( \frac{\zeta}{\eta_t} + \frac{1}{\wlow} )+
        m^2(1 - \wlow)^{\frac{\zeta}{\eta_t} -1}.
\end{align*}
Here, \eqref{eq:Dt_power_reduction} holds since for $\alpha\geq 2$ we have
\begin{equation*}
    \frac{1}{2c\ma\da\sqrt{t}}\frac{\alpha}{\alpha-1}\frac{1}{(\eta_t\underline{\hat{L}}_{t,i})^{\alpha-1}}
    =
    \frac{1}{2(c\ma\da)^2}\frac{1}{\underline{\hat{L}}_{t,i}}
    \frac{1}{(\eta_t\underline{\hat{L}}_{t,i})^{\alpha-2}}
    \leq
    \frac{1}{2(c\ma\da)^2}\frac{1}{\underline{\hat{L}}_{t,i}},
\end{equation*}
where the inequality follows from $\eta_t\underline{\hat{L}}_{t,i}\geq m^{1/(\alpha-1)}\geq 1$ for any $i$ with $a^*_i=0$ on $F_t$ or $D_t$.

Then, by the second part of Lemma~\ref{lem:optimal_action_bound_case2}, $C_{4,t}(\dis)$ satisfies
\begin{align*}
    \sum_{t=1}^T C_{4,t}(\dis)&< \sum_{t=1}^\infty C_{4,t}(\dis)\\
    &=\sum_{t=1}^\infty \frac{m}{1 - \wlow} (1 - \wlow)^{\tfrac{\zeta}{\eta_t}} \qty( \frac{\zeta}{\eta_t} + \frac{1}{\wlow} )+
        m^2\sum_{t=1}^\infty (1 - \wlow)^{\frac{\zeta}{\eta_t} -1}\\
    &\leq O\qty(c^2m^{1-\frac{2}{\alpha}}d^{\frac{2}{\alpha}-1})+m^2\cdot O\qty(c^2m^{1-\frac{2}{\alpha}}d^{\frac{2}{\alpha}-1})\\
    &\leq
    O\qty(c^2m^{3-\frac{2}{\alpha}}d^{\frac{2}{\alpha}-1}).\numberthis\label{eq:C3t_sum_bound}
\end{align*}

Now we consider the first term of \eqref{eq:stochastic_regret_decomposition_pareto} on $F_t^c$ or $D_t^c$. 
By applying Lemmas~\ref{lem:stability} and \ref{lem:penalty_adv_bound}, we obtain
\begin{align*}
    \lefteqn{\E \qty[ \left\langle \hat{\ell}_t, \phi\qty(\eta_t \hat{L}_t ; \dis) - \phi\qty(\eta_t (\hat{L}_t + \hat{\ell}_t) ; \dis) \right\rangle
    +
    \frac{\left\langle r_{t+1}, a_{t+1} - a^*\right\rangle}{2c\ma\da\sqrt{t}}\m\hat{L}_t]}
    \displaybreak[0]\\
    &\leq
    \frac{\acon c\ma\da}{\sqrt{t}}\qty(m+\frac{\alpha}{\alpha-1} m^{\frac{1}{\alpha}}(d-m)^{1-\frac{1}{\alpha}})\\
    &\hspace{3cm}+
    \begin{cases}
        \frac{\qty(\frac{\alpha}{\alpha-1}(m-1)^{1-\frac{1}{\alpha}}+\Gamma\qty(1-\frac{1}{\alpha}))\qty(d+1)^{\frac{1}{\alpha}}+m}{2c\ma\da}, & \dis=\Fdis,\\
        \frac{\qty(\frac{\alpha}{\alpha-1}(m-1)^{1-\frac{1}{\alpha}}+\Gamma\qty(1-\frac{1}{\alpha}))\qty(d+1)^{\frac{1}{\alpha}}}{2c\ma\da}, & \dis=\Pdis,\\
    \end{cases}
    \displaybreak[0]\\
    &=
    \frac{\acon c\qty(\frac{m}{d})^{\frac{1}{2}-\frac{1}{\alpha}}}{\sqrt{t}}\qty(m+\frac{\alpha}{\alpha-1} m^{\frac{1}{\alpha}}(d-m)^{1-\frac{1}{\alpha}})\\
    &\hspace{3cm}+
    \begin{cases}
        \frac{\qty(\frac{\alpha}{\alpha-1}(m-1)^{1-\frac{1}{\alpha}}+\Gamma\qty(1-\frac{1}{\alpha}))\qty(d+1)^{\frac{1}{\alpha}}+m}{2c\qty(\frac{m}{d})^{\frac{1}{2}-\frac{1}{\alpha}}}, & \dis=\Fdis,\\
        \frac{\qty(\frac{\alpha}{\alpha-1}(m-1)^{1-\frac{1}{\alpha}}+\Gamma\qty(1-\frac{1}{\alpha}))\qty(d+1)^{\frac{1}{\alpha}}}{2c\qty(\frac{m}{d})^{\frac{1}{2}-\frac{1}{\alpha}}}, & \dis=\Pdis,
    \end{cases}\\
    &\leq 
    C_5(\dis)
    \sqrt{\frac{md}{t}},\numberthis\label{eq:stochastic_bound_on_Dtc}
\end{align*}
where we recall that
\begin{equation*}
    \acon=
    \begin{cases}
        2(\alpha+1)\qty(49+\frac{7\cdot 4^{1/\alpha}}{1-1/\alpha}), & \text{if }\dis=\Fdis,\\
        9(\alpha+1), &\text{if } \dis=\Pdis,
    \end{cases} 
\end{equation*}
is given in Lemma~\ref{lem:adv_bound_term} and we defined
\begin{equation*}
    C_5(\dis)\coloneqq
    \begin{cases}
        2c(\alpha+1)\qty(49+\frac{7\cdot 4^{1/\alpha}}{1-1/\alpha})\qty(\qty(\frac{m}{d})^{1-\frac{1}{\alpha}}+\frac{\alpha}{\alpha-1}\qty(1-\frac{m}{d})^{1-\frac{1}{\alpha}})\\
        \hspace{3cm}+\frac{\qty(\frac{\alpha}{\alpha-1}(1-\frac{1}{m})^{1-\frac{1}{\alpha}}+\Gamma\qty(1-\frac{1}{\alpha})m^{\frac{1}{\alpha}-1})(1+\frac{1}{d})^{\frac{1}{\alpha}}+\frac{m}{d}^{\frac{1}{\alpha}}}{2c}, & \dis=\Fdis,\\
        9c(\alpha+1)\qty(\qty(\frac{m}{d})^{1-\frac{1}{\alpha}}+\frac{\alpha}{\alpha-1}\qty(1-\frac{m}{d})^{1-\frac{1}{\alpha}})\\
        \hspace{3cm}+\frac{\qty(\frac{\alpha}{\alpha-1}(1-\frac{1}{m})^{1-\frac{1}{\alpha}}+\Gamma\qty(1-\frac{1}{\alpha})m^{\frac{1}{\alpha}-1})(1+\frac{1}{d})^{\frac{1}{\alpha}}}{2c}, & \dis=\Pdis.
    \end{cases}
\end{equation*}

In the following analysis, for notational convenience, we write both events $F_t$ for $\dis=\Fdis$ and $D_t$ for $\dis=\Pdis$ as $F_t$, since the corresponding results take the same form up to constant factors.

Combining \eqref{eq:stochastic_regret_decomposition_pareto} with the above results, we have 
\begin{align*}
    \R(T) &\leq
    \sum_{t=1}^{T} \E\Bigg[
        \E\Bigg[
        \ind\qty[F_t]
        \sum_{i:a^*_i=0} \frac{C_3(\dis)}{\underline{\hat{L}}_{t,i}}
        +
        \ind\qty[F_t^c]
        C_5(\dis)
        \sqrt{\frac{md}{t}}\Bigg|
        \hat{L}_t\Bigg]
        \Bigg]
    \\
    &\hspace{5.5cm}+
    C_2(\dis)\sqrt{md}
    +
    \sum_{t=1}^T C_{4,t}(\dis)\tag{by \eqref{eq:stochastic_bound_on_Dt} and \eqref{eq:stochastic_bound_on_Dtc}}\\
    &\leq
    \sum_{t=1}^{T} \E\Bigg[
        \E\Bigg[
        \ind\qty[F_t]
        \sum_{i:a^*_i=0} \frac{C_3(\dis)}{\underline{\hat{L}}_{t,i}}
        +
        \ind\qty[F_t^c]
        C_5(\dis)
        \sqrt{\frac{md}{t}}\Bigg|
        \hat{L}_t\Bigg]
        \Bigg]
    \\
    &\hspace{5.5cm}+
    C_2(\dis)\sqrt{md}
    +
    O\qty(c^2m^{3-\frac{2}{\alpha}}d^{\frac{2}{\alpha}-1}),
    \numberthis\label{eq:self-bounding_upper_bound_pareto}
\end{align*}
where the last inequality follows from \eqref{eq:C3t_sum_bound}. 

On the other hand, by Lemmas~\ref{lem:lower_bound_Ft} and \ref{lem:lower_bound_Dt}, the regret is bounded from below as
\begin{equation}\label{eq:self-bounding_lower_bound_pareto}
    \R(T)\geq\sum_{t=1}^T\E\qty[\ind\qty[F_t]c_{s,1}(\dis)\sum_{i:a^*_i=0}\frac{\Delta_i t^\frac{\alpha}{2}}{(c\ma\da\underline{\hat{L}}_{t,i})^\alpha}
    +\ind\qty[F_t^c]\frac{c_{s,2}(\dis)\Delta}{m^{\alpha/(\alpha-1)}}].
\end{equation}
We apply the self-bounding technique and considering
$\eqref{eq:self-bounding_upper_bound_pareto}-\eqref{eq:self-bounding_lower_bound_pareto}/2$ we have
\begin{align*}
    \frac{\R(T)}{2}&\leq
    \sum_{t=1}^{T}\E\qty[
        \ind\qty[F_t]
        \sum_{i:a^*_i=0} 
        \qty(\frac{C_3(\dis)}{\underline{\hat{L}}_{t,i}}
        -
        \frac{c_{s,1}(\dis)\Delta_i t^\frac{\alpha}{2}}{2(c\ma\da\underline{\hat{L}}_{t,i})^\alpha})
    ]\\
    &\hspace{3cm}+\sum_{t=1}^{T}\E\qty[
        \ind\qty[F_t^c]
        \qty(C_5(\dis)
        \sqrt{\frac{md}{t}}
        -
        \frac{c_{s,2}(\dis)\Delta}{2m^{\alpha/(\alpha-1)}})
    ]\\
    &\hspace{3cm}+
    C_2(\dis)\sqrt{md}
    +
    O\qty( c^2m^{3-\frac{2}{\alpha}}d^{\frac{2}{\alpha}-1}).\numberthis\label{eq:self-bounding_general_pareto}
\end{align*}

For the first term of \eqref{eq:self-bounding_general_pareto}, we have
\begin{align*}
    \lefteqn{\frac{C_3(\dis)}{\underline{\hat{L}}_{t,i}}
        -
        \frac{c_{s,1}(\dis)\Delta_i t^\frac{\alpha}{2}}{2(c\ma\da\underline{\hat{L}}_{t,i})^\alpha}
    \leq
    C_3(\dis)\frac{\alpha}{\alpha-1}\qty(\frac{2C_3(\dis)}{\alpha c_{s,1}(\dis)\Delta_i})^{\frac{1}{\alpha-1}}
    \frac{\qty(c\ma\da)^{\frac{\alpha}{\alpha-1}}}{t^{\frac{\alpha}{2(\alpha-1)}}}}\\
    &=
    \begin{cases}
        \qty(2(\alpha+1)+\frac{\alpha}{2(\alpha-1)(c\ma\da)^2}+\frac{2\alpha e}{(1-\zeta)^{\alpha+1}}+\frac{4}{e})
        \frac{\alpha}{\alpha-1}\\
        \hspace{1.5cm}\times\qty(\frac{4(\alpha+1)+\frac{\alpha}{(\alpha-1)(c\ma\da)^2}+\frac{4\alpha e}{(1-\zeta)^{\alpha+1}}+\frac{8}{e}}{\alpha c_{s,1}(\Fdis)\Delta_i})^{\frac{1}{\alpha-1}}
        \frac{\qty(c\ma\da)^{\frac{\alpha}{\alpha-1}}}{t^{\frac{\alpha}{2(\alpha-1)}}}, & \dis=\Fdis,\\
        \qty(2(\alpha+1)+\frac{\alpha}{2(\alpha-1)(c\ma\da)^2}+\frac{14.4\alpha}{(1-\zeta)^{\alpha+1}}+7.2\alpha)
        \frac{\alpha}{\alpha-1}\\
        \hspace{1.5cm}\times\qty(\frac{4(\alpha+1)+\frac{\alpha}{(\alpha-1)(c\ma\da)^2}+\frac{28.8\alpha}{(1-\zeta)^{\alpha+1}}+14.4\alpha}{\alpha c_{s,1}(\Pdis)\Delta_i})^{\frac{1}{\alpha-1}}
        \frac{\qty(c\ma\da)^{\frac{\alpha}{\alpha-1}}}{t^{\frac{\alpha}{2(\alpha-1)}}}, & \dis=\Pdis,
    \end{cases}\\
    &=
    O\qty(\frac{1}{
        \Delta_i^{\frac{1}{\alpha-1}}
        m^{\frac{\alpha-2}{2(\alpha-1)}}
        d^{\frac{2-\alpha}{2(\alpha-1)}}
        t^{\frac{\alpha}{2(\alpha-1)}}
        }),\numberthis\label{eq:self-bounding_first_term_pareto}
\end{align*}
where the inequality follows from $Ax-Bx^\alpha\leq A\frac{\alpha}{\alpha-1}(\frac{A}{\alpha B})^{\frac{1}{\alpha-1}}$ for any $A,B>0$ and $\alpha>1$.

For the second term of \eqref{eq:self-bounding_general_pareto}, we have
\begin{align*}
    \lefteqn{\sum_{t=1}^T 
    C_5(\dis)\sqrt{\frac{md}{t}}
    -
    \frac{\frac{c_{s,2}(\dis)\Delta}{m^{\alpha/(\alpha-1)}}}{2}
    }
    \displaybreak[0]\\
    &\leq
    \sum_{t=1}^T\max\Bigg\{
    C_5(\dis)\sqrt{\frac{md}{t}}
    -
    \frac{c_{s,2}(\dis)\Delta}{2m^{\alpha/(\alpha-1)}},0
    \Bigg\}
    \displaybreak[0]\\
    &\leq
    C_5^2(\dis)md\frac{2m^{\alpha/(\alpha-1)}}{c_{s,2}(\dis)\Delta}
    \displaybreak[0]\\
    &=
    \begin{cases}
        md\Bigg(2c(\alpha+1)\qty(49+\frac{7\cdot 4^{1/\alpha}}{1-1/\alpha})\qty(\qty(\frac{m}{d})^{1-\frac{1}{\alpha}}+\frac{\alpha}{\alpha-1}\qty(1-\frac{m}{d})^{1-\frac{1}{\alpha}})\\
        \hspace{1.4cm}+
        \frac{\qty(\frac{\alpha}{\alpha-1}(1-\frac{1}{m})^{1-\frac{1}{\alpha}}+\Gamma\qty(1-\frac{1}{\alpha})m^{\frac{1}{\alpha}-1})(1+\frac{1}{d})^{\frac{1}{\alpha}}+\frac{m}{d}^{\frac{1}{\alpha}}}{2c}
        \Bigg)^2 
        \frac{2^{\alpha+1} \qty(m^\frac{\alpha}{\alpha-1}+1)+2}{\Delta}, & \dis=\Fdis,\\
        md\Bigg(
        9c(\alpha+1)\qty(\qty(\frac{m}{d})^{1-\frac{1}{\alpha}}+\frac{\alpha}{\alpha-1}\qty(1-\frac{m}{d})^{1-\frac{1}{\alpha}})
        \\
        \hspace{1.4cm}+
        \frac{\qty(\frac{\alpha}{\alpha-1}(1-\frac{1}{m})^{1-\frac{1}{\alpha}}+\Gamma\qty(1-\frac{1}{\alpha})m^{\frac{1}{\alpha}-1})(1+\frac{1}{d})^{\frac{1}{\alpha}}}{2c}
        \Bigg)^2 
         \frac{2^{\alpha-1}(2+m^{\alpha/(\alpha-1)})+2}{\Delta}, & \dis=\Pdis,
    \end{cases}
    \displaybreak[0]\\
    &=
    O\qty(\frac{m^{\frac{2\alpha-1}{\alpha-1}}d}{\Delta}).\numberthis\label{eq:self-bounding_second_term_pareto}
\end{align*}
By substituting the results of 
\eqref{eq:self-bounding_first_term_pareto} and
\eqref{eq:self-bounding_second_term_pareto} into
\eqref{eq:self-bounding_general_pareto}, we obtain the regret bound for Pareto distributions or Fr\'{e}chet distributions with shape $\alpha\geq 2$ as
\begin{align*}
    \R(T) &\leq
    O\qty(\sum_{i:a^*_i=0}\sum_{t=1}^T\frac{1}{
        \Delta_i^{\frac{1}{\alpha-1}}
        m^{\frac{\alpha-2}{2(\alpha-1)}}
        d^{\frac{2-\alpha}{2(\alpha-1)}}
        t^{\frac{\alpha}{2(\alpha-1)}}
        }
    )\\
    &\hspace{2.8cm}+
    O\qty(\frac{m^{\frac{2\alpha-1}{\alpha-1}}d}{\Delta})
    +
    O\qty(\frac{\sqrt{md}}{c})
    +
    O\qty( c^2m^{3-\frac{2}{\alpha}}d^{\frac{2}{\alpha}-1})
    \displaybreak[0]\\
    &\leq
    O\qty(\frac{m^{\frac{2\alpha-1}{\alpha-1}}d}{\Delta})+
    O\qty(\frac{\sqrt{md}}{c})+
    O\qty( c^2m^{3-\frac{2}{\alpha}}d^{\frac{2}{\alpha}-1})\\
    &\hspace{2.8cm}+
    \begin{cases*}
        O\qty(\sum_{i:a^*_i=0}\frac{\log T}{\Delta_i}), & if $\alpha=2$,\\
        O\qty(\sum_{i:a^*_i=0}\frac{1}{(\alpha-2)}\frac{T^{\frac{\alpha-2}{2(\alpha-1)}}}{\Delta_i^{\frac{1}{\alpha-1}}m^{\frac{\alpha-2}{2(\alpha-1)}}d^{\frac{2-\alpha}{2(\alpha-1)}}}), & if $\alpha>2$.
    \end{cases*}
\end{align*}

\subsubsection{Analysis for Shape \texorpdfstring{$\alpha\in\qty(1,2)$}{α ∈ (1,2)}}

Similarly to the previous case, we still write both events $F_t$ for $\dis=\Fdis$ and $D_t$ for $\dis=\Pdis$ as $F_t$ here. 
The proof for the case of $\alpha\in(1,2)$ coincides with the previous case up to \eqref{eq:stochastic_bound_on_Dt_to_copy} and differs from \eqref{eq:Dt_power_reduction} onward, where on $F_t$ we obtain
\begin{align*}
    \lefteqn{\E \qty[ \left\langle \hat{\ell}_t, \phi\qty(\eta_t \hat{L}_t ; \dis) - \phi\qty(\eta_t (\hat{L}_t + \hat{\ell}_t) ; \dis) \right\rangle
    +
    \frac{\left\langle r_{t+1}, a_{t+1} - a^*\right\rangle}{2c\ma\da\sqrt{t}}\m\hat{L}_t]}\\
    &\leq
    \sum_{i:a^*_i=0} \qty(\qty(2(\alpha+1)+\frac{2\alpha }{\wlow(1-\zeta)^{\alpha+1}}+c_{s,4}(\dis))\frac{1}{\underline{\hat{L}}_{t,i}}
    +\frac{1}{2c\ma\da\sqrt{t}}\frac{\alpha}{\alpha-1}\frac{1}{(\eta_t\underline{\hat{L}}_{t,i})^{\alpha-1}})\\
    &\hspace{0.8cm}+\frac{m}{1 - \wlow} (1 - \wlow)^{\tfrac{\zeta}{\eta_t}} \qty( \frac{\zeta}{\eta_t} + \frac{1}{\wlow} )+
        m^2(1 - \wlow)^{\frac{\zeta}{\eta_t} -1}
    \tag{\eqref{eq:stochastic_bound_on_Dt_to_copy} itself}
    \displaybreak[0]\\
    &\leq
    \sum_{i:a^*_i=0} \frac{2(\alpha+1)+\frac{2\alpha }{\wlow(1-\zeta)^{\alpha+1}}+c_{s,4}(\dis)}{\underline{\hat{L}}_{t,i}}
    +
    \frac{1}{2c\ma\da\sqrt{t}}\frac{\alpha}{\alpha-1}\frac{1}{(\eta_t\underline{\hat{L}}_{t,i})^{\alpha-1}}
    +
    C_{4,t}(\dis)
    \displaybreak[0]\\
    &=
    \sum_{i:a^*_i=0} \frac{C_3'(\dis)}{\underline{\hat{L}}_{t,i}}
    +
    \frac{1}{2(c\ma\da)^\alpha}\frac{\alpha}{\alpha-1}\frac{1}{t^{1-\frac{\alpha}{2}}\underline{\hat{L}}_{t,i}^{\alpha-1}}
    +
    C_{4,t}(\dis).
\end{align*}
Here, we defined
\begin{equation*}
    C_3'(\dis)\coloneqq
    \begin{cases}
        2(\alpha+1)+\frac{2\alpha e}{(1-\zeta)^{\alpha+1}}+\frac{4}{e}, & \dis=\Fdis,\\
        2(\alpha+1)+\frac{14.4\alpha}{(1-\zeta)^{\alpha+1}}+7.2\alpha, & \dis=\Pdis.
    \end{cases}
\end{equation*}

Note that we obtained \eqref{eq:self-bounding_upper_bound_pareto} by combining \eqref{eq:stochastic_regret_decomposition_pareto} and \eqref{eq:stochastic_bound_on_Dt}--\eqref{eq:stochastic_bound_on_Dtc}. Here \eqref{eq:stochastic_regret_decomposition_pareto}, \eqref{eq:C3t_sum_bound} and \eqref{eq:stochastic_bound_on_Dtc} also hold for this setting. By replacing \eqref{eq:stochastic_bound_on_Dt} with the above result, instead of \eqref{eq:self-bounding_upper_bound_pareto} we obtain
\begin{align*}
    \R(T)&\leq
    \sum_{t=1}^{T}\E\qty[
        \ind\qty[F_t]
        \sum_{i:a^*_i=0} 
        \qty(\frac{C_3'(\dis)}{\underline{\hat{L}}_{t,i}}
        +
        \frac{1}{2(c\ma\da)^\alpha}\frac{\alpha}{\alpha-1}\frac{1}{t^{1-\frac{\alpha}{2}}\underline{\hat{L}}_{t,i}^{\alpha-1}})
    ]\\
    &\hspace{1.7cm}
    +\sum_{t=1}^{T}\E\qty[
        \ind\qty[F_t^c]
        C_5(\dis)\sqrt{\frac{md}{t}}
    ]
    +
    C_2(\dis)\sqrt{md}
    +
    O\qty( c^2m^{3-\frac{2}{\alpha}}d^{\frac{2}{\alpha}-1}).\numberthis\label{eq:self-bounding_upper_bound_alpha_less_2}
\end{align*}
The regret lower bound in \eqref{eq:self-bounding_lower_bound_pareto} still holds for this setting. 
Then, by following the same step as \eqref{eq:self-bounding_general_pareto} to apply the self-bounding technique 
and considering $\eqref{eq:self-bounding_upper_bound_alpha_less_2}-\eqref{eq:self-bounding_lower_bound_pareto}/2$, we have

\begin{align*}
    \frac{\R(T)}{2}&\leq
    \sum_{t=1}^{T}\E\qty[
        \ind\qty[F_t]
        \sum_{i:a^*_i=0} 
        \qty(\frac{C_3'(\dis)}{\underline{\hat{L}}_{t,i}}
        +
        \frac{1}{2(c\ma\da)^\alpha}\frac{\alpha}{\alpha-1}\frac{1}{t^{1-\frac{\alpha}{2}}\underline{\hat{L}}_{t,i}^{\alpha-1}}-
        \frac{c_{s,1}(\dis)\Delta_i t^\frac{\alpha}{2}}{2(c\ma\da\underline{\hat{L}}_{t,i})^\alpha})
    ]
    \numberthis\label{eq:self-bounding_general_pareto_alpha_less_2_part1}\\
    &+\sum_{t=1}^{T}\E\qty[
        \ind\qty[F_t^c]
        \qty(
        C_5(\dis)\sqrt{\frac{md}{t}}
        -
        \frac{c_{s,2}(\dis)\Delta}{2m^{\alpha/(\alpha-1)}})
    ]
    +
    C_2(\dis)\sqrt{md}
    +
    O\qty( c^2m^{3-\frac{2}{\alpha}}d^{\frac{2}{\alpha}-1}),
\end{align*}
where the first term \eqref{eq:self-bounding_general_pareto_alpha_less_2_part1} can be written as
\begin{multline}\label{eq:to_bound_two_terms_pareto_alpha_less_2}
    \eqref{eq:self-bounding_general_pareto_alpha_less_2_part1}=
    \sum_{t=1}^{T}\E\Bigg[
        \ind\qty[F_t]
        \sum_{i:a^*_i=0} 
        \qty(\frac{C_3'(\dis)}{\underline{\hat{L}}_{t,i}}
    -
    \frac{c_{s,1}(\dis)\Delta_i t^\frac{\alpha}{2}}{4(c\ma\da\underline{\hat{L}}_{t,i})^\alpha})
    \\+
    \sum_{i:a^*_i=0} \frac{1}{2(c\ma\da)^\alpha}
    \qty(\frac{\alpha}{\alpha-1}\frac{1}{t^{1-\frac{\alpha}{2}}\underline{\hat{L}}_{t,i}^{\alpha-1}}
    -
    \frac{c_{s,1}(\dis)\Delta_i t^\frac{\alpha}{2}}{2\underline{\hat{L}}_{t,i}^\alpha}
    )\Bigg].
\end{multline}
The first term of \eqref{eq:to_bound_two_terms_pareto_alpha_less_2} can be bounded in the same way as \eqref{eq:self-bounding_first_term_pareto}, which results in
\begin{align*}
    \lefteqn{\frac{C_3'(\dis)}{\underline{\hat{L}}_{t,i}}
    -
    \frac{c_{s,1}(\dis)\Delta_i t^\frac{\alpha}{2}}{4(c\ma\da\underline{\hat{L}}_{t,i})^\alpha}
    \leq
    C_3'(\dis)\frac{\alpha}{\alpha-1}\qty(\frac{4C_3'(\dis)}{\alpha c_{s,1}(\dis)\Delta_i})^{\frac{1}{\alpha-1}}
    \frac{\qty(c\ma\da)^{\frac{\alpha}{\alpha-1}}}{t^{\frac{\alpha}{2(\alpha-1)}}}}\\
    &=
    \begin{cases}
        \qty(2(\alpha+1)+\frac{2\alpha e}{(1-\zeta)^{\alpha+1}}+\frac{4}{e})
        \frac{\alpha}{\alpha-1}
        \qty(\frac{8(\alpha+1)+\frac{8\alpha e}{(1-\zeta)^{\alpha+1}}+\frac{16}{e}}{\alpha c_{s,1}(\Fdis)\Delta_i})^{\frac{1}{\alpha-1}}
        \frac{\qty(c\ma\da)^{\frac{\alpha}{\alpha-1}}}{t^{\frac{\alpha}{2(\alpha-1)}}}, & \dis=\Fdis,\\
        \qty(2(\alpha+1)+\frac{14.4\alpha}{(1-\zeta)^{\alpha+1}}+7.2\alpha)
        \frac{\alpha}{\alpha-1}
        \qty(\frac{8(\alpha+1)+\frac{57.6\alpha}{(1-\zeta)^{\alpha+1}}+28.8\alpha}{\alpha c_{s,1}(\Pdis)\Delta_i})^{\frac{1}{\alpha-1}}
        \frac{\qty(c\ma\da)^{\frac{\alpha}{\alpha-1}}}{t^{\frac{\alpha}{2(\alpha-1)}}}, & \dis=\Pdis.
    \end{cases}
\end{align*}

For the second term of \eqref{eq:to_bound_two_terms_pareto_alpha_less_2}, since $Ax^{\alpha-1}-Bx^\alpha\leq\frac{A}{\alpha}\qty(\frac{\alpha-1}{\alpha}\frac{A}{B})^{\alpha-1}$ for any $A,B>0$ and $\alpha>1$, we have
\begin{align*}
    \frac{\alpha}{\alpha-1}\frac{1}{t^{1-\frac{\alpha}{2}}\underline{\hat{L}}_{t,i}^{\alpha-1}}
    -
    \frac{c_{s,1}(\dis)\Delta_i t^\frac{\alpha}{2}}{2\underline{\hat{L}}_{t,i}^\alpha}
    &\leq
    \frac{1}{\alpha-1}\frac{1}{t^{1-\frac{\alpha}{2}}}\qty(\frac{2}{\Delta_i t c_{s,1}(\dis)})^{\alpha-1}\\
    &=
    \frac{1}{\alpha-1}\qty(\frac{2}{\Delta_i c_{s,1}(\dis)})^{\alpha-1}\frac{1}{t^{\frac{\alpha}{2}}}.
\end{align*}
Therefore, \eqref{eq:self-bounding_general_pareto_alpha_less_2_part1} can be bounded as
\begin{equation*}
    \eqref{eq:self-bounding_general_pareto_alpha_less_2_part1}\leq
    \begin{cases}
        \qty(2(\alpha+1)+\frac{2\alpha e}{(1-\zeta)^{\alpha+1}}+\frac{4}{e})
        \frac{\alpha}{\alpha-1}
        \qty(\frac{8(\alpha+1)+\frac{8\alpha e}{(1-\zeta)^{\alpha+1}}+\frac{16}{e}}{\alpha c_{s,1}(\Fdis)\Delta_i})^{\frac{1}{\alpha-1}}\\
        \hspace{1cm}\times\sum_{t=1}^T \frac{\qty(c\ma\da)^{\frac{\alpha}{\alpha-1}}}{t^{\frac{\alpha}{2(\alpha-1)}}}
        +\frac{1}{2(c\ma\da)^\alpha}
        \frac{1}{\alpha-1}\qty(\frac{2}{\Delta_i c_{s,1}(\dis)})^{\alpha-1}\sum_{t=1}^T\frac{1}{t^{\frac{\alpha}{2}}}, & \dis=\Fdis,\\
        \qty(2(\alpha+1)+\frac{14.4\alpha}{(1-\zeta)^{\alpha+1}}+7.2\alpha)
        \frac{\alpha}{\alpha-1}
        \qty(\frac{8(\alpha+1)+\frac{57.6\alpha}{(1-\zeta)^{\alpha+1}}+28.8\alpha}{\alpha c_{s,1}(\Pdis)\Delta_i})^{\frac{1}{\alpha-1}}\\
        \hspace{1cm}\times\sum_{t=1}^T \frac{\qty(c\ma\da)^{\frac{\alpha}{\alpha-1}}}{t^{\frac{\alpha}{2(\alpha-1)}}}
        +\frac{1}{2(c\ma\da)^\alpha}
        \frac{1}{\alpha-1}\qty(\frac{2}{\Delta_i c_{s,1}(\dis)})^{\alpha-1}\sum_{t=1}^T\frac{1}{t^{\frac{\alpha}{2}}}, & \dis=\Pdis.
    \end{cases}
\end{equation*}
Here, note that the exponent of $t$ in the first term is $\frac{\alpha}{2(\alpha-1)}>1$.
Therefore, the summation over $t$ of the first term converges to a constant. Then, by following the same steps from \eqref{eq:self-bounding_second_term_pareto}, we obtain the regret bound for Pareto distributions or Fr\'{e}chet distributions with shape $\alpha\in(1,2)$ as
\begin{multline*}
    \R(T) \leq
    O\qty(\sum_{i:a^*_i=0}\frac{1}{2-\alpha}\frac{1}{c^\alpha m^{\frac{\alpha}{2}-1} d^{1-\frac{\alpha}{2}}}\frac{T^{1-\frac{\alpha}{2}}}{\Delta_i^{\alpha-1}})
    +
    O\qty(\sum_{i:a^*_i=0}\frac{m^{\frac{\alpha-2}{2(\alpha-1)}}d^{\frac{2-\alpha}{2(\alpha-1)}}}{\Delta_i^{\frac{1}{\alpha-1}}})\\
    +O\qty(\frac{m^{\frac{2\alpha-1}{\alpha-1}}d}{\Delta})+
    O\qty(\frac{\sqrt{md}}{c})+
    O\qty( c^2m^{3-\frac{2}{\alpha}}d^{\frac{2}{\alpha}-1}).\dqed
\end{multline*}

\section{Conclusion}

In this paper, we studied the optimality and complexity of FTPL with Fr\'{e}chet or Pareto perturbation with shape $\alpha>1$ in $m$-set combinatorial semi-bandit problems. 
We showed that FTPL achieves the best possible regret bound of $O(\sqrt{mdT})$ in adversarial setting and improves upon the worst-case regret bound in stochastic setting. 
Especially, FTPL achieves a stochastic logarithmic regret when $\alpha=2$, meaning that FTPL can achieve BOBW guarantee in $m$-set semi-bandits. 
Furthermore, we extended CGR to $m$-set semi-bandits, enabling FTPL to achieve the average complexity of $O\qty(md(\log(d/m)+1))$ without sacrificing the regret guarantee. 
Our experiments demonstrated that CGR achieves superiority on computational efficiency without degrading regret performance compared to the original GR.


\acks{BC was supported partially by JST/CREST Innovative Measurement and Analysis (Grant Number JPMJCR2333). 
JL was supported by the National Research Foundation of Korea (NRF) grant funded by the Korea government (MSIT) (No. RS-2024-00395303) and by the grant No. 2025-02304717 (IITP) funded by Korea Government (MSIT). 
CK was supported by the grant Nos. 2024-00460980; and 2025-02304717 (IITP) funded by the Korea government (the Ministry of Science and ICT). 
JH was supported partially by JSPS KAKENHI (Grant Number JP25K03184). 
Chansoo Kim and Junya Honda are the co-corresponding authors.}


\newpage

\appendix
\section{Technical Lemmas}\label{app:lemmas}

\begin{lemma}(Gautschi's inequality, \citeauthor{gautschi1959some}, \citeyear{gautschi1959some})\label{lem:Gautschi}
    For $x > 0$ and $s \in (0,1)$,
    $$x^{1 - s} < \frac{\Gamma(x + 1)}{\Gamma(x + s)} < (x + 1)^{1 - s}.$$
\end{lemma}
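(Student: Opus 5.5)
The plan is to derive both inequalities from the strict log-convexity of $\Gamma$ on $(0,\infty)$, using the functional equation $\Gamma(y+1)=y\Gamma(y)$ to collapse the resulting products. First I will record that $\log\Gamma$ is strictly convex on $(0,\infty)$. One route is the series $(\log\Gamma)''(y)=\sum_{n\ge 0}(y+n)^{-2}>0$. Another is to apply H\"older's inequality to $\Gamma(y)=\int_0^\infty t^{y-1}e^{-t}\dd t$; there equality would force proportional integrands, which is impossible for distinct arguments. In either form, for $y_1\neq y_2$ and $\theta\in(0,1)$ we get $\Gamma(\theta y_1+(1-\theta)y_2)<\Gamma(y_1)^{\theta}\Gamma(y_2)^{1-\theta}$.

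For the lower bound, I will write $x+s=(1-s)\,x+s\,(x+1)$ and apply strict log-convexity. This gives
\begin{equation*}
\Gamma(x+s)<\Gamma(x)^{1-s}\Gamma(x+1)^{s}=\qty(\frac{\Gamma(x+1)}{x})^{1-s}\Gamma(x+1)^{s}=x^{s-1}\Gamma(x+1),
\end{equation*}
where we used $\Gamma(x)=\Gamma(x+1)/x$. Rearranging yields $\Gamma(x+1)/\Gamma(x+s)>x^{1-s}$.

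For the upper bound, I will write $x+1=s\,(x+s)+(1-s)\,(x+s+1)$, which one checks by expanding. Strict log-convexity together with $\Gamma(x+s+1)=(x+s)\Gamma(x+s)$ then gives
\begin{equation*}
\Gamma(x+1)<\Gamma(x+s)^{s}\Gamma(x+s+1)^{1-s}=(x+s)^{1-s}\Gamma(x+s).
\end{equation*}
Since $x+s<x+1$, this implies $\Gamma(x+1)/\Gamma(x+s)<(x+s)^{1-s}<(x+1)^{1-s}$.

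There is no serious obstacle here. The only points needing care are finding the two convex-combination identities and justifying strictness, which I will obtain from the positivity of $(\log\Gamma)''$.
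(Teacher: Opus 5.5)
Your proof is correct. It does not compete with an argument in the paper, because the paper never proves this lemma: it lists it among the technical lemmas and attributes it to Gautschi (1959). What you give is the classical log-convexity route, essentially Wendel's 1948 argument.

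The details check out:
\begin{itemize}
\item Both convex-combination identities are correct, and the weights $1-s$ and $s$ lie in $(0,1)$.
\item The two points being combined are distinct, so strict log-convexity applies. Either the H\"older equality case or $(\log\Gamma)''(y)=\sum_{n\ge 0}(y+n)^{-2}>0$ justifies it, and both inequalities come out strict.
\item The functional equation collapses each product exactly as you write.
\end{itemize}

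Compared with the bare citation, your version costs a few lines but makes the appendix self-contained. It also proves slightly more than stated:
\begin{itemize}
\item Your first step is Wendel's bound $\Gamma(x+s)<x^{s}\Gamma(x)$.
\item Your second step gives the sharper upper bound $\Gamma(x+1)/\Gamma(x+s)<(x+s)^{1-s}$, of which the stated $(x+1)^{1-s}$ is a trivial weakening.
\end{itemize}
The lemma does not need this refinement. It could, however, tighten constants where the paper applies the lemma, for instance by replacing $(\theta+1/\alpha)^{1/\alpha}$ with $\theta^{1/\alpha}$ in the Pareto estimate \eqref{sum_twice}.
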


\begin{lemma}\citep[Eq.~(3.7)]{malik1966exact}
    \label{lem:pareto_order_statistics}
    Let $X_{k,n}$ be the $k$-th order statistics of i.i.d.~RVs from $\mathcal{P}_{\alpha}$ for $k\in[n]$, where $\alpha>1$. 
    Then, we have
    \begin{equation*}
        \E[X_{k,n}] =
        \frac{\Gamma\qty(n+1)\Gamma\qty(n-k-\frac{1}{\alpha}+1)}{\Gamma\qty(n-k+1)\Gamma\qty(n-\frac{1}{\alpha}+1)}.
    \end{equation*}
\end{lemma}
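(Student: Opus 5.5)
The statement is Lemma~\ref{lem:pareto_order_statistics}, the closed-form mean of the $k$-th order statistic of $n$ i.i.d.\ Pareto variables. I take $X_{k,n}$ to be the $k$-th smallest, which is the convention that matches the formula, and compute $\E[X_{k,n}]$ directly from the order-statistic density via the quantile transform.

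\emph{Step 1: write the density.} For i.i.d.\ samples with cdf $F$ and density $f$, the $k$-th smallest has density
\[
\frac{n!}{(k-1)!(n-k)!}F(x)^{k-1}(1-F(x))^{n-k}f(x).
\]
For $\mathcal{P}_\alpha$ we have $F(x)=1-x^{-\alpha}$ on $x\ge 1$.

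\emph{Step 2: change variables.} Substitute $u=F(x)$, so that $x=(1-u)^{-1/\alpha}$ and $f(x)\,\dd x=\dd u$. This gives
\[
\E[X_{k,n}]=\frac{n!}{(k-1)!(n-k)!}\int_0^1 u^{k-1}(1-u)^{n-k-1/\alpha}\,\dd u .
\]

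\emph{Step 3: evaluate as a Beta integral.} The integral equals
\[
B\qty(k,\,n-k+1-\tfrac1\alpha)=\frac{\Gamma(k)\,\Gamma(n-k+1-1/\alpha)}{\Gamma(n+1-1/\alpha)} .
\]

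\emph{Step 4: simplify.} Replace $n!=\Gamma(n+1)$ and $(n-k)!=\Gamma(n-k+1)$. The factor $\Gamma(k)$ cancels $(k-1)!$, leaving
\[
\frac{\Gamma(n+1)\,\Gamma(n-k+1-1/\alpha)}{\Gamma(n-k+1)\,\Gamma(n+1-1/\alpha)},
\]
which is exactly the claimed formula.

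\emph{Finiteness, the only point needing care.} The Beta integral requires $n-k+1-1/\alpha>0$. This holds for every $k\le n$ because $\alpha>1$ gives $1/\alpha<1$. The worst case is the maximum $k=n$, where the exponent of $(1-u)$ is $-1/\alpha>-1$ and the integral still converges. So the main obstacle is merely fixing the ordering convention (ascending) and checking this integrability condition; the rest is routine.
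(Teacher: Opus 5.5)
Your proof is correct. With the ascending convention, the substitution $u=F(x)$, $x=(1-u)^{-1/\alpha}$ gives
\[
\E[X_{k,n}]=\frac{n!}{(k-1)!\,(n-k)!}\,B\bigl(k,\;n-k+1-1/\alpha\bigr),
\]
which simplifies exactly to the stated ratio of Gamma functions. The condition $\alpha>1$ is precisely what keeps the $k=n$ case integrable.

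The paper gives no argument of its own; it cites Eq.~(3.7) of \citet{malik1966exact}. Your quantile-transform and Beta-integral computation is the standard derivation behind that formula, so it makes the lemma self-contained where the paper relies on the reference.

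Fixing the ordering convention explicitly also pays off downstream. In Lemma~\ref{lem:max_sum_perturbation} the paper's $r_k^*$ is the $k$-th \emph{largest}, i.e.\ $r_k^*=X_{d-k+1,d}$. So when the formula is used there to bound the sum of the top $m$ perturbations, the index must be read as $d-k+1$. The paper's intermediate displays blur this, even though its final bound is consistent with summing the top $m$.
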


\begin{lemma}\label{lem:order_statistics}
Let $F(x)$ and $G(x)$ be CDFs of some random variables
such that $G(x)\ge F(x)$ for all $x\in\mathbb{R}$.
Let $(X_1,X_2,\dots,X_n)$ (resp.~$(Y_1,Y_2,\dots,Y_n)$) be RVs i.i.d.~from $F$ (resp.~$G$), and
$X_{k,n}$ (resp.~$Y_{k,n}$) be its $k$-th order statistics for any $k \in [n]$.
Then, $\E[Y_{k,n}] \le \E[X_{k,n}]$ holds.
\end{lemma}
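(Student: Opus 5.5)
We prove the statement by a coupling argument based on the quantile transform. Let $U_1,\dots,U_n$ be i.i.d.\ uniform on $(0,1)$. Define the generalized inverses $F^{-1}(u)=\inf\{x:F(x)\ge u\}$ and $G^{-1}(u)=\inf\{x:G(x)\ge u\}$. Then set $X_i=F^{-1}(U_i)$ and $Y_i=G^{-1}(U_i)$. By the standard inverse-transform fact, the $X_i$ are i.i.d.\ from $F$ and the $Y_i$ are i.i.d.\ from $G$. So it suffices to compare the order statistics on this common probability space.

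The first step is the pointwise inequality $G^{-1}(u)\le F^{-1}(u)$ for every $u\in(0,1)$. It follows from $G\ge F$: any $x$ with $F(x)\ge u$ also satisfies $G(x)\ge u$. Hence $\{x:F(x)\ge u\}\subset\{x:G(x)\ge u\}$, and taking infima reverses the inclusion. This gives $Y_i\le X_i$ almost surely, for every $i$.

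The second step uses the monotonicity of order statistics. If two vectors satisfy $y_i\le x_i$ coordinatewise, then for each $k$ the $k$-th order statistic of $y$ is at most that of $x$. To see this, fix the (descending or ascending, whichever convention is used; the argument is symmetric) $k$-th order statistic $x_{(k)}$ of $x$. At least $n-k+1$ (resp.\ $k$) coordinates of $x$ lie on the relevant side of $x_{(k)}$. The corresponding coordinates of $y$ are no larger, so at least that many coordinates of $y$ are $\le x_{(k)}$ (resp.\ bounded appropriately), which forces $y_{(k)}\le x_{(k)}$. Equivalently, $y_{(k)}=\min_{|S|=k}\max_{i\in S}y_i$ (or the max-min form), and this expression is coordinatewise nondecreasing. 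Applying it gives $Y_{k,n}\le X_{k,n}$ almost surely.

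Finally, we take expectations. Monotonicity of expectation gives $\E[Y_{k,n}]\le\E[X_{k,n}]$, provided the expectations are well defined, possibly as $+\infty$ on the right; the paper's use of this lemma is with Fréchet and Pareto order statistics, which are integrable for $\alpha>1$. The only point needing mild care is the validity of the quantile coupling with generalized inverses, which holds because $F^{-1}(u)\le x\iff u\le F(x)$ by right-continuity. I expect no real obstacle here.
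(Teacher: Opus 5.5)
Your proposal is correct and follows the same route as the paper: you couple $X_i=F^{-1}(U_i)$ and $Y_i=G^{-1}(U_i)$ through common uniforms, use $G\ge F$ to get $Y_i\le X_i$ almost surely, and conclude $Y_{k,n}\le X_{k,n}$ almost surely before taking expectations. Your version additionally justifies two steps the paper asserts without detail: the comparison $G^{-1}\le F^{-1}$ of the generalized inverses, and the coordinatewise monotonicity of order statistics.
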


\begin{proof}
Let $U\in[0,1]$ be uniform random variable over $[0,1]$
and let $X=F^{-1}(U)$ and $Y=G^{-1}(U)$, where
$F^{-1}$ and $G^{-1}$ are the left-continuous inverses of $F$ and $G$, respectively.
Then, 
$Y\le X$ holds almost surely and the marginal distributions satisfy $X \sim F$ and $Y\sim G$.
Therefore, if when take $(X_1,Y_1),\dots, (X_n,Y_n)$ as i.i.d.~copies of this $(X,Y)$, we see that
$Y_{k,n}\le X_{k,n}$ holds almost surely, which proves the lemma.
\end{proof}

\begin{lemma}\label{lem:penalty_pareto_frechet}
Let
$X_{k,n}$ (resp.~$Y_{k,n}$) be the $k$-th order statistics
of i.i.d.~RVs from $\mathcal{P}_{\alpha}$ (resp.~$\mathcal{F}_{\alpha}$) for $k\in[n]$.
Then, $\E[Y_{k,n}] \le \E[X_{k,n}]+1$ holds.
\end{lemma}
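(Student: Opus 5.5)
The plan is to reduce the claim to a pointwise comparison of distribution functions and then invoke Lemma~\ref{lem:order_statistics}. Write $F_P(x)=1-x^{-\alpha}$ for $x\ge 1$ (and $0$ otherwise) for the Pareto CDF, and $F_F(x)=e^{-x^{-\alpha}}$ for $x>0$ for the Fr\'echet CDF. The idea is to compare $\mathcal{F}_\alpha$ with the \emph{shifted} Pareto variable $X-1$, where $X\sim\mathcal{P}_\alpha$. Its CDF is
\[
G(x)=F_P(x+1)=1-(x+1)^{-\alpha}
\]
for $x\ge 0$, and $G(x)=0$ for $x<0$. I then aim to show $F_F(x)\ge G(x)$ for all $x\in\mathbb{R}$.

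For $x\le 0$ both functions vanish, so the inequality is trivial. For $x>0$ it reads
\[
e^{-x^{-\alpha}}\ge 1-(1+x)^{-\alpha}.
\]
Since $e^{-u}\ge 1-u$, it suffices to have $1-x^{-\alpha}\ge 1-(1+x)^{-\alpha}$, that is $(1+x)^{-\alpha}\ge x^{-\alpha}$. This is false, so the crude bound is in the wrong direction and a finer argument is needed.

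Instead, I would set $u=x^{-\alpha}$ and $v=(1+x)^{-\alpha}<u$, and show $e^{-u}\ge 1-v$. When $u\ge 1$ I would try $e^{-u}\ge 1-v$ directly, but this can fail for small $x$: as $x\to 0$, $e^{-u}\to 0$ while $1-v\to 0$ only like $\alpha x$. So a pointwise comparison with shift $1$ might fail near $0$, and I must check it. As $x\to0^+$, $e^{-x^{-\alpha}}$ is super-exponentially small, whereas $1-(1+x)^{-\alpha}\approx\alpha x>0$. Hence $F_F<G$ near $0$ and the direct comparison fails.

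The fix is to reverse the roles and use the pointwise order $F_F(x)\ge F_P(x)$ for the unshifted Pareto. This holds trivially for $x<1$, since then $F_P(x)=0$. For $x\ge1$ it holds because $e^{-u}\ge 1-u$ with $u=x^{-\alpha}$. Lemma~\ref{lem:order_statistics} then gives $\E[Y_{k,n}]\le\E[X_{k,n}]$, which is even stronger than the claim with $+1$. So the plan is:

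\begin{enumerate}
\item Verify $e^{-x^{-\alpha}}\ge 1-x^{-\alpha}$ for $x\ge1$ via $e^{-u}\ge1-u$.
\item Note that $F_P\equiv0\le F_F$ on $x<1$.
\item Apply Lemma~\ref{lem:order_statistics} with $F=F_P$ and $G=F_F$.
\end{enumerate}

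The $+1$ slack is then unused. Alternatively, if one wants it to appear naturally, the coupling in the proof of Lemma~\ref{lem:order_statistics} yields $Y\le X$ almost surely, and the $+1$ is harmless.

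The only real obstacle is choosing the direction of the comparison correctly. One has to note that the Fr\'echet CDF dominates the Pareto CDF everywhere, because the Pareto variable lives on $[1,\infty)$ while the Fr\'echet variable has comparable tails. Once that is settled, the remaining steps are elementary.
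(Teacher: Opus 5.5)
Your final argument is correct and is essentially the paper's proof: you show the Fr\'echet CDF dominates the Pareto CDF pointwise using $e^{-u}\ge 1-u$ for $x\ge 1$ (and $F_P(x)=0$ for $x<1$), then apply Lemma~\ref{lem:order_statistics}. The paper phrases the comparison as one against $X+1$, but its displayed chain actually ends at $\ind[x\ge 1](1-x^{-\alpha})$, the unshifted Pareto CDF, so your sharper bound without the $+1$ agrees with what it proves. Your handling of $x\in(0,1)$ is also cleaner than the paper's intermediate step, whose middle expression is negative on that interval.
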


\begin{proof}
Letting $F(x)$ and $G(x)$ be the CDFs of $\mathcal{P}_{\alpha}$ and $\mathcal{F}_{\alpha}$,
we have
\begin{align*}
G(x)&=\ind[x\geq 0]e^{-1/x^{\alpha}}\\
&\geq \ind[x\geq 0]\left(1-\frac{1}{x^{\alpha}}\right)\\
&\geq \ind[x-1\geq 0]\left(1-\frac{1}{((x-1)+1)^{\alpha}}\right)\\
& = F(x-1),
\end{align*}
where $F(x-1)$ is the CDF of $X+1$ for $X\sim \mathcal{P}_{\alpha}$.
Then, it holds from Lemma~\ref{lem:order_statistics} that
$\E[Y_{k,n}] \le \E[X_{k,n}+1]= \E[X_{k,n}]+1$.
\end{proof}

\begin{lemma}\label{lem:mono_B}
    For any decresaing function $f(x)$ and positive function $g(x)$ with $x\in[a,b]$ where $b\leq\infty$, $\int_x^b f(z)g(z)\dd z/\int_x^b g(z)\dd z$ is monotonically decreasing in $x\in[a,b]$.
\end{lemma}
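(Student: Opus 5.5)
The plan is to reduce the claim to a covariance (Chebyshev-type) inequality. For $x\in[a,b]$ set
\begin{equation*}
R(x)=\frac{\int_x^b f(z)g(z)\dd z}{\int_x^b g(z)\dd z},
\end{equation*}
which is a weighted average of $f$ over $[x,b]$ with weight $g$. Take $x_1<x_2$ in $[a,b]$ and write $A=[x_1,x_2)$ and $B=[x_2,b]$. Then $R(x_1)$ is a convex combination of the $g$-weighted averages of $f$ over $A$ and over $B$, with weights proportional to $\int_A g$ and $\int_B g$. The average over $B$ is exactly $R(x_2)$.

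Because $f$ is decreasing, $f(z)\ge f(x_2)\ge f(w)$ for every $z\in A$ and $w\in B$. Hence the $g$-weighted average of $f$ over $A$ is at least $f(x_2)$, and $R(x_2)$ is at most $f(x_2)$. The convex combination therefore satisfies $R(x_1)\ge R(x_2)$, which is the claimed monotone decrease. Positivity of $g$ guarantees that both weights are positive and that the denominators do not vanish, except in the degenerate case $x=b$.

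One could instead differentiate, $R'(x)=g(x)\bigl(R(x)-f(x)\bigr)/\int_x^b g$, and note that $R(x)\le f(x)$ because $f\le f(x)$ on $[x,b]$. This gives $R'\le 0$. However, it requires differentiability, so I would present the averaging argument, which needs no regularity.

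The only real obstacle is integrability when $b=\infty$. I will assume, as is implicit in how the lemma is used in \eqref{jphi1} (there $f(x)=1/x$ and $g$ is a density-type integrand), that $\int_x^b g<\infty$ and that $\int_x^b fg$ is finite. Under these assumptions every average above is well defined and the argument goes through unchanged.
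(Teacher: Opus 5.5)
Your proof is correct, but your main argument differs from the paper's. The paper's proof is exactly the derivative computation you mention as an aside: it writes the derivative of the ratio as $-g(x)\bigl(\int_x^b f(x)g(z)\dd z-\int_x^b f(z)g(z)\dd z\bigr)/\bigl(\int_x^b g(z)\dd z\bigr)^2$ and notes that the bracket is nonnegative because $f(x)\ge f(z)$ for $z\ge x$.

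Both arguments rest on the same fact: the $g$-weighted average of a decreasing $f$ over $[x,b]$ is at most $f(x)$. You apply it at $x_2$, together with the observation that the average over $[x_1,x_2)$ is at least $f(x_2)$, so your convex-combination argument compares $R(x_1)$ and $R(x_2)$ directly.

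Your route avoids the fundamental theorem of calculus, which the paper's route tacitly needs. The paper requires either continuity of $g$ and $fg$, or absolute continuity of the ratio so that $R'\le 0$ almost everywhere implies monotonicity. Yours needs only positivity of $g$ and the integrability you state, and it yields exactly the two-endpoint comparison used in \eqref{jphi1}. In return, the paper's version is a one-line calculation.

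Neither proof addresses finiteness of the integrals when $b=\infty$, so your explicit assumption is a fair reading of the lemma as it is applied.
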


\begin{proof2}
    We have
    \begin{align*}
        \frac{\mathrm{d}}{\mathrm{d} x}\frac{\int_x^b f(z)g(z)\dd z}{\int_x^b g(z)\dd z} 
        &= \frac{1}{\qty(\int_x^b g(x)\dd x)^2} 
        \left(-f(x)g(x) \int_x^b g(z)\dd z 
        + g(x)\int_x^b f(z)g(z)\dd z  \right) \\
        &= \frac{-g(x)}{\qty(\int_x^b g(x)\dd x)^2} 
        \left( \int_x^b f(x)g(z)\dd z 
        - \int_x^b f(z)g(z)\dd z  \right) \leq 0.\dqed
    \end{align*}
\end{proof2}

\begin{lemma}\label{lem:phi_monotone}
    If the components of the $d$-dimensional perturbation $r$ are i.i.d., then for $\lambda\in\mathbb{R}^d$ and any base-arm set $\B\subset[d]$, $\sum_{i\in\B}\phi_i(\lambda)$ is monotonically decreasing in $\lambda_j$ with $j\in\B$ and monotonically increasing in $\lambda_j$ with $j\in[d]\setminus\B$.
\end{lemma}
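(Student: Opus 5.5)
The plan is a coupling argument: fix a realization of the perturbation $r$ and compare the selected sets before and after increasing one coordinate of $\lambda$. Since $\phi_i(\lambda)=\sP[\sigma_i(r-\lambda)\le m]$, we have $\sum_{i\in\B}\phi_i(\lambda)=\E\bigl[|S(r-\lambda)\cap\B|\bigr]$, where $S(u)$ denotes the index set of the $m$ largest components of $u$ (ties have probability zero because the components of $r$ are i.i.d.\ with a continuous distribution). It therefore suffices to show, pointwise in $r$, that $|S(r-\lambda)\cap\B|$ is nonincreasing in $\lambda_j$ for $j\in\B$ and nondecreasing in $\lambda_j$ for $j\notin\B$. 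Taking expectations then gives the claim.

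For the pointwise claim, let $\lambda'$ agree with $\lambda$ except that $\lambda'_j\ge\lambda_j$, and set $u=r-\lambda$ and $u'=r-\lambda'$. Only $u_j$ decreases, so the ordering among the components $k\neq j$ is unchanged. We split into two cases. If $j\notin S(u)$, then $j$ stays outside the top-$m$ after decreasing, so $S(u')=S(u)$. If $j\in S(u)$, then either $j$ remains in the top-$m$ and $S(u')=S(u)$, or $j$ drops out and is replaced by the $(m+1)$-th largest component of $u$ among $k\neq j$, say $k^\dagger$. In that case $S(u')=S(u)\setminus\{j\}\cup\{k^\dagger\}$.

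Hence at most one index leaves the selected set, namely $j$, and at most one enters, namely some $k^\dagger\neq j$. If $j\in\B$, the count $|S\cap\B|$ loses $1$ and gains at most $1$, so it does not increase. If $j\notin\B$, the count loses nothing and possibly gains $1$ (when $k^\dagger\in\B$), so it does not decrease. This establishes both monotonicity directions.

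The only delicate point is the tie-breaking and measure-zero issues in defining $\sigma_i$. I would handle this by noting that ties occur with probability zero under the continuous distributions $\Fdis$ and $\Pdis$. More generally, one can fix a deterministic tie-breaking rule by index, under which the exchange argument above still holds. I expect no real obstacle beyond stating this single-swap structure carefully.
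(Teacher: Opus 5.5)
Your proof is correct, but you establish the pointwise monotonicity by a different mechanism than the paper. The paper never tracks how the selected set changes. It writes the count as a layer-cake sum,
\[
\sum_{i\in\B}\phi_i(\lambda)=\sum_{i=1}^{m}\sP\Bigl[\max^{(i)}_{k\in\B}\{r_k-\lambda_k\}\ge\max^{(m-i+1)}_{k\in[d]\setminus\B}\{r_k-\lambda_k\}\Bigr],
\]
where $\max^{(i)}$ is the $i$-th largest value. It then reads off monotonicity term by term: an order statistic over a set is nonincreasing in $\lambda_j$ when $j$ is in the set and constant otherwise. You instead fix $r$ and show that raising $\lambda_j$ changes the top-$m$ set by at most one swap ($j$ out, a single $k^\dagger\neq j$ in). The paper's route needs no case analysis and gives an explicit formula, at the cost of small order-statistic conventions (e.g.\ $\max^{(m-i+1)}=-\infty$ when $|[d]\setminus\B|<m-i+1$). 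Yours is more elementary and gives slightly more: the count moves by at most one, and only on the event that a swap occurs. This is exactly the single-swap structure the paper argues directly in the proof of Lemma~\ref{lem:optimal_action_add}. One wording slip: the entering index $k^\dagger$ is the $(m+1)$-th largest component of $u$ overall, i.e.\ the $m$-th largest among $\{u_k\}_{k\neq j}$, not the $(m+1)$-th largest among $k\neq j$. Your argument only uses that exactly one index other than $j$ enters, so nothing breaks. Your tie-breaking remark is fine, and slightly more careful than the paper, which tacitly relies on the continuity of $\Fdis$ and $\Pdis$.
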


\begin{proof}
Let $\max_{k\in\mathcal{S}}^{(i)}a_k$ be the $i$-th largest value of $\{a_k\}_{k\in \mathcal{S}}$.
Then
\begin{align*}
\sum_{i\in\B}\phi_i(\lambda)
&=
\E\left[
|\{k\in \B: \sigma_k(r-\lambda)\le m\}|
\;\m \lambda
\right]
\nn
&=
\sum_{i=1}^m
\E\left[
\ind\left[|\{k\in \B: \sigma_k(r-\lambda)\le m\}|\ge i\right]
\;\m \lambda
\right]
\nn
&=
\sum_{i=1}^m
\E\left[
\ind\left[\max_{k\in\B}^{(i)}\{r_k-\lambda_k\}\ge
\max_{k\in[d]\setminus \B}^{(m-i+1)}\{r_k-\lambda_k\}\right]
\right].
\end{align*}
We obtain the lemma since
$\max_{k\in \mathcal{S}}^{(i)}\{r_k-\lambda_k\}$ is decreasing in $\lambda_j$ if $j \in \mathcal{S}$ and
does not depend on $\lambda_j$ otherwise.
\end{proof}

\begin{lemma}\label{lem:max_sum_perturbation}
    Let $r_k^*$ be the $k$-th largest perturbation among $r_{1,1},r_{1,2},\dots,r_{1,d}$ i.i.d. from $\mathcal{D}_{\alpha}$ for $k\in[d]$ and $\alpha>1$.
    Then, we have
    \begin{equation*}
        \E_{r \sim \dis}\qty[\sum_{k=1}^m r_k^*]
        \leq
        \begin{cases}
            \qty(\frac{\alpha}{\alpha-1}(m-1)^{1-\frac{1}{\alpha}}+\Gamma\qty(1-\frac{1}{\alpha}))\qty(d+1)^{\frac{1}{\alpha}} & \text{if } \dis = \Pdis \\
            \qty(\frac{\alpha}{\alpha-1}(m-1)^{1-\frac{1}{\alpha}}+\Gamma\qty(1-\frac{1}{\alpha}))\qty(d+1)^{\frac{1}{\alpha}}+m & \text{if } \dis = \Fdis.
        \end{cases}
    \end{equation*}
\end{lemma}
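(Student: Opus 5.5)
Write $\E\qty[\sum_{k=1}^m r_k^*]=\sum_{k=1}^m \E[X_{d-k+1,d}]$, where $X_{j,d}$ is the $j$-th order statistic (ascending) of $d$ i.i.d.\ draws. I will first prove the Pareto bound. The Fr\'echet bound then follows at once: Lemma~\ref{lem:penalty_pareto_frechet} gives $\E[Y_{k,n}]\le \E[X_{k,n}]+1$, and summing this over the $m$ top order statistics adds exactly $m$.

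\textbf{Pareto case.} By Lemma~\ref{lem:pareto_order_statistics} with $n=d$ and order index $d-k+1$ (the $k$-th largest),
\begin{equation*}
\E[r_k^*]=\frac{\Gamma(d+1)\,\Gamma(k-\frac{1}{\alpha})}{\Gamma(k)\,\Gamma(d+1-\frac{1}{\alpha})}.
\end{equation*}
I bound each factor with Gautschi's inequality (Lemma~\ref{lem:Gautschi}), taking $s=1-1/\alpha\in(0,1)$.
\begin{itemize}
\item The factor $\Gamma(d+1)/\Gamma(d+1-1/\alpha)=\Gamma(x+1)/\Gamma(x+s)$ with $x=d$, so it is at most $(d+1)^{1/\alpha}$.
\item For $k\ge2$, the factor $\Gamma(k-1/\alpha)/\Gamma(k)=\Gamma(x+s)/\Gamma(x+1)$ with $x=k-1>0$, so it is at most $(k-1)^{-1/\alpha}$.
\item For $k=1$, the factor is simply $\Gamma(1-1/\alpha)$.
\end{itemize}
Summing over $k$ gives
\begin{equation*}
\E\qty[\sum_k r_k^*]\le (d+1)^{1/\alpha}\qty(\Gamma(1-\tfrac1\alpha)+\sum_{j=1}^{m-1} j^{-1/\alpha}).
\end{equation*}
Because $x^{-1/\alpha}$ is decreasing, $\sum_{j=1}^{m-1}j^{-1/\alpha}\le\int_0^{m-1}x^{-1/\alpha}\dd x=\frac{\alpha}{\alpha-1}(m-1)^{1-1/\alpha}$. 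This is the claimed bound.

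\textbf{Main obstacle.} The only delicate point is indexing: Lemma~\ref{lem:pareto_order_statistics} must be applied with the correct orientation of the order statistics (ascending versus descending), and the $k=1$ term must be treated separately, because Gautschi's inequality needs $x>0$. Once these are handled, the rest is direct.
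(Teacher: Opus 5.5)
Your proof is correct and follows essentially the same route as the paper: Malik's formula (Lemma~\ref{lem:pareto_order_statistics}) for each top order statistic, then Gautschi's inequality term by term with the maximum handled separately through $\Gamma(1-1/\alpha)$, then an integral comparison for $\sum_{j=1}^{m-1}j^{-1/\alpha}$, and finally Lemma~\ref{lem:penalty_pareto_frechet} to add $m$ in the Fr\'echet case. The only cosmetic difference is that you compare with $\int_0^{m-1}x^{-1/\alpha}\dd x$, which gives the stated constant directly, whereas the paper uses $1+\int_1^{m-1}x^{-1/\alpha}\dd x$ and then discards a $-\frac{1}{\alpha-1}$ term.
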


\begin{proof2}
    Firstly, we have
    \begin{equation}\label{eq:penalty_order_decom}
        \E_{r \sim \dis}\qty[\sum_{k=1}^m r_k^*]
        = \sum_{k=1}^m \E_{r \sim \dis}\qty[r_k^*].
    \end{equation}
    \paragraph{Pareto Distribution}
    If $\dis=\Pdis$, by Lemma~\ref{lem:pareto_order_statistics}, we obtain
    \begin{equation}\label{eq:penalty_order_decom2}
        \sum_{k=d-m+1}^d \E_{r \sim \Pdis}\qty[r_k^*]
        \leq
        \sum_{k=1}^m \frac{\Gamma\qty(d+1)\Gamma\qty(d-k-\frac{1}{\alpha}+1)}{\Gamma\qty(d-k+1)\Gamma\qty(d-\frac{1}{\alpha}+1)}.
    \end{equation}
    For $k=m=d$, we have
    \begin{align*}
        \frac{\Gamma\qty(d+1)\Gamma\qty(d-k-\frac{1}{\alpha}+1)}{\Gamma\qty(d-k+1)\Gamma\qty(d-\frac{1}{\alpha}+1)}&=
        \frac{\Gamma\qty(d+1)\Gamma\qty(1-\frac{1}{\alpha})}{\Gamma\qty(d-\frac{1}{\alpha}+1)}\\
        &\leq
        \Gamma\qty(1-\frac{1}{\alpha})\qty(d+1)^{\frac{1}{\alpha}},\numberthis\label{eq:penalty_order_kmd}
    \end{align*}
    where the last inequality follows from Gautschi's inequality in Lemma~\ref{lem:Gautschi}.
    Similarly, for $k\in[m]$ and $k<d$, 
    by Gautschi's inequality, we have
    \begin{equation}\label{eq:penalty_Gautschi}
        \frac{\Gamma\qty(d+1)\Gamma\qty(d-k-\frac{1}{\alpha}+1)}{\Gamma\qty(d-k+1)\Gamma\qty(d-\frac{1}{\alpha}+1)} \leq \qty(\frac{d+1}{d-k})^{\frac{1}{\alpha}}.
    \end{equation}
    By combining \eqref{eq:penalty_order_decom}--\eqref{eq:penalty_Gautschi}, we have
    \begin{align*}
        \E_{r\sim\Pdis}\qty[\max_{a\in\nec}a^\top r]
        &=
        \sum_{k=d-m+1}^d \E_{r \sim \Pdis}\qty[r_k^*]\\
        &\leq \Gamma\qty(1-\frac{1}{\alpha})\qty(d+1)^{\frac{1}{\alpha}}+
        \sum_{k=d-m+1}^{d-1} \qty(\frac{d+1}{d-k})^{\frac{1}{\alpha}}\\
        &=
        \Gamma\qty(1-\frac{1}{\alpha})\qty(d+1)^{\frac{1}{\alpha}}+\sum_{k=1}^{m-1} \qty(\frac{d+1}{k})^{\frac{1}{\alpha}}\\
        &\leq \qty(\Gamma\qty(1-\frac{1}{\alpha})+1+\int_{1}^{m-1} x^{-\frac{1}{\alpha}} \dd x)\qty(d+1)^{\frac{1}{\alpha}}\\
        &= \qty(\Gamma\qty(1-\frac{1}{\alpha})+1+\frac{\alpha}{\alpha-1}x^{1-\frac{1}{\alpha}}\bigg|_1^{m-1})\qty(d+1)^{\frac{1}{\alpha}}\\
        &= \qty(\frac{\alpha}{\alpha-1}(m-1)^{1-\frac{1}{\alpha}}+\Gamma\qty(1-\frac{1}{\alpha})-\frac{1}{\alpha-1})\qty(d+1)^{\frac{1}{\alpha}}\\
        &< \qty(\frac{\alpha}{\alpha-1}(m-1)^{1-\frac{1}{\alpha}}+\Gamma\qty(1-\frac{1}{\alpha}))\qty(d+1)^{\frac{1}{\alpha}}.\numberthis\label{eq:penalty_pareto}
    \end{align*}
    \paragraph{Fr\'{e}chet Distribution}
    If $\dis=\Fdis$, by combining \eqref{eq:penalty_order_decom}, Lemma~\ref{lem:penalty_pareto_frechet} and \eqref{eq:penalty_pareto} we have
    \begin{align*}
        \E_{r\sim\Fdis}\qty[\max_{a\in\nec}a^\top r]
        &= \sum_{k=1}^m \E_{r \sim \Fdis}\qty[r_k^*]\\
        &\leq \sum_{k=1}^m \E_{r \sim \Pdis}\qty[r_k^*]+1\\
        &< \qty(\frac{\alpha}{\alpha-1}(m-1)^{1-\frac{1}{\alpha}}+\Gamma\qty(1-\frac{1}{\alpha}))\qty(d+1)^{\frac{1}{\alpha}} + m.\dqed
    \end{align*}
\end{proof2}

\section{Properties of Conditional Geometric Resampling}\label{app:proof_GR}
In this appendix, we provide the proofs of Lemmas~\ref{lem:cgr_general_idea} and \ref{lem:CGR},
and then give a detailed analysis on the complexity of CGR.

\subsection{Proof of Lemma~\ref{lem:cgr_general_idea}}\label{append_proof_cgr}

    Define
    \begin{equation*}
        \chi_{t,i}(r_t'') =
        \begin{cases} 
            1, & \text{if } \qty[\argmin_{a \in \nec} \left\{a^\top (\eta_t \hat{L}_t - r''_t)\right\}]_i=1, \\
            0, & \text{otherwise}.
        \end{cases}
    \end{equation*}
    Consider $w_{t,i}$, the probability that base-arm $i$ is selected, with the condition $\mathcal{E}_{t,i}$.
Then $w_{t,i}$ can be expressed as 
    \begin{align*}
        w_{t,i} &= \sP[\chi_{t,i}(r_t'')=1|\hat{L}_t] \\
        &= \sP[\chi_{t,i}(r_t'')=1|\mathcal{E}_{t,i},\hat{L}_t]\sP[\mathcal{E}_{t,i}|\hat{L}_t]+\sP[\chi_{t,i}(r_t'')=1|\mathcal{E}_{t,i}^c,\hat{L}_t]\sP[\mathcal{E}_{t,i}^c|\hat{L}_t]\\
&
=\sP[\chi_{t,i}(r_t'')=1|\mathcal{E}_{t,i},\hat{L}_t]\sP[\mathcal{E}_{t,i}|\hat{L}_t],
\numberthis{\label{eq: conditioned on nec_t or not}}
    \end{align*}
where the last equality follows since $\mathcal{E}_{t,i}$ is a necessary condition for $\chi_{t,i}(r_t'')=1$.

Note that the number of resampling $M_{t,i}$ follows the geometric distribution with success probability
$\sP[\chi_{t,i}(r_t'')=1|\mathcal{E}_{t,i},\hat{L}_t]$ given $\hat{L}_t$,
which satisfies $\sP[\chi_{t,i}(r_t'')=1|\mathcal{E}_{t,i},\hat{L}_t]=w_{t,i}/\sP[\mathcal{E}_{t,i}|\hat{L}_t]$
by \eqref{eq: conditioned on nec_t or not}.
Since geometric distribution with success probability $p$ has expectation $1/p$ and variance $1/p^2-1/p$,
we have
\begin{align}
\E\qty[M_{t,i}\m \hat{L}_t]=\frac{\sP[\mathcal{E}_{t,i}|\hat{L}_t]}{w_{t,i}},\quad
\mathrm{Var}\qty[M_{t,i}\m \hat{L}_t]
    =\frac{\sP[\mathcal{E}_{t,i}]^2}{w_{t,i}^2}-\frac{\sP[\mathcal{E}_{t,i}]}{w_{t,i}},\n
\end{align}
from which we immediately have
\begin{align}
\E\qty[\frac{M_{t,i}}{\sP[\mathcal{E}_{t,i}]}\m \hat{L}_t]=\frac{1}{w_{t,i}},\quad
\mathrm{Var}\qty[\frac{M_{t,i}}{\sP[\mathcal{E}_{t,i}]}\m \hat{L}_t]
    =\frac{1}{w_{t,i}^2}-\frac{1}{\sP[\mathcal{E}_{t,i}]w_{t,i}}
\le
   \frac{1}{w_{t,i}^2}-\frac{1}{w_{t,i}}.\dqed\n
\end{align}

\subsection{Proof of Lemma~\ref{lem:CGR}}\label{append_lem_cgr}

    Let $\sP^*[\cdot]$ denote the probability distribution of $r''_t$ after the value-swapping operation, and $\sigma_{i,j}$ denote the rank of $r''_{t,j}$ among $\qty{r''_{t,k}:\sigma_k\leq\sigma_i}$ for $j$ such that $\sigma_j\le \sigma_i$.
Then, we have $\mathcal{E}_{t,i}=\qty{\sigma_{i,i}\in[m]}$.
Given $\hat{L}_t,a_{t,i}$ and $\theta$, for any realization $\theta_0$ in $[m]$ of $\theta$ we have
\begin{multline*}
\sP^*\qty[\bigcap\nolimits_{j:\sigma_j\leq\sigma_i} \left\{r''_{t,j} \leq x_j\right\}\m\hat{L}_t,\theta=\theta_0]
=\\
        \sum_{j:\sigma_j\leq\sigma_i}
\sP\qty[\bigcap\nolimits_{k:\sigma_k\leq\sigma_i,i\notin\qty{j,i}} \left\{r''_{t,k} \leq x_k\right\},\, r''_{t,j} \leq x_i,\, r''_{t,i} \leq x_j,\, \sigma_{i,j}=\theta_0 \m\hat{L}_t].
\end{multline*}
    By symmetry of $r''_t\in [\nu,\infty)^d$, we have
\begin{align}
\lefteqn{
\sP\qty[\bigcap\nolimits_{k:\sigma_k\leq\sigma_i,i\notin\qty{j,i}} \left\{r''_{t,k} \leq x_k\right\},\, r''_{t,j} \leq x_i,\, r''_{t,i} \leq x_j,\, \sigma_{i,j}=\theta_0 \m\hat{L}_t]
}\nn
&=
\sP\qty[\bigcap\nolimits_{k:\sigma_k\leq\sigma_i,i\notin\qty{j,i}} \left\{r''_{t,k} \leq x_k\right\},\, r''_{t,i} \leq x_i,\, r''_{t,j} \leq x_j,\, \sigma_{i,i}=\theta_0 \m\hat{L}_t]
\nn
&=
\sP\qty[\bigcap\nolimits_{k:\sigma_k\leq\sigma_i} \left\{r''_{t,k} \leq x_k\right\},\, \sigma_{i,i}=\theta_0 \m\hat{L}_t],
\n
\end{align}
by which we obtain
\begin{align}
\sP^*\qty[\bigcap\nolimits_{j:\sigma_j\leq\sigma_i} \left\{r''_{t,j} \leq x_j\right\}\m\hat{L}_t,\theta=\theta_0]
&= 
\sum_{j:\sigma_j\leq\sigma_i}
\sP\qty[\bigcap\nolimits_{k:\sigma_k\leq\sigma_i} \left\{r''_{t,k} \leq x_k\right\},\, \sigma_{i,i}=\theta_0 \m\hat{L}_t]
\nn
&= 
\sigma_i
\sP\qty[\bigcap\nolimits_{k:\sigma_k\leq\sigma_i} \left\{r''_{t,k} \leq x_k\right\},\, \sigma_{i,i}=\theta_0 \m\hat{L}_t].
\label{eq: prob_swap}
\end{align}

Recall that $\theta$ is randomly chosen from $m$ independent of $\hat{L}_t$.
Then 
\begin{align}
\sP\qty[\theta=\theta_0\m\,\hat{L}_t]=1/m.
\label{theta_uniform}
\end{align}
In addition,
by symmetry of $r''_t\in [\nu,\infty)^d$, we see that
the rank $\sigma_{i,i}$ of $r''_{t,i}$ among $\{r''_{t,j}\}_{j: \sigma_j\le \sigma_i}$
is uniformly distributed over $[\sigma_i]$ given $\hat{L}_t$.
Then we have
\begin{align}
\sP\qty[\sigma_{i,i}\in [m]\m\,\hat{L}_t]
=
\sum_{\theta_0\in[m]}
\sP\qty[\sigma_{i,i}=\theta_0\m\,\hat{L}_t]=
 m/\sigma_i.
\label{eq:prob_event}
\end{align}

By combining the above results we obtain
\begin{align}
\sP^*\qty[\bigcap\nolimits_{j:\sigma_j\leq\sigma_i} \left\{r''_{t,j} \leq x_j\right\}\m\hat{L}_t]
&=\sum_{\theta_0\in[m]}\sP^*\qty[\bigcap\nolimits_{j:\sigma_j\leq\sigma_i} \left\{r''_{t,j} \leq x_j\right\},\theta=\theta_0\m\hat{L}_t]\nn
&=\sum_{\theta_0\in[m]}\sP^*\qty[\bigcap\nolimits_{j:\sigma_j\leq\sigma_i} \left\{r''_{t,j} \leq x_j\right\}\m\hat{L}_t,\theta=\theta_0]\sP\qty[\theta=\theta_0\m\hat{L}_t]\displaybreak[0]\nn
&=\frac{1}{m}\sum_{\theta_0\in[m]}\sP^*\qty[\bigcap\nolimits_{j:\sigma_j\leq\sigma_i} \left\{r''_{t,j} \leq x_j\right\}\m\hat{L}_t,\theta=\theta_0]
\tag*{(by \eqref{theta_uniform})}\displaybreak[0]
\nn
&=\frac{\sigma_i}{m}\sum_{\theta_0\in[m]}\sP\qty[\bigcap\nolimits_{j:\sigma_j\leq\sigma_i} \left\{r''_{t,j} \leq x_j\right\},\,
\sigma_{i,i}=\theta_0
\m\hat{L}_t]
\tag*{(by \eqref{eq: prob_swap})}
\displaybreak[0]\nn
&=\frac{\sigma_i}{m}\sP\qty[\bigcap\nolimits_{j:\sigma_j\leq\sigma_i} \left\{r''_{t,j} \leq x_j\right\},\,
\sigma_{i,i}\in[m]
\m\hat{L}_t]
\displaybreak[0]\nn
&=\frac{\sigma_i \sP\qty[\sigma_{i,i}\in[m] \m\hat{L}_t ]}{m}\sP\qty[\bigcap\nolimits_{j:\sigma_j\leq\sigma_i} \left\{r''_{t,j} \leq x_j\right\}
\m\hat{L}_t,\,\sigma_{i,i}\in[m]]
\displaybreak[0]\nn
&=\sP\qty[\bigcap\nolimits_{j:\sigma_j\leq\sigma_i} \left\{r''_{t,j} \leq x_j\right\}
\m\hat{L}_t,\,\sigma_{i,i}\in[m]],
\tag*{(by \eqref{eq:prob_event})}
\end{align}
which means that CGR samples $r''_t$ from the conditional distribution of $\mathcal{D}$ conditioned on $\qty{\sigma_{i,i}\in[m]}$.

Next we prove the second part of the lemma in \eqref{eq:resampling_bound}.
For $i\in[d]$ such that $\sigma_i\le m$, 
the resampling method is the same as the original GR, which satisfies
    \begin{equation*}
        \E_{r'_t\sim\mathcal{D}}[M_{t,i}|\hat{L}_t]=\frac{1}{w_{t,i}}.
    \end{equation*}
For $i\in[d]$ such that $\sigma_i> m$, 
recall that $\mathcal{E}_{t,i}=\qty{\sigma_{i,i}\in[m]}$ and
$\sP[\sigma_{i,i}\in[m]|\hat{L}_t]=\sigma_i/m$ by \eqref{eq:prob_event}.
Then, by Lemma~\ref{lem:cgr_general_idea} we have
    \begin{equation*}
        \E_{r''_t \sim \mathcal{D}|\mathcal{E}_{t,i}}[M_{t,i}|\hat{L}_t]=\frac{\sP\qty[\sigma_{i,i}\in[m]\m\,\hat{L}_t]}{w_{t,i}}=\frac{m}{\sigma_i w_{t,i}}.
    \end{equation*}
We obtain \eqref{eq:resampling_bound} by the equalities for these two cases.
\qed

\subsection{Complexity of CGR}\label{append_complexity}
In Algorithm~\ref{alg:CGR}, the complexity for computing $U$ and $\{C_i\}_{i\in U}$ in Line~\ref{line:scan_end} is
$O\qty(md)$ since $\sigma_i$ can be computed in $O(d)$ time for each selected base-arm $i$.

Sampling from $\mathcal{D}$ (Line~\ref{line:sample_d}) and finding $\argmin_{a \in \nec} \left\{a^\top (\eta_t \hat{L}_t - r'_t)\right\}$
(Lines~\ref{line:global_end})
can be done in $O(d)$ time as in GR.
Since these procedures are repeated
for $\max_{i: a_{t,i}=1}M_{t,i}$ times,
the total complexity for this part
is
$O\qty(d\cdot \max_{i: a_{t,i}=1}M_{t,i})$.

Finding the $\theta$-th largest element in $r'_{t}$ (Line~\ref{line:swap_begin}) and
$\argmin_{a \in \nec} \left\{a^\top (\eta_t \hat{L}_t - r''_t)\right\}$ (Line~\ref{line:find_a}) can also be done in
$O(d)$ time in the same way as above.
Since 
Lines~\ref{line:swap_begin}--\ref{line:inner_end} are repeated
for $\sum_{i: a_{t,i}=1, \sigma_i>m}M_{t,i}$ times in total,
the total complexity for this part
is
$O\qty(d\cdot \sum_{i: a_{t,i}=1, \sigma_i>m}M_{t,i})$.

Here note that the expectation of $\sum_{i: a_{t,i}=1}M_{t,i}$ is bounded by
\begin{align}
\E\left[
\sum_{i: a_{t,i}=1}
M_{t,i}\m \hat{L}_t
\right]
&=
\E\left[
\sum_{i: a_{t,i}=1}
\left(1\lor \frac{m}{\sigma_i}\right)\frac{1}{w_{t,i}}
\right]
\tag*{(by Lemma~\ref{lem:CGR})}
\nn
&=
\sum_{i\in[d]}
\sP[a_{t,i}=1|\hat{L}_t]
\left(1\lor \frac{m}{\sigma_i}\right)\frac{1}{w_{t,i}}
\nn
&=
\sum_{i\in[d]: \sigma_i\le m}^m w_{t,i}\cdot \frac{1}{w_{t,i}}
+
\sum_{i\in [d]: \sigma_i> m}^d w_{t,i}\cdot \frac{m}{\sigma_i}\frac{1}{w_{t,i}}
\nn
&\le
d+d\int_{m}^{d} \frac{1}{x} \dd x
=d\qty(1+\log(d/m)).\n
\end{align}

From these results, the average total complexity of CGR is bounded by
\begin{align}
C_{\text{CGR}}
&=
O(md)+d\cdot O\qty(
\E\left[
\max_{i: a_{t,i}=1}M_{t,i}+
\sum_{i: a_{t,i}=1, \sigma_i>m}M_{t,i}
\m \hat{L}_t
\right]
)
\nn
&\le
O(md)+d\cdot O\qty(
\E\left[
\sum_{i: a_{t,i}=1}
M_{t,i}\m \hat{L}_t
\right])\nn
&\le O\qty(md\qty(1+\log(d/m))).\n
\end{align}

\section{Proof of Lemma~\ref{lem:both_increasing}}\label{app:proof_both_increasing}
    Let $$g(\lambda_j)=\int_{\nu}^\infty \frac{1}{z+\lambda_i}\psi(z)F(z+\lambda_j) \dd z,\quad h(\lambda_j)=\int_{\nu}^\infty \psi(z)F(z+\lambda_j) \dd z.$$
    The derivative of $g(\lambda_j)/h(\lambda_j)$ with respect to $\lambda_j$ is expressed as
    \begin{equation*}
        \frac{\mathrm{d}}{\mathrm{d} \lambda_j}g(\lambda_j)/h(\lambda_j)=\frac{g'(\lambda_j)h(\lambda_j)-g(\lambda_j)h'(\lambda_j)}{(h(\lambda_j))^2},
    \end{equation*}
    where $g'(\lambda_j)$ and $h'(\lambda_j)$ are respectively expressed as
    \begin{equation*}
        g'(\lambda_j)=\frac{\partial}{\partial \lambda_j}\int_{\nu}^\infty \frac{1}{z+\lambda_i}\psi(z)F(z+\lambda_j)\dd z
        =\frac{\partial}{\partial \lambda_j}\int_{\nu}^\infty \frac{1}{z+\lambda_i}\psi(z)f(z+\lambda_j)\dd z,
    \end{equation*}
    and
    \begin{equation*}
        h'(\lambda_j)=\frac{\partial}{\partial \lambda_j}\int_{\nu}^\infty \psi(z)F(z+\lambda_j)\dd z
        =\int_{\nu}^\infty \psi(z)f(z+\lambda_j)\dd z.
    \end{equation*}

    Next, we divide the proof into two cases.
    \paragraph{Fr\'{e}chet Distribution}
    When $F(x)$ is the cumulative distribution function of Fr\'{e}chet distribution, we define $\widetilde{\psi}(x)=\psi(x)e^{-1/(x+\lambda_j)^\alpha}$. 
    Under this definition, we have
    \begin{align*}
        &g'(\lambda_j)h(\lambda_j)
        =\iint_{z,w \geq 0} \frac{\psi(z)\psi(w)f(z+\lambda_j)F(w+\lambda_j)}{(z+\lambda_i)} \dd z \dd w\\
        &=\alpha\iint_{z,w \geq -(0\land\lambda_i\land\lambda_j)} \frac{\widetilde{\psi}(z)\widetilde{\psi}(w)}{(z+\lambda_i)(z+\lambda_j)^{\alpha+1}}\dd z \dd w\\
        &=\frac{\alpha}{2}\iint_{z,w \geq -(0\land\lambda_i\land\lambda_j)} \widetilde{\psi}(z)\widetilde{\psi}(w)\qty(\frac{1}{(z+\lambda_i)(z+\lambda_j)^{\alpha+1}}+\frac{1}{(w+\lambda_i)(w+\lambda_j)^{\alpha+1}})\dd z \dd w,
    \end{align*}
    where the second equality follows from the fact that $\psi(z)f(z+\lambda_j)$ (resp. $\psi(w)F(w+\lambda_j)$) equals zero for $z<-(0\land\lambda_i\land\lambda_j)$ (resp. $w<-(0\land\lambda_i\land\lambda_j)$).
    Similarly, we have
    \begin{align*}
        g(\lambda_j)&h'(\lambda_j)
        =\iint_{z,w \geq -(\lambda_i\land\lambda_j)} \frac{\psi(z)\psi(w)F(z+\lambda_j)f(w+\lambda_j)}{(z+\lambda_i)} \dd z \dd w\\
        &=\alpha\iint_{z,w \geq -(\lambda_i\land\lambda_j)} \frac{\widetilde{\psi}(z)\widetilde{\psi}(w)}{(z+\lambda_i)(w+\lambda_j)^{\alpha+1}}\dd z \dd w\\
        &=\frac{\alpha}{2}\iint_{z,w \geq -(\lambda_i\land\lambda_j)} \widetilde{\psi}(z)\widetilde{\psi}(w)\qty(\frac{1}{(z+\lambda_i)(w+\lambda_j)^{\alpha+1}}+\frac{1}{(w+\lambda_i)(z+\lambda_j)^{\alpha+1}})\dd z \dd w.
    \end{align*}
    Here, by an elementary calculation one can see that for $z,w \in [-(0\land\lambda_i\land\lambda_j),\infty)$, we have
    \begin{align*}
        &\frac{1}{(z+\lambda_i)(z+\lambda_j)^{\alpha+1}}+\frac{1}{(w+\lambda_i)(w+\lambda_j)^{\alpha+1}}-\frac{1}{(z+\lambda_i)(w+\lambda_j)^{\alpha+1}}-\frac{1}{(w+\lambda_i)(z+\lambda_j)^{\alpha+1}}\\
        &=\frac{w-z}{(z+\lambda_i)(w+\lambda_i)}\qty(\frac{1}{(z+\lambda_j)^{\alpha+1}}-\frac{1}{(w+\lambda_j)^{\alpha+1}})\\
        &=\frac{(w-z)\qty(\qty(w+\lambda_j)^{\alpha+1}-\qty(z+\lambda_j)^{\alpha+1})}{(z+\lambda_i)(w+\lambda_i)(z+\lambda_j)^{\alpha+1}(w+\lambda_j)^{\alpha+1}}\geq 0,
    \end{align*}
    where the last inequality holds since $\varphi(x)=x^{\alpha+1}$ is monotonically increasing in $[0,+\infty)$ for $\alpha>0$. Therefore, when $F(x)$ is the cumulative distribution function of Fr\'{e}chet distribution, we have $\frac{\mathrm{d}}{\mathrm{d} \lambda_j}g(\lambda_j)/h(\lambda_j)\geq 0$, which implies that $g(\lambda_j)/h(\lambda_j)$ is monotonically increasing in $\lambda_j$.

    \paragraph{Pareto Distribution}
    When $F(x)$ is the cumulative distribution function of Pareto distribution, we have
    \begin{align*}
        &g'(\lambda_j)h(\lambda_j)=\iint_{z,w \geq 1} \frac{\psi(z)\psi(w)f(z+\lambda_j)F(w+\lambda_j)}{(z+\lambda_i)} \dd z \dd w\\
        &=\alpha\iint_{z,w \geq 1-(0\land\lambda_i\land\lambda_j)} \frac{\psi(z)\psi(w)(1-(w+\lambda_j)^{-\alpha})}{(z+\lambda_i)(z+\lambda_j)^{\alpha+1}}\dd z \dd w\\
        &=\frac{\alpha}{2}\iint_{z,w \geq 1-(0\land\lambda_i\land\lambda_j)} \psi(z)\psi(w)\qty(\frac{(1-(w+\lambda_j)^{-\alpha})}{(z+\lambda_i)(z+\lambda_j)^{\alpha+1}}+\frac{(1-(z+\lambda_j)^{-\alpha})}{(w+\lambda_i)(w+\lambda_j)^{\alpha+1}})\dd z \dd w,
    \end{align*}
    where the second equality follows from the fact that $\psi(z)f(z+\lambda_j)$ (resp. $\psi(w)F(w+\lambda_j)$) equals zero for $z<1-(0\land\lambda_i\land\lambda_j)$ (resp. $w<1-(0\land\lambda_i\land\lambda_j)$).
    Similarly, we have
    \begin{align*}
        &g(\lambda_j)h'(\lambda_j)=\iint_{z,w \geq 1} \frac{\psi(z)\psi(w)f(z+\lambda_j)F(w+\lambda_j)}{(z+\lambda_i)} \dd z \dd w\\
        &=\alpha\iint_{z,w \geq 1-(0\land\lambda_i\land\lambda_j)} \frac{\psi(z)\psi(w)(1-(z+\lambda_j)^{-\alpha})}{(z+\lambda_i)(w+\lambda_j)^{\alpha+1}}\dd z \dd w\\
        &=\frac{\alpha}{2}\iint_{z,w \geq 1-(0\land\lambda_i\land\lambda_j)} \psi(z)\psi(w)\qty(\frac{(1-(z+\lambda_j)^{-\alpha})}{(z+\lambda_i)(w+\lambda_j)^{\alpha+1}}+\frac{(1-(w+\lambda_j)^{-\alpha})}{(w+\lambda_i)(z+\lambda_j)^{\alpha+1}})\dd z \dd w.
    \end{align*}
    Here, by an elementary calculation one can see that for $z,w \in [1-(0\land\lambda_i\land\lambda_j),\infty)$, we have
    \begin{align*}
        &\frac{(1-(w+\lambda_j)^{-\alpha})}{(z+\lambda_i)(z+\lambda_j)^{\alpha+1}}+\frac{(1-(z+\lambda_j)^{-\alpha})}{(w+\lambda_i)(w+\lambda_j)^{\alpha+1}}-\frac{(1-(z+\lambda_j)^{-\alpha})}{(z+\lambda_i)(w+\lambda_j)^{\alpha+1}}-\frac{(1-(w+\lambda_j)^{-\alpha})}{(w+\lambda_i)(z+\lambda_j)^{\alpha+1}}\\
        &=\frac{w-z}{(z+\lambda_i)(w+\lambda_i)}\qty(\frac{1-(w+\lambda_j)^{-\alpha}}{(z+\lambda_j)^{\alpha+1}}-\frac{1-(z+\lambda_j)^{-\alpha}}{(w+\lambda_j)^{\alpha+1}})\\
        &=\frac{w-z}{(z+\lambda_i)(w+\lambda_i)(z+\lambda_j)^{\alpha+1}(w+\lambda_j)^{\alpha+1}}\\
        &\hspace{5cm}\qty(\qty((w+\lambda_j)^{\alpha+1}-(w+\lambda_j))-\qty((z+\lambda_j)^{\alpha+1}-(z+\lambda_j)))\geq 0,
    \end{align*}
    where the last inequality holds because $\varphi(x)=x^{\alpha+1}-x$ is monotonically increasing in $[1,+\infty)$ for $\alpha>0$. Therefore, we have $\frac{\mathrm{d}}{\mathrm{d} \lambda_j}g(\lambda_j)/h(\lambda_j)\geq 0$, which concludes the proof. \qed









\vskip 0.2in
\bibliography{references}

\end{document}